\definecolor{mydarkblue}{rgb}{0,0.08,0.45}
\definecolor{note_fontcolor}{rgb}{0.800781, 0.800781, 0.800781}
\newtheorem{thm}{Theorem}[section]
\newtheorem{cor}[thm]{Corollary}
\newtheorem{assm}[thm]{Assumption}
\newtheorem{setup}[thm]{Setup}
\newtheorem{lemma}[thm]{Lemma}
\newtheorem{prop}[thm]{Proposition}
\theoremstyle{definition}
\newtheorem{defn}[thm]{Definition}
\theoremstyle{remark}
\newtheorem{rem}[thm]{Remark}
\newtheorem{exmp}[thm]{Example}
\title{Tensor Programs IVb:\\
Adaptive Optimization in the $\infty$-Width Limit}
\author{
    \begin{minipage}{.4\textwidth}
        \centering
        Greg Yang\\ \normalfont
        xAI
    \end{minipage}
    \begin{minipage}{.4\textwidth}
        \centering
        Etai Littwin\\ \normalfont
        Apple
    \end{minipage}
}
\begin{document}

\maketitle

\begin{abstract}

  Going beyond stochastic gradient descent (SGD), what new phenomena emerge in wide neural networks trained by adaptive optimizers like Adam?
  Here we show: 
  The same dichotomy between feature learning and kernel behaviors (as in SGD) holds for general optimizers as well, including Adam --- albeit with a \emph{nonlinear} notion of ``kernel.''
  We derive the corresponding ``neural tangent'' and ``maximal update'' limits for any architecture.
  Two foundational advances underlie the above results:
  1) A new Tensor Program language, \nexort{}, that can express how adaptive optimizers process gradients into updates.
  2) The introduction of bra-ket notation (borrowed from quantum physics) to drastically simplify expressions and calculations in Tensor Programs.
  This work summarizes and generalizes all previous results in the \emph{Tensor Programs} series of papers.

\end{abstract}

\begin{center}
\includegraphics{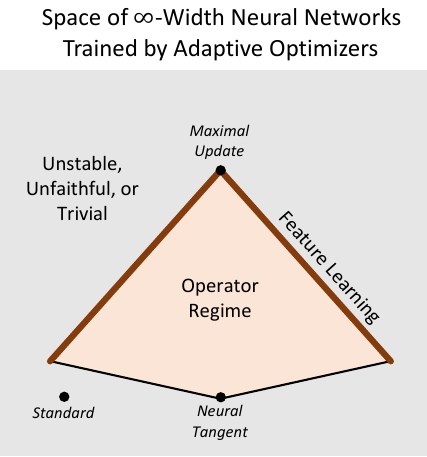}
\end{center}

\pdfbookmark[section]{Table of Contents}{Contents}
\tableofcontents

\chapter{Introduction}

While historically the deep learning theory literature has by-and-large (carelessly in hindsight) identified ``infinite-width neural networks'' with ``neural tangent kernel'' \citep{wide,arora} or ``gaussian process'' \citep{gp4,gp5,gp6,gp7}, by now we understand these are just particular kinds of infinite-width limit with simple mathematics.
Indeed, there also exists a ``feature learning regime'' with much more complex mathematics but also all the actually desired properties of a neural network \citep{lazy,mf1,mf2,mf3,mf4,mf5}.
\citet{yang4} precisely characterized this so-called \emph{Dynamical Dichotomy}: there is no other regime that can happen for MLPs trained by SGD in finite time.

\paragraph{Does Adaptive Optimization Create New Large Width Behavior?}
In practice, of course, most neural networks of importance are trained by adaptive optimizers like Adam \citep{adam}.
Can new phenomenon arise in the infinite-width limit of adaptive optimizers?
For example, if one invokes the (not-quite-correct) intuition that neural network trained with small learning rate exhibits kernel behavior, then one might suppose that these optimizers may adaptively enforce large effective learning rates.
This prevents kernel behavior and may even ``supercharge'' feature learning.

But it turns out there’s nothing special about adaptive optimizers in this regard.
Essentially the exact dichotomy of feature learning vs kernel regime plays out for any adaptive optimizer.
This means there is also a “neural tangent kernel” limit for any adaptive optimizer, where the network evolution can be captured solely by some kind of evolution equation in the function space — albeit no longer a linear equation.
This also means that “maximal update” parametrization can be defined for e.g., Adam, that maximizes feature learning in a suitable sense.
Indeed, \cite{yang5} already contained an intuitive derivation of this $\mu$P for Adam, and showed it preserves optimal hyperparameters as ones scales the width of a model (e.g., Transformer \citep{attention2}).
Part of this work serves to fill the gap in \cite{yang5}'s theoretical foundations.

\paragraph{Tensor Program with Nonlinear Outer Products}

To achieve this result, we leverage the Tensor Programs framework: express the adaptive optimization of a network in any parametrization in a Tensor Program, and invoke the Master Theorem to take the infinite-width limit of the whole computation, obtaining in particular the limit at the end of training.

Yet, there is one problem: no previous Tensor Program language can express adaptive optimization! The main issue is the expression of the entrywise “normalization” of the gradient done by the optimizer: For example, in the first step of training, Adam essentially just takes the sign of the gradient; while previous Tensor Program languages like \netsort{} can express this operation for the input and output weights, they cannot do so for the hidden weights.%
\footnote{More generally, \netsort{} can only express this for vector-like (1 dimension tends to infinity) parameters but not matrix-like (2 dimensions).}

Thus, we extend \netsort{} (pronounced ``NETS-ert'') to a new language \nexort{} (pronounced ``NEK-zort'') by allowing such operations. More precisely, this operation can be construed as a “nonlinear outer product” of vectors.%
\footnote{This was called ``nonlinear \emph{tensor} product'' in \citep{yang5}, but here we adopt the term \emph{outer} to avoid over-using the word \emph{tensor}.}
For example, a nonlinear outer product $z$ of two vectors $x, y \in \R^n$ has $z_{\alpha\beta} = \phi(x_\alpha, y_\beta)$ for some $\phi: \R^2 \to \R$.
In its full generality, nonlinear outer products express the the gradient processing done by Adam and other optimizers.
\nexort{} can be just thought of as ``\netsort{} $+$ nonlinear outer products.''

We prove the Master Theorem for \nexort{} (i.e., how to take the infinite width limit for any \nexort{} program), for both the classical Gaussian case (where matrices are sampled from Gaussians) as well as the general non-Gaussian case, following the strategy of \cite{tp3b}.

\paragraph{Infinite Width Limits for Any Architecture}

The Tensor Programs series is known for \emph{architecturally universal} results — results that hold for all “natural deep learning architectures”, past and future. For example, \cite{yang} established the architectural universality of Neural Network-Gaussian Process (NNGP) Correspondence, \cite{yang2,tp2b} established the same for the Neural Network-Tangent Kernel Correspondence, and \cite{yang3} likewise for Free Independence Principle. Yet, \cite{yang4}’s theoretical development of maximal update parametrization only focused on multi-layer perceptrons.%
\footnote{with a brief discussion of defining $\mu$P for any architecture in the appendix, but nothing about its limit; \cite{yang5} did a thorough empirical investigation of muP for a variety of architectures but nothing theoretical}

Here we write down the $\mu$-limit (as well as the neural tangent limit) for any architecture and any adaptive optimizer. The key innovation here is the \emph{definition} of what “any architecture” means, while the proofs follow essentially the MLP examples.

\paragraph{Notational Advances}
Prior works did not write down the $\mu$-limits for all ``natural'' architectures mainly because the Tensor Programs notation was not efficient enough to deal with this arbitrary complexity. The mundane looking but vital innovation of this work is a new set of Tensor Programs notations that enables concise expression of all of the above: the bra-ket (aka Dirac) notation, borrowed from quantum physics. For readers familiar with prior Tensor Programs papers, in short: $$\ket x= Z^{x},\quad\braket xy=\EV Z^{x}Z^{y}.$$
The expectation inner product becomes notably succinct in the new notation, which also enables much more efficient expressions of the nonlinear outer product that is at the center of adaptive optimization.

\paragraph{Contributions}
\begin{itemize}
  \item We formalize a general notion of adaptive optimization in deep learning, called \emph{entrywise updates} --- the property that gradients are processed entrywise --- satisfied by common optimizers like SGD and Adam \citep{adam}.
  \item We define the neural tangent and maximal update parametrizations for entrywise optimizers and derive their infinite-width limits. While we focus on MLPs in most of this paper for pedagogical purposes, we eventually write down the limits for any ``reasonable'' architecture.
  \item More generally, like \cite{yang4}, we identify all ``natural'' infinite-width limits in this setting and dichotomize them into feature learning vs a nonlinear version of kernel regime. The maximal update limit remains the ``optimal'' feature learning limit for all entrywise optimizers.
  \item All of the above results are made possible by a new version of Tensor Program, called \nexort{}, that can express the adaptive updates using the new instruction of \emph{nonlinear outer product}. This forms the bulk of the technical advances made in this work.
  \item Even so, the most vital contribution of this work is perhaps introducing the bra-ket notation to drastically simplify expressions and calculations common in Tensor Programs. 
\end{itemize}
The infinite-width limits of adaptive optimizers take the headline for this paper and may be what piques readers' interest in the near term.
But the new Tensor Program and the new notation will likely have longer lasting impact, pushing forward our fundamental knowledge of large neural network behavior and lowering the translation tax between how this knowledge is stored on paper and in our heads.

\paragraph{The \emph{Tensor Programs} Series}
In a way, this work gathers and generalizes all previous results in the \emph{Tensor Programs} series about infinite-width limits:
\cref{thm:MemorylessGeneralNTKLimit} generalizes the architecturally universal Neural Network-Gaussian Process correspondence \citep{yang} and Neural Network-Tangent Kernel correspondence \citep{yang2,tp2b};
the new Tensor Program, \nexort{} (\cref{defn:nexort}), generalizes the \netsortp{} of \cite{yang3};
its Master Theorem (\cref{thm:MasterTheorem}) holds for both Gaussian and non-Gaussian matrices, generalizing \cite{tp3b};
the Dynamical Dichotomy for entrywise updates (\cref{cor:dichotomyMain}) generalizes that for SGD \citep{yang4};
our $\mu$-limit equations generalize those of \cite{yang4} for SGD and, for the first time, we even write down the general $\mu$-limit equations for any architecture (\cref{thm:mulimit_anyarch}).

\section{Related Work}\label{sec:rw}

\paragraph{Infinite-Width Neural Networks}
Here we briefly overview past works on infinite-width neural networks, but we recommend the reader to refer to \cite[Sec 2]{yang4} for a more comprehensive review.
A large body of literature exists on both the kernel (NTK) limit \cite{ntk,wide,arora,yang2,tp2b} and the mean field limit for 2-layer neural network \cite{mf1,mf2,mf3,mf4,mf5}. Various papers describe the kernel and feature learning regimes more generally without taking an infinite-width limit. \cite{lazy} describes the ``lazy training'' regime in arbitrary differentiable programs, and is controlled by a single parameter $\alpha$ which scales the output. It is shown that when $\alpha$ is large, the weight need only move slightly to fit the training data, and network essentially performs kernel learning. Many papers \cite{AllenZhu2019ACT,Huang2020DynamicsOD,Mei2018AMF} view the kernel and feature learning regimes as learning in different timescales, explicitly incorporating the time dependence in the infinite-width limit, and others derive finite width corrections to the NTK for finite width networks \cite{boris1, resntk}. 
In this paper, as in \cite{yang4}, we consider training time to be constant, and take only the width to infinity. 

\paragraph{Tensor Programs}
Tensor Programs, first introduced in \cite{yangScalingLimit} and expanded upon in \cite{yang,yang2,yang3}, were developed as a theoretical framework to analyze the infinite-width limits of any architecture expressible in a simple formal language, in an attempt to unify the per-architecture analysis prevalent in the literature \cite{Alemohammad2021TheRN,Du2019GraphNT,Littwin2020OnIH,infiniteattention}. 
\cite{yang4} defined a natural space of neural network parametrizations (abc-parametrizations), and classified all resulting infinite-width limits into two possible catagories: 1) the kernel regime, in which the neural network function evolves as a linear model, and 2) the feature learning regime, in which the representations change and adapt to data over the course of training. The $\mu$ parametrization was then identified as the ``optimal'' parametrization for arbitrary architectures in which all layers learn features, and was later heuristically extended to adaptive optimizers \cite{yang5}.

\paragraph{Adaptive Optimizers}
Adaptive optimizers \cite{adam,Duchi2010AdaptiveSM,Zhou2018ADAPTIVELR} and their variants were developed to accelerate learning by adapting the learning rate on a per parameter basis, and currently is a critical component of large scale pretraining of transformer models \cite{Huang2020ImprovingTO,Liu2020UnderstandingTD,Zhang2019WhyAB}. 
No previous work has developed their theory for infinite-width neural network, but a concurrent work has derived the infinite-width NTK for SignSGD in the batch-size 1 setting \citep{malladi_kernel-based_2022} (which is not equivalent to the general batch-size setting).

\section{Notations}
\label{sec:notations}

One of the the key innovations of this work is a set of much cleaner notations to express ideas in Tensor Programs.
While this sizeable section may be off-putting to some readers, it's better to explain the notation sooner than later.
We recommend skimming until the end of the \emph{Outer Product} subsection and then move on, coming back to read other parts when necessary.

\paragraph{Multi-vectors and Multi-scalars}
Throughout this paper, we expect $n$ to be large. If $x \in \R^n$, then we say $x$ is an \emph{$n$-vector}, or just \emph{vector}.
If $\xx \in \R^{n \times k}$ where $k$ is fixed as $n \to \infty$, then we say $\xx$ is a \emph{multi-vector}, with the intuition that $\xx$ can be thought of as a $k$-tuple of vectors.
Likewise, $c \in \R$ is a \emph{scalar}, but we will call $\cc \in \R^k$ (where $k$ is fixed as $n \to \infty$) a \emph{multi-scalar} (not a vector), with the intuition that $\cc$ can be thought of as a $k$-tuple of scalars.
A more formal definition is given in \cref{sec:multitensor}.

\paragraph{Averaging over $n$}
When $x \in \R^n$, we always use greek subscript $\alpha, \beta, \ldots \in [n]$ to index its entries.
Then $\la x_\alpha \ra_\alpha$ denotes its average entry.
This notation will only be used to average over $n$-dimensions, but not over constant dimensions.

\subsection{The Tensor Program Ansatz: Representing Vectors via Random Variables}
\label{sec:TPansatz}

As we will see, as width becomes large, the entries of the (pre-)activation
vectors and their gradients will become roughly iid (just like in
the SGD case), both at initialization (which is easy to see) and training
(which is harder to see). Hence any such vector's behavior can be
tracked via a random variable that reflects the distribution of its
entries. While we call this the ``Tensor Program Ansatz'', it is
a completely rigorous calculus as seen below in \cref{sec:nexort} (as well as in previous
papers in the Tensor Programs series for SGD).

\subsubsection{Ket Notation}

Concretely, if $x\in\R^{n}$ is one such vector, then we write $\ket x \in \R$ (called a \emph{ket})
for such a random variable, such that $x$'s entries look like iid
samples from $\ket{x}$.%
\footnote{
  The reader may wonder why we write $\ket{x}$ instead of the more conventional $|x\ra$ in quantum mechanics.
  This is mainly because later (\cref{defn:oplim}) we want to write $\oplim{W}$ for the ``limit'' of a matrix $W$ so that $\oplim W x\ra = \ket{Wx}$, but if we used $|$ instead of $\ob$, then $|W|$ looks too much like some norm of $W$.
  }
For any two such vectors $x, y \in \R^n$, $(x_\alpha, y_\alpha) \in \R^2$ for each $\alpha$ will look like iid samples from the random vector $(\ket x, \ket y)$, such that, for example, $\lim_{n\to\infty}\frac{x^{\top}y}{n}=\EV \ket{x}\cdot \ket{y}$, which we write succinctly as just $\braket x y$.
Here $\bra x$ is called a \emph{bra}, interpreted as a sort of ``transpose'' to $\ket x$.
In our convention, $\ket x$ is always a random variable independent of $n$ and $x$ always has $\Theta(1)$ typical entry size.\footnote{i.e., $\|x\|^{2}/n=\Theta(1)$ as $n\to\infty$}.

\paragraph{Multi-Vector Kets}
Furthermore, this
notation cleanly handles the multi-vector case when $\xx=(x^{1},\ldots,x^{k})$
is an $n\times k$ matrix where $k$ is fixed as $n\to\infty$: 
\[
\ket{\xx}=(\ket{x^{1}},\ldots,\ket{x^{k}})\in\R^{k},\quad\braket{\xx}{\yy}\in\R^{k\times l},
\]
if $\yy$ has shape $n\times l$. Here $\ket{\xx}$, a ket, should be thought
of as a row vector (and its ``transpose'' $\bra{\xx}$, a bra, as a column vector),
so that $\ket{\xx}\vv=\sum_{i}\ket{x^{i}}v^{i}$
for any vector $(v^{1},\ldots,v^{k})\in\R^{k}$. Generally, the intuition
is that in the expression $\ket{\xx}$, the $\ob$ side represents
the $n$-dimension (in the limit as $n\to\infty$) while the $\rangle$
side represents the $k$-dimension. Thus 
\begin{center}
$\braket{\xx}{\yy}$ represents
the limit of $\xx^{\trsp}\yy/n$.%
\footnote{Note that later, we will consider $\xx$ of shape $n \times k_1 \times \cdots \times k_r$, in which case $\ket{\xx}$ and $\bra{\xx}$ both have shape $k_1 \times \cdots \times k_r$, and $\braket{\xx}\xx$ has shape $k_1 \times \cdots \times k_r \times k_1 \times \cdots \times k_r$.}
\end{center}

Because we will often need to multiply a ket with a diagonal matrix, we introduce a shorthand:
\begin{equation}
  \ket\xx _{\cchi} = \ket\xx \Diag(\cchi),\label{eqn:multdiagnotation}
\end{equation}
if $\xx$ is $n\times k$ and $\cchi$ is a $k$-dimensional vector.

\subsubsection{Outer Product}
\label{sec:ket_outer}

Likewise, if both $\xx$ and $\yy$ have shape $n\times k$, the expression
\[
\ket{\xx}\bra{\yy}\ \text{ represents the limit of $\xx\yy^{\trsp}\in\R^{n\times n}$.}
\]
More formally,
$\ket{\xx}\bra{\yy}$ is defined as an operator that takes a ket $\ket{\zz}\in\R^{j}$
and return the ket
\[
(\ket{\xx}\bra{\yy})\ket{\zz}=\ket{\xx}(\braket{\yy}{\zz})\in\R^{j}
\]
i.e., it returns the random vector $\ket{\xx}\in\R^{k}$ multiplied
by the deterministic matrix $\braket{\yy}{\zz}\in\R^{k\times j}$
on the right. 
This corresponds to the limit of $\xx \yy^\trsp \zz/n$.
Likewise, $\ket{\xx}\bra{\yy}$ acts on a bra $\bra{\ww}\in\R^{j}$
by
\[
\bra{\ww}(\ket{\xx}\bra{\yy})=(\braket{\ww}{\xx})\bra{\yy}\in\R^{j}.
\]
which corresponds to the limit of $\frac 1 n \ww^\trsp \xx \yy^\trsp$.
This definition of $\ket{\xx}\bra{\yy}$ makes the expressions
\[
\ket{\xx}\braket{\yy}{\zz},\quad\braket{\ww}{\xx}\bra{\yy},\quad\braket{\ww}{\xx}\braket{\yy}{\zz}
\]
unambiguous (since any way of ordering the operations give the same
answer).

\begin{rem}[Potential Confusion]
One should \emph{not} interpret $\ketdbra{\xx}{}\yy$ as the scalar random variable $\ket\xx \cdot \ket \yy  = \sum_{i=1}^k \ket {x^i}\ket{y^i}$, which would act on a ket $\ket{\zz}$ to produce $(\bra{\xx} \cdot \bra\yy)\ket{\zz} = \EV (\ket\xx \cdot \ket \yy)\ket{\zz} $, which is deterministic.
On the other hand, $\ketdbra{\xx}{}\yy \zz \ra$ is always a linear combination of $\ket \xx$, a nondeterministic random variable in general.
In particular, any correlation between $\ket \xx$ and $\ket \yy$ does not directly play a role in their outer product $\ketdbra\xx{}\yy$: we always have $\ketdbra{\xx}{}\yy \zz \ra = \ketdbra{\xx}{}\yy^{\bx1} \ket\zz^{\bx1}$, where $(\ket \yy^{\bx1}, \ket \zz^{\bx1})$ is an iid copy of $(\ket \yy, \ket \zz)$ independent from $\ket \xx$. (See \cref{sec:notation_iid} below for more comment on this notation).
\end{rem}

\paragraph{Outer Product with Diagonal Inserted}
Finally, if $\cchi\in\R^{k}$ is deterministic, then (consistent with \cref{eqn:multdiagnotation}) we define $\ketdbra{\xx}{\cchi}{\yy}$
as the operator that acts on kets $\ket{\zz}\in\R^{j}$ by
\[
(\ketdbra{\xx}{\cchi}{\yy})\ket{\zz}=\ketdbra{\xx}{\cchi}{\yy}{\zz}\ra=\ket{\xx}\Diag(\cchi)(\braket{\yy}{\zz})\in\R^{j}.
\]
Morally, $\ketdbra{\xx}{\cchi}{\yy}$ is just a shorter way of writing
$\ket{\xx}\Diag(\cchi)\bra{\yy}$ and represents the limit of $\xx\Diag(\cchi)\yy^{\trsp}$.
In particular, $\ketdbra{\xx}{\boldsymbol{1}}{\yy}=\ket{\xx}\bra{\yy}$.

\subsubsection{Nonlinear Outer Product}

If $x y^\trsp \in \R^{n\times n}$ is the (linear) outer product of two vectors $x \in \R^n$ and $y \in \R^n$, then $\phi(xy^\trsp)$, the entrywise application of nonlinear $\phi: \R \to \R$ to $xy^\trsp$, is a kind of \emph{nonlinear outer product}.%
\footnote{The general definition of \emph{nonlinear outer product} is given in \cref{def:nonlinouter}, but for the most part here we will only be concerned with this particular type of nonlinear outer product and its generalization below.}
Passing to the ket notation, in general we define $\phi\left(\ketdbra{\xx}{\cchi}{\yy}\right)$
as the operator that acts on kets as
\[
\phi\left(\ketdbra{\xx}{\cchi}{\yy}\right)\ket{\zz}\defeq\EV_{\bx{1}}\phi\left(\sum_{i=1}^{k}\chi^{i}\ket{x^{i}}\ket{y^{i}}^{\bx{1}}\right)\ket{\zz}^{\bx{1}}
\]
where $\left(\ket{y^{1}}^{\bx{1}},\ldots,\ket{y^{k}}^{\bx{1}},\ket{\zz}^{\bx{1}}\right)$
is an iid copy of $\left(\ket{y^{1}},\ldots,\ket{y^{k}},\ket{\zz}\right)$
independent from $\ket{\xx}$ and the expectation is taken only over
the former. 
This is just like, in the finite $n$ case,
\[
\phi\left(\xx \Diag(\cchi) \yy^\trsp \right)\zz/n = \phi\left(\sum_{i=1}^{k}\chi^{i} x^{i}{y^{i\trsp}}\right) {\zz}/n.
\]

Moreover, if $\ket{\ww}\in\R^{j},\ket{\zz}\in\R^{k}$,
then
\begin{align*}
\bra{\ww}\phi\left(\ketdbra{\xx}{\cchi}{\yy}\right)\ket{\zz} & =\bra{\ww}\phi\left(\ketdbra{\xx}{\cchi}{\yy}^{\bx{1}}\right)\ket{\zz}^{\bx{1}}\in\R^{j\times k}\\
 & =\EV\phi\left(\sum_{i=1}^{k}\chi^{i}\ket{x^{i}}\ket{y^{i}}^{\bx{1}}\right)\left(\ket{\ww}\otimes\ket{\zz}^{\bx{1}}\right)
\end{align*}
where $\otimes$ denotes outer product of vectors and expectation
is taken over everything.

More generally, if $\phi: \R^t \to \R$, then $\phi\Big(\ketdbra{\xx_1}{\cchi_1}{\yy_1}, \ldots, \ketdbra{\xx_t}{\cchi_t}{\yy_t}\Big)$ is an operator taking kets to kets, defined by
\begin{align*}
  \phi\Big(\ketdbra{\xx_1}{\cchi_1}{\yy_1}, \ldots, \ketdbra{\xx_t}{\cchi_t}{\yy_t}\Big) \ket \zz \defeq
  \EV_{\bx{1}}\phi\left(\sum_{i=1}^{k}\chi_1^{i}\ket{x_1^{i}}\ket{y_1^{i}}^{\bx{1}}, \ldots, \sum_{i=1}^{k}\chi_t^{i}\ket{x_t^{i}}\ket{y_t^{i}}^{\bx{1}}\right)\ket{\zz}^{\bx{1}}
\end{align*}

\begin{rem}[Potential Confusion]
Note $\phi(\ket \xx \bra \yy)$ is not the image of the operator $\ket \xx \bra \yy$ under $\phi$ in the continuous function calculus of operators, but rather a ``coordinatewise application'' of $\phi$.
For example, if $\phi(t) =t^2$, then $\phi(\ket x \bra y)$ \emph{is not} $\ket x \bra y x\ra \bra y$, the latter being what typically ``squaring an operator'' means, but rather $\ket{x}^2 \bra{y}^2 = \ket{x\odot x}\bra{y\odot y}$.
\end{rem}

\subsubsection{Bar Notation}

In later applications, when $\phi$ is an update function (such as $Q_t$ in \cref{eq:Qt}), this will be clear from context.%
\footnote{\label{footnote:bar_abbreviation}
  For example, the bar notation in $\Qketdbra{d\hh^l_t}{\cchi_t}{\xx^{l-1}_t}$ abbreviates $Q^l_t$ where $l,t$ are the same as in $d\hh^l_t$ inside.
}
Then we use the much lighter ``bar'' notation
\begin{equation}
  \begin{aligned}
    \overline{\ket\xx \vv} &= \phi(\ket\xx \vv)\\
    \Qketdbra{\xx}\cchi\yy &= \phi(\ketdbra\xx\cchi\yy)\\
    \Qketdbra{\xx}\cchi\yy \zz \ra &= \phi(\ketdbra\xx\cchi\yy) \ket \zz\\
    \la \ww \Qketdbra{\xx}\cchi\yy \zz \ra &= \bra \ww \phi(\ketdbra\xx\cchi\yy) \ket \zz.
  \end{aligned}
   \numberthis\label{eqn:barnotation}
\end{equation}

\subsubsection{Random Vector Calculation}

In contrast to the cases above, when an expression involves only kets
(or bras), then the usual calculus of kets as random variables or
vectors apply, e.g., $\ket{\xx}\odot\ket{\yy}$ is just the random
vector formed from entrywise product of $\ket{\xx}$ and $\ket{\yy}$.%
\footnote{From readers with quantum mechanics background, beware that $\ket x \ket y$ in our context is the \emph{product of random variables} $\ket x$ and $\ket y$, which is \emph{not equal} to their ``tensor product'' (which would be written $\ket x \ket y ^{\bx 1}$).}

\subsubsection{Comparison with Previous $Z^\bullet$ Notation}
For readers familiar with the \emph{Tensor Programs} papers, this new ``bra-ket'' notation
(aka Dirac notation) relates to the old $Z^\bullet$ notation by
\[
\ket x= Z^{x},\quad\braket xy=\EV Z^{x}Z^{y}.
\]
The new notation's succinctness of expectation inner product should already be apparent.
Furthermore, the old notation is not very compatible with multi-vectors whereas $\ket x$ makes it clear that $\ra$ represents the constant dimension side.
Consequently, (nonlinear) outer product is awkward to express in it, especially when its contraction with random variables requires an explicit expectation symbol $\EV$.

\subsection{IID Copies}
\label{sec:notation_iid}

As already seen above, if $\XX$ is any random object, then $\XX^{\bx{1}},\XX^{\bx{2}},\ldots$
denote iid copies of $\XX$: $\XX^{\bx{i}}\disteq\XX$ for all
$i$ and $\XX,\XX^{\bx{1}},\XX^{\bx{2}},\ldots$ are mutually
independent. If $\YY$ is another random object, then $\YY^{\bx{1}},\YY^{\bx{2}},\ldots$
are iid copies of $\YY$ that furthermore satisfy $(\XX^{\bx{i}},\YY^{\bx{i}})\disteq(\XX,\YY)$
for each $i$. $\EV_{\bx{i}}$ means taking the expectation over
the iid copies with superscript $\bx{i}$.

\subsection{Big-O Notation}
\label{sec:bigO}

\begin{rem}[Potential Confusion]
Following previous papers of the Tensor Programs series, we adopt the following semantics of big-O notation, which concerns the ``typical size'' of entries of a tensor rather than the norm of the tensor (as is the more common usage of big-O notation).
Therefore, the reader \emph{must} internalize this notation sooner rather than later to avoid confusion.
\end{rem}
\begin{defn}[Big-O Notation]\label{defn:BigO}
  Given a sequence $\xx = \{\xx(n)\}_{n=1}^\infty$ of random tensors, where $\xx(n)$ can have different shapes for different $n$, we write $\xx = \Theta(n^{-a})$ and say \emph{$\xx$ has coordinates (or entries) of size $\Theta(n^{-a})$} if there exist constants $A,B > 0$ such that almost surely,%
  \footnote{Here ``almost surely'' is with respect to the probability of the entire sequence $\xx$.}
  for sufficiently large $n$,
  \begin{equation}
    A \le \frac 1 {\#\xx(n)}\sum_{\aalpha}  \xx(n)_{\aalpha}^2 \le B
  \end{equation}
  where $\#\xx(n)$ is the number of entries in $\xx(n)$.
  We make similar definitions for $O(n^{-a})$ and $\Omega(n^{-a})$.
\end{defn}
Note the constants $A, B$ can depend on everything except $n$; in concrete contexts below, such constants can, for example, depend on neural network architecture, training time, optimizer, etc, but just not width.

Most often, $\xx$ will have ``approximately iid'' coordinates, so the notation $\xx=\Theta(n^{-a})$ can be interpreted intuitively to say $\xx$ has coordinates of ``empirical standard deviation'' $\Theta(n^{-a})$, which justifies the name.

We also define $\tilde O$ in a slightly nonstandard manner that is more suitable for our usage.
\begin{defn}\label{defn:tildeO}
  For a random sequence $\cc = \{\cc(n)\}_{n\ge 1}$ of fixed-sized vectors, we write $\cc = \tilde O(n^k)$ if $n^{-k - \eps} \cc \asto 0$ for every $\eps > 0$.
\end{defn}
Morally, $\tilde O(1)$ objects are those that are typically poly-logarithmic in norm, thus coinciding with the more conventional definition of $\tilde O(1)$ in computational complexity theory.
However, technically our notion is a bit more general since they allow any growth slower than any polynomial.

\chapter{Exposition of Main Results}

Here we explain our main results while later chapters will prove them.
We begin by isolating a concept that captures most of adaptive optimizers, namely \emph{entrywise optimizers} (\cref{sec:optimizer}), which forms the focus of this work.
By considering how to scale the learning rate, initialization, and multipliers (a so-called \emph{abcd-parametrization}), we catalogue all natural ways of taking infinite-width limits (\cref{sec:abcd}).
We study the archetypical examples, the (canonical generalizations of) neural tangent (NT) (\cref{sec:neuraltangent}) and the maximal update ($\mu$) (\cref{sec:maximalupdate,{sec:deepmaximalupdate}}) parametrizations, and describe their infinite-width limits.
More generally, we classify all possible limits of abcd-parametrizations (\cref{sec:classifyabcd}): while most parametrizations are degenerate in one way or another, the rest can be divided into the \emph{feature learning} and the \emph{operator} regimes, the latter being the nonlinear counterpart of kernel regime.
The $\mu$ and NT limits are respectively the ``maximal'' elements of each regime in that all parameters contribute to the function evolution.
Nevertheless, like in the SGD case, all operator regime limits, including the NT limit, do not learn features and trivialize transfer learning.
While all of the above stars the MLP as the instructional architecture, finally we write down the NT and $\mu$ limits for any architecture (\cref{sec:anyarch}).

Underlying these results is the new Tensor Program language, \nexort{}, that expresses the so-called \emph{nonlinear outer products} (\cref{sec:nexort}).
We formulate the algorithm, aka the \emph{Master Theorem}, to compute the infinite-width limit of any \nexort{} program.
New techniques, such as new notions of equivalence of random vectors, are needed to prove the Master Theorem;
we overview the proof in \cref{sec:proofsketch} before giving it in full in \cref{chap:masterproof}.

\section{Optimizers}
\label{sec:optimizer}

What does one mean by \emph{adaptive optimization}?
Both SGD and Adam, prototypical optimizers in deep learning, have \emph{entrywise updates}, where parameter updates take the form of a function of the current and/or past gradients.
This turns out to be a concept that captures most of adaptive optimizers.

\paragraph{Entrywise Updates}
Generically, if $g_{0},g_{1},...,g_{t}\in\R$ denote the gradients
of some scalar parameter $w\in\R$ at steps $0,1,...,t$, a general
notion of an update at step $t$ takes
the form 
\begin{equation}
  w_{t}-w_{t-1}=-\eta Q_{t}(g_{0},\ldots,g_{t})\label{eq:Qt}
\end{equation}
for some function $Q_{t}:\R^{t+1}\to\R$ which we call the \emph{update function}, where $\eta$ is the learning
rate.
We call \cref{eq:Qt} an \emph{entrywise update} and the corresponding optimizer an \emph{entrywise optimizer}.
For example, for SGD, $Q_{t}$ just returns $g_{t}$.
For SGD with momentum $\beta$,
  $Q_t(g_0, \ldots, g_t) = \beta^{t} g_0 + \cdots + \beta^0 g_t.$
These are examples of \emph{linear} entrywise updates, where $Q_t$ is linear.
On the
other hand, ``adaptive updates'' are generally nonlinear, where $Q_{t}$ typically takes the
form%
\footnote{Sometimes, the denominator is $\sqrt{v} + \epsilon$ instead. For practical purposes, there is no difference between these two versions.
However, \cref{eqn:Qexample} is differentiable at $v = 0$, satisfying our smoothness assumption \cref{assm:MLPsmooth} while the alternative is not.}
\begin{equation}
Q_{t}(g_{0},\ldots,g_{t})=\frac{m}{\sqrt{v+\epsilon^2}}\label{eqn:Qexample}
\end{equation}
where $m$ and $v$ are both functions of the past gradients $g_{0},\ldots,g_{t}$ and $\epsilon>0$ is there for numerical stability.
For example, in Adam \citep{adam}, $m$ and $v$ are respectively the exponential
moving averages of them and their squares, resulting in the following unwieldy expression:
\begin{equation}
  Q_t(g_0, \ldots, g_t) = 
\frac{\frac{1}{1-\beta_{1}^{t+1}}\sum_{s=0}^{t}(1-\beta_{1})\beta_{1}^{t-s} g_s}
{\sqrt{\frac{1}{1-\beta_{2}^{t+1}}\sum_{s=0}^{t}(1-\beta_{2})\beta_{2}^{t-s} g_s^2 +\epsilon^2}}, \tag{$\mathtt{Adam}$}
\label{eqn:Adam}
\end{equation}
where $\beta_1,\beta_2$ are Adam's momentum hyperameters.
We can also consider a simpler ``memoryless'' version of this, namely SignSGD \citep{bernstein_signsgd_2018}:
\begin{equation}
  Q_t(g_0, \ldots, g_t) = Q_t(g_t) = \frac{g_t}{\sqrt{g_t^2 + \epsilon^2}}.
  \tag{$\mathtt{SignSGD}$}
  \label{eqn:SignSGD}
\end{equation}

In the context of an $L$-hidden-layer MLP, we can more generally consider a collection $\QQ = \{Q^l_t:\R^{t+1}\to\R\}_{t\ge0, l \in [L+1]}$ of update functions, one for each layer $l$.
\begin{defn}\label{defn:memoryless_stationary}
  We say $Q^l_t:\R^{t+1}\to\R$ is \emph{memoryless} if $Q^l_t(g_{0},\ldots,g_{t})$ only depends on $g_t$
  for any $(g_{0},\ldots,g_{t})\in\R^{t+1}$.
  We say $\QQ$ is \emph{memoryless} if all $Q^l_t$ are memoryless.
\end{defn}
\begin{defn}
  We say a memoryless $\QQ$ is \emph{stationary} if $Q^l_t$ does not depend on $l$ or $t$.
  In this case, we just write $Q: \R \to \R$ instead of $\QQ$.
\end{defn}
We will also write \emph{memoryful} and \emph{nonstationary} for the opposite of \emph{memoryless} and \emph{stationary.}
In this sense, SGD and SignSGD are both memoryless and stationary but Adam is neither.
We will always present the memoryless stationary versions of our theorems first, as they carry across the main ideas.
The full version (i.e., memoryful nonstationary) will always be a straightforward modification, though often requiring more notations.

\paragraph*{Optimizer Coverage}
Many other optimizers are covered by this entrywise update framework \cref{eq:Qt}, including 
RMSProp,
Adagrad,
Adadelta,
NAdam,
Adamax,
etc \cite{adam,shazeer_adafactor_2018,Zhou2018ADAPTIVELR,Dozat2016IncorporatingNM,Loshchilov2017DecoupledWD}.
However, some other ingredients of ``adaptive'' optimization, such as gradient clipping, weight decay, or momentum factoring as in Adafactor \citep{shazeer_adafactor_2018}, are not directly covered.
Nevertheless, using our new extension of Tensor Programs discussed in \cref{sec:nexort}, it is straightforward to derive and classify the infinite-width limits including such ingredients, and we do so in \cref{sec:extensions}.
Such theorems will be by-and-large the same as what we have here (e.g., Dynamical Dichotomy \cref{cor:dichotomyMain} still holds), but the definitions of neural tangent and maximal update parametrizations (\cref{defn:NTP},\ref{defn:mup}) can change, as well as, e.g., the equations characterizing feature learning.
See \citep[Sec B.3]{yang5} for further intuitive discussions.

\paragraph{Insufficient Expressivity of Previous Tensor Programs}
If $Q_t$ in \cref{eq:Qt} is nonlinear, then \netsortp{}, the most advanced version of Tensor Programs before this work, is unable to express \cref{eq:Qt} for hidden weights of a network (weight matrices where both dimensions tend to infinity).
In this work, we extend \netsort{} to \nexort{} that \emph{can} express it and develop its Master Theorem.

\section{abcd-Parametrizations}
\label{sec:abcd}
In this work, we consider how such optimization should be parametrized
wrt the width of a neural network, generalizing the the abc-parametrization
of \cite{yang4}.

For concreteness, consider an $L$-hidden-layer biasless perceptron:
For weight matrices $W^{1}\in\R^{n\times d}$ and $W^{2},\ldots,W^{L}\in\R^{n\times n}$,
and nonlinearity $\phi:\R\to\R$, such a neural network on input $\xi\in\R^{d}$
is given by 
\begin{align}
  h^{l}(\xi)=W^{l}x^{l-1}(\xi)\in\R^{n},\quad x^{l}(\xi)=\phi(h^{l}(\xi))\in\R^{n},\quad\text{for \ensuremath{l=1,\ldots,L},}\label{eqn:MLP}
\end{align}
where $x^0(\xi) = \xi$
and the network output is $f(\xi)=W^{L+1\trsp}x^{L}(\xi)$ for $W^{L+1}\in\R^{n\times 1}$.
\begin{defn}\label{defn:abcd}
  Fix a set of update functions $\QQ = \{Q^l_t:\R^{t+1}\to\R\}_{t\ge0, l \in [L+1]}$.
  An \emph{abcd-parametrization} of the MLP in \cref{eqn:MLP} is specified by a set of numbers $\{a_{l},b_{l},c_{l},d_{l}\}_{l}$
  such that
  \begin{enumerate}[\hspace{30pt} (a)]
  \item We parametrize each weight as $W^{l}=n^{-a_{l}}w^{l}$ for actual
  trainable parameter $w^{l}$ 
  \item We initialize each $w_{\alpha\beta}^{l}\sim\Gaus(0,n^{-2b_{l}})$
  \item The learning rate is $\eta n^{-c_{l}}$ for some width-independent
  $\eta$
  \item The gradients of $w^{l}$ are multiplied by $n^{d_{l}}$ before being
  processed by $Q^l_t$: i.e., the update at time $t$ is
  \begin{equation}
  w^{l}\gets w^{l}-\eta n^{-c_{l}}Q^l_t(n^{d_{l}}g_{0},\ldots,n^{d_{l}}g_{t})
  \label{eqn:abcdupdate}
  \end{equation}
  where $g_{s},s=0,\ldots,t$, are the gradients of $w^{l}$ at time
  $s$ and $Q^l_t$ is applied entrywise.
  \end{enumerate}
\end{defn}

A simple example is the ``standard parametrization'' that is the default for, e.g., PyTorch, where nothing scales with width other than the initialization.

\begin{exmp}\label{exmp:SP}
  The \emph{standard abcd-parametrization (SP)} is defined by
\begin{center}
  \begin{tabular}{cccc}
    \toprule
    $l$ & $[2,L]$ & $1$  & $L+1$\tabularnewline
    \midrule
    $a_l$ & $0$ & $0$  & $0$\tabularnewline
    $b_l$ & $\nicefrac{1}{2}$  & $0$ & $\nicefrac{1}{2}$ \tabularnewline
    $c_l$ & $0$ & $0$  & $0$\tabularnewline
    $d_l$ & $0$ & $0$  & $0$\tabularnewline
    \bottomrule
  \end{tabular}
\end{center}
\end{exmp}

In \cref{defn:abcd}, beyond the obvious addition of scaling exponent $d_l$, compared to abc-parametrization of \cite{yang4}, we also now have layer dependent $c_l$.
This is without loss of generality, because of the redundancy in $a_l, b_l, c_l$, as shown in \cite[Eq 5]{yang4}.
This takes the more general form as follows for abcd-parametrization:
\begin{prop}[abcd Redundancy]
  For every $l \in [L+1]$,
\begin{center}
  for all $\theta \in \R$, at any finite width $n$, \emph{$\ensuremath{f_{t}(\xi)}$ stays fixed for all $\ensuremath{t}$ and $\ensuremath{\xi}$ if we set}
  \begin{equation}
    \ensuremath{a_{l}\gets a_{l}+\theta,\ b_{l}\gets b_{l}-\theta,\ c_{l}\gets c_{l}-\theta,\ d_{l}\gets d_{l}+\theta}.\label{stmt:SGDsymmetry}
  \end{equation}
\end{center}
\end{prop}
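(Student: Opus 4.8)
The plan is to exhibit an explicit coupling of the two training runs, at any fixed finite width $n$, under which every effective weight matrix $W^{l'}$, and hence the function $f_t(\xi)$, is \emph{pathwise} identical at every step. Write $\{a_{l'},b_{l'},c_{l'},d_{l'}\}$ for the original exponents and $\{\tilde a_{l'},\tilde b_{l'},\tilde c_{l'},\tilde d_{l'}\}$ for the shifted ones (so $\tilde a_l=a_l+\theta$, $\tilde b_l=b_l-\theta$, $\tilde c_l=c_l-\theta$, $\tilde d_l=d_l+\theta$, and all exponents with index $l'\ne l$ unchanged). Let $w^{l'}_t$ and $W^{l'}_t=n^{-a_{l'}}w^{l'}_t$ be the trainable parameter and effective weight at step $t$ in the original run, and $\tilde w^{l'}_t$, $\tilde W^{l'}_t=n^{-\tilde a_{l'}}\tilde w^{l'}_t$ in the shifted run. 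I couple the initializations by $\tilde w^{l'}_0=w^{l'}_0$ for $l'\ne l$ and $\tilde w^{l}_0=n^{\theta}w^{l}_0$. Since $w^l_0$ has i.i.d.\ $\Gaus(0,n^{-2b_l})$ entries, $n^\theta w^l_0$ has i.i.d.\ $\Gaus(0,n^{-2b_l+2\theta})=\Gaus(0,n^{-2\tilde b_l})$ entries, so this is a valid draw for the shifted parametrization; and $\tilde W^l_0=n^{-a_l-\theta}n^{\theta}w^l_0=W^l_0$, with $\tilde W^{l'}_0=W^{l'}_0$ for every $l'$.

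Next I prove by induction on $t$ that the entire histories agree up to the $n^\theta$ rescaling: for all $s\le t$, $\tilde W^{l'}_s=W^{l'}_s$ for all $l'$, $\tilde w^l_s=n^\theta w^l_s$, and $\tilde w^{l'}_s=w^{l'}_s$ for $l'\ne l$. The base case is the paragraph above. For the inductive step, observe that the forward pass \cref{eqn:MLP} and the loss depend on the parameters only through the effective weights $W^{l'}_s$, which agree by hypothesis; hence all pre-activations, activations, outputs, and backpropagated gradients with respect to the effective weights agree at steps $s\le t$, in particular $\partial\mathcal{L}/\partial\tilde W^{l'}_s=\partial\mathcal{L}/\partial W^{l'}_s$. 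Applying the chain rule through $W^{l'}=n^{-a_{l'}}w^{l'}$ (resp.\ $\tilde W^{l'}=n^{-\tilde a_{l'}}\tilde w^{l'}$), the gradient $\tilde g^{l'}_s$ of $\mathcal{L}$ with respect to $\tilde w^{l'}$ equals $n^{a_{l'}-\tilde a_{l'}}$ times the original gradient $g^{l'}_s$; this is $n^{-\theta}g^l_s$ when $l'=l$ and $g^{l'}_s$ otherwise.

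The crux is the update \cref{eqn:abcdupdate} of layer $l$. Using $\tilde d_l=d_l+\theta$ and $\tilde g^l_s=n^{-\theta}g^l_s$, the \emph{rescaled} gradients fed to $Q^l_t$ are unchanged entrywise: $n^{\tilde d_l}\tilde g^l_s=n^{d_l+\theta}\cdot n^{-\theta}g^l_s=n^{d_l}g^l_s$ for every $s\le t$. Therefore $Q^l_t(n^{\tilde d_l}\tilde g^l_0,\ldots,n^{\tilde d_l}\tilde g^l_t)=Q^l_t(n^{d_l}g^l_0,\ldots,n^{d_l}g^l_t)$; call this common value $q$ (this step uses only that $Q^l_t$ receives literally the same input vector, so no homogeneity or linearity of $Q^l_t$ is required). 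Then $\tilde w^l_{t+1}=\tilde w^l_t-\eta n^{-\tilde c_l}q=n^\theta w^l_t-\eta n^{-c_l+\theta}q=n^\theta(w^l_t-\eta n^{-c_l}q)=n^\theta w^l_{t+1}$, using $\tilde c_l=c_l-\theta$ and the inductive hypothesis $\tilde w^l_t=n^\theta w^l_t$; consequently $\tilde W^l_{t+1}=n^{-a_l-\theta}\,n^\theta w^l_{t+1}=W^l_{t+1}$. For $l'\ne l$ the parametrization and the gradients are unchanged, so $\tilde w^{l'}_{t+1}=w^{l'}_{t+1}$ and $\tilde W^{l'}_{t+1}=W^{l'}_{t+1}$. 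This closes the induction; since $f_t(\xi)$ is a function of the $W^{l'}_t$ alone, it is unchanged for all $t$ and $\xi$. (The same argument composes over layers, giving a simultaneous symmetry with a separate $\theta_l$ per layer, though the statement only asserts the one-layer version.)

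The only delicate point is the interplay between the scaling shifts and the \emph{nonlinear} gradient processor $Q^l_t$: since $Q^l_t$ need not be homogeneous, its argument must be preserved exactly, which is precisely what the triple of shifts accomplishes --- $b_l\gets b_l-\theta$ keeps the effective-weight initialization fixed, $d_l\gets d_l+\theta$ cancels the $n^{-\theta}$ rescaling of the raw gradient so that $Q^l_t$'s input is untouched, and $c_l\gets c_l-\theta$ then corrects the leftover $n^{-c_l}$ learning-rate prefactor against the $n^\theta$ reparametrization of $w^l$. This is exactly where the argument differs from the abc-redundancy of \cite[Eq.~5]{yang4}: there $Q$ is the identity and a single shift $c_l\gets c_l-2\theta$ suffices, whereas here the new exponent $d_l$ must first absorb one factor of $n^{-\theta}$. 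Everything else is routine bookkeeping.
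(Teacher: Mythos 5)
Your proof is correct, and it is exactly the argument the paper intends but leaves unwritten (the proposition is stated without proof as the evident generalization of the abc-redundancy of \cite[Eq.~5]{yang4}): couple $\tilde w^l_0 = n^\theta w^l_0$, check the effective weights and hence all forward/backward quantities coincide pathwise, and verify that the shift $d_l \gets d_l + \theta$ preserves the input to $Q^l_t$ exactly while $c_l \gets c_l - \theta$ absorbs the remaining reparametrization of $w^l$. Your closing remark correctly identifies why the nonlinear $Q^l_t$ forces the extra $d_l$ shift relative to the SGD case, where one instead takes $c_l \gets c_l - 2\theta$.
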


\begin{rem}[Reduction to abc-Parametrization for SGD]
In the case of SGD (i.e., when $Q^l_t(g_{0},\ldots,g_{t})=g_{t}$),
an abcd-parametrization reduces to an abc-parametrization with the
mapping
\begin{equation}
(a_{l},b_{l},c_{l})\gets(a_{l},b_{l},c_{l}-d_{l}).
\label{eqn:abc_reduction}
\end{equation}
\end{rem}

\begin{rem}[Omitted Constants]
As in \cite{yang4}, our concern here is the correct way \emph{to scale with width}.
In practice, there should be tunable hyperparameters in front of the powers of $n$ in \cref{defn:abcd}, as investigated in \cite{yang5}.
\end{rem}

\begin{rem}[Alternative Definition of $d_l$ for Adam]
In the idealized case of Adam and similar adaptive optimizers where the $\epsilon$ in \cref{eqn:Qexample} is 0, $Q^l_t$ is degree-0 homogeneous and $d_l$ itself is redundant.
When $\epsilon > 0$, this is no longer true.
But the \emph{almost homogeneity} yields an alterative but equivalent way to define $d_l$: instead of $g_s$ being multiplied by $n^{d_l}$, we let $\epsilon$ be multiplied by $n^{-d_l}$.
\end{rem}

\begin{rem}[For Adam, SP with Tuned LR Learns Features]
  If one sets the global learning rate for SP (\cref{exmp:SP}) to its largest stable value, then for SGD, SP is in the kernel regime \citep{yang4}.
  But for SignSGD and Adam, assuming perfect scale invariance, SP's largest stable learning rate is $\Theta(1/n)$, so that with this setting (i.e., setting $c_l = 1$ for every $l$), SP is actually in the feature learning regime.
  The reader is not expected to understand the underlying reasoning at this point, but the claim above can be derived by calculating $r=0$ from \cref{def:r} and invoking \cref{thm:stablefaithful} and \cref{thm:abcdclassification}.
  This difference in default scaling may be a contributing factor to the success of Adam compared to SGD.
\end{rem}

\section{Setup}
\label{sec:setup}

Here we set up the notation and conventions regarding the data, (pre)activations, and training of the network as well as the main technical assumptions for our rigorous results.

\paragraph{Data and (Pre)Activations}
Consider a set of $\NN$ inputs $\xxi\subseteq\R^{d}$ considered
as a $d\times\NN$ matrix whose columns $\xi^{a},a=1,\ldots,\NN$
represent individual inputs. Then we write $\ff_{t}\in\R^{\NN}$ for
the function outputs on these inputs, and write $\hh_{t}^{l}$ and
$\xx_{t}^{l}$ (with shape $n\times\NN$) for the pre- and post-activations
of all of such inputs in layer $l$ at time $t$. Similarly, let $d\hh_{t}^{l}$
and $d\xx_{t}^{l}$ (with shape $n\times\NN$) represent the scaled gradients $n^{a_{L}+b_{L}}\nabla_{\hh_{t}^{l}}\ff_{t}$
and $n^{a_{L}+b_{L}}\nabla_{\xx_{t}^{l}}\ff_{t}$ at time $t$ (this
scaling ensures that $d\hh_{t}^{l}$ and $d\xx_{t}^{l}$ have typical
size $\Theta(1)$ entrywise at initialization $t=0$).%
\footnote{Notation-wise, we do not bold the $d$ in $d\hh^l_t$ unless the output dimension $\dout$ is larger than 1, in which case $\db \hh^l_t$ represents $n^{a_{L}+b_{L}}J_{\hh_{t}^{l}}\ff_{t} \in \R^{n \times \dout \times \NN}$.
See \cref{defn:backpropagation}.}

\paragraph*{Error Signal and Training Routine}
While \cite{yang4} considered only the batch-size-1 case to simplify notation, here, because of our notational advance, we can afford to consider the following more general setting.
\begin{setup}\label{assm:trainingroutine}
  We consider the function evolution under an
abcd-parametrization and a sequence of error signal functions $\eps_{t}:\R^{\NN}\to\R^{\NN}\ (t\ge 0)$,
that returns the output error signal given the function values.
A \emph{training routine} is the package of 1) the learning rate $\eta$, 2) collection $\QQ$ of update functions (as in \cref{defn:abcd}) and 3) a sequence of $\eps_t$ as above.
\end{setup}
The $\eps_t$ error signals simultaneously encapsulate both full batch as well as mini-batch training.
For example, we can set $\eps_{t}(\ff)=\loss'(\ff,\yy)\odot batchmask_{t}$
for some loss function $\loss$ and a target vector $\yy\in\R^{\NN}$,
where $batchmask_{t}$ is the vector that is 1 on elements in the
batch at time $t$ and 0 otherwise.

Furthermore, we can implement train-test split via $\eps_t$:
For example,  $\xxi$
can be split into two parts, $\xxi=(\xxi^{train},\xxi^{test})$ such
that $\eps_{t}$ is always 0 on the $\xxi^{test}$.
Then the evolution of $\ff_t$ can track the evolution of function values on the test set due to changes from the training set.

The $\eps_t$ framework more generally covers settings like reinforcement and online learning where the error signal is not obtained from just a simple loss function.

\paragraph*{Technical Assumptions}
For all rigorous results in this work, we will consider the following smoothness assumption.
This is sufficient for deriving the NTK and $\mu$P limits but more assumptions, stated later, are required for Dynamical Dichotomy, i.e., the classification of abcd-parametrizations.
\begin{assm}[Smoothness]\label{assm:MLPsmooth}
  Assume $\phi'$, $\eps_t$, and $Q^l_t$ for all $l,t$ are pseudo-Lipschitz.%
  \footnote{Recall a function $f: \RR^k \to \RR$ is called \emph{pseudo-Lipschitz} if $|f(x) - f(y)| \le C\|x-y\|(1 + \sum_{i=1}^k |x_i|^d + |y_i|^d)$ for some $C,d>0$.
  This is morally the same as saying $f$ has a polynomially bounded derivative.
  }
\end{assm}

This is a very weak assumption satisfied by typical loss functions (e.g., MSE or cross entropy), update functions (e.g., SGD or Adam\footnote{specifically, Adam in the form \cref{eqn:Qexample} with $\epsilon >0$}), and nonlinearities (e.g., tanh or gelu).
The notable exception here is that relu itself is not covered because its derivative has a discontinuity.
But this is a common technicality not treated in the theoretical literature.
Nevertheless, we expect all theorems in this work should apply to relu as well and can be proven rigorously in the future.

With this setup in mind, we next describe the two prototypical infinite-width limits, the neural tangent and maximal update limits, for nonlinear entrywise updates before completely classifying the space of abcd-parametrization.

\section{Neural Tangent}
\label{sec:neuraltangent}

The ``classical'' neural tangent (abc-)parametrization (NTP) can be generalized easily to an abcd-parametrization using the intuition that the input to any nonlinear update function should be $\Theta(1)$ (we won't go through this calculation here but c.f.\ \cref{defn:faithful,{lemma:faithfulconditionsinit}}).
After defining this generalization next, we adapt the well-known continuous-time heuristic for deriving the NTK limit to the nonlinear updates case (\cref{eqn:NTQ_intuition}), before writing down a succinct expression of the infinite-width limit made possible by our new bra-ket notation (\cref{eqn:KQ,eqn:NTK_update_memoryless_stationary}).
\begin{defn}\label{defn:NTP}
  The \emph{neural tangent abcd-parametrization (NTP)} is defined (modulo \cref{stmt:SGDsymmetry}) by
\begin{center}
  \begin{tabular}{cccc}
    \toprule
    $l$ & $[2,L]$ & $1$  & $L+1$\tabularnewline
    \midrule
    $a_l$ & $\nicefrac{1}{2}$ & 0  & $\nicefrac{1}{2}$\tabularnewline
    $b_l$ & $0$  & $0$ & $0$\tabularnewline
    $c_l$ & $1$ & $\nicefrac{1}{2}$  & $\nicefrac{1}{2}$\tabularnewline
    $d_l$ & $1$ & $\nicefrac{1}{2}$  & $\nicefrac{1}{2}$\tabularnewline
    \bottomrule
  \end{tabular}
\end{center}
\end{defn}

\paragraph{Recovering ``Classical'' NTP}
Reducing to abc-parametrization in the SGD case via \cref{eqn:abc_reduction} (i.e., subtracting the last 2 rows), we recover exactly the classical NTP \citep[Table 1]{yang4}.

\subsection{Continuous Time Intuition}
If we consider $\ff_t\in \R^\NN$ as a function of parameters $\Theta_t$ over continuous time $t$, then, for SGD, we have the typical equation
\[\ff'_t = -\eta \left\la \pdf {\ff_t} {\Theta_t}, \pdf {\loss_t} {\Theta_t} \right\ra = -\eta \left\la \pdf {\ff_t} {\Theta_t}, \pdf {\loss_t} {\ff_t} \pdf {\ff_t} {\Theta_t} \right\ra\]
where $\loss_t = \loss(\ff_t)$ and $\loss$ is the loss function.
For a general (memoryless stationary) update function $Q$, this just becomes%
\footnote{Technically, we should include terms involving $d_l$ from \cref{defn:abcd} in \cref{eqn:NTQ_intuition}, but for simplicity, let's just assume that $Q$ is temporarily redefined to have already included them}
\begin{equation}
  \ff'_t = -\eta \left\la \pdf {\ff_t} {\Theta_t}, Q\left(\pdf {\loss_t} {\ff_t} \pdf {\ff_t} {\Theta_t} \right) \right\ra,
  \label{eqn:NTQ_intuition}
\end{equation}
with $Q$ applied entrywise.
In both cases, when width is large in NTP (\cref{defn:NTP}), the weights essentially move so little that $\pdf {\ff_t} {\Theta_t}$ is invariant to $t$ (in so far as this contraction in \cref{eqn:NTQ_intuition} is concerned).
What changes from $Q=\id$ (SGD case) to the general case is that
$\ff'_t$ is no longer linear in the error signal $\eps_t(\ff_t) = \pdf {\loss_t} {\ff_t}$; instead, it is the result of a \emph{nonlinear} operator $\calK_Q$ mapping the error signal to the function update:
\begin{equation*}
  \ff'_t = -\eta \calK_Q\left(\pdf {\loss_t} {\ff_t}\right) = -\eta \calK_Q(\eps_t(\ff_t)).
\end{equation*}
When $Q=\id$, $\calK_Q$ reduces to the linear operator represented by the NTK.
But note that $\ff'_t$ is still linear in the learning rate $\eta$ for general $Q$.

Now we turn to make this intuition rigorous.

\subsection{Infinite-Width Limit}
We can associate random vectors $\ket{\xx_{t}^{l}},\ket{\hh_{t}^{l}}$
to $\xx_{t}^{l}$ and $\hh_{t}^{l}$ as discussed in \cref{sec:TPansatz} (and similar
to the SGD case studied in \cite{yang4}). But as in the case with SGD,
these random vectors will turn out to be independent of $t$ (because
of the lack of feature learning). Therefore, we will drop the subscript
$t$ in the notation below. The construction of $\ket{\xx^{l}},\ket{\hh^{l}}$
follow from the general rules of the \nexort{} program we develop
below,%
\footnote{Actually they are the same random vectors as constructed in \cite{yang2} since this can be done in \netsort{}.}
but here they can be stated simply as follows:
\begin{defn}
For each $l=1,\ldots,L$, the ket $\ket{\hh^{l}}\in\R^{\NN}$ is constructed
as a mean-zero Gaussian vector with covariance matrix $\braket{\xx^{l-1}}{\xx^{l-1}}$
when $l>1$ or $\xxi^{\trsp}\xxi$ when $l=1$. Simultaneously, $\xx^{l}\defeq\phi(\ket{\hh^{l}})\in\R^{\NN}$
for $l=1,\ldots,L$.\footnote{$\{\hh^{1},\xx^{1}\},\ldots,\{\hh^{L},\xx^{L}\}$ are mutually independent
by construction as well, but we will not need this fact.}
\end{defn}
Likewise, we can associate random vectors $\ket{d\xx^{l}},\ket{d\hh^{l}}$
to gradients $d\xx^{l},d\hh^{l}$ (again, suppressing subscript $t$ because
it will turn out the kets are independent of $t$). 
\begin{defn}
For each $l=L,\ldots,1$, the ket $\ket{d\xx^{l}}\in\R^{\NN}$ is independent
from $\{\ket{\xx^{l}},\ket{\hh^{l}}\}_{l=1}^{L}$ and is a mean-zero
Gaussian vector with covariance matrix $\braket{d\hh^{l+1}}{d\hh^{l+1}}$
when $l<L$ or the all-1s matrix when $l=L$. Simultaneously, $\ket{d\hh^{l}}\defeq\phi'(\ket{\hh^{l}})\odot\ket{d\xx^{l}}\in\R^{\NN}$
for all $l$.
\end{defn}

\paragraph{Memoryless Stationary Case}
Finally, having constructed these kets, we can define the generalization
of NTK we need.
To make the formula short, we employ the following convention: we write $\ket{\xx^0} \defeq \xxi \in \R^{d \times \NN}$ and $\ket{d\hh^{L+1}} \defeq \boldsymbol 1 \in \R^{1 \times \NN}$ is the row vector of 1s.
Then we set $\bra{\xx^0} = \ket{\xx^0}^\trsp$ and $\braket{\xx^0}{\xx^0} = \xxi^\trsp \xxi \in \R^{\NN \times \NN}$;%
\footnote{
  One perhaps would be inclined to rescale $\braket{\xx^0}{\xx^0} = \xxi^\trsp \xxi / d$, but in our context, $d$ is a constant while we only focus on scaling with $n$.
  So we choose to be slightly more brief by omitting this scaling with $d$.
} likewise for $\braket{d\hh^{L+1}}{d\hh^{L+1}}$.
At this point,
the reader may find it helpful to review \cref{sec:TPansatz}, especially on \emph{Bar Notation}.
\begin{defn}[Neural Tangent Operator, Memoryless Stationary Case]\label{defn:TangentOperator}
For any function $Q: \R \to \R$, 
we define
the \emph{neural tangent $Q$-operator} $\calK_{Q}:\R^{\NN}\to\R^{\NN}$
by the following: for any $\cchi\in\R^{\NN}$,
\begin{align}
\calK_{Q}(\cchi) & \defeq
  \Diag\sum_{l=1}^{L+1}\la{d\hh^{l}}\Qketdbra{d\hh^{l}}{\cchi}{\xx^{l-1}}{\xx^{l-1}}\ra
  \label{eqn:KQ}
\end{align}
where the ``bar'' notation abbreviates application of $Q$ as in \cref{eqn:barnotation}.%
\end{defn}
Let's digest the notation a bit:
In \cref{eqn:KQ}, 
1) the subscript $\cchi \in \R^\NN$ represents multiplication by $\Diag(\cchi)$,
2) $\Qketdbra{d\hh^{l}}{\cchi}{\xx^{l-1}}$ is a random scalar variable, and
3) $\la{d\hh^{l}}\Qketdbra{d\hh^{l}}{\cchi}{\xx^{l-1}}{\xx^{l-1}}\ra$ is a deterministic $\NN\times\NN$ matrix.
The $\Diag$ in \cref{eqn:KQ} takes its diagonal.
Thus,
$\calK_{Q}(\cchi)$ has entries
\begin{align*}
\calK_{Q}(\cchi)^{a} & =
  \sum_{l=1}^{L+1}\la{dh^{la}}\Qketdbra{d\hh^{l}}{\cchi}{\xx^{l-1}} {x^{l-1,a}}\ra
\end{align*}
for each $a\in[\NN]$.

\begin{exmp}[SGD Example]
As an example, when $Q$ is identity, the ``bar'' can be removed, and $\calK_{Q}$ reduces to
the linear operator represented by the NTK:%
\footnote{Again, $\cchi$ is a row vector. In prior works, it's usually treated as a column vector in which case one would write $K\cchi$ instead.}
\begin{align*}
\calK_{\id}(\cchi) & =\cchi K,\quad\text{where \ensuremath{K\in\R^{\NN\times\NN}} is the NTK}\\
K & \defeq
 \sum_{l=1}^{L+1}\braket{d\hh^{l}}{d\hh^{l}}\odot\braket{\xx^{l-1}}{\xx^{l-1}}.
\end{align*}
\end{exmp}

\begin{exmp}[SignSGD Example]
As another example, consider $Q = \sgn$ (\cref{eqn:SignSGD} with $\epsilon=0$).
If the batch size is 1, i.e., $\cchi$ is nonzero on exactly one input, say $\xi^b, b \in [\NN]$, then \cref{eqn:KQ} is linear in $\sgn(\cchi)$ because $\sgn(xy)=\sgn(x)\sgn(y)$.
Thus,
\begin{align*}
\calK_{Q}(\cchi) & =
  \sgn(\cchi) K_{\sgn},\quad \text{where \ensuremath{K_{\sgn}\in\R^{\NN\times\NN}} is}\\
K_{\sgn} &= \sum_{l=1}^{L+1}\la{d\hh^{l}}\overline{\ket{d\hh^l}} \odot
\la{\xx^{l-1}}\overline{\ket{\xx^{l-1}}} 
\end{align*}
for each $a\in[\NN]$.
This expression was concurrently derived in \citep{malladi_kernel-based_2022}.
However, when batch size is larger than 1, $\calK_Q(\cchi)$ is no longer linear in $\sgn(\cchi)$ generally, and there is no simplification like this.
In particular, in the continuous time gradient flow setting, SignSGD with a large batch size is \emph{not} equivalent to that with batch size 1 but very small learning rate, in contrast to SGD.
\end{exmp}

We can now state the generalized Neural Tangent limit for memoryless stationary updates.
In particular, this covers SGD and SignSGD (\cref{eqn:SignSGD}).
\begin{thm}[Neural Tangent Limit, Memoryless Stationary Case]\label{thm:memorylessNTK}
Consider any training routine (\cref{assm:trainingroutine}) with memoryless stationary update function $Q$ (\cref{defn:memoryless_stationary}).
Adopt \cref{assm:MLPsmooth}.
Then
\[\ff_0 \distto \Gaus(0,\braket{\xx^L}{\xx^L}) \] 
and for every $t$, $\ff_{t}$ converges
almost surely to random function $\mathring{\ff_{t}}\in\R^{\NN}$ satisfying
\begin{align}
   \mathring{\ff}_{t+1} - \mathring{\ff}_{t}&=-\eta\calK_{Q}(\eps_{t}(\mathring{\ff}_{t})),\quad \text{for all $t$}.
   \label{eqn:NTK_update_memoryless_stationary}
\end{align}
\end{thm}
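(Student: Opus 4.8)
The plan is to follow the standard Tensor Programs strategy: write the entire training process under NTP --- the forward pass, backpropagation, and every parameter update through time $t$ --- as a single \nexort{} program (\cref{defn:nexort}), and then read off the $n\to\infty$ behavior by invoking the Master Theorem (\cref{thm:MasterTheorem}). The only step that genuinely needs the extra expressivity of \nexort{} over \netsort{} is the update of the hidden weights $W^2,\dots,W^L$: the increment is $\Delta_s W^l=-\eta\,n^{-a_l-c_l}\,Q\!\big(n^{d_l}g_s\big)$ where the relevant gradient is $g_s\propto n^{-a_l-1/2}\sum_{b}\eps_s^{b}(d\hh^l_s)^{b}(x^{l-1,b}_s)^{\trsp}$, so $Q$ is applied entrywise to a (scaled) sum of rank-one matrices --- a nonlinear outer product --- with the NTP exponents $d_l$ chosen exactly so that the argument $n^{d_l}g_s$ is $\Theta(1)$ entrywise. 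The input- and output-weight updates, and all other operations, are already expressible in \netsort{}.

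Given the program, the bulk of the work is an induction on $t$ establishing two things at once: (i) the feature and gradient kets $\ket{\hh^l_t},\ket{\xx^l_t},\ket{d\xx^l_t},\ket{d\hh^l_t}$ do not depend on $t$ and coincide with the ones in the two Definitions preceding the statement; and (ii) $\ff_t$ has the claimed limit. Item (i) is a size count in the NTP scaling: for $l\in[2,L]$ one has $\Delta_s W^l=\Theta(n^{-3/2})$ entrywise, and, more to the point, the ket rules for outer products (\cref{sec:ket_outer}) show that $\Delta_s W^l\,x$ has $O(n^{-1/2})$ entries whenever $x$ has $\Theta(1)$ entries (and similarly for $l=1$ and for the backward pass). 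Hence the cumulative weight movement $W^l_t-W^l_0$ is invisible to both the forward and backward passes in the limit, so all these kets are frozen at their $t=0$ values, which the Master Theorem identifies with the Definitions (this part is identical to the SGD/NTK computation of \cite{yang2}). The base case $t=0$ also gives $\ff_0=(W^{L+1})^{\trsp}\xx^L_0\distto\Gaus(0,\braket{\xx^L}{\xx^L})$ by the CLT portion of the Master Theorem.

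For (ii), decompose
\[
f^{a}_{t+1}-f^{a}_{t}=\big(W^{L+1}_{t+1}\big)^{\trsp}\!\big(x^{L,a}_{t+1}-x^{L,a}_{t}\big)\;+\;\big(\Delta_t W^{L+1}\big)^{\trsp}x^{L,a}_{t}.
\]
The second term is a contraction of a nonlinear outer product against a vector, so by the ket rule for $\phi(\ketdbra{\cdot}{\cdot}{\cdot})$ it converges to $-\eta\,\EV\,Q\big(\sum_b\eps_t^{b}\ket{x^{L,b}}\big)\ket{x^{L,a}}$, which is exactly the $l=L+1$ summand of $-\eta\calK_Q(\eps_t)^a$ (recall $\ket{d\hh^{L+1}}=\boldsymbol1$). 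The first term is the delicate one: although both factors have $\Theta(n^{-1/2})$ entries, $W^{L+1}_0$ reappears inside $d\hh^L$ (since $d\hh^L_s=n^{1/2}\phi'(\hh^L_s)\odot W^{L+1}$), and this shared initialization randomness makes the inner product converge to a nonzero $\Theta(1)$ limit rather than vanishing. Unwinding $x^{L,a}_{t+1}-x^{L,a}_{t}$ layer by layer into the propagated increments $\Delta_t W^l$, keeping track of this correlation and applying the ket calculus at each level, the first term is found to equal the sum of the $l=1,\dots,L$ summands of $-\eta\calK_Q(\eps_t)^a$. Combining the two terms and using pseudo-Lipschitzness of $\eps_t$ (so $\eps_t(\ff_t)\to\eps_t(\mathring\ff_t)$ once $\ff_t\to\mathring\ff_t$, the inductive hypothesis) closes the induction and gives \cref{eqn:NTK_update_memoryless_stationary}.

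The main obstacle is precisely the first term of the decomposition: the ``obvious'' scaling estimate says it is $o(1)$, and one must instead carefully bookkeep which initialization randomness is shared between the readout weight $W^{L+1}$ and the backward signals $d\hh^l$ (and, recursively, the forward increments), verify that the surviving nonzero-mean part assembles into exactly the $L$ layer-terms of $\calK_Q$ with the correct $\ket{d\hh^l}$ prefactors, and check that nothing else of order $\Theta(1)$ sneaks in (e.g.\ $\Delta_t W^{L+1}$ hitting $x^{L,a}_{t+1}-x^{L,a}_t$, which is genuinely lower order). Secondary technical points: $Q$ is only pseudo-Lipschitz, handled by the smoothness hypotheses in \cref{assm:MLPsmooth} and the pseudo-Lipschitz machinery inside the Master Theorem; and $\mathring\ff_t$ is a genuinely random limit, so the ``almost sure'' convergence is understood with respect to the coupling in which the initial readout Gaussian $\mathring\ff_0$ is adjoined to the program as an initial multi-scalar.
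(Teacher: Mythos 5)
Your proposal is correct and follows essentially the same route as the paper: encode training as a \nexort{} program, invoke the Master Theorem, observe that under NTP all feature/gradient kets are frozen at their $t=0$ values, and split $\mathring\ff_{t+1}-\mathring\ff_t$ into the output-weight contribution plus a layer-by-layer recursion for the hidden/input-weight contributions (the paper packages the latter as \cref{lemma:dh|delh}, using the adjunction $\braket{d\hh^l}{W_0^l\del\xx^{l-1}_t}=\braket{d\xx^{l-1}}{\del\xx^{l-1}_t}$, which is exactly the correlation bookkeeping you describe). The only cosmetic differences are which time index sits on which factor in the product-rule decomposition and that the paper derives the NT limit by specializing its general abcd-parametrization program rather than writing the NTP program directly.
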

The proof can be found in \cref{sec:NTKProof}.
Note, as in the SGD case, $\mathring f_t$ is deterministic conditioned on $\mathring f_0$.
We remind the reader that \cref{eqn:NTK_update_memoryless_stationary} simultaneously covers full batch, mini-batch, train-test split, and other schemes by changing $\eps_t$, as discussed under \cref{assm:trainingroutine}.
This will be the same for all theorems in this work.

\paragraph{Memoryful Nonstationary Case}

The memoryless stationary condition allowed a clean mathematical formulation of the NT limit.
But we can remove it easily at the cost of some more notation.

\begin{defn}[Neural Tangent Operator, Memoryful Nonstationary Case]\label{defn:TangentOperatorMemoryful}
  For memoryless but nonstationary update functions $\QQ_t = \{Q^l_t: \R \to \R\}_l$, 
  we define $\calK_{\QQ_t}: \R^{\NN} \to \R^\NN$ with the same equation as  \cref{eqn:KQ}, except the bar notation abbreviates $Q^l_t$ where $l$ is the same as in the $d\hh^l$ under the bar.

  For general update functions $\QQ_t = \{Q^l_t: \R^{t+1} \to \R\}_l$, 
  we define $\calK_{\QQ_t}: \R^{(t+1) \times \NN} \to \R^\NN$,
  \begin{equation}
    \calK_{\QQ_t}(\cchi_0, \ldots, \cchi_t) \defeq 
    \Diag\sum_{l=1}^{L+1}\la{d\hh^{l}}\Qketdbra{d\hh^{l}}{\cchi_{\le t}}{\xx^{l-1}}{\xx^{l-1}}\ra
  \end{equation}
  where $\Qketdbra{d\hh^{l}}{\cchi_{\le t}}{\xx^{l-1}}$ is shorthand for $Q^l_t\left(\ketdbra{d\hh^{l}}{\cchi_0}{\xx^{l-1}}, \ldots, \ketdbra{d\hh^{l}}{\cchi_t}{\xx^{l-1}}\right)$.
\end{defn}

With this in mind, the following theorem yields the NT limit of Adam (\cref{eqn:Adam}) as a corollary.
\begin{thm}[Neural Tangent Limit, Memoryful Nonstationary Case]\label{thm:NT_MLP_memoryful}
  If the update functions $\QQ$ are memoryless but not necessarily stationary, then \cref{thm:memorylessNTK} holds with \cref{eqn:NTK_update_memoryless_stationary} replaced by
  \begin{equation}
    \mathring{\ff}_{t+1} - \mathring{\ff}_{t} =-\eta\calK_{\QQ_t}(\eps_{t}(\mathring{\ff}_{t})),\quad \text{for all $t$}.
  \end{equation}

  For general $\QQ$, not necessarily memoryless, \cref{thm:memorylessNTK} holds with \cref{eqn:NTK_update_memoryless_stationary} replaced by
  \begin{equation}
    \mathring{\ff}_{t+1} - \mathring{\ff}_{t} =-\eta\calK_{\QQ_t}(\eps_0(\mathring \ff_0), \ldots, \eps_{t}(\mathring{\ff}_{t})),\quad \text{for all $t$}.
  \end{equation}
\end{thm}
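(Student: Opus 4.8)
The plan is to follow the strategy already used for \cref{thm:memorylessNTK}: express the entire training trajectory---all forward passes, backward passes, and weight updates through time $t$---as a single \nexort{} program and read off its infinite-width limit via the Master Theorem (\cref{thm:MasterTheorem}). The only genuinely new ingredient relative to the memoryless stationary case is that the step-$t$ update of each weight now consumes the whole history $g_0,\ldots,g_t$ of its (scaled) gradients, which is precisely what the nonlinear outer product instruction of \nexort{} (\cref{sec:nexort}) was built to express. Concretely, I would arrange the program so that, after the usual backprop bookkeeping, the scaled gradient of a hidden weight $w^l$ at step $s$ is represented by the nonlinear outer product $\ketdbra{d\hh^{l}}{\cchi_s}{\xx^{l-1}}$, with $\cchi_s$ the limiting error signal at step $s$, and the time-$t$ update of $w^l$ is then $Q^l_t\!\left(\ketdbra{d\hh^{l}}{\cchi_0}{\xx^{l-1}},\ldots,\ketdbra{d\hh^{l}}{\cchi_t}{\xx^{l-1}}\right)$---a legal \nexort{} instruction; pseudo-Lipschitz continuity of $Q^l_t$ (\cref{assm:MLPsmooth}) supplies exactly the regularity the Master Theorem requires. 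This makes the continuous-time heuristic \cref{eqn:NTQ_intuition} rigorous in discrete time.

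The argument is an induction on $t$. The base case $\ff_0\distto\Gaus(0,\braket{\xx^L}{\xx^L})$ is the architecturally universal NNGP correspondence and is unchanged from \cref{thm:memorylessNTK}. For the inductive step, assume $\ff_s\asto\mathring\ff_s$ for all $s\le t$; then $\cchi_s\defeq\eps_s(\mathring\ff_s)$ is the almost sure limit of $\eps_s(\ff_s)$ by continuity of $\eps_s$ (\cref{assm:MLPsmooth}). As in the SGD and memoryless cases, NTP (\cref{defn:NTP}) is calibrated so that every weight matrix moves little enough that, to leading order, (i) the forward and backward kets $\ket{\xx^{l}},\ket{\hh^{l}},\ket{d\xx^{l}},\ket{d\hh^{l}}$ remain at their initial values and in particular are $t$-independent (so we drop the $t$ subscript, as in \cref{thm:memorylessNTK}), and (ii) $\mathring\ff_{t+1}-\mathring\ff_t$ equals, to leading order, the sum over layers $l$ of the first-order effect of the step-$t$ update of $W^l$---that is, the update of $W^l$ contracted forward against $\xx^{l-1}$ and backward against $d\hh^{l}$. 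Substituting the nonlinear-outer-product form of the update and collecting the per-input ($\NN$-dimensional) pieces yields exactly $-\eta\,\calK_{\QQ_t}(\cchi_0,\ldots,\cchi_t)$ of \cref{defn:TangentOperatorMemoryful}, with the $\Diag$ in \cref{eqn:KQ} extracting the diagonal. The memoryless-nonstationary statement is the special case in which $Q^l_t$ depends only on its last argument, so only $\cchi_t=\eps_t(\mathring\ff_t)$ survives.

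The main obstacle is making the ``weights move little'' heuristic rigorous in the presence of the nonlinear update functions---that is, controlling everything the first-order picture discards: higher-order products of weight updates across different layers and times, and the feedback of the accumulated updates onto the kets themselves. In the SGD case this is the content of the error-propagation estimates of \cite{yang2,yang4}; the new subtlety is that the discarded terms are themselves nonlinear outer products, but since the \nexort{} Master Theorem already computes limits of arbitrary such expressions, the task reduces to a bookkeeping estimate: in NTP the exponents $a_l,b_l,c_l,d_l$ are tuned so that the inputs to every $Q^l_t$ are $\Theta(1)$ while the resulting parameter moves are negligible in the relevant (operator-norm / entrywise) sense, whence $\hh_t^l-\hh_0^l$ and $d\hh_t^l-d\hh_0^l$ contribute nothing to the contractions defining the function increment. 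I would carry this out exactly as in the NTP case of \cite{yang4}, tracking the extra powers of $n$ through the nonlinear outer products; because each $Q^l_t$ is pseudo-Lipschitz and its arguments stay $\Theta(1)$ by the degree accounting, no new phenomenon arises and the estimates transfer essentially verbatim. The general memoryful statement then requires nothing more: it is the same program with each $Q^l_t$ taking $t+1$ arguments instead of one, so the limit reads $\mathring\ff_{t+1}-\mathring\ff_t=-\eta\,\calK_{\QQ_t}(\eps_0(\mathring\ff_0),\ldots,\eps_t(\mathring\ff_t))$.
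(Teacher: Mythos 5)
Your proposal is correct and follows essentially the same route as the paper: the paper's one-line proof defers to the proof of \cref{thm:memorylessNTK} in \cref{sec:NTKProof}, which likewise encodes the full training trajectory as a \nexort{} program (with the memoryful update $Q^l_t$ applied to the whole history of nonlinear outer products $\ketdbra{d\hh^{l}}{\cchi_s}{\xx^{l-1}}$), invokes the Master Theorem, and uses the NTP scaling (all $r_l=\nicefrac12$, so the relevant $\mathring\theta$ factors vanish) to conclude that the kets are $t$-independent and that $\del\mathring\ff_t$ telescopes layer by layer (via \cref{lemma:dh|delh}) into $-\eta\calK_{\QQ_t}$. The only cosmetic difference is that you frame the ``weights move little'' step as an induction on $t$ with error-propagation estimates, whereas the paper bakes it into the program via the normalized $\del$-vectors and $\theta$-scalars so that the Master Theorem handles it in one pass; the content is the same.
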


The proof is a straightforward adaptation of the proof of \cref{thm:memorylessNTK} in \cref{sec:NTKProof}.

\subsection{Lack of Feature Learning}

Just as for the SGD case, the neural tangent limit cannot learn features, for example, in the sense that the feature kernel (the Gram matrix of the input representations) does not evolve during training (\cref{thm:abcdclassification}).
Likewise, pretraining is futile in this limit as finetuning it would be no different than finetuning a randomly initialized network (\cref{rem:pretraining_operator_regime}).
Nevertheless, the NT limit is ``maximal'' among all nondegenerate limits without feature learning in that all other limits are just given by a neural tangent operator that involves a subsum of \cref{eqn:KQ} (\cref{rem:maximality}).

\section{Maximal Update}
\label{sec:maximalupdate}

As for NTP, the ``classical'' maximal update (abc-)parametrization can be generalized easily to an abcd-parametrization using the intuition that the input to any nonlinear update function should be $\Theta(1)$ (c.f.\ \cref{defn:faithful,{lemma:faithfulconditionsinit}}).
After defining this generalization next, we study its limit for shallow MLP.
The deep case will have to wait until we develop the new Tensor Program theory in the next section.
\begin{defn}\label{defn:mup}
\emph{The maximal update abcd-parametrization ($\mu$P) }is defined
(modulo \cref{stmt:SGDsymmetry}) by
\begin{center}
  \begin{tabular}{cccc}
    \toprule
    $l$ & $[2,L]$ & $1$  & $L+1$\tabularnewline
    \midrule
    $a_l$ & 0 & 0  & 1\tabularnewline
    $b_l$ & $\nicefrac{1}{2}$  & $0$ & $0$\tabularnewline
    $c_l$ & 1 & 0  & 0\tabularnewline
    $d_l$ & 1 & 1  & 1\tabularnewline
    \bottomrule
  \end{tabular}
\end{center}

\end{defn}

\paragraph{Recovering ``Classical'' $\mu$P}
If we assume that the Adam update function (\cref{eqn:Adam}) is perfectly scale-invariant, then the $d_l$ row can be dropped, yielding \cite[Table 8]{yang5} regarding Adam LR scaling.

To recover the abc version of $\mu$P in \citep[Table 1]{yang4} for SGD, just apply \cref{stmt:SGDsymmetry} to the $l=1,L+1$ columns with $\theta = \nicefrac {-1} 2$ and then apply \cref{eqn:abc_reduction} (i.e., subtracting the last 2 rows).

\subsection{Shallow Infinite-Width Limit}
\label{sec:shallowmulimit}

We focus on the shallow case first because its $\mu$-limit
is fairly easy to describe. We shall cover the general case after
we describe the outer product tensor program \nexort{} in \cref{sec:nexort}. 
Adopt the following leaner
notation:
\begin{equation}
f(\xi)=(\nicefrac{1}{n}) v^{\trsp}x(\xi),\quad x(\xi)=\phi(h(\xi)),\quad h(\xi)=u\xi,\label{eqn:UV1LP}
\end{equation}
for trainable parameters $u \in \R^{n \times d}, v\in\R^{n\times1}$ with initialization
$u,v\sim\Gaus(0,I)$.\footnote{Again, more generally, we can insert constants in this parametrization,
like $h(\xi)=\frac{1}{\sqrt{d}}u\xi$ or $u_\alpha \sim \Gaus(0, 1/d)$, but we omit them here for simplicity.}

The following general theorem covers the $\mu$-limit for SGD ($Q=\id$) and SignSGD (\cref{eqn:SignSGD}).
\begin{thm}[Shallow $\mu$-Limit, Memoryless Stationary Case]\label{thm:muLimit_1LP}
  Consider any training routine (\cref{assm:trainingroutine}) with memoryless stationary update function $Q$ (\cref{defn:memoryless_stationary}).
Adopt \cref{assm:MLPsmooth}.
As $n\to\infty$, $\ff_{t}$ for the network in \cref{eqn:UV1LP} converges almost surely to some $\mathring{\ff}_{t}$
for every $t$, which is recursively defined from $t=0$ by the following
dynamics:
\begin{enumerate}
  \item (Forward and Backward Propagation)
  \begin{align*}
    \mathring{\cchi}_{t} & =\eps_{t}(\mathring{\ff}_{t})\in\R^{\NN},&
    \mathring{\ff}_{t} & =\braket{v_{t}}{\xx_{t}}\in\R^{\NN},&
    \ket{\xx_{t}} & =\phi(\ket{\hh_{t}})\in\R^{\NN},&
    \ket{\hh_{t}} & =\ket{u_{t}}\xxi\in\R^{\NN}
  \end{align*}
  \item (Parameter Updates)
  \begin{align}
    \ket{u_{t+1}} & =\ket{u_{t}}-\eta \overline{\ket{v_{t}}\phi'(\ket{\hh_{t}})_{\mathring{\cchi}_{t}}\xxi^{\trsp}}\in\R^d \label{eqn:1LP_u_update}
    \\
    \ket{v_{t+1}} & =\ket{v_{t}}-\eta \overline{\ket{\xx_{t}}\cdot\mathring{\cchi}_{t}}\in\R \label{eqn:1LP_v_update}
  \end{align}
  where the bar notation (\cref{eqn:barnotation}) abbreviates application of $Q$ and $\cdot$ denotes dot product.
  \item (Initialization)
  \begin{align*}
    (\ket{v_{0}},\ket{u_{0}}) & = \Gaus(0,I)  
  \end{align*}
\end{enumerate}
\end{thm}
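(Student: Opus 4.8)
The plan is to establish Theorem~\ref{thm:muLimit_1LP} by induction on $t$, tracking at each step the finite-$n$ vectors $u_t, v_t, h_t, x_t$ together with the scalar $\chi_t$, and showing that each has approximately iid coordinates whose distribution is captured by the claimed kets. The base case $t=0$ is immediate: $u_0, v_0 \sim \Gaus(0,I)$ by the initialization in \cref{eqn:UV1LP}, so $\ket{u_0} = \Gaus(0,I_d)$, $\ket{v_0} = \Gaus(0,1)$ independently; then $\ket{h_0} = \ket{u_0}\xxi$ and $\ket{x_0} = \phi(\ket{h_0})$ follow by the rules for matrix-vector products and coordinatewise nonlinearities, and $\mathring \ff_0 = \braket{v_0}{x_0}$ follows from the law of large numbers $v_0^\trsp x_0/n \to \EV \ket{v_0}\ket{x_0}$. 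Since $d$ is fixed, $\mathring f_0$ has the claimed Gaussian distribution $\Gaus(0, \EV\phi(\ket{h_0})^{\otimes 2})$.

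For the inductive step, I would write out the finite-$n$ update explicitly. The gradient of $f_t(\xi) = \frac1n v_t^\trsp x_t(\xi)$ with respect to $u$ has coordinate $(\alpha,j)$ equal to $\frac1n v_{t,\alpha}\phi'(h_{t,\alpha}(\xi))\xi_j$; accumulating the error signal $\mathring\chi_t = \eps_t(\ff_t)$ over the batch gives the scaled gradient $\frac1n (v_t \odot \phi'(h_t))\, \mathring\chi_t\, \xxi^\trsp \in \R^{n\times d}$. Under $\mu$P (\cref{defn:mup}): $a_1=0, b_1=0$ for $u$ so $c_1=0, d_1=1$, meaning the $u$-update multiplies the gradient by $n^{1}$ inside $Q$ and by $\eta n^0$ outside; the $n$ from $d_1$ cancels the $\frac1n$ in the gradient, so inside $Q$ we get exactly $v_t \odot \phi'(h_t)_{\mathring\chi_t}\xxi^\trsp$ (a vector of $\Theta(1)$-size coordinates, as intended by the faithfulness heuristic), and applying $Q$ entrywise and the learning rate $\eta$ yields \cref{eqn:1LP_u_update}. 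For $v$: $a_{L+1}=1$ so $f = \frac1n v^\trsp x$, and $d_{L+1}=1, c_{L+1}=0$; the gradient wrt $v$ is $\frac1n x_t \mathring\chi_t$, the $d_{L+1}$ factor $n$ cancels the $\frac1n$, giving $x_t \cdot \mathring\chi_t$ inside $Q$ and \cref{eqn:1LP_v_update} after applying $Q$ and $\eta$. The ket identities then follow because entrywise application of a (pseudo-Lipschitz, by \cref{assm:MLPsmooth}) function $Q$ to a vector with iid-like coordinates produces a vector with iid-like coordinates whose ket is $\overline{\,\cdot\,}$ applied to the limiting ket, and $\ket{h_{t+1}} = \ket{u_{t+1}}\xxi$, $\ket{x_{t+1}} = \phi(\ket{h_{t+1}})$, $\mathring\ff_{t+1} = \braket{v_{t+1}}{x_{t+1}}$ propagate as in the base case. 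The almost-sure convergence of each scalar quantity (in particular $\ff_{t+1}$ and $\mathring\chi_{t+1} = \eps_{t+1}(\ff_{t+1})$, using that $\eps_{t+1}$ is pseudo-Lipschitz hence continuous) comes from the law of large numbers, and the induction closes.

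The cleanest way to make this rigorous — and the route I would actually take rather than hand-rolling each law-of-large-numbers estimate — is to write the entire training computation of the shallow network as a \nexort{} program (or even just a \netsort{} program here, since for shallow networks the only matrix is $u \in \R^{n\times d}$ with $d$ fixed, so the ``nonlinear outer product'' on $\xxi^\trsp$ is along a constant dimension and is unproblematic) and invoke the Master Theorem (\cref{thm:MasterTheorem}). Concretely: introduce the initial vectors $u_0$ (columns), $v_0$ as Gaussian inputs; the forward pass $h_t, x_t$ and the backward quantities are Nonlin instructions of previously-defined vectors and the multi-scalars $\mathring\chi_s$; the parameter updates \cref{eqn:1LP_u_update,eqn:1LP_v_update} are again Nonlin instructions (applying $Q$ entrywise) combined with the constant-dimension contractions against $\xxi$; and the scalars $\mathring\ff_t$ are Moment instructions. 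The Master Theorem then immediately gives almost-sure convergence of every scalar in the program to the corresponding bracket expression, which is exactly the recursive system in the statement. Under this strategy there is essentially no ``hard part'': the content is entirely bookkeeping to verify that $\mu$P's exponents produce $\Theta(1)$-scale inputs to $Q$ (so the program is well-posed and the Nonlin functions are applied to bounded-scale arguments, matching \cref{assm:MLPsmooth}) and that the powers of $n$ cancel as described above. If one instead wants a self-contained proof not citing the Master Theorem, the main obstacle is the standard one: controlling the correlations introduced by reusing $v_0$ (and the iterates $u_s, v_s$) across the forward and backward passes at different times $t$, which is precisely what the Tensor Programs machinery (and the bra-ket/iid-copy formalism of \cref{sec:notation_iid}) is designed to handle; for the shallow case this is tractable directly but tedious, so deferring to \cref{thm:MasterTheorem} is the sensible choice.
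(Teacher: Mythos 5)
Your proposal takes essentially the same route as the paper: encode the training computation as a Tensor Program, verify that the $\mu$P exponents make every input to $Q$ be $\Theta(1)$ so all the scaling factors trivialize, and invoke the Master Theorem (\cref{thm:MasterTheorem}) to read off the ket recursion; the paper does exactly this, via the general program of \cref{sec:ProgramConstruction,sec:infwidthlimit} specialized to $\mu$P where all $\mathring\theta$'s equal 1, and your observation that the shallow case needs only \netsort{} (all parameters being vector-like) is correct. One slip in your base case: under $\mu$P the output is $\frac1n v^\trsp x$, not $\frac1{\sqrt n}v^\trsp x$, so $\ff_0 \asto \braket{v_0}{\xx_0}=0$ by the law of large numbers (exactly as your own LLN clause gives, since $\ket{v_0}$ is independent of $\ket{\xx_0}$ and mean zero); the Gaussian $\Gaus(0,\EV\phi(\ket{h_0})^{\otimes 2})$ you then assert is the NT-parametrization behavior ($a_{L+1}+b_{L+1}=\nicefrac12$, \cref{thm:memorylessNTK}), not the $\mu$P one, and contradicts the preceding sentence of your own argument.
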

Again, one can note that if $Q$ is identity, then we recover the SGD
equations from \cite[Theorem 6.1]{yang4}.

The proof of \cref{thm:muLimit_1LP} is given in \cref{sec:muLimitProof}.
This can be adapted straightforwardly to cover the nonstationary or memoryful cases:
\begin{thm}[Shallow $\mu$-Limit, Memoryful Nonstationary Case]\label{thm:muLimit_1LP_general}
  If $\QQ$ is memoryless but not stationary, then \cref{thm:muLimit_1LP} holds if the bar in \cref{eqn:1LP_v_update} (resp.\ \cref{eqn:1LP_u_update}) is interpreted as $Q^2_t$ (resp.\ $Q^1_t$).

  If $\QQ$ is not memoryless, then \cref{thm:muLimit_1LP} holds if \cref{eqn:1LP_v_update,eqn:1LP_u_update} are replaced with 
  \begin{align*}
    \ket{u_{t+1}} & =\ket{u_{t}}-\eta \overline{\ket{v_{\le t}}\phi'(\ket{\hh_{ \le t}})_{\mathring{\cchi}_{\le t}}\xxi^{\trsp}}\in\R
    \\
    \ket{v_{t+1}} & =\ket{v_{t}}-\eta \overline{\ket{\xx_{\le t}}\cdot\mathring{\cchi}_{\le t}}\in\R
  \end{align*}
  where the bar notations abbreviate
  \begin{align*}
    \overline{\ket{v_{\le t}}\phi'(\ket{\hh_{ \le t}})_{\mathring{\cchi}_{\le t}}\xxi^{\trsp}}&=Q^1_t \left(\ket{v_{0}}\phi'(\ket{\hh_{0}})_{\mathring{\cchi}_{0}}\xxi^{\trsp}, \ldots, {\ket{v_{t}}\phi'(\ket{\hh_{t}})_{\mathring{\cchi}_{t}}\xxi^{\trsp}}\right)
    \\
    \overline{\ket{\xx_{\le t}}\cdot\mathring{\cchi}_{\le t}} &= Q^2_t\left({\ket{\xx_{0}}\cdot\mathring{\cchi}_{0}}, \ldots, {\ket{\xx_{t}}\cdot\mathring{\cchi}_{t}}\right)
  \end{align*}
\end{thm}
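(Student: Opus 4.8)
The plan is to run the argument behind \cref{thm:muLimit_1LP} (given in \cref{sec:muLimitProof}) essentially verbatim, the only change being that the single stationary update function $Q$ is replaced by the layer- and time-indexed $Q^1_t, Q^2_t$, and --- in the memoryful case --- that these functions are fed the full scaled gradient history $g_0,\dots,g_t$ rather than only the current scaled gradient $g_t$. The structural fact that makes this work is that the complete training of \cref{eqn:UV1LP} run for a fixed number $T$ of steps --- every forward pass, backward pass, and parameter update --- is expressible as a single finite \nexort{} program. In fact this shallow case needs no genuine nonlinear outer product at all: because the network only ever contracts against the constant-size matrices $\xxi, \xxi^\trsp$, each vector that appears (the $d$ columns of $u_s$, the multi-vectors $\hh_s = u_s\xxi$ and $\xx_s = \phi(\hh_s)$, and the vector $v_s$, for $s = 0,\dots,T$) is obtained from finitely many earlier vectors by an entrywise nonlinearity composed with a constant-size linear map, while $\ff_s = v_s^\trsp\xx_s/n$ is read off as a scalar. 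The update functions $Q^l_t$ are admissible such nonlinearities precisely because they are pseudo-Lipschitz by \cref{assm:MLPsmooth} (for Adam-type updates this is why one works with the form \cref{eqn:Qexample} with $\epsilon > 0$). Applying the Master Theorem (\cref{thm:MasterTheorem}) to this program and inducting on $t$ yields the claim.

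In more detail, the induction hypothesis at step $t$ is that $\hh_t,\xx_t,u_t,v_t$ have $\Theta(1)$ coordinates with the limits prescribed by the recursion, and that $\ff_t$ converges to $\mathring{\ff_t}$; since $\eps_t$ is pseudo-Lipschitz and $\ff_t$ is a converging finite-dimensional vector, $\cchi_t = \eps_t(\ff_t)$ then converges to $\mathring{\cchi_t} = \eps_t(\mathring{\ff_t})$. Under $\mu$P (\cref{defn:mup}), $c_1 = c_2 = 0$ and $d_1 = d_2 = 1$, so the step-$t$ parameter update is $u_{t+1} = u_t - \eta\, Q^1_t(n g^u_0,\dots,n g^u_t)$ and $v_{t+1} = v_t - \eta\, Q^2_t(n g^v_0,\dots,n g^v_t)$ (dropping all but the last argument in the memoryless cases), where the scaled gradient $n g^u_s$ is the matrix whose $\alpha$-th row is $v_{s,\alpha}$ times the $\alpha$-th row of $\phi'(\hh_s)\Diag(\cchi_s)\xxi^\trsp$, and $n g^v_s$ is the vector whose $\alpha$-th entry is $\xx_{s,\alpha}\cdot\cchi_s$; the inductive hypothesis makes both $\Theta(1)$ entrywise, and each is a legitimate \nexort{} nonlinearity of previously defined vectors. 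Then $\hh_{t+1} = u_{t+1}\xxi$ and $\xx_{t+1} = \phi(\hh_{t+1})$. The Master Theorem identifies the limits of $u_{t+1}, v_{t+1}, \hh_{t+1}, \xx_{t+1}$ as exactly the right-hand sides of the replacement equations in the theorem statement, and $\ff_{t+1} = v_{t+1}^\trsp\xx_{t+1}/n \to \braket{v_{t+1}}{\xx_{t+1}}$, closing the induction. The memoryless-but-nonstationary case is the specialization where $Q^l_t$ ignores all but its last argument, which is precisely the stated ``reinterpret the bar as $Q^l_t$'' assertion; and, as in \cref{thm:muLimit_1LP}, $\mathring{\ff_t}$ is deterministic given $\mathring{\ff_0}$ because the only randomness entering the recursion is $(\ket{u_0},\ket{v_0})$ and every $\mathring{\cchi_s}$ is a deterministic function of $\mathring{\ff_s}$.

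I expect the only place care is needed --- and the only genuinely new ingredient relative to the stationary proof --- to be bookkeeping: the program must now carry a fresh copy of $(\hh_s,\xx_s,u_s,v_s)$ for every $s \le T$ so that a memoryful $Q^l_t$ has all of its arguments available simultaneously as program vectors, and one must verify that the inductive hypothesis ($\Theta(1)$ coordinates plus the stated limits) is threaded correctly through this enlarged-but-still-finite program. This is entirely parallel to the stationary case and introduces no new analytic difficulty. (One caveat worth noting: if instead one used the $\epsilon = 0$ idealization of Adam, $Q^l_t$ would be $0$-homogeneous, continuous but not pseudo-Lipschitz, hence not directly an admissible \nexort{} nonlinearity; a separate, easy argument exploiting homogeneity would then be needed, but \cref{assm:MLPsmooth} excludes that case.)
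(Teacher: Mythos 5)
Your proposal is correct and follows essentially the same route as the paper: express the finite-$T$ training of \cref{eqn:UV1LP} as a Tensor Program (the paper does this via the general abcd construction of \cref{sec:ProgramConstruction} and then sets all $\mathring\theta$'s to $1$ under $\mu$P, whereas you write the $\mu$P-specific program directly), compute the kets, and invoke the Master Theorem inductively in $t$; the memoryful nonstationary case is, as you say, pure bookkeeping since all past $(\hh_s,\xx_s,u_s,v_s)$ remain available as program vectors. Your observation that the shallow case needs no order-$\ge 2$ \refOuterNonlin{} (both parameters being vector-like) is accurate and consistent with the paper's remarks, though the paper does not emphasize it.
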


\begin{rem}\label{rem:generalinputdim}

\cref{thm:muLimit_1LP} and \ref{thm:muLimit_1LP_general} also hold if the output dimension is greater than 1, in which case
the equations should be interpreted slightly differently;
see \cref{eqn:ket_outer_product_general}.
\end{rem}

\section{\nexort{}: Tensor Program with Nonlinear Outer Products}
\label{sec:nexort}

As mentioned above, the gradient processing done by entrywise optimizers in general cannot be expressed previously by even the most expressive Tensor Program language.
Here we fix this issue by adding ``nonlinear outer products'' to Tensor Programs.
Like in previous works, we algebraically construct such programs' limit objects (``kets'', in our new notation) and link them to the analytic properties of vectors in the corresponding programs via a \emph{Master Theorem}.
Unlike previous works, we also construct the limits of matrices, which are operators on kets.
This is purely a conceptual change, but this perspective helps express the deep $\mu$-limit much more efficiently than possible before.

\subsection{The \nexort{} Language}

\begin{defn}\label{defn:nexort}
  A \nexort{} program
  generates a sequence $\xx$ of $\R^{n}$-vectors and a sequence $\cc$ of $\R$-scalars inductively defined via one of the following ways from an initial set $\cc^0 \sbe \cc$ of random scalars, an initial set $\xx^0\sbe \xx$ of random $\R^{n}$ vectors, and an initial
   set $\mathcal{W}$ of random $\R^{n\times n}$ matrices.%
  \footnote{which will be sampled with iid Gaussian entries in \cref{setup:nexort} or general non-Gaussian entries in \cref{setup:nexort_nongaussian}.}
  We will think of $\cc$ as a vector and $\xx$ as a matrix with the $\R^n$ vectors as columns; then $\cc^0$ is just a subvector of $\cc$ and $\xx^0$ is a submatrix of $\xx$.
  At each step of the program, one can
  \begin{description}
  \item [\texttt{Avg\label{instr:avg}}] choose a vector $x \in \xx$ (think of $x$ as a column in $\xx \in \R^{n \times |\xx|}$) and append to $\cc$ a scalar%
  \footnote{We replaced the \texttt{Moment} instruction of \netsortp{} with \texttt{Avg} here, but there is no loss of expressivity since \texttt{Moment} is just a composition of \texttt{Avg} and \texttt{Nonlin}$^+$}
  \begin{equation}\frac 1 n \sum_{\alpha=1}^n x_\alpha \in \R \label{eqn:avg}\end{equation}
  \item [\texttt{MatMul\label{instr:matmul}}] choose a matrix $W\in \calW$ and vector $x\in\xx$, and append to $\xx$ the vector 
  \[Wx\in\R^{n} \quad \text{or} \quad W^{\trsp}x\in\R^{n}\]
  \item [\texttt{OuterNonlin\label{instr:outernonlin}}] choose integer $r \ge 0$ and function $\psi: \R^{|\xx|(r+1)+l} \to \R$; append to $\xx$ the vector%
  \footnote{We can equivalently allow $\xx$ in each slot below to be different multi-vectors (which are subsets of $\xx$), since $\psi$ here can just be chosen to ignore the irrelevant subsets of $\xx$. For notational simplicity, we don't do this.}
  \begin{equation}
  y \in \R^n, \quad y_{\alpha} = \frac{1}{n^{r}}\sum_{\beta_{1},...,\beta_{r}=1}^{n}\psi(\xx_{\alpha};\xx_{\beta_{1}};...;\xx_{\beta_{r}};\cc)
  \label{eqn:outernonlin}
  \end{equation}
  where $\xx_\gamma$ is the $\gamma$th row in $\xx$ as a matrix and $|\xx|$ is the number of vectors in $\xx$.
  We call $r+1$ the \emph{order} of $\psi$ in this context.
  \end{description}
\end{defn}

Note that while \cref{defn:nexort} doesn't directly give an instruction to transform scalars into scalars (in contrast to \refMatMul{} and \refOuterNonlin{} that transforms vectors into vectors), this can be done by combining instructions.
\begin{lemma}\label{lemma:transform_scalars}
  If $\psi: \R^{|\cc|} \to \R$ and $\cc$ are the scalars in a \nexort{} program, then $\psi(\cc) \in \R$ can be introduced as a new scalar in the program.
\end{lemma}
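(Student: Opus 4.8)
The plan is to realize $\psi(\cc)$ as the average of a constant $n$-vector that we build with a single \texttt{OuterNonlin} instruction, and then strip off that average with a single \texttt{Avg} instruction.

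Concretely, I would first invoke \texttt{OuterNonlin} with $r = 0$, taking the scalar slot to be all of $\cc$ and the function to be $\tilde\psi : \R^{|\xx|\cdot 1 + |\cc|} \to \R$ defined by $\tilde\psi(\boldsymbol{a}; \cc) \defeq \psi(\cc)$; that is, $\tilde\psi$ simply ignores its first $|\xx|$ arguments (the row-of-$\xx$ slot) and returns $\psi$ evaluated on the program's scalars. By \cref{eqn:outernonlin} with $r = 0$, this appends to $\xx$ the vector $y \in \R^{n}$ with
\[
  y_\alpha \;=\; \frac{1}{n^{0}}\,\tilde\psi(\xx_\alpha; \cc) \;=\; \psi(\cc) \qquad \text{for every } \alpha \in [n],
\]
so $y$ is the constant vector all of whose entries equal $\psi(\cc)$. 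I would then apply \texttt{Avg} to $y$, which by \cref{eqn:avg} appends to $\cc$ the scalar $\tfrac{1}{n}\sum_{\alpha=1}^{n} y_\alpha = \psi(\cc)$, which is exactly the new scalar we wanted.

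There is essentially no obstacle here; the only point worth noting is that \cref{defn:nexort} imposes no requirement that the function chosen in \texttt{OuterNonlin} actually depend on its vector arguments, so taking $\tilde\psi$ constant in those slots is legitimate. (If one later wants the Master Theorem to apply to the enlarged program, $\psi$ must of course satisfy whatever regularity that theorem demands, but that is orthogonal to the present statement, which concerns only expressibility within the \nexort{} language.)
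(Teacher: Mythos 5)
Your proposal is correct and is essentially the same argument as the paper's: use \texttt{OuterNonlin} with a function that ignores its vector arguments and depends only on the scalars, producing the constant vector with entries $\psi(\cc)$, then apply \texttt{Avg} to recover $\psi(\cc)$ as a scalar. The explicit choice $r=0$ and the remark about regularity for the Master Theorem are fine but add nothing beyond the paper's one-line proof.
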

\begin{proof}
  Think of $\psi$ as a function that ignores vector arguments and depend only on scalars, and use it in \refOuterNonlin{} to create the vector whose entries are identically equal to $\psi(\cc)$. Applying \refAvg{} to this vector gives the desired result.
\end{proof}

\subsection{Setups}
We are interested in the behavior of \nexort{} programs in two typical settings:
\begin{setup}[Gaussian]\label{setup:nexort}
  Assume%
  \footnote{Compared to \cite{yang3}, we have WLOG simplified the setup by assuming 1) $\sigma_W =1$ for every $W$, 2) $Z^{\xx^0} = \ket{\xx^0}= \Gaus(0, I)$, and 3) $\mathring \theta = 0$ for every $\theta \in \cc^0$. This is WLOG because $\sigma_W$, $\mathring \theta$, and the mean and covariance of $\ket{\xx^0}$ can all be absorbed into \refOuterNonlin{} via the appropriate linear functions.}
  \begin{enumerate}
    \item Every entry of every $W\in\mathcal{W}$ is sampled iid from $\Gaus(0,1/n)$. 
    \item Every entry of every initial vector $x \in \xx^0$ is sampled iid from $\Gaus(0, 1)$.
    \item The initial scalars $\cc^0$ converge almost surely to 0. %
    \item All functions $\psi$ used in \refOuterNonlin{} are pseudo-Lipschitz.

  \end{enumerate}
\end{setup}

\begin{setup}[Non-Gaussian]\label{setup:nexort_nongaussian}
  Assume the same as \cref{setup:nexort} but replace 1) and 4) with
  \begin{enumerate}
    \item[1*.] there exists a sequence $\nu_3, \nu_4, \ldots > 0$ such that all matrices have independent entries%
    \footnote{For all of our results, it does not matter how the matrices for different $n$ are correlated, e.g., whether they are independent or the matrices are all upper left submatrix of fixed infinite iid matrix. This is because our proof only depends on how moments of vectors behave with $n$, which does not care about such inter-$n$ correlations.
    }
    drawn from distributions with zero mean, variance $n^{-1}$, and all higher $k$th moment bounded by $\nu_k  n^{-k/2}$; and%
    \footnote{Initial vectors are still sampled from $\Gaus(0, 1)$, as in \cite{tp3b}.}
    \item[4*.] All functions $\psi$ used in \refOuterNonlin{} are polynomially smooth.%
    \footnote{Recall from \cite{tp3b} that $f:\RR^k \to \RR$ is \emph{polynomially smooth} if it is $C^\infty$ and its partial derivatives of any order are polynomially bounded. See \cite{tp3b}.}
  \end{enumerate}
  We further require initial scalars $\cc^0$ to have moments of all orders bounded in $n$.%
  \footnote{But the moments do not need to be bounded as a function of the order.}
\end{setup}
While \cref{setup:nexort_nongaussian} allows more general distributions for matrix entries, its nonlinearities need to have more smoothness than \cref{setup:nexort}.
See \cite{tp3b} for more discussions on these setups.

\subsection{Limit Objects}
As before, when the width $n$ of the program goes to infinity, one can infer how the program behaves via a calculus of random variables.
We define them below via the new ket notation instead of the earlier $Z$ notation.
\begin{defn}[{Ket Construction}]\label{defn:ket}
We recursively define the random variable $\ket x$ (called a \emph{ket}) for each vector $x$ and deterministic number $\mathring{\theta}$ for each scalar $\theta$ in the program.
For a vector $Wx$ produced by \refMatMul{}, we also define random variables $\hatket{Wx}$ and $\dotket{Wx}$ (called \emph{hat-ket} and \emph{dot-ket} respectively) such that $\ket{Wx} = \hatket{Wx} + \dotket{Wx}$.
Their recursive definitions are given below.
\begin{description}
\item[\texttt{Init}]
$\ket{\xx^0} \defeq \Gaus(0, I) \in \R^{|\xx^0|}$ and $\mathring \cc \defeq 0 \in \R^{|\cc^0|}$.%
\footnote{These \texttt{Init} rules depend on the fact that, in \cref{setup:nexort} and \cref{setup:nexort_nongaussian}, initial vectors are sampled with variance 1 and initial scalars converge to 0.}
\item[\texttt{Avg}] If $\theta$ is generated by \refAvg{} as in \cref{eqn:avg}, then $\mathring{\theta}\defeq\EV\ket x=\braket 1x$.
\item[\texttt{OuterNonlin}] If $x$ is generated by \refOuterNonlin{} as in \cref{eqn:outernonlin}, then 
\begin{center}
$\ket x\defeq f(\ket{\xx})\quad$
where $\quad f: \R^{|\xx|} \to \R, f(\ybf)\defeq\EV\psi(\ybf;\ket{\xx}^{\bx{1}};\cdots;\ket \xx^{\bx{r}};\mathring{\cc})$.%
\footnote{
recall (\cref{sec:notation_iid}) $\ket{\xx}^{\bx{1}},\cdots,\ket \xx^{\bx{r}}$ are iid
  copies of $\ket{\xx}$, which is the tuple $(\ket{x^{1}},\ket{x^{2}},\ldots)$ 
  }
\end{center}
\item[\texttt{Hat}]
All hat-kets
are jointly Gaussian with zero-mean and covariance%
\footnote{In \cref{eqn:hatketcovar}, $\ind(W=U)$ is the deterministic number that is 1 iff $W$ and $U$ are the same matrix (as symbols in the program) and 0 otherwise. This should \emph{not} be interpreted as a random variable that is 1 precisely when $W$ and $U$ take the same values.}
\begin{align}
  \Cov(\hatket{Wx}, \hatket{Uy}) &= \ind(W = U)\braket x y \label{eqn:hatketcovar}
\end{align}

\item[\texttt{Dot}] Every dot-ket is a linear combination of previous kets, expressed by the following equation
\begin{equation}
  \dotket{Wx}\defeq \ket{\xx}\braket{\nabla_{W^{\trsp}\xx}}{x} \label{eqn:dotket}
\end{equation}
\end{description}
\end{defn}
\cref{eqn:dotket} is the same equation \citep[Zdot]{yang4} 
but formulated much more succinctly in the bra-ket notation:
\begin{align*}
  \text{\citep[Zdot]{yang4}},\quad
  \Zdot^{Wx}
  &= \sum_{y \in \xx } Z^{y}\EV \f{\partial Z^x} {\partial \hat Z^{W^\trsp y}},\\
  \text{or, in scalar ket notation,}\quad\dotket{Wx} &= \sum_{y \in \xx} \ket{y}\EV \frac{\partial \ket x}{\partial \hatket{W^\trsp y}}.
\end{align*}
To arrive at the presentation \cref{eqn:dotket}, we think of $\EV \frac{\partial \ket x}{\partial \hatket{W^\trsp y}}$ as $\braket{\partial_{\hatket{W^\trsp y}}}{x}$ for a ``generalized bra'' $\bra{\partial_{\hatket{W^\trsp y}}}$,
and group together 1) all kets $\ket y$ as $\ket \xx$ and 2) all the bras $\bra{\partial_{\hatket{W^\trsp y}}}$ (over all $y \in \xx$) as a multidimensional bra written simply as $\bra{\nabla_{W^\trsp \xx}}$.
Then the sum $\sum_y$ can be straightforwardly rewritten as the ``ket outer product'' described in \cref{sec:ket_outer}.

\newcommand\lhatbra[1]{\mkern+3mu\check{\mkern-3mu\langle}#1 \ob}
\newcommand\lhatbraket[2]{\mkern+3mu\check{\mkern-3mu\langle}#1 \ob #2 \rangle}

\begin{rem}[Alternative Notation]
  The bra $\bra{\nabla_{W^\trsp \xx}}$ is really the ``{dual}'' of $\hatbra{W^\trsp \xx}$ in the sense that%
  \footnote{But note this identity only holds when $\xx$ contains all vectors $z$ where $\ket y$ depends on $\hatket{W^\trsp z}$.}
  \begin{align*}
    \text{for any ket $\ket{y}$,}\quad \braket \xx \xx \braket{\nabla_{W^\trsp \xx}} y &= \hatbraket{W^\trsp \xx}y,\\
    \text{or, in short,}\quad \braket \xx \xx \bra{\nabla_{W^\trsp \xx}} &= \hatbra{{W^\trsp \xx}}
  \end{align*}
  This follows from Stein's lemma.%
  \footnote{
    In the language of Riemannian geometry, if we think of $\braket{\xx} \xx$ as a metric tensor in a Riemannian manifold, then $\bra{\nabla_{W^\trsp \xx}}$ is obtained from $\hatbra{W^\trsp \xx}$ by ``lowering the index.''
  }
  Thus, a more appropriate notation for $\bra{\nabla_{W^\trsp \xx}}$ is perhaps
  \[\lhatbra{ W^\trsp \xx} \defeq \bra{\nabla_{W^\trsp \xx}},\]
  so that
  \begin{align*}
    \braket \xx \xx \lhatbra{ W^\trsp \xx} &= \hatbra{{W^\trsp \xx}}
  \end{align*}
  and \cref{eqn:dotket} reads
  \begin{align*}
    \dotket{Wx}\defeq \ket{\xx}\lhatbraket{{W^{\trsp}\xx}}{x} = \ket{\xx} \braket \xx \xx ^{+} \hatbraket{{W^{\trsp}\xx}}{x}.
  \end{align*}
  However, this ``duality'' is not essential for understanding this paper, so we keep the more intuitive notation $\bra{\nabla_{W^\trsp \xx}}$ instead.
\end{rem}

\begin{defn}\label{defn:oplim}
  Let $W$ be an initial matrix in a \nexort{} program.
  We define $\oplim{W}$ to be the linear operator on kets%
  \footnote{\label{footnote:hilbertspace}
    To be rigorous, we need to specify the ``Hilbert space'' of kets.
    This is somewhat pedantic and not crucial to the key points of this paper, but the Hilbert space can be constructed as follows:
    Let $\sigma(\pi)$ be the $\sigma$-algebra generated by the kets of the program $\pi$.
    Let $\Sigma(\pi) \defeq \bigcup_{\pi' \supseteq \pi} \sigma(\pi)$ be the union (more precisely, the direct limit) of $\sigma(\pi')$ over all programs $\pi'$ extending $\pi$.
    Then the Hilbert space in question is the $L^2$ space of random variables over the $\Sigma$ of our program.
    }
  that acts by 
  \[\opket{W}x \defeq \ket{Wx} = \hatket{Wx} + \dotket{Wx}.\]
  Any linear operator that is equal to $\oplim{W}$ for some initial matrix $W$ is called an \emph{initial operator}.
  A set of initial operators is called \emph{independent} if their corresponding initial matrices are distinct.%
  \footnote{i.e., independently sampled in \cref{setup:nexort} or \cref{setup:nexort_nongaussian}.}
\end{defn}
We have already seen an example of a linear operator on kets: expressions like $\Qketdbra{y}{\chi}z$.
\cref{defn:oplim} puts $\oplim{W}$ in the same space as $\Qketdbra{y}{\chi}z$.
This allows us to add them in the sequel, which simplifies the presentation of the $\mu$-limit.

We can immediately see a few properties of $\oplim{W}$ by considering the counterpart when $n$ is finite.
\begin{prop}
  For any initial matrix $W$, the operator $\oplim{W}$ is bounded.%
  \footnote{i.e., there exists real number $L > 0$ such that for any ket $\ket x$, $\braket{Wx}{Wx} \le L \braket x x$.}
\end{prop}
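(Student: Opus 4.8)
The plan is to reduce the boundedness of $\oplim{W}$ to a statement purely about the covariance structure of the kets it produces, and then verify that statement by tracking the two pieces $\hatket{Wx}$ and $\dotket{Wx}$ separately. Concretely, I want to show there is a constant $L>0$ (depending on $W$ and the program, but independent of any extending program) such that $\braket{Wx}{Wx} \le L\braket{x}{x}$ for every ket $\ket x$ in any program extending the one in which $W$ appears.

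First I would split $\ket{Wx} = \hatket{Wx} + \dotket{Wx}$ and bound each summand. For the hat-ket, \cref{eqn:hatketcovar} gives $\braket{\widehat{Wx}}{\widehat{Wx}} = \braket{x}{x}$ exactly, so the hat part contributes with constant $1$; this is the finite-$n$ shadow of $\|Wx\|^2/n \approx \|x\|^2/n$ for a Gaussian matrix with entries of variance $1/n$. For the dot-ket, \cref{eqn:dotket} writes $\dotket{Wx} = \ket{\xx}\braket{\nabla_{W^\trsp\xx}}{x}$, a finite linear combination of the kets $\ket y$ for $y\in\xx$ with coefficients $\EV\,\partial\ket x/\partial\hatket{W^\trsp y}$. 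Using the "duality" identity from the Alternative Notation remark, $\braket\xx\xx\bra{\nabla_{W^\trsp\xx}} = \hatbra{W^\trsp\xx}$, one can rewrite $\braket{\dotket{Wx}}{\dotket{Wx}} = \hatbraket{W^\trsp\xx}{\xx}\braket\xx\xx^{+}\hatbra{W^\trsp\xx}\ket{x}$-type expression; since $\hatbra{W^\trsp\xx}$ is jointly Gaussian with covariance governed by $\braket{}{}$ among the $\xx$, one gets $\braket{\dotket{Wx}}{\dotket{Wx}} \le \braket{x}{x}$ by a Cauchy–Schwarz / positive-semidefiniteness argument on the Gram matrix $\braket\xx\xx$. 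Then the triangle inequality in $L^2$ gives $\sqrt{\braket{Wx}{Wx}} \le \sqrt{\braket{\widehat{Wx}}{\widehat{Wx}}} + \sqrt{\braket{\dotket{Wx}}{\dotket{Wx}}} \le 2\sqrt{\braket{x}{x}}$, so $L = 4$ works.

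An alternative and perhaps cleaner route is to invoke the finite-width picture directly: for finite $n$, $\|Wx\|/\sqrt n \le \|W\|_{op}\|x\|/\sqrt n$, and a Gaussian (or, under \cref{setup:nexort_nongaussian}, suitably-tailed) matrix with entries of variance $1/n$ has operator norm $\le 2 + o(1)$ almost surely; one then argues that the ket construction is the $n\to\infty$ limit of these finite-$n$ quantities (via the Master Theorem, \cref{thm:MasterTheorem}) so the bound passes to the limit. This is the "consider the counterpart when $n$ is finite" hint in the text. I would present the algebraic argument as primary since it does not require quoting operator-norm concentration.

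The main obstacle is bookkeeping rather than depth: one must be careful that the constant $L$ does not secretly depend on the number of vectors in $\xx$ or on which program we are in — this is exactly why the duality identity (which packages the whole sum $\sum_{y\in\xx}$ into a single $\braket\xx\xx^{+}\hatbra{W^\trsp\xx}$ contraction) is the right tool, as it collapses the dot-ket norm into something manifestly bounded by $\braket xx$ independent of $|\xx|$. A secondary subtlety flagged in the footnote to the Alternative Notation remark is that the identity $\braket\xx\xx\bra{\nabla_{W^\trsp\xx}} = \hatbra{W^\trsp\xx}$ only holds when $\xx$ contains all the vectors $z$ on which $\ket x$ depends through $\hatket{W^\trsp z}$; one needs to either assume this closure (WLOG, by enlarging $\xx$) or argue that the missing terms vanish, and then confirm that enlarging $\xx$ does not break the uniformity of $L$ (it does not, by the same positive-semidefiniteness argument).
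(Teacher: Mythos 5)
Your proposal is correct, but your primary argument is genuinely different from the paper's proof, which is exactly your ``alternative route'': the paper simply cites classical almost-sure operator-norm bounds on iid random matrices and transfers the finite-$n$ inequality $\frac1n\|Wx\|^2 \le \|W\|_{op}^2\,\frac1n\|x\|^2$ to the limit via the Master Theorem. Your algebraic argument on the limit objects is a nice complement and, as far as I can tell, goes through: the hat part contributes exactly $\braket{x}{x}$ by \cref{eqn:hatketcovar}; for the dot part, writing $\dotket{Wx}=\ket{\xx}\braket{\xx}{\xx}^{+}\hatbraket{W^\trsp\xx}{x}$ (modulo null-space coefficients, which contribute an a.s.-zero ket) and noting that $\braket{\xx}{\xx}$ is also the Gram matrix of the hat-kets $\{\hatket{W^\trsp y}\}_{y\in\xx}$, the quantity $\braket{\dot{Wx}}{\dot{Wx}}=\hatbraket{W^\trsp\xx}{x}^\trsp\braket{\xx}{\xx}^{+}\hatbraket{W^\trsp\xx}{x}$ is the squared $L^2$-norm of the orthogonal projection of $\ket x$ onto $\operatorname{span}\{\hatket{W^\trsp y}\}$, hence at most $\braket{x}{x}$; the triangle inequality then yields $L=4$, matching the $\|W\|_{op}\to 2$ heuristic. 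What your route buys: an explicit constant, uniformity in $|\xx|$ for free (via the projection interpretation, as you note), and independence from operator-norm concentration --- which is a genuine advantage in the non-Gaussian setup, where the paper itself remarks (footnote to \cref{prop:S0_matmul}) that the relevant operator-norm result for general entry distributions carries an extra nested-submatrix hypothesis. What the paper's route buys is brevity and the fact that it requires no closure assumption on $\xx$ or appeal to the Stein-lemma duality identity. The one point to tighten in a full write-up is the middle step where you gesture at ``a Cauchy--Schwarz / positive-semidefiniteness argument'': the clean statement is the projection bound above, using the standard Hilbert-space fact recorded in \cref{sec:pinv}.
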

\begin{proof}
  This follows from the classical operator norm tail bounds on iid random matrices (see, e.g., \citep{tao}), which passes to the limit via \cref{thm:MasterTheorem} below.
\end{proof}
\begin{prop}
  For any initial matrix,
  the operator $\oplim{W^\trsp}$ is the adjoint%
  \footnote{``adjoint'' in the sense of Hilbert space operators; see \cref{footnote:hilbertspace}.}
  of the operator $\oplim{W}$.
\end{prop}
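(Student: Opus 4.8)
The plan is to reduce the claim to the finite-width tautology $(Wx)^{\trsp} y = x^{\trsp}(W^{\trsp} y)$ and push it through the Master Theorem, exactly as in the boundedness proof above. Writing $(\oplim W)^{*}$ for the Hilbert-space adjoint, the assertion $\oplim{W^{\trsp}} = (\oplim W)^{*}$ is equivalent to
\[
  \braket{Wx}{y} = \braket{x}{W^{\trsp} y}
  \qquad\text{for all kets } \ket x, \ket y .
\]
Since $\oplim W$ and $\oplim{W^{\trsp}}$ are bounded (previous proposition) and the kets are dense in the Hilbert space of \cref{footnote:hilbertspace}, this identity on kets extends by continuity to the whole space, which is precisely the adjoint relation. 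Density holds because any element of $L^{2}(\Sigma(\pi))$ is an $L^{2}$-limit of pseudo-Lipschitz functions of finitely many kets, and each such function is itself a ket of a suitable extension of $\pi$ via \refOuterNonlin{}.

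To establish the displayed identity, fix $\ket x, \ket y$; by the direct-limit structure of $\Sigma(\pi)$ we may assume $x$ and $y$ are vectors of one common \nexort{} program $\pi$. Extend $\pi$ to $\pi'$ by applying \refMatMul{} to form $Wx$ and $W^{\trsp} y$, then \refOuterNonlin{} with $r = 0$ and the pseudo-Lipschitz $\psi(a,b) = ab$ to form the entrywise products, then \refAvg{} to obtain the scalars
\[
  s_n \defeq \frac 1n \sum_{\alpha} (Wx)_{\alpha}\, y_{\alpha},
  \qquad
  t_n \defeq \frac 1n \sum_{\alpha} x_{\alpha}\, (W^{\trsp} y)_{\alpha} .
\]
For every finite $n$ one has $s_n = t_n$ by associativity of matrix multiplication. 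The Master Theorem (\cref{thm:MasterTheorem}) applied to $\pi'$ gives $s_n \asto \braket{Wx}{y}$ and $t_n \asto \braket{x}{W^{\trsp} y}$, so the two limits agree.

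There is essentially no hard step: the content is carried entirely by the Master Theorem (which we may assume) together with the bookkeeping of which \nexort{} instructions realize the two inner products. The only point demanding care — and the only place anything beyond the finite-$n$ tautology enters — is the passage from the subspace of kets to its completion: one must know that this subspace is dense in $L^{2}(\Sigma(\pi))$ and then invoke the standard fact that bounded operators satisfying $\langle A x, y\rangle = \langle x, B y\rangle$ on a dense subspace satisfy $B = A^{*}$. Alternatively, one could prove the identity directly from the \texttt{Hat} and \texttt{Dot} rules of \cref{defn:ket} — this is the ``duality'' $\braket{\xx}{\xx}\bra{\nabla_{W^{\trsp}\xx}} = \hatbra{W^{\trsp}\xx}$ noted after \cref{eqn:dotket}, which follows from Stein's lemma — but the Master-Theorem route above is shorter and parallels the boundedness argument.
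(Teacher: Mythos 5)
Your proof is correct and is essentially the paper's own argument: the paper likewise derives $\braket{x}{Wy}=\braket{W^{\trsp}x}{y}$ from the finite-$n$ identity $\langle x, Wy\rangle = \langle W^{\trsp}x, y\rangle$ via the Master Theorem. The extra steps you supply (realizing the two scalars by explicit \nexort{} instructions, and extending from kets to the full $L^2$ space by density and boundedness) are careful bookkeeping that the paper leaves implicit, not a different route.
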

\begin{proof}
  By \cref{thm:MasterTheorem} below, $\frac 1 n \la x, W y \ra \asto \braket{x}{Wy}$ and $\frac 1 n \la W^\trsp x, y\ra \asto \braket{W^\trsp x}y$.
  Since $\la x, W y \ra = \la W^\trsp x, y\ra$ for any $n$, we have
  \[\braket{x}{Wy} = \braket{W^\trsp x}y,\]
  i.e., $\oplim{W^\trsp}$ is the adjoint of $\oplim{W}$.
\end{proof}

\subsection{The Master Theorem}
Our key foundational result is that the Master Theorem of earlier Tensor Programs generalizes to \nexort{} programs.
This underlies all of our theorems about adaptive optimization.
\begin{thm}[\nexort{} Master Theorem]\label{thm:MasterTheorem}
  Consider a \nexort{} program with (Gaussian) \cref{setup:nexort} or (non-Gaussian) \cref{setup:nexort_nongaussian}. 
  Then, as $n\to \infty$, its scalars $\cc$ satisfy
  \begin{equation*}
    \cc \asto \mathring \cc.
  \end{equation*}
  In \cref{setup:nexort_nongaussian}, this convergence also happens in $L^p$ for every $p \in[1,\infty)$.
  In either setup, if the initial scalars are all $\tilde O(n^{-1/2})$ (\cref{defn:tildeO}), then%
  \begin{equation*}
    \cc - \mathring \cc = \tilde O(n^{-1/2})
  \end{equation*}
  as well.
\end{thm}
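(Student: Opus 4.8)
The plan is to induct on the length of the \nexort{} program, following the strategy of \cite{tp3b} for the non-Gaussian Master Theorem, and to maintain as inductive invariant that the empirical distribution of the coordinates of all vectors produced so far matches the joint law of the corresponding kets in an appropriate moment sense, with quantitative rate $\tilde O(n^{-1/2})$. Concretely, for every pseudo-Lipschitz (resp.\ polynomially smooth) test function $\Phi$ one wants $\frac 1 n \sum_\alpha \Phi(\xx_\alpha) \asto \EV \Phi(\ket\xx)$ in the Gaussian case, and additionally convergence in $L^p$ with rate $\tilde O(n^{-1/2})$ in the non-Gaussian case; the statement about the scalars $\cc$ is then the special case where $\Phi$ ignores vector arguments, post-composed with \refAvg{}. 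The three instruction types are handled in turn, with \refMatMul{} and \refAvg{} essentially as in prior work, while the new \refOuterNonlin{} of order $r+1 \ge 2$ requires a genuinely new argument.

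The core of the proof is the reduction of \refOuterNonlin{} of order $r+1$ to order $1$. First I would check that $f(\ybf) \defeq \EV\,\psi(\ybf; \ket\xx^{\bx{1}}; \ldots; \ket\xx^{\bx{r}}; \mathring\cc)$ inherits pseudo-Lipschitzness (resp.\ polynomial smoothness) from $\psi$: integrating out a subset of the arguments of a polynomially-growing function against the law of $\ket\xx$, which by the inductive hypothesis has moments of all orders, again yields a polynomially-growing function, and likewise for derivatives. The key estimate is then that the vector $y$ with $y_\alpha = \frac 1 {n^r} \sum_{\beta_1,\ldots,\beta_r} \psi(\xx_\alpha; \xx_{\beta_1}; \ldots; \xx_{\beta_r}; \cc)$ is close, coordinatewise and in averaged $L^p$, to the vector $f(\xx)$ that an order-$1$ \refOuterNonlin{} would produce, i.e.\ $\frac 1 n \sum_\alpha |y_\alpha - f(\xx_\alpha)|^p = \tilde O(n^{-1/2})$. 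This is shown in two moves: swap $\cc$ for $\mathring\cc$ inside $\psi$, with error controlled by pseudo-Lipschitzness of $\psi$ together with the inductive rate $\cc - \mathring\cc = \tilde O(n^{-1/2})$; and then bound $\frac 1 n \sum_\alpha \bigl| \frac 1 {n^r}\sum_{\beta_1,\ldots,\beta_r}\psi(\xx_\alpha;\xx_{\beta_1};\ldots;\xx_{\beta_r};\mathring\cc) - \EV\,\psi(\xx_\alpha;\ket\xx^{\bx{1}};\ldots;\ket\xx^{\bx{r}};\mathring\cc)\bigr|^p$ by the moment method. Expanding the $p$-th power produces a sum over tuples of $\beta$-indices; grouping these by coincidence pattern and invoking the inductive control on the joint moments of the rows of $\xx$ shows that each coincidence among the $\beta$-indices costs a factor $\tilde O(n^{-1/2})$, so the all-distinct terms cancel against the $\EV$ term and the remainder is $\tilde O(n^{-1/2})$. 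Morally this is the statement that \emph{distinct coordinates of Tensor Program vectors are asymptotically independent}, upgraded to a quantitative $L^p$ bound; isolating and proving this ``decoupling of distinct coordinates'' lemma (the ``new notions of equivalence of random vectors'' alluded to in the introduction) is the main new technical work, and I expect it to be the main obstacle, particularly in making the bookkeeping uniform enough to survive later \refMatMul{} steps and in handling large $r$ where pseudo-Lipschitz growth could blow up the moment estimates.

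Once \refOuterNonlin{} is reduced to order $1$, the rest of the induction proceeds as in \cite{tp3b}: every \refOuterNonlin{} vector may be treated as a deterministic pseudo-Lipschitz function of previously-constructed vectors, so it does not disturb the conditioning structure, and the Gaussian conditioning trick for \refMatMul{} (condition on the $\sigma$-algebra generated by prior vectors, decompose $Wx$ into the component $\dotket{Wx}$ lying in the span of earlier outputs of $W$ and $W^\trsp$ plus a fresh Gaussian component $\hatket{Wx}$ with covariance \cref{eqn:hatketcovar}) applies verbatim; in the non-Gaussian setup one replaces it by the moment-method surrogate of \cite{tp3b}, tracking the matrix-entry moment bounds $\nu_k n^{-k/2}$. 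The \refAvg{} step is a law of large numbers with rate supplied by the moment control. Since each of the three instruction types adds at most an extra $\tilde O(n^{-1/2})$ error, the rate propagates through the whole program, yielding $\cc - \mathring\cc = \tilde O(n^{-1/2})$ when the initial scalars are $\tilde O(n^{-1/2})$; almost sure convergence then follows from the $L^p$ bounds with polynomial rate via Markov's inequality and Borel--Cantelli (taking $p$ large enough that $n^{-p/2+\eps}$ is summable), and the $L^p$ convergence asserted in \cref{setup:nexort_nongaussian} is exactly what the moment estimates deliver directly.
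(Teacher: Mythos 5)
Your overall architecture matches the paper's: replace each \refOuterNonlin{} by the ``integrated'' order-$1$ nonlinearity $f(\ybf)=\EV\psi(\ybf;\ket\xx^{\bx1};\cdots;\ket\xx^{\bx r};\mathring\cc)$, prove a decoupling estimate showing the empirical average over $\bbeta$-tuples is $\tilde O(n^{-1/2})$-close to that expectation, and then fall back on the \netsort{} Gaussian-conditioning machinery. You also correctly locate the main obstacle in the decoupling step. However, there is a genuine gap in the inductive invariant you propose to carry. You maintain only that single-row empirical averages of test functions converge, i.e.\ $\frac1n\sum_\alpha\Phi(\xx_\alpha)\to\EV\Phi(\ket\xx)$. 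That controls the marginal empirical distribution of one row and says nothing about the joint distribution of $r+1$ \emph{distinct} rows $(\xx_\alpha,\xx_{\beta_1},\ldots,\xx_{\beta_r})$, which is exactly what your ``coincidence pattern'' expansion needs: the claim that the all-distinct terms cancel against the $\EV$ term is only exact if the distinct rows are genuinely independent with law $\ket\xx$, and quantifying the error when they are merely ``approximately iid'' is circular with the statement being proved. The paper resolves this by strengthening the invariant to \emph{distributional equivalence}: the full $n\times M$ matrix of vectors equals an exactly-iid sample of the kets plus an entrywise-vanishing perturbation ($\xx\distequiv\iid(\ket\xx)$, \cref{def:distequiv1}, \cref{thm:scalarless_vMT}). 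This lets one pass WLOG to exactly-iid rows before the combinatorial estimate, and it is also what makes the \refMatMul{} step work: the Gaussian conditioning trick yields a \emph{distributional} decomposition conditioned on prior vectors, and converting that into something the next \refOuterNonlin{} can consume requires precisely this stronger, conditional-dequivalence bookkeeping (propagated through \refMatMul{} via operator-norm bounds, which is why ``vanishing'' is defined in $L^2$ only). Your proposal implicitly assumes this upgrade without establishing it; that is the missing idea, not the combinatorics.

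Two smaller points. First, for the combinatorial estimate itself, your direct moment-method expansion over coincidence patterns is a legitimate alternative to the paper's device (Baranyai's theorem, \cref{thm:obara}, which partitions the distinct-index tuples into perfect matchings so each block is a sum of conditionally independent mean-zero terms); your route is the standard U-statistics computation, the paper's avoids the pattern-counting at the cost of a combinatorial import. Second, your derivation of almost-sure convergence assumes the $\tilde O(n^{-1/2})$ rate throughout, so it only covers initial scalars that converge at that rate; the theorem also asserts a.s.\ convergence when $\cc^0\asto0$ arbitrarily slowly, which the paper handles by a separate equicontinuity (``uniformly locally Lipschitz'') argument that your proposal does not address.
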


\begin{rem}
  If the initial scalars are $\tilde O(n^{-1/2})$, then we can \emph{almost} say that the distribution of $\cc$ converge to the delta distribution on $\mathring \cc$ in Wasserstein distance, at a rate of $\tilde O(n^{-1/2})$.
  See \cref{sec:Wasserstein_dequiv}.
  An analogous $\tilde O(n^{-1/2})$-convergence result also holds for vectors converging to their kets (in a suitable sense).
  See \cref{lem:otimes_tp_comparison}.
\end{rem}

\section{Maximal Update for Deep MLP}
\label{sec:deepmaximalupdate}

In this section we describe the infinite-width limit of $\mu$P for arbitrarily deep MLP.
The main difference here compared to the shallow case (\cref{sec:shallowmulimit}) is the presence of $n \times n$ iid matrices in the middle of the network, which behaves like initial operators (\cref{defn:oplim}) in the limit.

\begin{thm}[Deep $\mu$-Limit, Memoryless Stationary Case]\label{thm:mulimit_MLP}
  Consider any training routine (\cref{assm:trainingroutine}) with memoryless stationary update function $Q$ (\cref{defn:memoryless_stationary}).
  Adopt \cref{assm:MLPsmooth}.
  As $n\to\infty$, $\ff_{t}$ converges almost surely to some $\mathring{\ff}_{t}$
  for every $t$, which is recursively defined from $t=0$ by the following
  dynamics:%
  \footnote{
    We remind the reader that, in $\mu$P (\cref{defn:mup}), $w_t^{L+1}$ is the output layer weights normalized so that $w_t^{L+1}$ has $\Theta(1)$-sized entries (whereas $W_t^{L+1} = \frac 1 n w_t^{L+1}$).
    The same point applies to $w_t^1$ (but $W_t^1 = w_t^1$).
    We use lower case $w$ for input and output weights while upper case $W$ for other layers to emphasize that the former are \emph{vector-like} parameters (one dimension going to $\infty$) while others are \emph{matrix-like} (two dimensions going to $\infty$).
    }
  \begin{enumerate}
    \item(Forward and Backward Propagation)
      \begin{align*}
        \mathring \ff_t = \braket{w_t^{L+1}}{\xx^L_t},\quad
        \mathring{\cchi}_{t}=\eps_{t}(\mathring{\ff}_{t})
      \end{align*}
      \begin{align*}
        \ket{\hh_{t}^{1}} & =\ket{w_{t}^{1}}\xxi, & 
          \ket{\hh_{t}^{l}} &= \opket{W^l_t}{\xx^{l-1}_t}, &
          \ket{\xx_{t}^{l}} & =\phi(\ket{\hh_{t}^{l}}), \\
      \ket{d\xx_{t}^{L}} & =\ket{w_{t}^{L+1}}\otimes\boldsymbol{1}_{\NN}, & 
        \ket{d\xx^{l-1}_t} &= \opket{W^{l\trsp}_t}{d\hh^l_t}, &
        \ket{d\hh_{t}^{l}} & =\ket{d\xx_{t}^{l}}\odot\phi'(\ket{\hh_{t}^{l}}).
      \end{align*}
    \item(Parameter Updates)
      \begin{align}
        \ket{w^1_{t+1}}&=\ket{w^1_{t}}-\eta \overline{\ket{d\hh_{t}^1}_{\mathring{\cchi}_{t}}\xxi^{\trsp}}
        \label{eqn:muP_in_update}\\
        \oplim{W^{l}_{t+1}} &= \oplim{W^{l}_{t}} - \eta \overline{\ketdbra{d\hh_t^{l}}{\mathring \cchi_t}{\xx_t^{l-1}}},\ \forall l \in [2,L]
          \label{eqn:muP_op_update}
        \\
        \ket{w_{t+1}^{L+1}}&=\ket{w_{t}^{L+1}}-\eta \overline{\ket{\xx_{t}^{L}} \cdot \mathring{\cchi}_{t}}.
          \label{eqn:muP_out_update}
      \end{align}
      \item(Initialization)
      $\oplim{W^2_0}, \ldots, \oplim{W^L_0}$ are independent initial operators (\cref{defn:oplim}), and
        \begin{align*}
          (\ket{w^1_0}, \ket{w_0^{L+1}}) &= \Gaus(0, I).
        \end{align*}
  \end{enumerate}

\end{thm}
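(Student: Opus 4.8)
The plan is to express the entire training trajectory --- all forward passes, backward passes, and $\mu$P updates \cref{eqn:abcdupdate} over $t$ steps --- as a single \nexort{} program, and then invoke the Master Theorem (\cref{thm:MasterTheorem}) to read off the almost-sure limit of every scalar quantity, in particular $\ff_t = \braket{w_t^{L+1}}{\xx_t^L}$ up to the $\tfrac 1 n$ normalization built into $\mu$P. First I would fix the program's initial objects: the input vectors $\xxi$ (columns of $\xx^0$), the input/output weight vectors $w_0^1, w_0^{L+1}$ sampled $\Gaus(0,I)$ (also in $\xx^0$), and the hidden matrices $W_0^2,\ldots,W_0^L$ as the initial matrix set $\calW$, sampled iid $\Gaus(0,1/n)$ as in \cref{setup:nexort}. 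I then transcribe one training step into program instructions: forward propagation uses \refMatMul{} for $\hh_t^l = W_t^l \xx_t^{l-1}$ and \refOuterNonlin{} (order 1, i.e. just an entrywise map) for $\xx_t^l = \phi(\hh_t^l)$; the output $f_t(\xi^a) = \tfrac 1 n w_t^{L+1\trsp}\xx_t^{L}(\xi^a)$ is produced by \refAvg{}, yielding a scalar $\cc$; the error signal $\mathring\cchi_t = \eps_t(\ff_t)$ is then a scalar obtained via \cref{lemma:transform_scalars}; backpropagation is again \refMatMul{} (with $W_t^{l\trsp}$) plus entrywise \refOuterNonlin{}. The only genuinely new ingredient is the hidden-layer update: from \cref{eqn:abcdupdate} the matrix $W_t^l$ changes by $-\eta n^{-c_l} n^{d_l} Q(\text{grad})$ applied entrywise, where the per-entry gradient of $W^l$ is (a sum over the batch of) outer products $d\hh_t^l (\xx_t^{l-1})^\trsp$ scaled appropriately; with $\mu$P's exponents $c_l = d_l = 1$ this is exactly $-\eta\, Q\!\big(\sum_a \mathring\chi_t^a\, dh_t^{l,a}\, x_t^{l-1,a}/\cdots\big)$ entrywise, i.e. precisely a \emph{nonlinear outer product} $\overline{\ketdbra{d\hh_t^l}{\mathring\cchi_t}{\xx_t^{l-1}}}$ --- expressible in \nexort{} but not in \netsortp{}. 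Since $W_t^l = W_0^l + \sum_{s<t}(\text{nonlinear outer products})$, every subsequent $\hh_t^l$ is a legitimate \nexort{} vector, and the recursion closes.

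The next step is bookkeeping of the correct $n$-powers. One must check that under $\mu$P (\cref{defn:mup}), each vector appearing in the program has $\Theta(1)$-sized coordinates, so that all \refOuterNonlin{} arguments are $\Theta(1)$ and the pseudo-Lipschitz hypothesis of \cref{setup:nexort} (guaranteed by \cref{assm:MLPsmooth}: $\phi',\eps_t,Q$ pseudo-Lipschitz, and $\phi$ itself since $\phi'$ is) applies uniformly. This is a finite induction on $(l,t)$ using the already-established facts that $W_0^l x = \Theta(1)$ for $\Theta(1)$-sized $x$, that \refAvg{} of a $\Theta(1)$ vector is $\Theta(1)$, and that the learning-rate/multiplier exponents in $\mu$P are calibrated exactly so the updates to $\hh^l$, $d\hh^l$, $w^1$, $w^{L+1}$ are themselves $\Theta(1)$ (this is the content of \cref{defn:faithful} et seq., which I would cite). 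Having verified the program is well-formed and satisfies \cref{setup:nexort}, the Master Theorem gives $\cc \asto \mathring\cc$ for every scalar; applying it to the scalar $f_t(\xi^a)$ and to each $\mathring\chi_t^a$ yields the claimed recursion: $\mathring\ff_t = \braket{w_t^{L+1}}{\xx_t^L}$, $\mathring\cchi_t = \eps_t(\mathring\ff_t)$, and the ket-level forward/backward equations follow directly from the \texttt{Init}, \texttt{OuterNonlin}, \texttt{Hat}, and \texttt{Dot} rules of \cref{defn:ket}. The update equations \cref{eqn:muP_in_update,eqn:muP_out_update} for the vector-like parameters $w^1,w^{L+1}$ are just the \texttt{OuterNonlin} rule applied to the entrywise-$Q$ of an inner product; the operator update \cref{eqn:muP_op_update} follows because $\oplim{W_{t}^l}$ acts on a ket $\ket z$ by $\ket{W_t^l z} = \ket{W_0^l z} + \sum_{s<t}(\text{nonlinear outer product of } d\hh_s^l, \xx_s^{l-1})\ket z$, and by \cref{defn:oplim} and the definition of the nonlinear-outer-product operator this is exactly $\oplim{W_0^l}\ket z - \eta\sum_{s<t}\overline{\ketdbra{d\hh_s^l}{\mathring\cchi_s}{\xx_s^{l-1}}}\ket z$, i.e. the telescoped form of \cref{eqn:muP_op_update}. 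Finally the initialization claims are immediate: $\ket{w_0^1},\ket{w_0^{L+1}}$ are $\Gaus(0,I)$ by the \texttt{Init} rule, and $\oplim{W_0^2},\ldots,\oplim{W_0^L}$ are independent initial operators because the $W_0^l$ are distinct initial matrices (\cref{defn:oplim}). The $\ff_0$ Gaussian claim is the $t=0$ case, where \refAvg{} applied to $w_0^{L+1}\odot\xx_0^L$ with independent $w_0^{L+1}$ gives a Gaussian with variance $\braket{\xx^L}{\xx^L}$.

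The main obstacle is not any single step above but rather \emph{legitimizing the translation itself}: one must argue that the vectors and matrices produced by backpropagation and by the $Q$-processed updates are genuinely expressible within the \refMatMul{}/\refOuterNonlin{}/\refAvg{} grammar of \cref{defn:nexort} --- in particular that backpropagated gradients $d\hh_t^l$, which naively involve derivatives of the forward map with respect to all weights, can be written as a finite composition of these three instructions (this is the usual "backpropagation is itself a Tensor Program" argument, but it must be re-checked with the new nonlinear-outer-product updates present in the forward computation, since now $\hh_t^l$ depends on $W_0^l$ through both \refMatMul{} and the accumulated outer-product corrections), and that the entrywise gradient of a weight \emph{matrix} $W^l$ is exactly the batch-summed outer product $\sum_a d h_t^{l,a} (x_t^{l-1,a})^\trsp$ up to the scaling absorbed into $d\hh$ --- so that the $\mu$P update \cref{eqn:abcdupdate} really is an \refOuterNonlin{} instruction and not something outside the language. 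A secondary subtlety is uniformity of the pseudo-Lipschitz constants across the (finitely many, since $t$ is fixed) instructions, which is automatic here but worth stating. Once the program is written down correctly, everything else is a direct application of \cref{thm:MasterTheorem}. The memoryful/nonstationary generalization, as noted for the shallow case, is the same program with $Q^l_t$ in place of $Q$ and with the updates accumulating over all past steps.
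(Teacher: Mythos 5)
Your proposal is correct and takes essentially the same route as the paper: the paper builds the \nexort{} program encoding training for a general stable, faithful abcd-parametrization (\cref{sec:Program-Setup,sec:ProgramConstruction,sec:infwidthlimit}) and obtains \cref{thm:mulimit_MLP} by specializing to $\mu$P, where all the scale factors $\mathring\theta_\bullet$ equal $1$, exactly matching your direct transcription plus Master Theorem argument. One minor slip worth noting: in $\mu$P the output multiplier is $n^{-1}$ (since $a_{L+1}+b_{L+1}=1>\nicefrac{1}{2}$), so $\ff_0\asto 0$ by the averaging rule of \cref{defn:ket} rather than converging to a nontrivial Gaussian --- the Gaussian initial output you invoke belongs to the NT limit, and the theorem here makes no such claim.
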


One can check that when $L=1$, we recover \cref{thm:muLimit_1LP}.

Here, \cref{eqn:muP_op_update} uses the operator semantics discussed in and below \cref{defn:oplim} to cleanly express the parameter update.
When unwinded, \cref{eqn:muP_op_update} is equivalent to
\begin{align*}
  \ket{\hh_{t}^{l}} & =\ket{W_{0}^{l}\xx_{t}^{l-1}} - \eta \sum_{s=0}^{t-1} \Qketdbra{d\hh_s^{l}}{\mathring \cchi_s}{\xx_s^{l-1}} {\xx_t^{l-1}}\ra\\
  \ket{d\xx_{t}^{l-1}} & =\ket{W_{0}^{l\trsp}d\hh_{t}^{l}} - \eta \sum_{s=0}^{t-1} \Qketdbra{\xx_s^{l-1}}{\mathring \cchi_s}{d\hh_s^{l}}{d\hh_t^{l}}\ra.
\end{align*}

The proof of \cref{thm:mulimit_MLP} and \cref{thm:muLimit_1LP} can be found in \cref{sec:muLimitProof}.

\begin{rem}[$\mu$P is Most Natural]\label{rem:mup_mostnatural}
Observe that all of the equations above are essentially the ``ket'' versions of what one does in a finite network.
This holds for general architectures: the $\mu$-limit can always be obtained straightforwardly transcribing the tensor operations in a finite network to their counterparts acting on kets in the infinite-width limit.
See \cref{thm:mulimit_anyarch}.
In this sense, $\mu$P is the \emph{most natural parametrization}.
\end{rem}

As before, the general case without stationarity or memorylessness is straightforward given \cref{thm:mulimit_MLP}, albeit with some more notation.
\begin{thm}[Deep $\mu$-Limit, Memoryful Nonstationary Case]\label{thm:mulimit_MLP_general}
  If $\QQ$ is memoryless but not stationary, then \cref{thm:mulimit_MLP} holds if the bars in \cref{eqn:muP_op_update,eqn:muP_in_update,eqn:muP_out_update} are interpreted as $Q^l_t$ (where $l$ is the same as the layer index appearing in $W^l_{t+1}$ or $w^l_{t+1}$ on the LHS).

  If $\QQ$ is not memoryless, then \cref{thm:mulimit_MLP} holds if \cref{eqn:muP_op_update,eqn:muP_in_update,eqn:muP_out_update} are replaced with 
  \begin{align*}
    \ket{w^1_{t+1}}&=\ket{w^1_{t}}-\eta \overline{\ket{d\hh_{\le t}^1}_{\mathring{\cchi}_{\le t}}\xxi^{\trsp}}
    \\
    \oplim{W^{l}_{t+1}} &= \oplim{W^{l}_{t}} - \eta \overline{\ketdbra{d\hh_{\le t}^{l}}{\mathring \cchi_{\le t}}{\xx_{\le t}^{l-1}}},\ \forall l \in [2,L]
    \\
    \ket{w_{t+1}^{L+1}}&=\ket{w_{t}^{L+1}}-\eta \overline{\ket{\xx_{\le t}^{L}} \cdot \mathring{\cchi}_{\le t}}
  \end{align*}
  where the bar notations abbreviate
  \begin{align*}
    \overline{\ket{d\hh_{\le t}^1}_{\mathring{\cchi}_{\le t}}\xxi^{\trsp}}&=Q^1_t \left(\ket{d\hh_{0}^1}_{\mathring{\cchi}_{0}}\xxi^{\trsp}, \ldots, {\ket{d\hh_{t}^1}_{\mathring{\cchi}_{t}}\xxi^{\trsp}}\right)
    \\
    \overline{\ketdbra{d\hh_{\le t}^{l}}{\mathring \cchi_{\le t}}{\xx_{\le t}^{l-1}}}
    &=
      Q^l_t\left(
        \ketdbra{d\hh_{0}^{l}}{\mathring \cchi_{0}}{\xx_{0}^{l-1}}
        ,\ldots,
        \ketdbra{d\hh_{t}^{l}}{\mathring \cchi_{ t}}{\xx_{t}^{l-1}}
        \right)
      \\
    \overline{\ket{\xx^L_{\le t}}\cdot\mathring{\cchi}_{\le t}} &= Q^{L+1}_t\left({\ket{\xx^L_{0}}\cdot\mathring{\cchi}_{0}}, \ldots, {\ket{\xx^L_{t}}\cdot\mathring{\cchi}_{t}}\right)
    .
  \end{align*}
\end{thm}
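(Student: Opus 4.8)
The plan is to obtain this as a direct enhancement of the proof of \cref{thm:mulimit_MLP} given in \cref{sec:muLimitProof}: I would transcribe the entire finite-$T$ training process --- all forward and backward passes, and all parameter updates through step $T-1$ --- into a single \nexort{} program in the width $n$, and then read off the limit via the \nexort{} Master Theorem (\cref{thm:MasterTheorem}). The only structural change introduced by dropping stationarity and memorylessness is that the update applied to layer $l$ at step $t$ is governed by $Q^l_t$ and consumes the whole history $n^{d_l}g_0,\dots,n^{d_l}g_t$ of processed gradients (\cref{eqn:abcdupdate}) rather than a single current gradient; everything else is copied verbatim from the memoryless stationary case.

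\textbf{Encoding the updates.} The forward maps $\hh^l_t$, $\xx^l_t$ and the backward signals $d\hh^l_t$, $d\xx^l_t$ are produced by the same \refMatMul{} and \refOuterNonlin{} steps as in \cref{thm:mulimit_MLP}. For a hidden weight $W^l$, the step-$s$ scaled gradient is the $n\times n$ matrix that realizes $\ketdbra{d\hh^l_s}{\mathring\cchi_s}{\xx^{l-1}_s}$ in the limit; applying $Q^l_t$ entrywise to the list of these matrices over $s=0,\dots,t$ is precisely a nonlinear outer product with $t+1$ slots (in the sense of \cref{sec:ket_outer}), which is exactly an \refOuterNonlin{} instruction of order two whose function $\psi$ is assembled from $Q^l_t$, $\phi$, $\phi'$, and the error-signal scalars. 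Since $Q^l_t$, $\phi'$, and each $\eps_s$ are pseudo-Lipschitz under \cref{assm:MLPsmooth}, so is $\psi$, so \cref{setup:nexort} applies and the Master Theorem is in force. The matrix-level update \cref{eqn:muP_op_update} is then just $\oplim{W^l_t}$ minus the operator represented by this $(t+1)$-slot nonlinear outer product (the operator semantics around \cref{defn:oplim}); unwinding the recursion over $s$ reproduces the displayed double-sum forms of $\ket{\hh^l_t}$ and $\ket{d\xx^{l-1}_t}$. The vector-like layers $l=1$ and $l=L+1$ are handled identically, yielding the replacements for \cref{eqn:muP_in_update,eqn:muP_out_update}.

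\textbf{Induction over training steps.} As for \cref{thm:mulimit_MLP}, the argument proceeds by induction on $t$: assuming $\ff_s\asto\mathring\ff_s$ for all $s\le t$ (hence the error signals, which enter the program as scalars, satisfy $\cchi_s=\eps_s(\ff_s)\asto\eps_s(\mathring\ff_s)=\mathring\cchi_s$ by pseudo-Lipschitzness), I append one more training step to the program, apply the Master Theorem to the enlarged program, and conclude $\ff_{t+1}\asto\mathring\ff_{t+1}$ together with the claimed ket recursion. Because the ket construction (\cref{defn:ket}) plugs the scalar limits $\mathring\cchi_s$ directly into the $\psi$ of the \refOuterNonlin{} step, the limiting update equations come out with $\mathring\cchi_{\le t}$ inside the bars exactly as written. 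The memoryless-but-nonstationary case is simply the specialization in which $Q^l_t$ ignores all of its arguments but the last, collapsing the $(t+1)$-slot nonlinear outer product to the single-slot one appearing in \cref{thm:mulimit_MLP} with the bar read as $Q^l_t$.

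\textbf{Main obstacle.} There is no genuinely new difficulty --- the substance is entirely in \cref{thm:MasterTheorem} and \cref{thm:mulimit_MLP} --- but the one place needing care is the same scaling bookkeeping as in \cref{thm:mulimit_MLP}, now doubly indexed by $(l,t)$: one must check that under $\mu$P the exponents $c_l,d_l$ keep every quantity fed to $Q^l_t$ of entrywise size $\Theta(1)$ and every intermediate vector at its nominal scale, so that all \refMatMul{}, \refAvg{}, and \refOuterNonlin{} instructions stay within the Master Theorem's hypotheses for every $t\le T$. Equivalently, one verifies that the joint (across $s$) empirical distribution of the processed-gradient entries in a fixed hidden layer converges to the joint law of the random scalars $\Qketdbra{d\hh^l_s}{\mathring\cchi_s}{\xx^{l-1}_s}$ supplied by the program's ket construction; this is immediate from the Master Theorem on the enlarged program, but it is where the heavier notation of the memoryful nonstationary case has to be set up correctly.
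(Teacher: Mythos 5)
Your proposal matches the paper's own route: the paper proves this by transcribing the full finite-horizon training into a single \nexort{} program (\cref{sec:Program-Setup,sec:ProgramConstruction}), where the memoryful update $Q^l_t$ applied to the history of gradient outer products is exactly an \refOuterNonlin{} instruction, then reads off the kets via the Master Theorem (\cref{sec:infwidthlimit}) and specializes to $\mu$P where all the $\mathring\theta$ factors equal $1$ (\cref{sec:muLimitProof}). The only cosmetic difference is that the paper applies the Master Theorem once to the whole program rather than re-applying it to an enlarged program at each training step, but the recursive ket construction makes these equivalent.
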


\begin{rem}
  \cref{thm:muLimit_1LP} and \ref{thm:muLimit_1LP_general} also hold if the output dimension is greater than 1, but the equations need to be interpreted slightly differently.
  See \cref{eqn:ket_outer_product_general}.
\end{rem}

\begin{rem}[What is $\mu$P ``maximal'' in?]\label{rem:mupmaximal}
  For SGD, \cite{yang4} showed $\mu$P is the \emph{unique} stable parametrization where every weight matrix is updated maximally \citep[Defn 5.2]{yang4} and the output weight matrix is also initialized maximally \citep[Defn 5.4]{yang4}.
  These definitions still make sense here and this statement holds as well if ``stable'' is replaced with ``stable and faithful'' (\cref{defn:stability}, \ref{defn:faithful}).
\end{rem}

\section{Dynamical Dichotomy: The Classification of abcd-Parametrizations}
\label{sec:classifyabcd}

In this section, we characterize the infinite-width limits of all possible abcd-parametrizations under reasonable assumptions of the optimizer and network nonlinearity, generalizing the work done for SGD in \cite{yang4}.
First, we filter out the uninteresting limits: the unstable (training blows up), the trivial (training gets stuck at initialization), and the unfaithful (the update functions $\QQ$ and/or nonlinearities $\phi$ are trivialized).
We sort all other limits into a \emph{Dynamical Dichotomy} (\cref{cor:dichotomyMain}) between feature learning and operator regimes (the latter being the nonlinear version of \emph{kernel regime} in the SGD case).
The $\mu$ and NT limits are respectively their archetypes (indeed, the \emph{maximal} parametrizations in these regimes (\cref{{rem:maximality}})).
Like in the SGD case, this dichotomy is not tautological: it implies certain network training dynamics cannot be the infinite-width limit of any abcd-parametrization (\cref{rem:invalidlimits}).
Likewise, pretraining is still futile in the operator regime even with adaptive optimizers (\cref{rem:pretraining_operator_regime}).

Our results here hold for not only memoryless stationary but also memoryful nonstationary updates.

\subsection{Technical Assumptions}

\begin{defn}
  We say a function $F:\R\to\R$ \emph{preserves positivity} if $F(x)>0$
  whenever $x>0$. We say it \emph{preserves sign} if $\sgn F(x)=\sgn(x)$
  for all $x$ (where $\sgn$ takes value in $\{-1,0,1\}$).
\end{defn}
For proving the Dynamical Dichotomy Theorem for entrywise updates, we will make the following technical assumptions.
Roughly speaking, we will focus only on ``relu-like'' nonlinearities and sign-preserving update functions with mild smoothness.
\begin{assm}\label{assm:relulike_phi_Q_preserves_sign}
  Suppose
  \begin{enumerate}
    \item $\phi$ and $\phi'$ are nonnegative and pseudo-Lispchitz.
    \item $\phi$ preserves positivity.
    \item There exists $\delta>0$ such that $t^\delta \phi(x/t)$ converges uniformly (as a function in $x \in \R$) to $\relu(x)^\delta$ as $t\to 0$.
    \item $Q^l_t$ is pseudo-Lipschitz for all $l,t$, and $Q^l_0$ preserves sign for all $l$.%
  \end{enumerate}
\end{assm}

\begin{rem}[Sufficiency]
  As in \cite{yang4}, the pseudo-Lipschitzness of $\phi, \phi', Q^l_t$ are sufficient for letting us use our new Master Theorem (\cref{thm:MasterTheorem}) to take the limits of any parametrization, get the operator limits, and prove their properties.
  Any assumptions beyond such is required only for proving that $r=0$ implies feature learning (\cref{thm:r=0impliesFL}).
  In particular, the reason we only require $Q^l_0$ to preserve signs (instead of for all $t$) is because we will only need to show that features evolve in the first step.
\end{rem}

\begin{rem}[Necessity]
  \cref{assm:relulike_phi_Q_preserves_sign} is satisfied by typical smooth versions of relu like gelu and its powers.
  However, note that these are only a specific set of conditions that allow us to easily prove our desired result and are likely very far from a necessary set of conditions.
  For example, 1) the pseudo-Lipschitz conditions can likely be relaxed to allow $\phi=\relu$ and $Q^l_t=\sgn$, and 2) the uniform convergence in \cref{assm:relulike_phi_Q_preserves_sign} certainly can be relaxed to some measure-theoretic convergence.
  In fact, we expect all theorems below in this section to hold for \emph{generic} activations and update functions.
  We leave this to future work.
\end{rem}

\subsection{Size of Feature Learning}

In \citep{yang4}, the number $r$ of an abc-parametrization measures how much the features change over training.
We adapt its definition to abcd-parametrizations.
\begin{defn}\label{def:r}
  Define
  \[
  r_{l}\defeq\begin{cases}
  c_{l}+a_{l}-1 & \text{if \ensuremath{l>1}}\\
  c_{l}+a_{l} & \text{if \ensuremath{l=1}}
  \end{cases},\quad
  r_{\le l} \defeq \min_{m=1}^l r_l,\quad
  r \defeq r_{\le L}
  \]
\end{defn}

Morally, in any ``reasonable'' abcd-parametrization (in the sense of stable (\cref{defn:stability}) and faithful (\cref{defn:faithful}) discussed below), we have $\Delta W_{t}^{l}\xx_{t}^{l-1}=\Theta(n^{-r_{l}})$
and $\Delta \xx_{t}^{L}=\Theta(n^{-r})$, where $\Delta \bullet_t = \bullet_t - \bullet_0$ is the cumulative change of $\bullet_t$.
Concretely, for NTP and $\mu$P we have, for all $l \in [1, L]$,
\begin{align}
  r=r_l &= \nicefrac{1}{2}\quad \text{in NTP (\cref{defn:NTP})}&
  r=r_l &= 0\quad\text{in $\mu$P (\cref{defn:mup})}
  \label{eqn:concrete_r}
\end{align}
The reader should sanity check that $r_l$ and $r$ are invariant to the symmetry in \cref{stmt:SGDsymmetry}.

\begin{rem}
Ostensibly, \cref{def:r} is different from and much simpler than \cite[Defn 3.2]{yang4} for abc-parametrizations.
But, in fact, they are equivalent for stable and faithful abcd-parametrizations (under the reduction \cref{eqn:abc_reduction} to abc-parametrizations).
The comparative simplicity of \cref{def:r} is also due to the faithfulness.
\end{rem}

\subsection{Stability and Faithfulness}

We will only care about any parametrization satisfying two basic properties:
1) does not blow up during at initialization or during training as
width $\to\infty$ and 2) does not trivialize the update functions
$Q_{t}^{l}$. The former property is known as \emph{stability} and has already
been studied in \cite{yang4} for SGD. The latter is specific to nonlinear entrywise updates, and we will call this the \emph{faithful} property. By ``trivialize,''
we mean that the input to $Q_{t}^{l}$ is either i) too small and
thus linearizing $Q_{t}^{l}$ around the origin or ii) too large and
only depends on $Q_{t}^{l}$'s behavior ``around infinity''. Both
scenarios ignore the bulk of $Q_{t}^{l}$'s values as a function.
If such behaviors are actually desired, then one can change $Q^l_t$ to such effects.
For example, the linearizing behavior in case
i) can be implemented by choosing a linear $Q_{t}^{l}$ and modifying
$c_{l},d_{l}$ appropriately so that the input to $Q_{t}^{l}$ has
constant typical size (wrt width).

Recall from \cref{sec:bigO} the \emph{entry-wise semantics} of big-O notation.
We formalize the \emph{stability} and \emph{faithfulness} properties below.
\begin{defn}[Stability]\label{defn:stability} 
  We say an abcd-parametrization of an $L$-hidden layer MLP is
  \begin{enumerate}
  \item \emph{stable at initialization} if %
  \begin{equation}
  \hh_{0}^{l},\xx_{0}^{l}=\Theta(1),\forall l\in[L],\quad\text{and}\quad \ff_{0}=O(1).\label{eq:initstable}
  \end{equation}
  \item \emph{stable during training} if for any training routine, any time $t\ge0$, $l\in[L]$, we have 
  \[
  \Delta \hh_{t}^{l},\Delta \xx_{t}^{l}=O(1),\forall l\in[L],\quad\text{and}\quad \Delta\ff_{t}=O(1).
  \]
  \end{enumerate}
  We say the parametrization is \emph{stable} if it is stable both at initialization and during training.
\end{defn}

\begin{defn}\label{defn:faithful}
We say an abcd-parametrization is \emph{faithful at time $t$} if the input
to $Q_{_{t}}^{l}$ is $\Theta(1)$ for every $l \in [L]$.
We also say it is \emph{faithful at initialization} if this is true at $t=0$.
\end{defn}

\begin{rem}
  The condition $\hh_{0}^{l},\xx_{0}^{l}=\Theta(1)$ in \cref{eq:initstable} is in truth more of a ``faithfulness to $\phi$'' condition than just stability (which would strictly speaking be more like $O(1)$ than $\Theta(1)$).
  But we never need to distinguish these notions, so following \cite{yang4}, we will keep the definition as is.
\end{rem}

\begin{lemma}%
  \label{lemma:stabilityconditionsinit}
  Adopt \cref{assm:relulike_phi_Q_preserves_sign}. An abcd-parametrization is stable at initialization iff
  \begin{equation}
    a_{1}+b_{1}=0;\quad a_{l}+b_{l}=1/2,\ \forall l\in[2,L];\quad\text{and}\ \ a_{L+1}+b_{L+1}\ge1/2.\label{eq:actlogitinit}
  \end{equation}  
\end{lemma}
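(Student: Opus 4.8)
The plan is to compute the typical entry sizes of $\hh_0^l$ and $\xx_0^l$ layer by layer under an abcd-parametrization, using the entrywise big-O semantics of \cref{defn:BigO}, and to match them against the $\Theta(1)$ requirement in \cref{eq:initstable}. At initialization there is no feature learning to worry about, so the only thing that matters is how the random initialization of each weight matrix interacts with the scaling exponents $a_l, b_l$. First I would record that the actual forward pass uses $W^l = n^{-a_l} w^l$ with $w^l_{\alpha\beta} \sim \Gaus(0, n^{-2b_l})$, so each entry of $W^l$ has standard deviation $n^{-a_l - b_l}$, i.e. $W^l$ has entries of size $\Theta(n^{-(a_l+b_l)})$. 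Writing $s_l \defeq a_l + b_l$, the claim to prove becomes $s_1 = 0$, $s_l = 1/2$ for $l \in [2,L]$, and $s_{L+1} \ge 1/2$.

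The core of the argument is a short induction on $l$. For $l = 1$: $\hh_0^1 = W^1 \xi$ where $\xi \in \R^d$ is a fixed constant vector; this is a sum of $d = \Theta(1)$ terms each of size $\Theta(n^{-s_1})$, so $\hh_0^1 = \Theta(n^{-s_1})$, and by \cref{assm:relulike_phi_Q_preserves_sign} (in particular $\phi$ nonnegative, pseudo-Lipschitz, preserving positivity, and the homogeneity-type condition on $t^\delta\phi(x/t)$) one gets $\xx_0^1 = \phi(\hh_0^1) = \Theta(n^{-\delta s_1})$ (up to the constant-size issue; here is where $\Theta(1)$ inputs to $\phi$ are really needed, which forces $s_1 = 0$ — I would note this is exactly the ``faithfulness to $\phi$'' point flagged in the remark after \cref{defn:faithful}). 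For $l \in [2,L]$: $\hh_0^l = W^l \xx_0^{l-1}$ is a matrix–vector product where $W^l$ has iid $\Theta(n^{-s_l})$ entries independent of the fixed-at-$\Theta(1)$ vector $\xx_0^{l-1}$; by the standard Tensor Program heuristic (rigorously, \cref{thm:MasterTheorem} applied to the trivial program that just samples $W^l$ and multiplies, or equivalently a CLT-type estimate), $\|\hh_0^l\|^2/n \asto n^{1-2s_l}\cdot\|\xx_0^{l-1}\|^2/n \cdot (1 + o(1))$, giving $\hh_0^l = \Theta(n^{1/2 - s_l})$; for this to be $\Theta(1)$ we need $s_l = 1/2$, and then $\xx_0^l = \phi(\hh_0^l) = \Theta(1)$. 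Finally, $\ff_0 = W^{L+1\trsp}\xx_0^L$ with $W^{L+1}$ having $\Theta(n^{-s_{L+1}})$ entries gives $\ff_0 = \Theta(n^{1/2 - s_{L+1}})$, so $\ff_0 = O(1)$ iff $s_{L+1} \ge 1/2$. Chaining the forced equalities yields \cref{eq:actlogitinit}, and conversely plugging \cref{eq:actlogitinit} back in makes every quantity $\Theta(1)$ (or $O(1)$ for $\ff_0$), giving the ``if'' direction for free from the same computation.

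The main obstacle is making the step $\hh_0^l = \Theta(n^{1/2-s_l})$ and $\xx_0^l = \phi(\hh_0^l) = \Theta(1)$ genuinely rigorous rather than heuristic, since $\Theta(\cdot)$ here is the entrywise-norm notion of \cref{defn:BigO} (two-sided, almost surely, for large $n$) and not an in-probability statement. The lower bound — that $\hh_0^l$ does not collapse to something $o(n^{1/2-s_l})$ — is the delicate half: it needs that $\xx_0^{l-1}$ is not close to the null space of $W^l$, which follows because $W^l$ is independent of $\xx_0^{l-1}$ and has bounded entries, so one can condition on $\xx_0^{l-1}$ and apply a Gaussian (or, under \cref{setup:nexort_nongaussian}, a general) concentration estimate for $\|W^l v\|^2$ around $n^{1-2s_l}\|v\|^2$. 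I would handle this by invoking \cref{thm:MasterTheorem} on the short program $\xx^0 \to W^l\xx^0 \to \texttt{Avg}$ computing $\|W^l\xx_0^{l-1}\|^2/n$, whose limit is $\braket{\xx^{l-1}}{\xx^{l-1}}$, a strictly positive constant under \cref{assm:relulike_phi_Q_preserves_sign} (since $\phi$ preserves positivity and the Gaussian $\ket{\hh^l}$ puts positive mass where $\phi > 0$). The second subtlety is propagating $\Theta(1)$-ness through $\phi$ at the non-initial layers: $\phi$ pseudo-Lipschitz gives the upper bound immediately, and the lower bound again uses $\phi$ preserves positivity together with the fact that $\hh_0^l$ has a nondegenerate limiting distribution, so a $\Theta(1)$ fraction of its coordinates land in a region where $\phi$ is bounded below. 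I expect the write-up to mirror the corresponding lemma in \cite{yang4} for abc-parametrizations almost verbatim, with the only new ingredient being the bookkeeping $s_l = a_l + b_l$ that absorbs the extra exponent $d_l$ trivially (since $d_l$ plays no role at $t = 0$).
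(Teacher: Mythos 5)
Your proof is correct and is essentially the standard forward-pass induction; the paper itself gives no proof here, deferring entirely to \cite[Thm H.19]{yang4} on exactly the grounds you identify — that $d_l$ plays no role at $t=0$, so the abc-parametrization characterization transfers verbatim. One small wrinkle: since \cref{defn:stability} already requires $\hh_0^1=\Theta(1)$ directly, $s_1=0$ is forced without any appeal to $\phi$'s asymptotic homogeneity, so your detour through $\Theta(n^{-\delta s_1})$ is unnecessary (though harmless).
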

This condition is the same as \cite[Thm H.6(1)]{yang4}.
For example, SP, NTP, and $\mu$P are all stable at initialization.
In this situation, some easy calculation shows that, at initialization, the last layer gradients have entry size $\Theta(n^{-a_{L+1}})$ and while all other layers' gradients have entry size $\Theta(n^{-a_l-a_{L+1}-b_{L+1}})$.
Hence,
\begin{lemma}\label{lemma:faithfulconditionsinit}
  In \cref{lemma:stabilityconditionsinit}, the abcd-parametrization is furthermore faithful at initialization iff
  \begin{equation}
    d_{l}= a_l + a_{L+1}+b_{L+1}\text{\ \ for $l\le L$}\quad\text{and}\quad d_{L+1}=a_{L+1}.
    \label{eqn:faithful_init}
  \end{equation}

\end{lemma}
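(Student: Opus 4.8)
The plan is to compute the $n$-scaling of the quantity fed into $Q^l_0$ and require it to be $\Theta(1)$. By \cref{defn:abcd}(d), that input is $n^{d_l}$ times $g_0$, the gradient of the \emph{trainable} parameter $w^l$ at initialization. Since the error signal $\eps_0(\ff_0)\in\R^\NN$ is a fixed-dimension vector with $\Theta(1)$ entries, backpropagating it through the output head only rescales $g_0$ by an $n$-independent factor, so $g_0$ has the same entry size as $\nabla_{w^l}\ff_0$. Thus faithfulness at initialization is equivalent to $n^{d_l}\nabla_{w^l}\ff_0=\Theta(1)$ for every layer $l$, and the whole statement reduces to verifying the gradient sizes quoted just before it: $\nabla_{w^{L+1}}\ff_0=\Theta(n^{-a_{L+1}})$ and $\nabla_{w^l}\ff_0=\Theta(n^{-a_l-a_{L+1}-b_{L+1}})$ for $l\le L$. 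Plugging these in and setting the leftover exponents to $0$ yields precisely \cref{eqn:faithful_init}; conversely, those exponent identities make every input $\Theta(1)$.

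To get the gradient sizes, I would chase the chain rule using the stability conditions \cref{eq:actlogitinit}. For the output layer, $\nabla_{w^{L+1}}\ff_0=n^{-a_{L+1}}\xx^L_0$, and $\xx^L_0=\phi(\hh^L_0)=\Theta(1)$ by stability at initialization. For the hidden layers I would prove by downward induction that $\nabla_{\hh^l_0}\ff_0=\Theta(n^{-a_{L+1}-b_{L+1}})$ entrywise: the base case $l=L$ uses $\nabla_{\xx^L_0}\ff_0=n^{-a_{L+1}}w^{L+1}$, whose entries are $\Theta(n^{-a_{L+1}-b_{L+1}})$ since $w^{L+1}_\alpha\sim\Gaus(0,n^{-2b_{L+1}})$, composed with $\phi'(\hh^L_0)=\Theta(1)$; the inductive step uses $\nabla_{\xx^{l-1}_0}\ff_0=n^{-a_l}w^{l\trsp}\nabla_{\hh^l_0}\ff_0$, where, because $w^l_{\alpha\beta}\sim\Gaus(0,n^{-2b_l})$ and $a_l+b_l=1/2$, multiplication by $n^{-a_l}w^{l\trsp}$ preserves entry size (the standard fact that $w^l/\sqrt n$ has $\Theta(1)$ operator norm, with a matching lower bound), followed again by $\phi'(\hh^l_0)=\Theta(1)$. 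Then $(\nabla_{W^l_0}\ff_0)_{\alpha\beta}=(\nabla_{\hh^l_0}\ff_0)_\alpha(\xx^{l-1}_0)_\beta$ has entry size $\Theta(n^{-a_{L+1}-b_{L+1}})$ (using $\xxi=\Theta(1)$ for the $l=1$ case), so $\nabla_{w^l_0}\ff_0=n^{-a_l}\nabla_{W^l_0}\ff_0=\Theta(n^{-a_l-a_{L+1}-b_{L+1}})$, as claimed.

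The main obstacle is not the $O$-bookkeeping but the matching $\Omega$-bounds --- the assertion that none of these quantities degenerates below its generic size. Concretely I need $\|\xx^L_0\|^2/n=\Omega(1)$, $\|\nabla_{\hh^l_0}\ff_0\|^2/n=\Omega(n^{-2(a_{L+1}+b_{L+1})})$ at each layer, and $\eps_0(\ff_0)\neq 0$. The first two follow from the same non-degeneracy that underlies \cref{lemma:stabilityconditionsinit}: under \cref{assm:relulike_phi_Q_preserves_sign}, $\phi,\phi'$ are nonnegative and $\phi$ preserves positivity, so the relevant kets are nondegenerate and the associated empirical second moments converge --- via the Master Theorem \cref{thm:MasterTheorem} --- to strictly positive limits; the last is a mild genericity assumption on the training routine. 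I would establish these lower bounds once at the outset, after which the argument above is just an exponent count.
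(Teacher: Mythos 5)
Your proposal is correct and follows the same route the paper takes: the paper dispenses with this lemma by asserting that "some easy calculation shows" the initial gradient entry sizes $\Theta(n^{-a_{L+1}})$ for the output layer and $\Theta(n^{-a_l-a_{L+1}-b_{L+1}})$ for $l\le L$, and your backward induction through the chain rule (together with the ket-nondegeneracy argument for the matching $\Omega$-bounds, which is the right way to justify them --- more so than the parenthetical operator-norm remark) is exactly that calculation made explicit.
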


For example, NTP and $\mu$P are faithful at initialization but SP (\cref{exmp:SP}) is not (but if $\QQ$ is scale invariant then SP is equivalent to a faithful parametrization).

\begin{thm}[Stability and Faithfulness Characterization]\label{thm:stablefaithful}
  Adopt \cref{assm:relulike_phi_Q_preserves_sign}. 
  Suppose, at initialization, an abcd-parametrization is both faithful and stable. Then it remains so for all $t\ge 1$ iff 
  \begin{equation}
    r_l\ge0\ \text{ for all } l \in [L+1]\quad\text{and}\quad
    a_{L+1} + b_{L+1} + r \ge 1\quad\text{and}\quad
    b_{L+1}\le c_{L+1}.
    \label{eqn:stable_faithful}
  \end{equation}
\end{thm}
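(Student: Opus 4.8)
The plan is to prove both directions by a single induction on the training step $t$, with the \nexort{} Master Theorem (\cref{thm:MasterTheorem}) as the engine that converts ``this tensor has entries of size $\Theta(n^{-a})$'' into honest statements about the limit of $\frac1n\sum_\alpha(\cdot)_\alpha^2$. First I would transcribe the whole training process --- forward pass, backpropagation, and the abcd-update \cref{eqn:abcdupdate} for all steps $\le t$ --- into one \nexort{} program, exactly as in the $\mu$P proof of \cref{thm:mulimit_MLP}; \cref{assm:relulike_phi_Q_preserves_sign} supplies the pseudo-Lipschitzness of $\phi,\phi',Q^l_s$ needed to legitimize this. The inductive hypothesis at time $t$ bundles: $\hh^l_t,\xx^l_t=\Theta(1)$ and $\Delta\hh^l_t,\Delta\xx^l_t=O(n^{-r_{\le l}})$ for $l\in[L]$; the scaled backward signals $d\xx^l_t,d\hh^l_t=\Theta(1)$; the input to each $Q^l_t$ is $\Theta(1)$ (faithfulness); $\Delta\ff_t=O(1)$; plus the induced sizes $\Delta w^1_t=\Theta(n^{-c_1})$, $\Delta W^l_t=\Theta(n^{-a_l-c_l})$, $\Delta w^{L+1}_t=\Theta(n^{-c_{L+1}})$ of the cumulative weight changes, which follow from faithfulness and the learning-rate exponents. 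The base case $t=0$ is exactly \cref{lemma:stabilityconditionsinit,lemma:faithfulconditionsinit}.

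For the \emph{if} direction I would run the inductive step and observe that each inequality in \cref{eqn:stable_faithful} guards against exactly one potential blow-up at step $t+1$. (i) The new contribution to $\hh^l_{t+1}$ from $\Delta W^l$ applied to $\xx^{l-1}_{t+1}$ has entries of size $\Theta(n^{1-c_l-a_l})=\Theta(n^{-r_l})$ --- here the correlation between $\Delta W^l$ and $\xx^{l-1}$ through backprop kills the usual $\sqrt n$ cancellation --- and symmetrically the new piece of $d\xx^{l-1}_{t+1}$ from $\Delta W^{l\trsp}$ is $\Theta(n^{-r_l})$; requiring $r_l\ge0$ for all $l\in[L+1]$ keeps these, and also the readout term $n^{-a_{L+1}}(\Delta w^{L+1})^\trsp\xx^L=\Theta(n^{-r_{L+1}})$ contributing to $\Delta\ff$, of order $O(1)$. (ii) The readout term $n^{-a_{L+1}}(w^{L+1}_0)^\trsp\Delta\xx^L$ coming from the accumulated feature change has size $\Theta(n^{1-a_{L+1}-b_{L+1}-r})$, again without cancellation because $\Delta\xx^L$ depends on $w^{L+1}_0$ through the backward pass, so $a_{L+1}+b_{L+1}+r\ge1$ is exactly what bounds $\Delta\ff$. (iii) The backward vector $d\xx^L_{t+1}=n^{b_{L+1}}w^{L+1}_{t+1}$ picks up from $\Delta w^{L+1}$ a piece of size $\Theta(n^{b_{L+1}-c_{L+1}})$, so $b_{L+1}\le c_{L+1}$ keeps $d\xx^L$ --- hence every backward signal and hence the input to every hidden $Q^l_{t+1}$ --- of order $\Theta(1)$, preserving faithfulness. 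Propagating these estimates forward through $\phi$ and backward through $\phi'$ and the $W^l_0$ (using pseudo-Lipschitzness, and the nonnegativity/positivity clauses of \cref{assm:relulike_phi_Q_preserves_sign} for the lower bound $\xx^l_{t+1}=\Theta(1)$ that faithfulness of the output update needs) closes the induction.

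For the \emph{only if} direction, suppose some inequality in \cref{eqn:stable_faithful} fails; I would exhibit a training routine and a step --- $t=1$ always suffices --- at which stability or faithfulness breaks. This is where \cref{assm:relulike_phi_Q_preserves_sign} does the real work: nonnegativity and positivity-preservation of $\phi,\phi'$ together with sign-preservation of $Q^l_0$ guarantee that the first update is a genuinely nonzero, non-cancelling multi-vector, upgrading all the $O(\cdot)$ estimates above to matching $\Omega(\cdot)$ estimates. Then $r_l<0$ forces $\Delta\hh^l_1=\Omega(n^{-r_l})\to\infty$, $a_{L+1}+b_{L+1}+r<1$ forces $\Delta\ff_1=\Omega(n^{1-a_{L+1}-b_{L+1}-r})\to\infty$ (both stability failures), and $b_{L+1}>c_{L+1}$ forces $d\xx^L_1$, and hence the input to each hidden $Q^l_1$, to diverge (a faithfulness failure).

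The main obstacle is the lower-bound / non-cancellation half. Upper bounds on entry sizes follow essentially mechanically from the Master Theorem, but to get the matching $\Theta$ --- needed both to see that $r_l\ge0$, $a_{L+1}+b_{L+1}+r\ge1$, $b_{L+1}\le c_{L+1}$ are \emph{necessary}, and to verify faithfulness is \emph{not} lost in the borderline cases --- one must track the joint law of the relevant vectors, in particular that $\Delta\xx^L$ and $w^{L+1}_0$ remain genuinely correlated through the backward pass and that each layer's first update is nonzero, and then show the corresponding limit scalar produced by the program is nonzero. This is precisely the point at which the relu-like, sign-preserving hypotheses of \cref{assm:relulike_phi_Q_preserves_sign} are invoked. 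A secondary subtlety is the borderline regime $r_l=0$ or $r=0$ (attained by both NTP and $\mu$P), where $\Delta\xx^L$ is the same order as $\xx^L_0$ and one must still certify $\xx^L_t=\Theta(1)$ rather than merely $O(1)$, again via positivity-preservation of $\phi$.
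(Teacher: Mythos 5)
Your proposal is correct and follows essentially the same route as the paper: transcribe training into a \nexort{} program with the update differences normalized by the exponents $\theta_{W^l}=n^{-r_l}$, $\theta_{Lf}'=n^{1-(r+a_{L+1}+b_{L+1})}$, $\theta_{L+1/f}=n^{b_{L+1}-c_{L+1}}$ (your three estimates (i)--(iii) match these exactly), apply the Master Theorem to get sufficiency, and use the positivity/sign-preservation clauses of \cref{assm:relulike_phi_Q_preserves_sign} to upgrade $O$ to $\Omega$ at $t=1$ for necessity. The paper packages your $t$-induction as the step-by-step program construction of \cref{sec:Program-Setup,sec:ProgramConstruction,sec:infwidthlimit}, but the substance is identical.
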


In words: 1) $r_l\ge0$ for all $l \in [L]$ ensures the features do not blow up, while 
2) $r_{L+1} \ge0$ and $a_{L+1} + b_{L+1} + r \ge 1$ resp.\ ensure that $W_{t}^{L+1}\xx_{t}^{L},W_{0}^{L+1}\Delta \xx_{t}^{L}=O(1)$, so $\ff_t$ does not blow up;%
\footnote{Recall $\Delta \bullet_t = \bullet_t - \bullet_0$ is the cumulative change of $\bullet_t$.}
finally, 3) $b_{L+1}\le c_{L+1}$ ensures that $W^{L+1}$ does not change scale after updates, since otherwise the $d_l$ in \cref{eqn:faithful_init} is no longer faithful.

\begin{rem}\label{rem:minormismatch}
One can check that \cref{thm:stablefaithful} reduces to \cite[Thm 3.3]{yang4} (the stability characterization of abc-parametrizations) plus the additional constraint that $W^{L+1}_t = O(W^{L+1}_0)$ for all $t$ (which is imposed by $b_{L+1}\le c_{L+1}$ in \cref{eqn:stable_faithful}).
As remarked above, this constraint is due to the faithfulness requirement.
But in fact, if we allow $d_l$ to vary during training, then this constraint can be removed and the appropriate version of \cref{thm:stablefaithful} would be equivalent to \cite[Thm 3.3]{yang4}.
\end{rem}

\subsection{Nontriviality}

Even among stable and faithful parametrizations, we are only interested in \emph{nontrivial} parametrizations (as in \cite{yang4}),
where in the infinite width limit the network will not be stuck at
initialization during training.
\begin{defn}[Nontriviality]
  We say an abcd-parametrization of an $L$-hidden layer MLP is \emph{trivial} if for every training routine and any time $t\ge1$, $\ff_{t}-\ff_{0}\asto0$ as $n\to\infty$ (i.e., the function does not evolve in the infinite-width limit). We say the parametrization is \emph{nontrivial} otherwise.
\end{defn}

Nontriviality is characterized by a \emph{disjunction} of equations in $a_l,b_l,c_l$, just as for SGD in \cite{yang4}.
\begin{thm}\label{thm:nontrivial}
  Adopt \cref{assm:relulike_phi_Q_preserves_sign}. 
A stable and faithful abcd-parametrization is nontrivial iff 
\begin{equation}
  a_{L+1}+c_{L+1}=1
  \quad \text{or}\quad a_{L+1}+b_{L+1}+r=1.
  \label{eqn:nontrivial}
\end{equation}
\end{thm}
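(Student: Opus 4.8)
The plan is to compute how $\Delta\ff_t = \ff_t - \ff_0$ scales with $n$ under a stable and faithful parametrization, and identify exactly when this scale is $\Theta(1)$ rather than $o(1)$. Writing $f = W^{L+1\trsp}x^L$ and tracking the first training step, the change $\Delta\ff_1$ decomposes into two contributions: (i) the contribution $W_0^{L+1\trsp}\Delta x_1^L$ coming from the change in the last-layer features, and (ii) the contribution $(\Delta W_1^{L+1})^\trsp x_1^L$ coming from the update of the output weights themselves (a cross term $(\Delta W_1^{L+1})^\trsp \Delta x_1^L$ is always lower-order). By the "morally $\Delta W_t^l x_t^{l-1} = \Theta(n^{-r_l})$, $\Delta x_t^L = \Theta(n^{-r})$" bookkeeping already set up around \cref{def:r}, together with the initialization scalings from \cref{lemma:stabilityconditionsinit,lemma:faithfulconditionsinit}, contribution (i) has entry-size governed by $a_{L+1}+b_{L+1}+r$ (the output weight has entries $\Theta(n^{-a_{L+1}-b_{L+1}})$ at initialization and $\Delta x^L$ contributes an $n$-fold sum producing the extra $+1$), so (i) is $\Theta(1)$ iff $a_{L+1}+b_{L+1}+r = 1$; and contribution (ii) is governed by $a_{L+1}+c_{L+1}$ — the learning rate supplies $n^{-c_{L+1}}$, the $\Diag$ rescaling supplies $n^{-a_{L+1}}$, and faithfulness (\cref{eqn:faithful_init}) ensures the update function $Q^{L+1}_0$ sees a $\Theta(1)$ input so it neither vanishes nor saturates — so (ii) is $\Theta(1)$ iff $a_{L+1}+c_{L+1}=1$. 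Stability (\cref{thm:stablefaithful}) guarantees both contributions are $O(1)$, so at least one being exactly $\Theta(1)$ is equivalent to \cref{eqn:nontrivial}, and nontriviality at $t=1$ follows; for general $t$ one argues inductively that the same scalings persist.

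First I would make the above scaling claims rigorous by running the whole first-step computation through a \nexort{} program and invoking the Master Theorem (\cref{thm:MasterTheorem}): express $\ff_0$, the backward pass $d\hh_0^l, d\xx_0^l$, the first updates $\Delta W_1^l$, the re-forward pass $\hh_1^l, \xx_1^l$, and finally $\Delta\ff_1$, all as program vectors/scalars after the appropriate powers of $n$ are stripped, then read off the coordinate sizes. The "if" direction then amounts to exhibiting a single training routine (e.g. $\eps_0 \equiv$ a fixed nonzero vector, or MSE against a generic target) for which the relevant limiting ket quantity — the diagonal of $\la d\hh^{L+1}\Qketdbra{d\hh^{L+1}}{\cchi}{\xx^{L}}{\xx^{L}}\ra$-type term for contribution (i), and $\braket{\xx^L}{\overline{\ket{\xx^L}\cdot\cchi}}$-type term for contribution (ii) — is nonzero; here \cref{assm:relulike_phi_Q_preserves_sign} is exactly what is needed, since $\phi,\phi'$ nonnegative and $Q^{L+1}_0$ sign-preserving prevent sign cancellations from forcing the limit to vanish for \emph{every} choice of $\eps_0$ (this is the same mechanism as in \cite{yang4} for SGD, adapted to nonlinear $Q$). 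The "only if" direction is the contrapositive: if $a_{L+1}+c_{L+1}\neq 1$ and $a_{L+1}+b_{L+1}+r\neq 1$, then by stability both contributions are $O(n^{-\delta})$ for some $\delta>0$, hence $\Delta\ff_t \asto 0$; one has to be slightly careful that the cross terms and higher-order-in-$t$ accumulations do not conspire to produce a $\Theta(1)$ effect, which again follows from the per-step $O(n^{-\delta})$ bound and a union bound over the finitely many steps.

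The main obstacle I expect is the "if" direction's nonvanishing argument: it is not enough that the coordinate \emph{size} is $\Theta(1)$; one must show the limiting deterministic quantity $\mathring{\ff}_1 - \mathring{\ff}_0$ is actually nonzero for \emph{some} training routine, ruling out a degenerate cancellation in the expectation defining the ket. For contribution (ii) this is relatively clean — $\braket{\xx^L}{\xx^L} > 0$ and $Q^{L+1}_0$ sign-preserving make $\la \xx^L, \overline{\ket{\xx^L}\odot\cchi}\ra$ nonzero when $\cchi$ is, say, a positive multiple of a coordinate basis vector. For contribution (i), the backward signal $\ket{d\hh^l}$ passes through $L$ layers of $\phi'$ and through the nonlinear $Q$-operators, and one must check that the relulike/positivity assumptions propagate to keep the relevant inner product from collapsing to zero; I would handle this by the same inductive positivity argument used in \cite{yang4} (\cref{assm:relulike_phi_Q_preserves_sign} items 1–3 ensure $\ket{\xx^l}, \ket{d\hh^l}$ and the relevant Gram entries stay strictly positive), extended across the $Q^l_0$ step using that $Q^l_0$ preserves sign. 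Modulo that positivity propagation — which is routine given the assumptions but notationally heavy — the theorem reduces to the scaling bookkeeping, which is mechanical once the \nexort{} program is written down.
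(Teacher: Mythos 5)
Your proposal follows essentially the same route as the paper: the paper builds the training \nexort{} program, applies the Master Theorem, and reads off from \cref{eq:delflimit} in \cref{sec:infwidthlimit} that the two output contributions carry the indicator coefficients $\ind(a_{L+1}+c_{L+1}=1)$ and $\ind(r+a_{L+1}+b_{L+1}=1)$ — exactly your contributions (ii) and (i) — so the ``only if'' direction is immediate, and the ``if'' direction reduces to the non-degeneracy of the limiting kets, handled via the positivity/sign-preservation lemmas (\cref{lem:preserve_positivity}, \cref{lem:Qupdate_sign}) just as you propose.

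One caution on the step you call ``routine but notationally heavy'': for $r>0$ the features are frozen and your finite-$\eta$ positivity induction (adapted from \cite{yang4}, extended across $Q^l_0$ by sign preservation) does go through layer by layer with no cross-layer cancellation. But for $r=0$ the paper explicitly states that the \cite{yang4} reasoning does \emph{not} adapt straightforwardly: at finite $\eta$ one has $\ket{h^l_1}=\ket{h^l_0}+\ket{\delta h^l_1}$ and there is no easy way to rule out cancellation in quantities like $\braket{\widehat W_0^{L+1}}{x_1^L}$. The paper's actual device (in \cref{sec:r=0}, proof of \cref{thm:r=0impliesFL}) is to send $\eta\to\infty$ and use \cref{assm:relulike_phi_Q_preserves_sign}(3) — the convergence of $t^\delta\phi(x/t)$ to $\relu(x)^\delta$ — to homogenize the dynamics, after which everything becomes nonnegative and the output provably scales like a positive constant times $\eta^{e_L\delta}$ or $\eta^{e_L\delta+1}$, hence is nonzero for large $\eta$. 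You cite item 3 of the assumption only as a positivity condition; its real role is this asymptotic homogenization, and without some such device your $r=0$ nonvanishing argument is incomplete.
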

This is essentially equivalent to \cite[Thm 3.4]{yang4}.
For example, NTP and $\mu$P are both nontrivial.

\subsection{Feature Learning and Operator Regimes}

Finally, we ask: what are the different possible behaviors among the nontrivial, stable, and faithful parametrizations?
As in \cite{yang4}, we will see a dichotomy between feature learning and a \emph{nonlinear} version of the kernel regime which we call the \emph{operator regime}.

\begin{defn}[The Operator Regime]\label{defn:operatorregime}
  For memoryless stationary updates, 
  we say an abcd-parametrization of an $L$-hidden layer MLP \emph{is in the operator regime} if there exists a function $\calK: \R^{\NN} \to \R^\NN$, which we call an \emph{operator}, such that for all training routines and every $t \ge 0$, as width $n\to\infty$,
  \begin{equation}
    \ff_{t+1} - \ff_t - \eta \calK(\ff_t) \asto 0.
    \label{eqn:kernelupdate}
  \end{equation}
  For memoryless, nonstationary updates, we allow $\calK$ to depend on $t$.
  For general entrywise updates, we make the same definition if for all $t$,
  \begin{equation}
    \ff_{t+1} - \ff_t - \eta \calK_t(\ff_0, \ldots, \ff_t) \asto 0.
    \label{eqn:memoryfulkernelupdate}
  \end{equation}
  for some $\calK_t: \R^{(t+1) \times \NN} \to \R^\NN$.
\end{defn}

Notice that the operator regime is defined solely in the \emph{function space}, without talking about the internals of the network, in contrast to feature learning.

\begin{defn}[Feature Learning]\label{defn:featurelearning}
  We say an abcd-parametrization of an $L$-hidden layer MLP \emph{admits feature learning in the $l$th layer} if there exists some training routine such that
  \begin{equation}
  \Delta \xx_{t}^{l}=\Omega(1)\label{eqn:xchanges}
  \end{equation}
  for some $t\ge0$.
  We say the parametrization \emph{admits feature learning} if it does so in any layer.
  
  We say the parametrization \emph{fixes the $l$th layer features} if for all training routine, 
  \[
  \|\Delta \xx_{t}^{l}\|^{2}/n\asto0
  \]
  for all $t\ge0$. We say the parametrization \emph{fixes all features} if it does so in every layer.
  
  We make similar definitions as above replacing \emph{feature} with \emph{prefeature} and $\xx^{l}$ with $\hh^{l}$.
\end{defn}

\begin{defn}[Feature Kernel Evolution]\label{defn:featurekernelevolve}
  We say an abcd-parametrization of an $L$-hidden layer MLP \emph{evolves the $l$th layer feature kernel} if there exists some training routine such that
  \[
  \xx_{t}^{l}\xx_{t}^{l\trsp}/n-\xx_{0}^{l\trsp}\xx_{0}^{l}/n=\Omega(1)
  \]
  for some $t\ge0$.
  We say the parametrization \emph{evolves feature kernels} if it does so in any layer.
  
  We say the parametrization \emph{fixes the $l$th layer feature kernel} if for all training routine, 
  \[
  \xx_{t}^{l\trsp}\xx_{t}^{l}/n-\xx_{0}^{l\trsp}\xx_{0}^{l}/n\asto0,\quad\text{as}\quad n\to\infty,
  \]
  for all $t\ge0$. We say the parametrization \emph{fixes all feature kernels} if it does so in every layer.
  
  We make similar definitions as above replacing \emph{feature} with \emph{prefeature} and $x^{l}$ with $h^{l}$.
\end{defn}

\subsection{Classification of abcd-Parametrizations}

\begin{wrapfigure}{r}{0.35\textwidth}
  \begin{center}
      \includegraphics[width=0.35\textwidth]{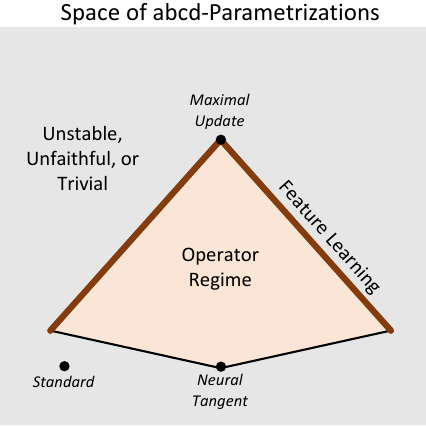}
  \end{center}
  \caption{\textbf{A Caricature of abcd-Parametrizations.}
  The nontrivial stable faithful parametrizations form a high dimensional polyhedron.
  Those on a part of its boundary admit feature learning, while all others are in the operator regime.
  $\mu$P is a vertex in the former, while NTP, latter.
  The overall shape is similar to \cite[Fig.\ 2]{yang4}}
  \label{fig:abcdparamspace}
  \vspace{-80pt}
\end{wrapfigure}

The classification of abcd-parametrizations is similar to that of abc-parametrizations \citep[Thm H.13]{yang4}.
We remind the reader that this holds for not only memoryless stationary but more generally memoryful nonstationary updates.
\begin{restatable}{thm}{main}
  \label{thm:abcdclassification}
  Adopt \cref{assm:relulike_phi_Q_preserves_sign}. Consider a nontrivial, stable, and faithful abcd-parametrization of an $L$-hidden layer MLP. Then
\begin{enumerate}
\item The following are equivalent to $r=0$ \label{enum:r=0}
\begin{enumerate}
\item feature learning
\item feature learning in the $L$th layer
\item feature kernels evolution
\item feature kernel evolution in the $L$th layer
\item prefeature learning
\item prefeature learning in the $L$th layer
\item prefeature kernels evolution
\item prefeature kernel evolution in the $L$th layer
\end{enumerate}
\item The following are equivalent to $r>0$ \label{enum:r>0}
\begin{enumerate}
\item the operator regime
\item fixes all features
\item fixes features in the $L$th layer
\item fixes all feature kernels
\item fixes feature kernel in the $L$th layer
\item fixes all prefeatures
\item fixes prefeatures in the $L$th layer
\item fixes all prefeature kernels
\item fixes prefeature kernel in the $L$th layer
\end{enumerate}
\item If there is feature learning \emph{or} feature kernel evolution \emph{or} prefeature learning \emph{or} prefeature kernel evolution in layer $l$, then there is feature learning \emph{and} feature kernel evolution \emph{and} prefeature learning \emph{and} prefeature kernel evolution in layers $l,\ldots,L$.
\item If $r=0$, then $\ff_{0}\asto0$ and $\ff_t \asto \mathring \ff_t$ for some deterministic $\mathring \ff_t$.
However, the converse is not true.\label{item:FLDeterministic}
\end{enumerate}
\end{restatable}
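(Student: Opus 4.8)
\medskip
\noindent\textbf{Proof plan.} The plan is to realize the training of the MLP under an arbitrary abcd-parametrization as a \nexort{} program, and then read off — via the Master Theorem (\cref{thm:MasterTheorem}) and the entrywise big-$O$ semantics of \cref{defn:BigO} — the coordinate sizes of $\Delta\hh_t^l$, $\Delta\xx_t^l$ and $\Delta\ff_t$ (writing $\Delta\bullet_t=\bullet_t-\bullet_0$). The entire classification then collapses onto one bookkeeping statement about these sizes, combined with the already-established characterizations of stability/faithfulness (\cref{thm:stablefaithful}) and nontriviality (\cref{thm:nontrivial}).

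\noindent\textbf{Central step: sizes of feature movement.} I would first show that, for a stable and faithful abcd-parametrization, at every time $t$ and for a worst-case choice of training routine,
\[
\Delta W_t^l\xx_t^{l-1}=\Theta(n^{-r_l}),\qquad \Delta\hh_t^l,\ \Delta\xx_t^l=\Theta(n^{-r_{\le l}}),\qquad \Delta\xx_t^L=\Theta(n^{-r}),
\]
with the analogous statements for the (pre)feature kernels $\hh_t^{l\trsp}\hh_t^l/n$, $\xx_t^{l\trsp}\xx_t^l/n$. The $O(\cdot)$ direction comes from propagating the initialization gradient sizes recorded just below \cref{lemma:stabilityconditionsinit} forward through the program and passing to the limit with the Master Theorem. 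The matching $\Omega(\cdot)$ direction is needed only for \emph{some} routine and in fact only at $t=1$; this is exactly \cref{thm:r=0impliesFL}, and it is where \cref{assm:relulike_phi_Q_preserves_sign} is used: positivity of $\phi,\phi'$ and sign-preservation of $Q_0^l$ guarantee that the many approximately-iid summands making up the first update do not cancel, so the first-step change is genuinely $\Theta(n^{-r_{\le l}})$. Since $r_{\le l}\ge0$ always (\cref{thm:stablefaithful}), this yields a clean \emph{layerwise dichotomy}: either $r_{\le l}=0$, and then for some training routine all four of prefeature learning, feature learning, prefeature-kernel evolution and feature-kernel evolution occur in layer $l$; or $r_{\le l}>0$, and then $\Delta\hh_t^l,\Delta\xx_t^l$ and the corresponding kernels are $o(1)$ for \emph{every} routine, i.e.\ layer $l$ is fixed.

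\noindent\textbf{Parts 1, 2, 3.} These follow mechanically. All eight properties of the first list, localized to layer $l$, are equivalent to $r_{\le l}=0$; all nine ``fixing'' properties of the second list, localized to $l$, are equivalent to $r_{\le l}>0$; since $r_{\le l}$ is nonincreasing in $l$ with $r=r_{\le L}$, the global versions are equivalent to $r=0$ and $r>0$, and the third assertion is just ``$r_{\le l}=0\Rightarrow r_{\le l'}=0$, hence all four conclusions hold in every layer $l'\ge l$''. The only property outside this dichotomy is the operator regime. When $r>0$ the kets $\ket{\xx_t^l},\ket{d\hh_t^l}$ equal their $t{=}0$ values — they are the NT-limit kets of \cref{sec:neuraltangent} — and nontriviality (\cref{thm:nontrivial}) forces $a_{L+1}+c_{L+1}=1$ or $a_{L+1}+b_{L+1}+r=1$, so reading off the program, $\Delta\ff_t$ has a nonvanishing $\Theta(1)$ part equal to $-\eta\calK_{\QQ_t}(\eps_t(\ff_t))$ (memoryful case: $-\eta\calK_{\QQ_t}(\eps_0(\ff_0),\dots,\eps_t(\ff_t))$), where $\calK_{\QQ_t}$ is the appropriate subsum of \cref{eqn:KQ} as in \cref{defn:TangentOperator,defn:TangentOperatorMemoryful,rem:maximality}; this is \cref{eqn:kernelupdate,eqn:memoryfulkernelupdate}. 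Conversely, being in the operator regime makes $\ff_t$ a function of $\ff_0,\dots,\ff_t$ alone, so features cannot move and $r>0$ by Part 1.

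\noindent\textbf{Part 4 and the main obstacle.} If $r=0$, then \cref{thm:stablefaithful} gives $a_{L+1}+b_{L+1}\ge1$, so since $\xx_0^L=\Theta(1)$ and the independent $w_0^{L+1}$ has entries $\Theta(n^{-b_{L+1}})$, we get $\ff_0=n^{-a_{L+1}}w_0^{L+1\trsp}\xx_0^L=\Theta(n^{1/2-a_{L+1}-b_{L+1}})=o(1)$, hence $\ff_0\asto0$; since $\mathring\ff_t$ is deterministic given $\mathring\ff_0$ (as in the NT and $\mu$ limits), $r=0$ forces $\mathring\ff_0=0$ and thus $\ff_t\asto\mathring\ff_t$ deterministic. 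The converse fails: take any operator-regime parametrization with $a_{L+1}+b_{L+1}>\nicefrac{1}{2}$ — e.g.\ NTP with $b_{L+1}$ (or $a_{L+1}$) enlarged and the $d_l$'s adjusted via \cref{eqn:faithful_init} to preserve faithfulness, nontriviality retained through $a_{L+1}+c_{L+1}=1$ — which still has $r=\nicefrac{1}{2}>0$ yet satisfies $\ff_0\asto0$ and (being in the operator regime) has $\ff_t$ deterministic. The main obstacle throughout is the $\Omega(\cdot)$ direction of the central step, i.e.\ \cref{thm:r=0impliesFL}: showing that when $r=0$ the first-step feature update does not accidentally shrink below $\Theta(1)$. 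This cannot be obtained from the soft ``no-obstruction'' bookkeeping that suffices everywhere else; it genuinely requires \cref{assm:relulike_phi_Q_preserves_sign} together with a quantitative lower bound on the relevant ket inner products.
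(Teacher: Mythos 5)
Your overall architecture is the paper's: encode training in a \nexort{} program, apply the Master Theorem to get the limit kets, do the $\theta_{\bullet}=n^{-r_{\le l}}$ bookkeeping for the upper bounds, derive the NT-type operator when $r>0$, and isolate the lower-bound step at $t=1$ as the crux. Parts 1--3 and the Part 4 forward direction and counterexample are fine. However, there are two concrete gaps at exactly the two non-trivial points.

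First, the $\Omega(\cdot)$ direction. Your stated mechanism --- positivity of $\phi,\phi'$ and sign preservation of $Q^l_0$ prevent cancellation among approximately-iid summands, hence $\Delta\xx^l_1=\Theta(n^{-r_{\le l}})$ --- at best shows the feature \emph{vector} moves; it does not show the feature \emph{kernel} moves (one needs $\braket{\xx^l_1}{\xx^l_1}\ne\braket{\xx^l_0}{\xx^l_0}$, which a $\Theta(1)$ displacement of the ket does not imply), and it says nothing about why the function-space update fails the operator equation. The paper's actual engine is different: send $\eta\to\infty$ and use condition (3) of \cref{assm:relulike_phi_Q_preserves_sign} (the uniform convergence $t^{\delta}\phi(x/t)\to\relu(x)^{\delta}$), which you never invoke, to show $\ket{h^l_1}$ grows like $\eta^{e_l}$ and $\ket{x^l_1}$ like $\eta^{e_l\delta}$ with the limiting rescaled kets nonnegative and nonzero (this is where the positivity/sign lemmas enter, to keep the relevant inner products from vanishing). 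Then $\braket{x^L_1}{x^L_1}\to\infty$, so the kernel must evolve for large $\eta$.

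Second, the direction ``operator regime $\Rightarrow r>0$.'' Your one-line argument --- that the function-space picture forces features not to move --- is a non sequitur: the operator regime is defined purely on $\ff_t$, and nothing in its definition a priori constrains internal activations. The paper instead proves the contrapositive via the same $\eta$-asymptotics: when $r=0$, $\mathring\ff_1$ scales like $\eta^{e_L\delta+1}$ or $\eta^{e_L\delta}$, and either this exponent is not $1$, or (when it is) $\lim_{\eta\to\infty}\eta^{-1}\mathring f_1\ne\partial_\eta\mathring f_1|_{\eta=0}$; either way $\ff_1-\ff_0$ is not linear in $\eta$, contradicting \cref{eqn:kernelupdate}. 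Without some argument of this kind (or the two-routine counterexample construction of \cite[Remark 3.11]{yang4}, which you do not spell out), the dichotomy in Part 2(a) is not established.
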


Consequently, we can generalize Dynamical Dichotomy to (nonlinear) entrywise updates.
\begin{cor}[Dynamical Dichotomy]\label{cor:dichotomyMain}
  A nontrivial, stable, and faithful abcd-parametrization either admits feature learning or is in the operator regime, but not both.
\end{cor}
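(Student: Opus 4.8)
The plan is to derive \cref{cor:dichotomyMain} as an essentially immediate consequence of \cref{thm:abcdclassification}, whose equivalences already contain all the substance (and whose hard direction is in turn \cref{thm:r=0impliesFL}). The only extra ingredient needed is the observation that, for a stable and faithful parametrization, the number $r$ of \cref{def:r} is nonnegative, so the a priori ``trichotomy'' $r<0$, $r=0$, $r>0$ collapses to the dichotomy $r=0$ versus $r>0$.

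First I would record that $r\ge 0$. By hypothesis the parametrization is stable and faithful for all $t$ (not merely at initialization), so \cref{thm:stablefaithful} applies and \cref{eqn:stable_faithful} holds; in particular $r_l\ge 0$ for every $l\in[L+1]$. Since $r = r_{\le L} = \min_{l=1}^{L} r_l$ by \cref{def:r}, this gives $r\ge 0$, and hence exactly one of $r=0$ and $r>0$ holds.

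Next I would invoke \cref{thm:abcdclassification}: its item~\ref{enum:r=0} states that $r=0$ is equivalent to admitting feature learning, and item~\ref{enum:r>0} states that $r>0$ is equivalent to being in the operator regime. Combining with the previous paragraph, a nontrivial, stable, and faithful abcd-parametrization admits feature learning when $r=0$ and is in the operator regime when $r>0$, so at least one of the two holds; and it cannot do both, since a real number cannot be simultaneously zero and positive. This gives the exclusive ``either/or/but not both'' and proves the corollary. Per the remark preceding \cref{thm:abcdclassification}, the argument is valid for memoryful nonstationary updates as well, not just the memoryless stationary case.

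The ``main obstacle'' is simply that all the real work lives upstream: the substance is the chain of equivalences packaged into \cref{thm:abcdclassification} (needing \cref{assm:relulike_phi_Q_preserves_sign}, in particular the relu-like nonlinearity and the sign-preserving first-step update, to get $r=0 \Rightarrow$ feature learning in \cref{thm:r=0impliesFL}). Given that theorem, the corollary itself is pure bookkeeping about the sign of $r$.
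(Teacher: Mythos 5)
Your proposal is correct and matches the paper's intended derivation exactly: the paper states \cref{cor:dichotomyMain} as an immediate consequence of \cref{thm:abcdclassification}, with the nonnegativity of $r$ for stable and faithful parametrizations (from \cref{thm:stablefaithful}, \cref{eqn:stable_faithful}) collapsing the trichotomy to the dichotomy $r=0$ versus $r>0$. Your observation that all the substance lives upstream in \cref{thm:abcdclassification} (and ultimately \cref{thm:r=0impliesFL}) is precisely how the paper organizes the argument.
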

Of course, the canonical examples here are $\mu$P in the feature learning regime and NTP in the operator regime.
In the SGD case, \cref{{thm:abcdclassification},cor:dichotomyMain} are equivalent to their counterparts \cite[Thm H.13, Cor H.14]{yang4} other than a minor technical difference as discussed in \cref{rem:minormismatch}.

\begin{rem}[Maximality]\label{rem:maximality}
  The $\mu$ and NT limits are resp.\ the ``maximal'' limits in the feature learning and operator regimes, in the sense that all parameter tensors contribute to the function update, and that any other limits in those regimes are just ``downgrades'' of $\mu$ and NT limits by zeroing out the initialization or learning rate of some parameters.
  See also \cref{rem:mupmaximal}.
\end{rem}

\begin{rem}[Pretraining is still futile in the operator regime even with adaptive optimizers]\label{rem:pretraining_operator_regime}
  \cite[Thm H.17]{yang4} holds almost verbatim in our case as well, after replacing ``stable'' with ``stable and faithful'' and ``kernel regime'' with ``operator regime'':
  Finetuning any pretrained network in the operator regime would be equivalent to finetuning a randomly initialized network.
  Thus, pretraining in the operator regime is useless.
\end{rem}

\begin{rem}[Function Space Picture]\label{rem:functionspacepicture}
  In the memoryless stationary case, an operator regime limit resides solely in the \emph{function space picture}, i.e. $\ff_{t+1}$ being solely determined by the function values $\ff_t$ themselves (as opposed to the internal activations of $f$ as well) along with learning rate $\eta$ and error signals $\eps_t$.
  However, as in \cite[Remark 3.11]{yang4}, this is not true of any feature learning limit because one can construct counterexamples where $\ff_t$ are close for two infinite-width limits but $\ff_{t+1}$ are far.
\end{rem}

\begin{rem}[Not All Dynamics are Infinite-Width Limits]
  \label{rem:invalidlimits}
  Compared to the kernel regime in the SGD case, the operator regime now allows nonlinear evolution in the function space picture.
  Nevertheless, in such dynamics, $\ff_{t+1} - \ff_t$ \emph{must be linear in $\eta$} for every $t$.
  Thus, any function space evolution nonlinear in $\eta$ cannot be the infinite-width limit of any entrywise optimizer.%
  \footnote{The same holds for any adaptive optimizer with ingredients discussed under \emph{Optimizer Coverage} of \cref{sec:optimizer}.}
  For example, $\ff_{t+1} - \ff_t = -\eta \ff_t  - \eta^2 \ff_t^2$ is not a valid limit.
\end{rem}

\begin{rem}[Uniform Parametrization]
  \cite[Sec G]{yang4} identified a subclass of abc-parametrizations, called \emph{uniform parametrizations}, where all layers ``learn the same amount of features'' and the output layer is initialized and updated maximally.
  This is used in \cite{yaida_meta-principled_2022} to give an alternative presentation of $\mu$P as well as discussion of joint width-depth limit.
  This notion also makes sense for abcd-parametrizations:
  For every $s \in [0, \nicefrac 1 2]$, there is a unique stable and faithful abcd-parametrization, called \emph{UP$_s$} such that $r_l = s$ for all $l =1,\ldots,L$ and $r_{L+1}=1$ and $a_{L+1} + b_{L+1} = 1 - s$.
  For example, UP$_0$ is $\mu$P and UP$_{\nicefrac 1 2}$ is NTP.
\end{rem}

\section{Infinite-Width Limits for Any Architecture}
\label{sec:anyarch}

Having written down the infinite-width limits of adaptive optimizers for MLPs, we now turn to general architectures.
An astute reader may have already absorbed the key insights from the previous sections and can use them to derive the NT or $\mu$ limits for each new architecture in an \emph{ad hoc} fashion.
In contrast, here we describe the algorithm to do this once and for all, uniformly for all ``reasonable'' architectures.
This uniformity of course requires abstraction, which is not conducive to quick comprehension; on the flip side, this algorithm will always be here for someone to fall back to if the \emph{ad hoc} approach does not work out.

The main work here is not the proofs (which can be easily adapted from the MLP case and so are omitted), but the \emph{definitions}: What is an architecture? What architecture counts as ``reasonable''? How to define abcd-parametrization for any such architecture? What are their infinite-width limits?
Answering these questions in the most generality requires careful thought.

\subsection{Representating Architectures via Tensor Programs}

\begin{defn}[Architecture and Representability]\label{defn:representable_architecture}
  Let $\calT_n = (\R)^j \oplus (\R^n)^k \oplus (\R^{n\times n})^l$.
  We shall call this the \emph{parameter space with $l$ matrix, $k$ vector, and $j$ scalar parameters.}

  A function family $f = \{f_n(-;-): \R^d \times \calT_n \to \R^{\dout}\}_{n=1}^\infty$ (with fixed input and output dimensions $d$ and $\dout$ independent of $n$) is called an \emph{architecture}.
  It has $l$ matrix, $k$ vector, and $j$ scalar parameters.

  We say an architecture $f$ is \emph{representable} if there is a \nexort{} program $\pi$ and vectors $x^1, \ldots, x^{\dout}$ in $\pi$ such that
  1) $\pi$ has $j+d$ initial scalars, $k$ initial vectors, and $l$ initial matrices; and
  2) for any $n$, when they are instantiated with $\xi \in (\R)^d$ and $\Theta \in \calT_n$,%
  \footnote{In particular, the first $j$ initial scalars are set to the $(\R)^j$ part of $\Theta$ and the last $d$ scalars are set to $\xi$.}
  the sum of $x^i$'s entries yields $f_n(\xi; \Theta)_i$ for each $i \in [{\dout}]$.
  In this case, $(\pi, x^1, \ldots, x^{\dout})$ is called a \emph{representation} of $f$, and $\pi$ is called a \emph{representation program} of $f$.
\end{defn}

As demonstrated in \citep{yang,yang2}, \cref{defn:representable_architecture} covers essentially every architecture in practice: RNNs, residual network, transformers, etc.

\begin{rem}
In this definition, only the symbolic structure of the program matters; the random sampling of \cref{setup:nexort} and \cref{setup:nexort_nongaussian} plays no role.
\end{rem}

\begin{rem}
For simplicity, we only considered the case when the notion of ``width'' is the same throughout the network.
Nevertheless, this definition can be easily modified to cover the nonuniform case, but stating it would be much more complex.
\end{rem}

\begin{rem}
In truth, we could have phrased \cref{defn:representable_architecture} using \netsort{} instead of \nexort{}, since we do not know of any neural network in the wild that is not \netsort{}-representable (\nexort{} is only required for expressing adaptive optimization like Adam).
But there is little cost in stating the more general version, which can potentially matter in the future.
\end{rem}

\begin{rem}
  \cite{tp2b} also defined a notion of \emph{representable functions} using \netsort{}.
  In comparison, our definition is much more general: Beyond the superficial difference of \nexort{} here vs \netsort{} there, \cite{tp2b} dealt with input and output layer weights in special ways, whereas here we do not, instead opting to uniformly deal with scalar, vector, and matrix parameters.
  The input and output weights of \cite{tp2b} are vector parameters in this view.
\end{rem}

\begin{exmp}
The $L$-hidden-layer MLP in \cref{eqn:MLP} has $d+1$ vector parameters (where $d$ corresponds to input dimension and $1$ corresponds to output dimension) and $L-1$ matrix parameters.
It is represented by the program that generates 1) $h^1$ using \refOuterNonlin{} and $h^l, l\ge 2,$ using \refMatMul{}; 2) $x^l$ using \refOuterNonlin{}; and 3) generate function output by summing $W^{L+1} \odot x^L$ (so we can take $x^1$ in \cref{defn:representable_architecture} to be $W^{L+1} \odot x^L$).
\end{exmp}

\subsection{abcd-Parametrization for Any Architecture}

\begin{defn}[abcd-Parametrization for Representable Architectures]\label{defn:generalabcd}
Consider a representable architecture with representation program $\pi$.
Fix a set of update functions $\QQ = \{Q^W_t:\R^{t+1}\to\R\}_{t\ge0, W}$ where $W$ ranges over every matrix, vector, and scalar parameters.
An \emph{abcd-parametrization} of this architecture is specified by $a_W, b_W, c_W, d_W$ for each such $W$, along with an additional number $\aout$, such that
\begin{enumerate}[\hspace{30pt} (a)]
  \item We parametrize $W$ as $W=n^{-a_W}w$ for actual
  trainable parameter $w$;
  \item We initialize each entry of $w$ iid from $\Gaus(0,n^{-2b_W})$;
  \item The learning rate is $\eta n^{-c_W}$ for some width-independent
  $\eta$;
  \item The gradients of $w$ are multiplied by $n^{d_W}$ before being
  processed by $Q^W_t$: i.e., the update at time $t$ is
  \[
  w\gets w-\eta n^{-c_W}Q^W_t(n^{d_W}g_{0},\ldots,n^{d_W}g_{t})
  \]
  where $g_{s},s=0,\ldots,t$, are the gradients of $w$ at time
  $s$ and $Q^W_t$ is applied entrywise;
\end{enumerate}
and the function output is multiplied by $n^{-\aout}$.%
\footnote{i.e., $f_n(\xi;\Theta)_i = n^{-\aout}\sum_{\alpha=1}^n x^i_\alpha$ instead of just a plain sum.}

\end{defn}

\begin{rem}
  In the MLP case, the $\aout$ was absorbed into $a_{L+1}$.
  But in the generality of \cref{defn:representable_architecture}, output weights are not singled out,%
  \footnote{In fact, output weights may be ill defined in some architectures, such as if $f(\xi) = \sum_\alpha x^L(\xi)_\alpha$.}
  so we need to separately specify $\aout$.
\end{rem}

\begin{rem}
As always, we are only concerned with scaling with $n$ here, but there can be a tunable constant hyperparameter in front of every power of $n$ in \cref{defn:abcd}.
\end{rem}
\begin{rem}
The random initialization in \cref{defn:generalabcd} is always mean-zero.
For some applications, such as layernorm/batchnorm weights $W$ (that is initialized as all 1s), this may seem insufficient.
However, one can just refactor the parameter:
For this particular example, we can refactor $W = 1 + W'$ where $W'$ is the initial vector of the program $\pi$.
$W'$ can then be initialized as $\Gaus(0, \sigma n^{-2b_{W'}})$ for some tunable constant hyperparameter $\sigma$ (which is set to 0 by practitioners typically).
\end{rem}

NTP and $\mu$P naturally generalize to general representable architectures.

\begin{defn}[NTP for general architecture]\label{defn:NTPgeneral}
  For any representable architecture, its \emph{Neural Tangent Parametrization (NTP)} is defined by the following setting of $a,b,c,d$ for matrix, vector, and scalar parameters as well as the output multiplier $\aout$.
  \begin{center}
    \begin{tabular}{ccccc}
      \toprule
      & matrix & vector & scalar & out\tabularnewline
      \midrule
      $a$ & $\nicefrac{1}{2}$ & $0$ & $0$ & $\nicefrac{1}{2}$\tabularnewline
      $b$ & $0$ & $0$ & $0$ & -\tabularnewline
      $c$ & $1$ & $\nicefrac{1}{2}$ & $0$ & -\tabularnewline
      $d$ & $1$ & $\nicefrac{1}{2}$ & $0$ & -\tabularnewline
      \bottomrule
    \end{tabular}
  \end{center}
\end{defn}

\begin{defn}[$\mu$P for general architecture]\label{defn:muP_general}
  For any representable architecture, its \emph{Maximal Update Parametrization ($\mu$P)} is defined by the following setting of $a,b,c,d$ for matrix, vector, and scalar parameters as well as the output multiplier $\aout$.
  \begin{center}
    \begin{tabular}{ccccc}
      \toprule
      & matrix & vector & scalar & out\tabularnewline
      \midrule
      $a$ & $0$ & $0$ & $0$ & $1$\tabularnewline
      $b$ & $\nicefrac{1}{2}$ & $0$ & $0$ & -\tabularnewline
      $c$ & $1$ & $0$ & $0$ & -\tabularnewline
      $d$ & $1$ & $1$ & $0$ & -\tabularnewline
      \bottomrule
    \end{tabular}
  \end{center}
\end{defn}

\begin{rem}
  In comparison to their counterparts for MLP, the NTP and $\mu$P above consider scalar parameters (which are not present in the MLP in \cref{eqn:MLP}).
  Otherwise, \cref{defn:NTP} can be recovered from \cref{defn:NTPgeneral} by mapping the columns $[2, L]$ to ``matrix,'' $1$ to ``vector,'' and $L+1$ to ``vector'' but with the value of $a$ taken from the ``out'' column.
  \cref{defn:mup} can be recovered likewise from \cref{defn:muP_general}.
\end{rem}

Before we formulate their limits, we need to discuss how to construct the ``backpropagation'' of an arbtrary of \nexort{} program.%
\footnote{This has been constructed previously for \netsort{} programs in \citep{tp3b}.}

\subsection{Interlude: Backpropagation and Total Programs}

Recall that 
when $x \in \R^n$, the notation $\la x_\alpha \ra_\alpha$ denotes its average entry; this applies more generally to tensors, such as $\la x_{\alpha_1\ldots \alpha_r}\ra_{\alpha_1\cdots\alpha_r}$ when $x \in \R^{n\times \cdots \times n} = (\R^n)^{\otimes r}$.

\begin{defn}[Backpropagation Program]\label{defn:backpropagation}
  Consider any \nexort{} program $\pi$ and a vector $x$ in $\pi$.
  Then \emph{$\pi$'s backpropagation program wrt $x$} is an extension of $\pi$ defined by constructing the following objects on top of $\pi$:
  (Intuitively, one should interpret $d^x y = n \pdf{\la x_\alpha \ra_\alpha}{y}$ if $y$ is a vector and $d^x c = \pdf{\la x_\alpha \ra_\alpha}{c}$ if $c$ is a scalar.)
  \begin{itemize}
    \item $d^{x} x := 1_n \in \R^n$
    \item For any \refMatMul{} instruction $z:=Wy$ in $\pi$, we construct $d^{x|z} y := W^\trsp d^x z$ (via another \refMatMul{})
    \item For any \refAvg{} instruction $c := \la z_\alpha \ra_\alpha$ in $\pi$, we construct $d^{x|c} z := (d^x c) {1}_n \in \R^n$ (via \refOuterNonlin{})
    \item 

    Suppose $y = \la \psi(\xx; \xx_{\beta_1}; \cdots; \xx_{\beta_r}; \cc)\ra_{\beta_1\cdots \beta_r}$.
    For each $i = 0, \ldots,r$,
    let 
    \[\gg^i_{\beta_0 \cdots\beta_r} = d^x y_\alpha \ppsi_i(\xx_{\beta_0}; \cdots; \xx_{\beta_r}; \cc) \in \R^{|\xx|},\]
     where $\ppsi_i: \R^{|\xx|(r+1)+l} \to \R^{|\xx|}$ yields the derivative of $\psi$ against $\xx$ in the $i$th slot.
     When $i=r+1$, we make the analogous definition for $\gg^{r+1}_{\beta_0\cdots \beta_r} \in \R^{|\cc|}$.
    We write $\bbeta = (\beta_0, \ldots, \beta_r)$, $\bbeta[i\mapsto \alpha] = (\beta_0, \ldots, \beta_{i-1}, \alpha, \beta_{i+1}, \ldots, \beta_{r})$, and $\bbeta_{-i} = (\beta_0, \ldots, \beta_{i-1}, \beta_{i+1}, \ldots, \beta_r)$.
    Then we construct $d^{x|y} \cc = (d^{x|y} c^1, \ldots, d^{x|y}c^{|\cc|})$ and $d^{x|y} \xx = (d^{x|y} x^1, \ldots, d^{x|y} x^{|\xx|})$ by
    \begin{align*}
      d^{x|y} \cc &:= \la \gg^{r+1}_\bbeta \ra_\bbeta \in \R^{|\cc|} & \text{\refOuterNonlin{} + \refAvg}\\
      d^{x|y} \xx_\alpha 
        &:= \sum_{i=0}^r \la \gg^i_{\bbeta[i \mapsto \alpha]}\ra_{\bbeta_{-i}} \in \R^{|\xx|} & \text{\refOuterNonlin{}}
    \end{align*}
    Explicitly,
    \begin{align*}
      d^{x|y} \xx_\alpha  = \la \gg^0_{\alpha \beta_1 \cdots\beta_r} \ra_{\beta_1 \cdots \beta_r} + \la \gg^1_{\beta_0 \alpha \beta_2\cdots \beta_r} \ra_{\beta_0 \beta_2\cdots\beta_r} + \cdots +
      \la \gg^r_{\beta_0 \cdots \beta_{r-1} \alpha} \ra_{\beta_0 \cdots \beta_{r-1}}
    \end{align*}
    \item Finally, for every vector or scalar $y$ in $\pi$ other than $x$,
    \begin{align*}
      d^x y := \sum_u d^{x|u} y
    \end{align*}
    where $u$ ranges over all vector or scalar in $\pi$ whose construction used $y$.%
    \footnote{If this sum is empty, then the RHS is set to 0.}
  \end{itemize}
  The subprogram%
  \footnote{The notion of \emph{subprogram} is formally defined in \cite[Defn I.1]{yang3}. Roughly it means a contiguous subset of instructions in the program.}
  constructing all of these new objects is denoted $d^x \pi$ (so that the backpropagation program is $\pi | d^x \pi$).
\end{defn}
Recall (\cite[Defn I.1]{yang3}) that ``$|$'' (as in ``$\pi | \pi'$'') signifies the concatenation of programs.
\begin{rem}
  If $r = 0$ in \refOuterNonlin{} (i.e., we just have a \texttt{Nonlin+} instruction), then the formulas simplify to 
  \begin{align*}
    d^{x|y} \cc &:= \la d^x y_\alpha \ppsi_1(\xx_\alpha; \cc) \ra_\alpha\\
    d^{x|y} \xx_\alpha &:= y_\alpha \ppsi_0(\xx_\alpha; \cc).
  \end{align*}
\end{rem}
\begin{defn}[Total Program $\db \ppi$]\label{defn:totalprogram}
  Consider a representable architecture $f$ with representation $(\pi, x^1, \ldots, x^{e})$.
  Gather all of $\pi$'s backpropagation programs wrt $x^i$ into a single (large) program:
  \[\db \pi \defeq \pi | d^{x^1} \pi | \cdots | d^{x^{e}} \pi.\]
  Let $\xi \in \R^d$ be an input to $f$.
  Then $\db \pi(\xi)$ will denote the program $\db \pi$ with $\xi$ inserted as the appropriate initial scalars (c.f.\ \cref{defn:representable_architecture}).
  As in \cref{sec:setup}, we consider a set of $\NN$ inputs $\xxi = \{\xi^1,\ldots, \xi^\NN\} \sbe \R^d$.
  Then we set
  \[\db \ppi \defeq \db \pi(\xi^1) | \cdots | \db \pi(\xi^\NN),\]
  whose initial data are the $j$ scalars, $k$ vectors, and $l$ matrices corresponding to $\calT_n$ in \cref{defn:representable_architecture}; they are shared among all subprograms $\db \pi(\xi^i)$.
  We call $\db \ppi$ the \emph{total program} of $f$.

  For any vector $y$ in $\pi$, we write $\yy = (y(\xi^1), \ldots, y(\xi^\NN)) \in \R^{n \times \NN}$ to denote the multi-vector of its counterparts in $\db \ppi$.
  We also write $\db \yy = (d^{x^j} y(\xi^a))_{j \in [e], a \in [\NN]} \in \R^{n \times e \times \NN}$ for the multi-vector in $\db \ppi$ of its error signals.
\end{defn}

\subsection{Training Setup}

The setup in \cref{sec:setup} is defined with the MLP in mind, but can be generalized easily to deal with general architectures, as we do here for clariy:
1) Because we consider multi-dimensional outputs here, the error signal functions $\eps_t$ obviously need to take the more general signature $\eps_t: \R^{\dout \times \NN} \to \R^{\dout \times \NN}$.
2) The update functions $\QQ$ now contain a $Q^\Theta_t: \R^{t+1} \to \R$ for every matrix, vector, and scalar parameter $\Theta$.
\emph{Memoryless} still means $Q^{\Theta_t}$ only depends on its last argument, and \emph{stationary} still means $Q^\Theta_t$ is the same regardless of $\Theta$ and $t$.
3) The smoothness assumption we require is the natural adaptation of \cref{assm:MLPsmooth}:
\begin{assm}[Smoothness]\label{assm:MLPsmooth_anyarch}
  Assume $\eps_t$ and $Q^\Theta_t$ for all $\Theta,t$ are pseudo-Lipschitz and all nonlinearities used in the representing program has pseudo-Lipschitz derivatives.
\end{assm}

\subsection{Neural Tangent Limit}

\subsubsection{Memoryless Stationary Case}

\begin{defn}[Neural Tangent Operator for General Architecture]\label{defn:GenArchTangentOperator}
  Fix a representable architecture $f$ with representation $(\pi, x^1, \ldots, x^{\dout})$.
  For any function $Q: \R \to \R$, 
  we define
  the \emph{neural tangent $Q$-operator} $\calK_{Q}:\R^{\dout \times \NN}\to\R^{\dout \times \NN}$
  by the following: for any $\cchi\in\R^{\dout \times \NN}$,
  \begin{align}
    \calK_{Q}^W(\cchi) 
        & \defeq \Diag_\NN \sum_{\substack{h=Wz\\
                  g=Wy
                  }
          }
          \la{\db\gb}\Qketdbra{\db\hh}{\cchi}{\zz}{\yy}\ra
          \label{eqn:KQW}
          \\
    \calK_{Q}^v(\cchi)
        & \defeq 
        \la{\db\vv}\overline{\ket{\db\vv}\cdot \cchi}
        \label{eqn:KQv}
        \\
    \calK_{Q}(\cchi) &\defeq \sum_W \calK_{Q}^W(\cchi) + \sum_v \calK_{Q}^v(\cchi)
        \label{eqn:KQgeneral}
  \end{align}
  where the ``bar'' notation abbreviates application of $Q$ as in \cref{eqn:barnotation} and all kets and bras are evaluated in $\db \ppi$ (\cref{defn:totalprogram}) via \cref{defn:ket}.
  To interpret these formulas, we need to tell you two things:
  
  \emph{1) Ranges of arguments.}
  Here, $W$ ranges over all matrix parameters and $v$ over all vector parameters of $f$, and all kets are calculated from $\db \ppi$ by sampling matrix parameters from $\Gaus(0, 1/n)$ and vector parameters from $\Gaus(0, 1)$.%
  \footnote{Again, we can insert hyperparameters like $\sigma_W$ and $\sigma_v$, but for simplicity we omit them here.}
  The sum in \cref{eqn:KQW} sums over all vectors $h,z,g,y$ in $\pi$ satisfying $h=Wz$ and $g=Wy$ (potentially $h=g$ and $z=y$).

  \emph{2) Tensor operations.}
  Here $\ket{\db \gb}, \ket{\db \hh}, \cchi \in \R^{\dout \times \NN}$ while $\ket{\zz}, \ket{\yy} \in \R^{\NN}$.
  First, we take the convention that each bra has the same shape as the corresponding ket:%
  \footnote{Intuitively, $\ket{\db \gb}$ corresponds to $\db \gb \in \R^{n \times \dout \times \NN}$ while $\bra{\db \gb}$ corresponds to $\db \gb$'s ``transpose'' with shape $\R^{\dout \times \NN \times n}$.}
  $\bra{\db \gb}, \bra{\db \hh} \in \R^{\dout \times \NN}$.
  Second, 
  the meaning of $\ketdbra{\db\hh}{\cchi}{\zz}$ is
    \begin{equation}
    \ketdbra{\db\hh}{\cchi}{\zz}=\sum_{a \in [\NN], i \in [\dout]}\chi^{ai}\ket{d^{x^i}h(\xi^a)}\bra{z(\xi^a)}
    \label{eqn:ket_outer_product_general}
    \end{equation}
  (i.e., contraction of all matching indices), which generalizes the case of $\dout = 1$ in \cref{thm:MemorylessGeneralNTKLimit}.
  Likewise, in \cref{eqn:KQv}, $ \ket{\db \vv} \cdot \cchi\in \R$ contracts all indices.
  Finally, $\Diag_\NN$ in \cref{eqn:KQW} takes its argument tensor of shape $\dout \times \NN \times \NN$ and returns a tensor of shape $\dout \times \NN$ by ``taking the diagonal over the $\NN$ dimension.''
\end{defn}

\begin{thm}[Neural Tangent Limit for General Architecture]\label{thm:MemorylessGeneralNTKLimit}
  Consider a representable architecture $f$ with representation $(\pi, x^1, \ldots, x^\dout)$ and any training routine in NTP (\cref{defn:NTPgeneral}) with memoryless stationary update function $Q$.
  Adopt \cref{assm:MLPsmooth_anyarch}.
  Further assume
  \begin{enumerate}
    \item $f$ has no scalar parameters, \label{_assm:noscalar}
          and
    \item $\braket{1}{x^i(\xi)} = 0$ for every input $\xi$ and output index $i \in [\dout]$ at initialization.%
    \footnote{More precisely, the ket $\ket{x^i(\xi)}$ is evaluated in $\db \ppi$ (\cref{defn:totalprogram}) via \cref{defn:ket}.}
    \label{_assm:mean0}
  \end{enumerate}
  Recall $\ff_{t}$ denotes the function after $t$ steps of updates from random initialization.
  Then 
  \[\ff_0 \distto \Gaus(0,\braket{\xx}{\xx}) \] 
  where $\xx$ is the multi-vector consisting of $x^1, \ldots, x^{\dout}$ evaluated on all $\NN$ inputs.
  Additionally, 
  for every $t\ge 0$, $\ff_{t}$ converges
  almost surely to a random function $\mathring{\ff_{t}}\in\R^{\dout \times \NN}$ satisfying
  \begin{align}
    \mathring{\ff}_{s+1} - \mathring{\ff}_{s}&=-\eta\calK_{Q}(\eps_{s}(\mathring{\ff}_{s})),\quad \text{for all $s$}.
    \label{eqn:NTlimit_update_general}
  \end{align}
\end{thm}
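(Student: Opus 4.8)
The plan is to imitate the proof of \cref{thm:memorylessNTK} (carried out in \cref{sec:NTKProof}), the only genuinely new ingredient being the abstraction from the MLP to an arbitrary representation program. First I would encode the entire $t$-step training trajectory as a single \nexort{} program: starting from the total program $\db\ppi$ of \cref{defn:totalprogram} --- which already contains the forward pass and all $\dout$ output error-signal backpropagations on all $\NN$ inputs at initialization --- I append, step by step, the parameter updates of \cref{defn:generalabcd} specialized to NTP (\cref{defn:NTPgeneral}) together with a fresh copy of $\db\ppi$ recomputed with the updated parameters. Each such update is expressible in \nexort{}: a matrix update $\Delta W_t = -\eta n^{-c_W}Q^W(n^{d_W} g_t)$, where $g_t$ is the per-entry gradient of the objective $\sum_{a,i}\cchi_t^{ai}\ff_t^{ai}$ with $\cchi_t = \eps_t(\ff_t)$, is by construction a nonlinear outer product of the backpropagated vectors $\db\hh$ and the forward vectors $\zz$ (this is precisely what \nexort{} was built to express), while a vector update is an \refOuterNonlin{} instruction; the error-signal scalars $\cchi_t$ are introduced via \cref{lemma:transform_scalars} using \cref{assm:MLPsmooth_anyarch}. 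Once this is done, \cref{thm:MasterTheorem} computes the almost-sure limit of every scalar, hence of $\ff_t$, and the remaining work is to identify that limit with the right-hand side of \cref{eqn:NTlimit_update_general}.

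Second, I would establish the ``frozen features'' phenomenon that places NTP in the operator regime. In NTP the exponents are tuned (compare \cref{eqn:concrete_r}, where $r=\nicefrac12$ for the MLP) so that every weight increment $\Delta\Theta_t$ perturbs each forward and backward vector of $\db\pi$ by only $\Theta(n^{-1/2})$ entrywise; running the same argument as in the operator-regime half of \cref{thm:abcdclassification} in the present generality, an induction on $t$ shows that the ket $\ket y$ of every vector $y$ in $\db\pi$ (evaluated via \cref{defn:ket}) is independent of $t$. In particular the kets $\ket{\db\hh},\ket{\zz},\ket{\db\vv}$ appearing in \cref{eqn:KQW,eqn:KQv} may be taken at initialization. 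The base case $t=0$ also supplies the NNGP statement: since $f$ has no scalar parameters (\cref{_assm:noscalar}) and $\braket 1{x^i(\xi)}=0$ at initialization (\cref{_assm:mean0}), each coordinate $\ff_0^{ai} = n^{-\aout}\sum_\alpha x^i(\xi^a)_\alpha = n^{-1/2}\sum_\alpha x^i(\xi^a)_\alpha$ is a normalized sum of approximately-iid, mean-zero coordinates, so a central-limit argument as in \citep{yang,tp2b} yields $\ff_0 \distto \Gaus(0,\braket\xx\xx)$.

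Third, with the features frozen I would expand the one-step function change to first order in the parameters. Because each $\Delta\Theta_t$ is $o(1)$ entrywise, all second- and higher-order terms of the Taylor expansion of $f$ in $\Theta$ contribute $o(1)$ to $\ff_{t+1}-\ff_t$, so $\ff_{t+1}-\ff_t = \sum_\Theta \la \nabla_\Theta \ff_t, \Delta\Theta_t\ra + o(1)$. Substituting $\Delta\Theta_t = -\eta n^{-c_\Theta}Q^\Theta\bigl(n^{d_\Theta}\nabla_\Theta \textstyle\sum_{a,i}\cchi_t^{ai}\ff_t^{ai}\bigr)$, passing to the limit through the nonlinear-outer-product (``bar'') rules of \cref{defn:ket} via \cref{thm:MasterTheorem}, and collecting the matrix-parameter terms into $\calK_Q^W$ (\cref{eqn:KQW}) and the vector-parameter terms into $\calK_Q^v$ (\cref{eqn:KQv}) gives $\ff_{t+1}-\ff_t \asto -\eta\calK_Q(\eps_t(\mathring\ff_t))$, which is \cref{eqn:NTlimit_update_general}; the $\dout$-index contractions are exactly the conventions recorded after \cref{defn:GenArchTangentOperator}. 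An outer induction on $t$ then promotes this one-step statement to convergence of $\ff_t$ for all $t$, using the quantitative $\tilde O(n^{-1/2})$ part of \cref{thm:MasterTheorem} to propagate the error control across steps and the pseudo-Lipschitzness of $\eps_t$ to replace $\ff_t$ by $\mathring\ff_t$ inside $\eps_t$.

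The main obstacle is not a single hard estimate but the uniform bookkeeping behind the frozen-features claim for an \emph{arbitrary} representation program: one must verify that the NTP exponents force \emph{every} weight movement to be $o(1)$ entrywise even across \refOuterNonlin{} instructions of order $r\ge1$, where the correlations among the $r+1$ input slots and the averaging $n^{-r}\sum_{\beta_1\cdots\beta_r}$ have to be tracked, and hence that the first-order expansion of the function stays exact in the limit uniformly over the finitely many training steps. This is precisely the point at which the clean MLP argument in \cref{sec:NTKProof} costs the most notation when abstracted; everything else transcribes essentially verbatim, which is why the paper elects to omit the full write-up.
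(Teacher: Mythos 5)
Your proposal is correct and follows essentially the same route as the paper: encode the training trajectory in a \nexort{} program built on the total program, invoke the Master Theorem, use the NTP exponents to freeze all kets across time (with the CLT/conditioning argument at $t=0$ giving the Gaussian process claim), and identify the limiting one-step update with $\calK_Q$ by collecting the matrix- and vector-parameter contributions — exactly the adaptation of \cref{sec:NTKProof} that the paper itself invokes and omits. The only stylistic difference is that you phrase the linearization as a Taylor expansion with $o(1)$ remainder, whereas the paper's program construction realizes it exactly via difference-quotient nonlinearities whose limits linearize when the relevant $\mathring\theta$'s vanish; these are equivalent here, and you correctly identify the remaining work as the uniform bookkeeping across arbitrary \refOuterNonlin{} instructions.
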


\begin{rem}
  Note Simple GIA Check \citep{yang2} may not be satisfied in general architectures, so that we cannot necessarily calculate kets like $\ket{\db \hh}$ by assuming matrix parameters and their transposes are independent (even if no transposes occur in $\pi$), e.g., ignoring $\dotket{\db\hh}$ in our calculations.
  Nevertheless, \cref{thm:MemorylessGeneralNTKLimit} still holds if one calculates $\ket{\db \hh}$ correctly using the rules of \cref{defn:ket}.
\end{rem}

\begin{rem}\label{rem:ntk_anyarch_assumptions}
  Assumption \ref{_assm:mean0} ($\xx$ being mean zero) in \cref{thm:MemorylessGeneralNTKLimit} is obviously necessary for us to arrive at a Gaussian Process limit at initialization.
  Assumption \ref{_assm:noscalar} is also necessary for two reasons: 1) at initialization, random initialization of scalar parameters would make the initial process a mixture of Gaussian processes (or a GP conditioned on the values of scalar parameters).
  Even if the scalar parameters are deterministic, the gradient wrt scalar parameters will also be a random process (possibly correlated to the process of the function) at initialization. 
  2) Function space picture (c.f.\ \cref{rem:functionspacepicture} and \cite[Remark 3.11]{yang4}) would no longer hold: One cannot track the evolution of the $\mathring f_t$ solely by knowing what it is at time $t=0$. Instead, one would need to track the values of the scalar parameters and their gradients as well. 
  So in a sense we will have a ``function space + scalar parameters'' picture.
  The complete evolution of $\mathring f_t$ can then be described by a) the joint process of the function output, scalar parameters, and their gradients at initialization, together with b) an evolution equation involving how they evolve given their previous values in time.
  This is similar to \cite{littwin_bottleneck_2021}.
\end{rem}

\subsubsection{Memoryful Nonstationary Case}

As before, with some more notation we can write down the NT limit for any representable architecture.
\begin{defn}[Neural Tangent Operator, Memoryful Nonstationary Case]
  For memoryless but nonstationary update functions $\QQ_t = \{Q^\theta_t: \R \to \R: \text{parameter }\theta\}$, 
  we define $\calK_{\QQ_t}: \R^{\dout \times \NN} \to \R^{\dout \times \NN}$ with the same equation as  \cref{eqn:KQW,eqn:KQv,{eqn:KQgeneral}}, except the bar notation abbreviates $Q^W_t$ or $Q^v_t$.

  For general update functions $\QQ_t = \{Q^\theta_t: \R^{t+1} \to \R\}_\theta$, 
  we define $\calK_{\QQ_t}: \R^{(t+1) \times \dout \times \NN} \to \R^{\dout \times \NN}$,
  \begin{align}
    \calK_{\QQ_t}^W(\cchi_0, \ldots, \cchi_t) 
        & \defeq \Diag_\NN \sum_{\substack{h=Wz\\
                  g=Wy
                  }
          }
          \la{\db\gb}\Qketdbra{\db\hh}{\cchi_{\le t}}{\zz}{\yy}\ra
          \\
    \calK_{\QQ_t}^v({\cchi_0, \ldots, \cchi_t})
        & \defeq 
        \la{\db\vv}\overline{\ket{\db\vv}\cdot \cchi_{\le t}}
        \\
    \calK_{\QQ_t}(\cchi_0, \ldots, \cchi_t) &\defeq \sum_W \calK_{\QQ_t}^W(\cchi_0, \ldots, \cchi_t) + \sum_v \calK_{\QQ_t}^v(\cchi_0, \ldots, \cchi_t)
  \end{align}
  where $\Qketdbra{\db\hh}{\cchi_{\le t}}{\zz}$ is shorthand for $Q^W_t\left(\ketdbra{\db\hh}{\cchi_0}{\zz}, \ldots, \ketdbra{\db\hh}{\cchi_t}{\zz}\right)$
  and $\overline{\ket{\db\vv}\cdot \cchi_{\le t}}$ is shorthand for $Q^v_t\left(\ket{\db\vv}\cdot \cchi_{0}, \ldots, \ket{\db\vv}\cdot \cchi_{t}\right)$
\end{defn}

With this in mind, the following theorem yields the NT limit of Adam (\cref{eqn:Adam}) as a corollary.
\begin{thm}[Neural Tangent Limit, Memoryful Nonstationary Case]\label{thm:nt_memoryful_anyarch}
  If the update functions $\QQ$ are memoryless but not necessarily stationary, then \cref{thm:MemorylessGeneralNTKLimit} holds with \cref{eqn:NTlimit_update_general} replaced by
  \begin{equation}
    \mathring{\ff}_{t+1} - \mathring{\ff}_{t} =-\eta\calK_{\QQ_t}(\eps_{t}(\mathring{\ff}_{t})),\quad \text{for all $t$}.
  \end{equation}

  For general $\QQ$, not necessarily memoryless, \cref{thm:MemorylessGeneralNTKLimit} holds with \cref{eqn:NTlimit_update_general} replaced by
  \begin{equation}
    \mathring{\ff}_{t+1} - \mathring{\ff}_{t} =-\eta\calK_{\QQ_t}(\eps_0(\mathring \ff_0), \ldots, \eps_{t}(\mathring{\ff}_{t})),\quad \text{for all $t$}.
  \end{equation}
\end{thm}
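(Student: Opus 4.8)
The plan is to follow the proof of \cref{thm:MemorylessGeneralNTKLimit} (itself patterned on the MLP argument in \cref{sec:NTKProof}), adding only the bookkeeping required to feed past gradients through the update functions. First I would build the total program $\db\ppi$ of \cref{defn:totalprogram} — which simultaneously expresses the forward pass on all $\NN$ inputs and the $\dout\NN$ error-signal backpropagations — and then extend it, one training step at a time, by the \refMatMul{} and \refOuterNonlin{} instructions that realize one abcd-update in NTP (\cref{defn:NTPgeneral,defn:generalabcd}): each matrix parameter $W$ is replaced by $W_0$ plus the accumulated updates prescribed by \cref{defn:generalabcd}, where the step-$s$ increment acts on kets as an $\overline{\ketdbra{\db\hh}{\cchi}{\zz}}$-type operator assembled from the (scaled) forward and backward vectors feeding $W$ that are already in the program, and each vector parameter is replaced by its initial vector plus an \refOuterNonlin{}-produced perturbation. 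The result is a single \nexort{} program, so \cref{thm:MasterTheorem} applies to all of its scalars — in particular to $\ff_t$ and to the arguments of every $Q^\Theta_t$.

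Second I would prove, by induction on $t$, that NTP is stable and faithful with $r=\nicefrac12>0$: entrywise, the matrix increments and vector-parameter perturbations contribute $O(n^{-1/2})$ to every forward and backward vector, while the input to each $Q^\Theta_t$ stays $\Theta(1)$ (\cref{lemma:faithfulconditionsinit}). Consequently the kets $\ket{\xx_t},\ket{\db\xx_t},\ket{\db\hh_t}$ coincide with their $t=0$ values $\ket{\xx},\ket{\db\xx},\ket{\db\hh}$ for all $t$; the only $t$-dependence enters through the error signals, whose limits are $\eps_s(\mathring\ff_s)$ by continuity (\cref{assm:MLPsmooth_anyarch}) and the inductive hypothesis $\ff_s\asto\mathring\ff_s$. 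Expanding $\ff_{t+1}-\ff_t = \sum_\Theta \la \nabla_\Theta\ff_t,\Delta\Theta_t\ra + (\text{remainder})$, the remainder vanishes in the limit because all weight moves are $O(n^{-1/2})$, and the Master Theorem evaluates each surviving term: a matrix parameter $W$ contributes $-\eta$ times the $\calK^W_{\QQ_t}$ summand and a vector parameter $v$ contributes $-\eta$ times the $\calK^v_{\QQ_t}$ summand, each with arguments $\eps_0(\mathring\ff_0),\ldots,\eps_t(\mathring\ff_t)$ — because the step-$s$ gradient of $\Theta$ has, for every $s\le t$, the same limiting description except for its error-signal factor $\eps_s(\mathring\ff_s)$. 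Summing over all parameters gives $\ff_{t+1}-\ff_t\asto -\eta\,\calK_{\QQ_t}(\eps_0(\mathring\ff_0),\ldots,\eps_t(\mathring\ff_t))$, which closes the induction; in the memoryless case only the last argument $\eps_t(\mathring\ff_t)$ survives, recovering the first displayed equation. The $t=0$ Gaussian-process statement and its two hypotheses (no scalar parameters, mean-zero outputs) are untouched from \cref{thm:MemorylessGeneralNTKLimit}.

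I expect the main obstacle to be the quantitative control in the second step: showing both that the backward kets genuinely do not move and that the Taylor remainder is negligible, \emph{uniformly} over the constantly-many training steps. This requires the $\tilde O(n^{-1/2})$-fluctuation refinement of \cref{thm:MasterTheorem} (not merely almost-sure convergence), threaded through an induction that must carry the joint limits of all $\{\db\hh_s,\db\xx_s\}_{s\le t}$ at once — the memoryful feature being precisely that the program at step $t$ references every earlier gradient. A secondary, orthogonal subtlety inherited from working with arbitrary architectures is that Simple GIA Check may fail, so $\ket{\db\hh}$ must be computed with the full \texttt{Hat}/\texttt{Dot} decomposition of \cref{defn:ket} rather than by treating a matrix and its transpose as independent; this is a mechanical substitution that the bra-ket formalism absorbs and does not interact with the memoryful bookkeeping. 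Everything else — linearity of $\ff_{t+1}-\ff_t$ in $\eta$, and the explicit form of $\calK_{\QQ_t}$ — then follows exactly as in the memoryless stationary case.
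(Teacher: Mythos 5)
Your proposal is correct and follows essentially the same route the paper takes: the paper omits the proof of this general-architecture theorem, deferring to the MLP argument (\cref{sec:Program-Setup,sec:ProgramConstruction,sec:infwidthlimit,sec:NTKProof}), which is exactly what you reconstruct --- encode all $t$ training steps in one \nexort{} program on top of $\db\ppi$, apply \cref{thm:MasterTheorem}, use $r=\nicefrac12$ to freeze the forward/backward kets, and read off the per-parameter contributions of $\calK_{\QQ_t}$ with the past error signals as arguments. The one place you overcomplicate: the paper has no Taylor remainder to control and needs no $\tilde O(n^{-1/2})$ refinement, because the increments $\del\ff_t$ and $\del\xx_t$ are encoded \emph{exactly} as scalars and difference-quotient \texttt{Nonlin}s in the program (cf.\ \cref{eq:delx,eq:delflimit}), so plain almost-sure convergence of the program's scalars over the finitely many steps suffices.
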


\subsection{Maximal Update Limit}

Given a representable architecture, the representing program describes the symbolic procedure for computing the output of the network given an \emph{assignment of} the input and parameters to concrete values.
This procedure is well defined for any network width (by construction).
Naturally, it remains well defined as explicitly spelled out below even when we ``pass to the infinite-width limit.''
In short, \cref{defn:assignment} specifies what it means for a program to compute something given \emph{an assignment of} the input and parameters to concrete infinite-width limits (e.g., kets and operators).

\begin{defn} \label{defn:assignment}
  For any program $\pi$ and a vector $y$ in $\pi$, $\ket y$ can be thought of as a function, defined via \cref{defn:ket}, of $\{\oplim{W}\}_W, \{\ket v\}_v, \{\mathring c\}_c$ where $W, v, c$ range over the initial matrices, vectors, and scalars respectively.
  Consider an assignment $\Theta$ that assigns an operator ${\Theta\{W\}}$ to each initial matrix $W$,
  ket ${\Theta\{v\}}$ to each initial vector $v$, and a deterministic number $\Theta\{c\}$ to each initial scalar $c$.
  Then we write 
  \[\Theta\{\ketdbra x {\mathring \cchi} y\},\quad\Theta\{\ket y\},\quad \Theta\{\braket y z\},\quad \Theta\{\mathring \theta\}\]
   for $\ketdbra x {\mathring \cchi} y, \ket y$, $\braket y z$ and $\mathring \theta$ computed using this assignment (i.e., swapping out $\oplim{W}$ for $\Theta\{W\}$, $\ket v$ for $\Theta\{v\}$, and $\mathring c$ for $\Theta\{c\}$).
  
  More precisely, we recursively define
  \begin{align*}
    \ket{\Theta\{x\}} &\defeq \Theta\{\ket x\}\\
    \Theta\{\braket x y\} &\defeq \braket{\Theta\{x\}}{\Theta\{y\}}\\
    \Theta\{\ketdbra x {\mathring \cchi} y\} &\defeq \ketdbra{\Theta\{x\}}{\Theta\{\mathring \cchi\}}{\Theta\{y\}}\\
    \Theta\{\oplim{W}x\ra\}
      &\defeq
        \oplim{\Theta\{W\}} \Theta\{x\} \ra
        \\
    \Theta\left\{\EV_{\bx 1 \cdots \bx r}\psi(\ket{\xx}; \ket{\xx}^{\bx1}; \cdots; \ket{\xx}^{\bx r}; \mathring \cc)\right\}
      &\defeq \EV_{\bx 1 \cdots \bx r}\psi(\ket{\tilde\xx}; \ket{\tilde \xx}^{\bx1}; \cdots; \ket{\tilde \xx}^{\bx r}; \Theta\{\mathring \cc\})
      \quad\text{where $\tilde \xx = \Theta\{\xx\}$}
  \end{align*}
  where we write $\oplim{\Theta\{W\}} = \Theta\{W\}$ for each initial matrix $W$, $\ket{\Theta\{v\}} = \Theta\{v\}$ for each initial vector $v$, and $\Theta\{\mathring c\} = \Theta\{c\}$ for each initial scalar $c$.%
  \footnote{so that $\oplim{\Theta\{\bullet\}}$ and $\ket{\Theta\{\bullet\}}$ are just redundant affirmations of the ``shape'' of $\Theta\{\bullet\}$, rather than saying $\Theta\{\bullet\}$ is some object in some program and $\oplim{\Theta\{\bullet\}}$ or $\ket{\Theta\{\bullet\}}$ are their ``limits.''}

\end{defn}

\subsubsection{Memoryless Stationary Case}
\begin{thm}[$\mu$-Limit for General Architecture]\label{thm:mulimit_anyarch}
  Consider a representable architecture $f$ with representation $(\pi, x^1, \ldots, x^\dout)$ and any training routine in $\mu$P (\cref{defn:muP_general}) with memoryless stationary update function $Q$. 
  Adopt \cref{assm:MLPsmooth_anyarch}.

  Then for each $t\ge0$, $\ff_t$ converges almost surely to $\mathring \ff_t$ computed from the following.
  \begin{enumerate}
    \item (Forward and Backward Propagation) 
    Let $\Theta_t$ be the assignment that assigns $\oplim{W_t}$ to each matrix parameter $W$, $\ket v$ to each vector parameter $v$, and $\mathring c_t$ to each scalar parameter $c$.
      \begin{align}
        \mathring \ff_t &= \Theta_t\{\braket 1 \xx\}  \in \R^{\dout \times \NN}
          & \mathring \cchi_t &= \eps_t(\mathring \ff_t) \in \R^{\dout \times \NN}
          \label{eqn:out_fbprop_mulimit_anyarch}\\
        \ket{\yy_t} &= \Theta_t\{\ket{\yy}\} \in \R^{\NN}
        &\ket{\db \yy_t} &= \Theta_t\{\ket{\db \yy}\}  \in \R^{\dout \times \NN}
        &\text{for any vector $y$ in $\pi$}
        \label{eqn:vector_fbprop_mulimit_anyarch}\\
        \mathring \aa_t &= \Theta_t\{\mathring \aa\} \in \R^\NN
        &\db \mathring \aa_t &= \Theta_t\{\db \mathring \aa\} \in \R^{\dout \times \NN}
        &\text{for any scalar $a$ in $\pi$}
        \label{eqn:scalar_fbprop_mulimit_anyarch}
      \end{align}
      Here $\xx$ is the multi-vector consisting of $x^1, \ldots, x^{\dout}$ evaluated on all $\NN$ inputs, and all kets and limits are calculated in $\db \ppi$ via \cref{defn:ket}.
    \item (Parameter Updates)
      \begin{align}
        \oplim{W_{t+1}} &= \oplim{W_{t}} - \eta \overline{\sum_{h=Wx}\ketdbra{\db \hh_t}{\mathring \cchi_t}{\xx_t}}
        && \text{for every matrix parameter $W$}
        \label{eqn:muP_op_update_general}
        \\
        \ket{v_{t+1}}&=\ket{v_{t}}-\eta \overline{\ket{\db v_{t}} \cdot \mathring{\cchi}_{t}}
        && \text{for every vector parameter $v$}
        \label{eqn:muP_ket_update_general}\\
        \mathring c_{t+1} &= \mathring c_t - \eta \overline{\db \mathring c_t}
        && \text{for every scalar parameter $c$}
        \label{eqn:muP_scalar_update_general}
      \end{align}
      where the tensor operations and summation over $h=Wx$ should be interpreted as in \cref{defn:GenArchTangentOperator}.
    \item (Initialization) $\{\oplim{W_0}: \text{matrix parameter $W$}\}$ is a set of independent initial operators. Additionally,
    \begin{align}
      \ket{v_{0}}&=\Gaus(0, 1)
      && \text{for every vector parameter $v$}\\
      \mathring c_{0} &\sim \Gaus(0, 1)
      && \text{for every scalar parameter $c$}
      \label{eqn:scalar_param_init}
    \end{align}
    all independent from one another.
    \label{item:general_mulimit_initialization}
  \end{enumerate}

\end{thm}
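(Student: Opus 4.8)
The plan is to mimic the proof of the MLP $\mu$-limit (\cref{thm:mulimit_MLP}) in \cref{sec:muLimitProof}: encode the entire length-$T$ training trajectory of $f$ as a single \nexort{} program and then read off the limit from the \nexort{} Master Theorem (\cref{thm:MasterTheorem}). Fix a horizon $T$. By induction on $t\le T$ I would build a \nexort{} program $\pi_t$ containing, as vectors and scalars, the (pre)activations $\yy_s$ and their error signals $\db\yy_s$ for every vector $y$ of the representation program $\pi$, the function values $\ff_s$, and the accumulated parameter states, for all $s\le t$. The base case $t=0$ is just the total program $\db \ppi$ of \cref{defn:totalprogram}; its \texttt{Init}/\refOuterNonlin{}/\refMatMul{}/\texttt{Hat}/\texttt{Dot} ket rules (\cref{defn:ket}) reproduce exactly the $t=0$ line of the theorem and, via \cref{defn:oplim}, the initialization clause \cref{item:general_mulimit_initialization}.

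For the inductive step I would transcribe the three ingredients of one optimization step into new \nexort{} instructions on top of $\pi_t$: (i) the forward/backward pass of $f$ at step $t+1$, which is literally the instruction list of $\db\pi$ run with the current parameters (more precisely, with $W_t$ in place of $W$, etc.\ as in \cref{defn:generalabcd}); (ii) the parameter updates of \cref{defn:generalabcd}, $w\gets w-\eta n^{-c_W}Q^W_t(n^{d_W}g_0,\ldots,n^{d_W}g_t)$; and (iii) the re-expansion of $W_{t+1}z$ for matrix parameters when such products reappear later. The crucial observation, exactly as in the MLP case, is that in $\mu$P (\cref{defn:muP_general}) the scaled gradient of a matrix parameter $W$ is an \emph{outer-product}-type tensor $\frac1n\sum_{h=Wx}\db\hh\,\mathring\cchi\,\xx^\trsp$, so that applying the entrywise $Q$ to it and then contracting against a vector is precisely a \refOuterNonlin{} instruction producing the nonlinear outer product $\overline{\sum_{h=Wx}\ketdbra{\db\hh_t}{\mathring\cchi_t}{\xx_t}}$; the $\mu$P exponents are designed so that the inputs to every $Q$ are $\Theta(1)$ (faithfulness, via the general-architecture analogue of \cref{lemma:faithfulconditionsinit}) and all activations, error signals, and function values remain $\Theta(1)$ (stability, cf.\ \cref{thm:stablefaithful}). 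Vector- and scalar-parameter updates are handled identically but with plain \refOuterNonlin{} and \refAvg{} (no extra contraction over the program's matrix-product pairs).

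Having built $\pi_T$, the \nexort{} Master Theorem (\cref{thm:MasterTheorem}) gives that every scalar of $\pi_T$, in particular each $\ff_t$, converges almost surely to its ket-calculus limit. It then remains to check, by a second induction on $t$, that this limit equals the $\mathring\ff_t$ defined recursively in the statement: the forward/backward equations \cref{eqn:out_fbprop_mulimit_anyarch,eqn:vector_fbprop_mulimit_anyarch,eqn:scalar_fbprop_mulimit_anyarch} are exactly the assertion that the assignment $\Theta_t$ of \cref{defn:assignment} commutes with the program's instructions, which holds by construction of $\Theta_t$; and the update equations \cref{eqn:muP_op_update_general,eqn:muP_ket_update_general,eqn:muP_scalar_update_general} record how $\Theta_{t+1}$ differs from $\Theta_t$, which is the limit of the update instruction added in step (ii).

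The main obstacle is the matrix-parameter bookkeeping in step (iii): one must verify that the operator identity $\oplim{W_{t+1}}=\oplim{W_0}-\eta\sum_{s\le t}\overline{\sum_{h=Wx}\ketdbra{\db\hh_s}{\mathring\cchi_s}{\xx_s}}$ is consistent with how $W_{t+1}z$ is actually computed for \emph{every} vector $z$ that subsequently appears, including the interaction between $\oplim{W_0}$ — whose action on a fresh $z$ splits as $\hatket{W_0 z}+\dotket{W_0 z}$ (\cref{eqn:dotket}) — and the finite-rank nonlinear-outer-product corrections. This is where Simple GIA-type care is needed (see the remark after the theorem); as in \cref{sec:muLimitProof}, the combination of \cref{eqn:dotket} with the Master Theorem resolves it, and for general architectures one merely has to carry the argument uniformly over the finitely many pairs $h=Wz,\ g=Wy$ in $\pi$ and over all $\NN$ inputs. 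No genuinely new phenomenon arises, which is why the detailed verification can be omitted: it is a routine, if notationally heavy, adaptation of the MLP proof.
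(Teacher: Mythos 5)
Your proposal is correct and follows essentially the same route as the paper: the paper itself only proves the MLP case in detail (build the \nexort{} training program, apply the Master Theorem, and observe that in $\mu$P all the $\mathring\theta$ scaling factors equal $1$ so the kets are the direct transcription of the finite-width computation), and explicitly omits the general-architecture case as a routine adaptation, which is exactly the adaptation you describe.
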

Here, we used the notation $W, v, c$ for matrix, vector, and scalar parameters, in contrast to $y$ and $a$ for vector and scalar generated by the program $\pi$.
The former are exemplified by weights while the latter by (pre)activations.

\begin{exmp}
  In the MLP case with program $\pi$ given in \cref{eqn:MLP}, there are no generated scalars, so we can ignore \cref{eqn:scalar_fbprop_mulimit_anyarch}.
  The generated vectors (in $\db \ppi$) are $\hh^1, \ldots, \hh^L$ and $\xx^1, \ldots, \xx^L$ as well as their error signals $d\hh^l, d\xx^l$.
  So \cref{eqn:vector_fbprop_mulimit_anyarch} reduces to the 2nd and 3rd rows in \cref{thm:mulimit_MLP}(1).
  Finally, the function output is given by averaging $w^{L+1} \odot x^L$, so the $\mathring \ff_t = \Theta_t\{\braket 1 \xx\}$ in \cref{eqn:out_fbprop_mulimit_anyarch} reduces to
  $\mathring \ff_t = \braket{w_t^{L+1}}{\xx^L_t}$ in  \cref{thm:mulimit_MLP}(1).
\end{exmp}
\begin{rem}[Scalar Parameters]
  In contrast to the NT limit (\cref{thm:MemorylessGeneralNTKLimit}), here we allow scalar parameters.
  The sampling of $\mathring c_{0} \sim \Gaus(0, 1)$ in \cref{eqn:scalar_param_init} for every scalar parameter $c$ is just a consequence of the sampling in \cref{defn:generalabcd}.
  But in fact for any deterministic initialization $C\in \R$ of $c$, we can set $\mathring c_0 = C$ in  \cref{eqn:scalar_param_init}, and \cref{thm:mulimit_anyarch} still holds.
  Indeed, for scalar parameters, the most natural initialization is probably either 0 (for additive parameters like bias) or 1 (for multiplicative parameters like layernorm weights).
  When scalar parameters have deterministic initializations, $\mathring \ff_t$ is always deterministic as well;
  however, in \cref{thm:mulimit_anyarch}, as written, $\mathring \ff_t$ is only deterministic conditioned on $\mathring c_0$ for every scalar parameter $c$.
\end{rem}

\begin{rem}[$\mu$P is Most Natural]
  As discussed in \cref{rem:mup_mostnatural}, $\mu$P is in a sense \emph{the most natural parametrization} because its infinite-width limit is just a direct ``ket-translation'' of the finite-width computations, no matter the architecture.
  Compare this with NTP, where the Gaussian process and kernel behaviors can only happen under some relative restrictive conditions, like ``no scalar parameters'' (\cref{thm:MemorylessGeneralNTKLimit} and \cref{rem:ntk_anyarch_assumptions}).
  \end{rem}

\begin{rem}
  In the $\mu$-limit for a general architecture, $\mathring \ff_0$ no longer needs to be 0, unlike \cref{thm:mulimit_MLP}.
  For example, if in an MLP with relu activation, the output is given by the average entry of the last layer activation vector, then obviously $\mathring \ff_0$ will always be positive.
\end{rem}

\subsubsection{Memoryful Nonstationary Case}

\begin{thm}[$\mu$-Limit for General Architecture, Memoryful Nonstationary Updates]\label{thm:mulimit_MLP_general_anyarch}
  If $\QQ$ is memoryless but not stationary, then \cref{thm:mulimit_anyarch} holds if the bars in \cref{{eqn:muP_op_update_general},{eqn:muP_ket_update_general},{eqn:muP_scalar_update_general}} are interpreted as $Q^W_t, Q^v_t,$ and $Q^c_t$.

  If $\QQ$ is not memoryless, then \cref{thm:mulimit_anyarch} holds if \cref{{eqn:muP_op_update_general},{eqn:muP_ket_update_general},{eqn:muP_scalar_update_general}} are replaced with 
  \begin{align*}
    \oplim{W_{t+1}} &= \oplim{W_{t}} - \eta \overline{\sum_{h=Wx}\ketdbra{\db \hh_{\le t}}{\mathring \cchi_{\le t}}{\xx_{\le t}}}
    && \text{for every matrix parameter $W$}
    \\
    \ket{v_{t+1}}&=\ket{v_{t}}-\eta \overline{\ket{\db v_{{\le t}}} \cdot \mathring{\cchi}_{{\le t}}}
    && \text{for every vector parameter $v$}
    \\
    \mathring c_{t+1} &= \mathring c_t - \eta \overline{\db \mathring c_{\le t}}
    && \text{for every scalar parameter $c$}
  \end{align*}
  where the bar notations abbreviate
  \begin{align*}
    \overline{\sum_{h=Wx}\ketdbra{\db \hh_{\le t}}{\mathring \cchi_{\le t}}{\xx_{\le t}}}
    &=
      Q^W_t\left(\sum_{h=Wx} \ketdbra{\db \hh_{0}}{\mathring \cchi_{0}}{\xx_{0}}, \ldots, \sum_{h=Wx} \ketdbra{\db \hh_{ t}}{\mathring \cchi_{ t}}{\xx_{ t}}
        \right)
    \\
    \overline{\ket{\db v_{{\le t}}} \cdot \mathring{\cchi}_{{\le t}}}
    &=
      Q^v_t\left(
        \ket{\db v_{{0}}} \cdot \mathring{\cchi}_{{0}},
        \ldots,
        \ket{\db v_{{ t}}} \cdot \mathring{\cchi}_{{ t}}
      \right)
    \\
    \overline{\db \mathring c_{\le t}}
    &=
      Q^c_t\left(
        \db \mathring c_{0},
        \ldots,
        \db \mathring c_{ t}
      \right)
    .
  \end{align*}
\end{thm}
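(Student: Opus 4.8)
The plan is to reduce \cref{thm:mulimit_MLP_general_anyarch} to the memoryless stationary case (\cref{thm:mulimit_anyarch}) and, through it, to the \nexort{} Master Theorem (\cref{thm:MasterTheorem}), exactly as the MLP proofs in \cref{sec:muLimitProof} derive \cref{thm:mulimit_MLP_general} from \cref{thm:mulimit_MLP}. The only genuinely new ingredient over the memoryless stationary case is that each update function $Q^W_t$ now consumes \emph{all} past processed gradients; but since training time is held fixed as $n\to\infty$, this is harmless: it merely turns one \refOuterNonlin{} call into one of larger arity, with $\psi = Q^W_t$ acting on the finitely many tensors $\{\ketdbra{\db\hh_s}{\mathring\cchi_s}{\xx_s}\}_{s\le t}$ (resp.\ $\{\ket{\db v_s}\cdot\mathring\cchi_s\}_{s\le t}$, $\{\db\mathring c_s\}_{s\le t}$). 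Everything else — the identification of the limit, the role of the assignment formalism $\Theta_t$ of \cref{defn:assignment}, the operator semantics for matrix parameters — is carried over verbatim from the stationary proof.

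Concretely, I would first build, by induction on the training step $t = 0, 1, \ldots, T$, a single \nexort{} program that expresses the entire $T$-step training run of the representable architecture in $\mu$P. At step $0$ this is the total program $\db\ppi$ of \cref{defn:totalprogram}, which is \nexort{} by \cref{defn:representable_architecture}. Given the program expressing the network and its error signals through step $t$, the gradients of every matrix, vector, and scalar parameter at times $0,\ldots,t$ are already present (they are exactly the $d^{x^i}$-objects constructed by \cref{defn:backpropagation}), so the processed update $Q^W_t(n^{d_W}g_0,\ldots,n^{d_W}g_t)$ can be appended via a single \refOuterNonlin{} — this is where \cref{assm:MLPsmooth_anyarch} (pseudo-Lipschitzness of $Q^W_t$ and of the nonlinearities) is used to satisfy \cref{setup:nexort}. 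The updated weight $W_{t+1} = n^{-a_W}(w^W_t - \eta n^{-c_W}Q^W_t(\cdots))$ then acts on subsequent vectors through \refMatMul{} (for its initial-matrix part) plus \refOuterNonlin{} (for the accumulated correction), and scalar parameters are handled with \cref{lemma:transform_scalars}; so the run of step $t+1$ is again a \nexort{} program, closing the induction.

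Second, I would apply \cref{thm:MasterTheorem} to this program, obtaining $\ff_t \asto \mathring\ff_t$ for every $t$, and then check that the ket/operator recursion generated by \cref{defn:ket} for the instructions we inserted is precisely the one displayed in the statement. Unwinding the \texttt{OuterNonlin} rule on the update instruction shows that the new contribution to $\ket{W_{t+1}x}$ — equivalently, to $\oplim{W_{t+1}}$ as an operator on kets — is exactly $-\eta\,\overline{\sum_{h=Wx}\ketdbra{\db\hh_{\le t}}{\mathring\cchi_{\le t}}{\xx_{\le t}}}$ with the bar interpreted as $Q^W_t$ applied to the tuple over $s\le t$; the analogous statements hold for $\ket{v_{t+1}}$ and $\mathring c_{t+1}$. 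Packaging the time-$t$ values of all parameters into the assignment $\Theta_t$ of \cref{defn:assignment} then reproduces the forward/backward equations \cref{eqn:out_fbprop_mulimit_anyarch,eqn:vector_fbprop_mulimit_anyarch,eqn:scalar_fbprop_mulimit_anyarch} and the initialization clause. That $\mu$P keeps the inputs to every $Q^W_t$ of size $\Theta(1)$ (faithfulness, via the $d_W = 1$ choices and $\aout$) guarantees the limit is nondegenerate, though this is not strictly needed for the convergence assertion itself.

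The main obstacle is the matrix-parameter bookkeeping rather than any new analytic estimate. To write $\oplim{W_{t+1}} = \oplim{W_{t}} - \eta\,\overline{\sum_{h=Wx}\ketdbra{\db\hh_t}{\mathring\cchi_t}{\xx_t}}$ as an identity of bounded operators on the Hilbert space of kets (\cref{footnote:hilbertspace}), one must verify that the nonlinear-outer-product correction genuinely defines such an operator and that each forward or backward pass through $W_t$ collects the correct $\dotket{}$ (GIA) term from \cref{eqn:dotket}; for general architectures the Simple GIA Check may fail, so these correction terms must be carried honestly, as already flagged in the remark following \cref{thm:MemorylessGeneralNTKLimit}. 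Beyond this, the work is purely organizational: an induction on $t$ verifying that the partially constructed program stays a valid \nexort{} program and that its kets satisfy the stated recursion, after which \cref{thm:MasterTheorem} does the rest; since the proofs "follow essentially the MLP examples," no step requires an idea not already present in \cref{sec:muLimitProof}.
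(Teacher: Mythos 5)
Your proposal is correct and follows exactly the route the paper intends: the paper omits this proof, stating that it is an easy adaptation of the MLP argument, and your sketch — express the full $T$-step training run as a \nexort{} program built on the total program $\db\ppi$, absorb the memoryful update $Q^W_t$ as a higher-arity \refOuterNonlin{}, apply \cref{thm:MasterTheorem}, and read off the ket/operator recursion — is precisely that adaptation, mirroring \cref{sec:Program-Setup,sec:ProgramConstruction,sec:infwidthlimit,sec:muLimitProof}. Your flagged caveats (carrying the $\dotket{}$ corrections honestly when the Simple GIA Check fails, and checking the outer-product correction acts well-definedly on kets) are the right ones and are consistent with the paper's own remarks.
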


\section{Extensions}

\label{sec:extensions}

\subsection{Weight Decay}

Consider the update equation (\cref{eqn:abcdupdate})
with \emph{decoupled} weight decay $\lambda$:%
\footnote{We will assume all layers have the same $\lambda$, for simplicity. The generalization to layer-specific $\lambda$ is straightforward.}
\begin{equation}
  w^l_{t} = (1 - \lambda) w^l_{t-1} -\eta n^{-c_l} Q_{t}^l (n^{d_l} g_{0},\ldots, n^{d_l} g_{t}). \label{eq:update_eqn_weight_decay}
\end{equation}
This decoupled weight decay $\lambda$ is equivalent to a traditional weight decay value $\nicefrac \lambda {\eta n^{-c_l}}$.
It's obvious that $\lambda$ should be invariant to the width $n$: if $\lambda \to \infty$ with $n$, then $1 - \lambda <0$ eventually; if $\lambda \to 0$ with $n$, then weight decay has no effect in the limit.

The theory in this section covers AdamW \citep{Loshchilov2017DecoupledWD}, SGD with weight decay, and so on.

\subsubsection{Maximal Update Limit}
\cref{thm:mulimit_MLP} holds with \cref{eqn:muP_in_update,eqn:muP_op_update,eqn:muP_out_update} replaced with
\begin{align}
  \ket{w^1_{t+1}}&=(1 - \lambda)\ket{w^1_{t}}-\eta \overline{\ket{d\hh_{t}^1}_{\mathring{\cchi}_{t}}\xxi^{\trsp}}
  \label{eqn:muP_in_update_wd}\\
  \oplim{W^{l}_{t+1}} &= (1 - \lambda) \oplim{W^{l}_{t}} - \eta \overline{\ketdbra{d\hh_t^{l}}{\mathring \cchi_t}{\xx_t^{l-1}}},\ \forall l \in [2,L]
    \label{eqn:muP_op_update_wd}
  \\
  \ket{w_{t+1}^{L+1}}&=(1 - \lambda) \ket{w_{t}^{L+1}}-\eta \overline{\ket{\xx_{t}^{L}} \cdot \mathring{\cchi}_{t}}.
    \label{eqn:muP_out_update_wd}
\end{align}

All other $\mu$-limit theorems (\cref{thm:muLimit_1LP}, \ref{thm:muLimit_1LP_general}, \ref{thm:mulimit_MLP_general}, and \ref{thm:mulimit_MLP_general_anyarch}) hold with similar modifications.
In particular, if we interpret the bars in \cref{eqn:muP_in_update_wd,eqn:muP_op_update_general,eqn:muP_out_update_wd} as \cref{eqn:Adam} and replace the $\bullet_t$ subscripts under the bars to $\bullet_{\le t}$ (c.f.\ the notation in \cref{thm:mulimit_MLP_general}), then we get the equations for AdamW's $\mu$-limit.

\subsubsection{Neural Tangent}

Fix decoupled weight decay $\lambda$.
\begin{defn}
  For each $l=1,\ldots,L$ and $s = 0, 1, \ldots$, the ket $\ket{\hh^{l}_s}\in\R^{\NN}$ is constructed
  as a mean-zero Gaussian vector with covariance matrix $(1-\lambda)^{2s} \braket{\xx^{l-1}_s}{\xx^{l-1}_s}$
  when $l>1$ or $\xxi^{\trsp}\xxi$ when $l=1$. Simultaneously, $\xx^{l}_s\defeq\phi(\ket{\hh^{l}_s})\in\R^{\NN}$
  for $l=1,\ldots,L$.
  Furthermore, for any $s,t \ge 0$, $\ket{\hh^l_s}$ and $\ket{\hh^l_t}$ are jointly Gaussian with covariance $(1-\lambda)^{s+t}\braket{\xx^{l-1}_s}{\xx^{l-1}_t}$.
\end{defn}

\begin{defn}
  For each $l=L,\ldots,1$ and $s = 0, 1, \ldots$, the ket $\ket{d\xx^{l}_s}\in\R^{\NN}$ is independent
  from $\{\ket{\xx^{l}_t},\ket{\hh^{l}_t}\}_{l=1}^{L}$ for any $t$ and is a mean-zero
  Gaussian vector with covariance matrix $(1-\lambda)^{2s}\braket{d\hh^{l+1}_s}{d\hh^{l+1}_s}$
  when $l<L$ or the all-1s matrix when $l=L$. Simultaneously, $\ket{d\hh^{l}_s}\defeq\phi'(\ket{\hh^{l}_s})\odot\ket{d\xx^{l}_s}\in\R^{\NN}$
  for all $l$.
  Furthermore, for any $s,t \ge 0$, $\ket{d\xx^l_s}$ and $\ket{d\xx^l_t}$ are jointly Gaussian with covariance $(1-\lambda)^{s+t}\braket{d\hh^{l+1}_s}{d\hh^{l+1}_t}$.
\end{defn}

\begin{thm}\label{thm:NTlimitWD}
  With decoupled weight decay $\lambda$, 
  \cref{thm:memorylessNTK} holds with the update equation \cref{eqn:NTK_update_memoryless_stationary} replaced by
\begin{align*}
  \mathring{\ff}_{t+1} - \mathring{\ff}_{0}&=
  -\eta 
  \sum_{s=0}^t (1 - \lambda)^{t-s} \Diag\sum_{l=1}^{L+1}
  \la{d\hh^{l}_{t+1}}
  \Qketdbra{d\hh^{l}_{s}}{\cchi_s}{\xx^{l-1}_{s}}
  {\xx^{l-1}_{t+1}}\ra.
\end{align*}
For memoryless nonstationary updates, interpret the bar as $Q_t^l$.
For memoryful nonstationary updates, replace the nonlinear outer product with 
$\Qketdbra{d\hh^{l}_{\le s}}{\cchi_{\le s}}{\xx^{l-1}_{\le s}}$ (c.f.\ \cref{thm:mulimit_MLP_general} for the notation).
\end{thm}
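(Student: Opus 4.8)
The plan is to run the proof of \cref{thm:memorylessNTK} (given in \cref{sec:NTKProof}) essentially verbatim, inserting a factor $(1-\lambda)$ wherever a weight is updated and then re-deriving the function-evolution equation to absorb the accumulated decay. First I would unroll $t+1$ steps of the weight-decayed update \cref{eq:update_eqn_weight_decay} in NTP, together with the forward and backward passes at all times $0,\ldots,t+1$, into a single \nexort{} program. The only structural change from the $\lambda=0$ case is that the weight at step $s$ is $w^l_s = (1-\lambda)^s w^l_0 - \eta n^{-c_l}\sum_{r<s}(1-\lambda)^{s-1-r}Q^l_r(n^{d_l}g_0,\ldots,n^{d_l}g_r)$; since $(1-\lambda)$ is a fixed scalar and the hidden-layer gradients $n^{d_l}g_r$ are entrywise functions of the rank-$\NN$ outer products $d\hh^l_r(\Diag\cchi_r)(\xx^{l-1}_r)^\trsp$, each $Q^l_r$ applied to them is a nonlinear outer product, expressible by \refOuterNonlin{}; the input- and output-layer weights are vector-like and handled exactly as in the $\lambda=0$ NTK argument. \cref{assm:MLPsmooth} makes every $\psi$ used pseudo-Lipschitz, so \cref{setup:nexort} applies.

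Next I would invoke the \nexort{} Master Theorem (\cref{thm:MasterTheorem}) for almost-sure convergence of all scalars of this program, splitting off the CLT-type contribution of the fresh Gaussian output weight to get $\ff_0 \distto \Gaus(0,\braket{\xx^L}{\xx^L})$ by the usual Gaussian-conditioning argument. Then I would check that NTP still suppresses feature learning under decay: since $r_l > 0$ for every $l$ and the decay leaves the initialization-scale bookkeeping untouched (so \cref{lemma:faithfulconditionsinit} still certifies faithfulness at $t=0$), the feature-learning corrections $\Delta W^l_s\,\xx^{l-1}_s$ stay $O(n^{-1/2})\asto 0$ for all $s\le t+1$. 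Hence the forward and backward passes are governed solely by the decayed initial weights, and their kets reproduce exactly the marginal and joint covariance recursions in the two definitions immediately preceding \cref{thm:NTlimitWD}.

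Finally I would assemble the function-evolution formula. Decompose $\ff_{t+1}$ into the contribution of the decayed initial weights plus the sum, over layers $l\in[L+1]$ and steps $s\le t$, of the contribution of the step-$s$ update to layer $l$; that contribution picks up a further $(1-\lambda)^{t-s}$ (the decay accumulated over the remaining $t-s$ steps) and is then transported through the leading-order-fixed Jacobians of the step-$(t+1)$ forward/backward pass. Passing to the limit with $\cchi_s = \eps_s(\mathring \ff_s)$ turns it into $-\eta(1-\lambda)^{t-s}\la{d\hh^{l}_{t+1}}\Qketdbra{d\hh^{l}_{s}}{\cchi_s}{\xx^{l-1}_{s}}{\xx^{l-1}_{t+1}}\ra$; summing over $l$ and $s$ and taking the diagonal over the input index yields the claimed expression for $\mathring \ff_{t+1} - \mathring \ff_0$. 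The memoryless-nonstationary case only changes $Q$ to $Q^l_s$ under the bar; the memoryful case replaces the single nonlinear outer product by $Q^l_s$ applied to the list $\left(\ketdbra{d\hh^l_r}{\cchi_r}{\xx^{l-1}_r}\right)_{r\le s}$, as in \cref{thm:mulimit_MLP_general}.

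I expect the main obstacle to be the bookkeeping in the last two steps: keeping track of the nested $(1-\lambda)$ factors as they propagate through the forward and backward passes, which --- unlike in the $\lambda=0$ NTK limit --- are no longer frozen but shrink geometrically, and re-verifying that the feature-learning corrections remain negligible under NTP scaling once every weight is being decayed at every step.
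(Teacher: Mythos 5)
Your proposal is correct and follows essentially the route the paper intends: the paper states \cref{thm:NTlimitWD} without a separate proof, expecting exactly the adaptation you describe of the program construction in \cref{sec:ProgramConstruction} and the telescoping argument of \cref{sec:NTKProof}, with the unrolled update $w^l_{t+1}=(1-\lambda)^{t+1}w^l_0-\eta\sum_{s\le t}(1-\lambda)^{t-s}Q^l_s(\cdots)$ supplying the $(1-\lambda)^{t-s}$ factors and the now time-dependent kets matching the two covariance definitions preceding the theorem. You also correctly identify the one piece of genuinely new bookkeeping — that under decay the forward/backward kets shrink geometrically in $s$ rather than being frozen — so nothing essential is missing.
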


Note that when $\lambda = 0$, $\hh^l_s$ is invariant to $s$ as are $\xx^l_s, d \hh^l_s, d\xx^l_s$, and this exponentially weighted sum just reduces to a simple sum and thus to 
\cref{eqn:NTK_update_memoryless_stationary}.

In the memoryful case, interpreting the bar in \cref{thm:NTlimitWD} as \cref{eqn:Adam} yields the NT limit of AdamW.
The limit theorems (\cref{thm:MemorylessGeneralNTKLimit} and \cref{thm:nt_memoryful_anyarch}) for general architecture also hold with analogous modifications.

\subsection{Update Clipping and Normalization}

We now add update clipping or normalization to \cref{eq:update_eqn_weight_decay}:%
\footnote{Traditionally, \emph{gradient clipping} for SGD clips the norm of the entire gradient (for the whole network). Naively, one would just do this before applying the update function $Q$. However, in Adam, for example, this would be meaningless because of Adam's normalization. For general $Q$, the $d_l$ terms can be trivially adjusted according to how the global gradient norm scales. So this notion of clipping or normalization (before $Q$) is not very interesting.}
\begin{equation}
  w^l_{t} = (1 - \lambda) w^l_{t-1} -\eta n^{-c_l} \cdot \nu^{-1} Q_{t}^l(n^{d_l} g_{0},\ldots,n^{d_l} g_{t}), \label{eq:update_eqn_grad_clip}
\end{equation}
where 
\begin{itemize}
  \item if we are doing update normalization, then we set $\nu \gets \|Q_{t}^l(n^{d_l} g_{0},\ldots,n^{d_l} g_{t})\|$;
  \item if we are doing update clipping, then calculate additionally $\nu \gets \min(\nu, \theta^l)$ where $\theta^l$ is a threshold hyperparameter for layer $l$.
\end{itemize}

Some recent works \citep{you_large_2017,shazeer_adafactor_2018,you_large_2020,carbonnelle_layer_2019,bernstein_distance_2021,liu_learning_2021} normalize the update by the current weight (Frobenius) norm instead of by the update norm.
We explore its implications in \cref{{sec:normbycurrentweightnorm}}.
For now we focus on the formulation above.

The key questions we explore here are:
\begin{enumerate}
  \item How should the abcd values adjust to update clipping and normalization?
  \item What do the neural tangent and maximal update limits look like?
  \item How should the threshold hyperparameter $\theta^l$ scale with width $n$?
\end{enumerate}

The key intuition is as follows:
In all ``reasonable'' parametrizations (more precisely, faithful ones (\cref{defn:faithful})), $Q_{t}^l(n^{d_l} g_{0},\ldots,n^{d_l} g_{t})$ is entrywise $\Theta(1)$.
Thus, its norm $\nu$ scales like $\sqrt{\#\text{entries}}$, as can be verified via the Master Theorem (\cref{thm:MasterTheorem}).
For example, for the input and output weights of the MLP (\cref{eqn:MLP}), this is $\sqrt{n}$, while for the hidden weights, this is $n$.
Therefore, for update clipping, the threshold should be
\begin{equation}
  \theta^l = \theta^l_0 n^{e_l}\quad
  \text{where}\quad
  e_l =
  \begin{cases}
    \nicefrac 1 2  &\text{if $l = 1, L+1$}\\
    1       &\text{otherwise,}
  \end{cases}
  \label{eqn:gradclipscale}
\end{equation}
for some tunable hyperparameter $\theta^l_0$ independent of width.
Otherwise, either the clipping has no effect (if $\theta^l$ is larger than this) or $\nu$ after update clipping is always equal to the threshold $\theta^l$ (if $\theta^l$ is smaller than this).

At the same time, $c_l$ should be ${e_l}$ smaller than if there is no update normalization or clipping (i.e., the learning rate should be larger).
Thus, for example, 
\begin{defn}\label{defn:NTPgradclip}
  The NTP with update normalization or clipping is
\begin{center}
  \begin{tabular}{cccc}
    \toprule
    $l$ & $[2,L]$ & $1$  & $L+1$\tabularnewline
    \midrule
    $a_l$ & $\nicefrac{1}{2}$ & 0  & $\nicefrac{1}{2}$\tabularnewline
    $b_l$ & $0$  & $0$ & $0$\tabularnewline
    $c_l$ & $0$ & $0$  & $0$\tabularnewline
    $d_l$ & $1$ & $\nicefrac{1}{2}$  & $\nicefrac{1}{2}$\tabularnewline
    \bottomrule
  \end{tabular}
\end{center}
where update clipping thresholds $\theta^l$ scale as in \cref{eqn:gradclipscale}.
\end{defn}

\begin{defn}\label{defn:muPgradclip}
  The $\mu$P with update normalization or clipping is
  \begin{center}
    \begin{tabular}{cccc}
      \toprule
      $l$ & $[2,L]$ & $1$  & $L+1$\tabularnewline
      \midrule
      $a_l$ & $0$ & $0$  & $1$\tabularnewline
      $b_l$ & $\nicefrac{1}{2}$  & $0$ & $0$\tabularnewline
      $c_l$ & $0$ & $-\nicefrac{1}{2}$ & $-\nicefrac{1}{2}$\tabularnewline
      $d_l$ & $1$ & $1$  & $1$\tabularnewline
      \bottomrule
    \end{tabular}
  \end{center}
  where update clipping thresholds $\theta^l$ scale as in \cref{eqn:gradclipscale}.
\end{defn}

Likewise, the classification of parametrizations in \cref{sec:classifyabcd} hold for update normalization or clipping if we replace all mentions of $c_l$ in \cref{sec:classifyabcd} with $c_l + e_l$.
For example, 
\begin{defn}%
  With update clipping or normalization, we redefine
  \[
  r_{l}\defeq\begin{cases}
  c_{l}+e_l+a_{l}-1 & \text{if \ensuremath{l>1}}\\
  c_{l}+e_l+a_{l} & \text{if \ensuremath{l=1}}
  \end{cases},\quad
  r_{\le l} \defeq \min_{m=1}^l r_l,\quad
  r \defeq r_{\le L}
  \]
\end{defn}
Then $\xx^L_t - \xx^L_0 = \Theta(n^{-r})$ still and it remains that $r = \nicefrac{1}{2}$ for NTP (\cref{defn:NTPgradclip}) and $r = 0$ for $\mu$P (\cref{defn:muPgradclip}) with the new definitions above.

\paragraph*{General Architectures}

For general architectures, there is one $e_w$ for every parameter tensor $w$, and should be set to $0, \nicefrac 1 2,$ and $1$ respectively for scalar, vector, and matrix parameters.
Then the obvious generalization of the above discussion holds.

\paragraph*{Maximal Update Limit}
\newcommand{\doverline}[1]{\tilde{\overline{#1}}}

\begin{defn}
  For any random vector $\XX$, let $|\XX| \defeq \sqrt{\EV \|\XX\|^2}$.
  Then let $\tilde \XX \defeq \XX / |\XX|$.
\end{defn}

Then \cref{thm:mulimit_MLP} holds with \cref{eqn:muP_in_update,eqn:muP_op_update,eqn:muP_out_update} replaced with (essentially, compared to \cref{eqn:muP_in_update_wd,eqn:muP_op_update_wd,eqn:muP_out_update_wd} we added a tilde $\tilde{\ }$ to all bars.)%
\footnote{Unfortunately, LaTeX breaks $\mathring \chi_t$ in the subscripts, so here we just write $\chi_t$ instead.}
\begin{align}
  \ket{w^1_{t+1}}&=(1 - \lambda)\ket{w^1_{t}}-\eta \doverline{\ket{d\hh_{t}^1}_{{\cchi}_{t}}\xxi^{\trsp}}
  \label{eqn:muP_in_update_gclip}
  \\
  \oplim{W^{l}_{t+1}} &= (1 - \lambda) \oplim{W^{l}_{t}} - \eta \doverline{\ketdbra{d\hh_t^{l}}{\cchi_t}{\xx_t^{l-1}}},\ \forall l \in [2,L]
    \label{eqn:muP_op_update_gclip}
  \\
  \ket{w_{t+1}^{L+1}}&=(1 - \lambda) \ket{w_{t}^{L+1}}-\eta \doverline{\ket{\xx_{t}^{L}} \cdot {\cchi}_{t}}.
    \label{eqn:muP_out_update_gclip}
\end{align}
Here, we think of $\ketdbra{\hh_t^l}{\mathring {\cchi}_t}{\xx_t^{l-1}}$ as a scalar random variable with the same distribution as $\sum_{i=1}^\NN \mathring \chi_t(\xi_i) \ket{h^l_t(\xi_i)} \ket{x^{l-1}_t(\xi_i)}^{\bx 1} \in \R$.

If doing update clipping, then interpret $\tilde \XX \defeq \XX / \min(\theta_0, |\XX|)$ for $\theta_0$ being the $\theta^l_0$ (\cref{eqn:gradclipscale}) for appropriate layer $l$.

Similar statements hold for other $\mu$-limit theorems, including those in general architectures.

\paragraph*{Neural Tangent Limit}

\cref{thm:NTlimitWD} holds with a tilde added to all bars, where the tilde should be interpreted according to $\tilde \XX \defeq \XX / |\XX|$ if doing update normalization or $\tilde \XX \defeq \XX / \min(\theta_0, |\XX|)$ if doing update clipping.
Similar statements hold for other NT limit theorems, including those in general architectures.

\subsubsection{Normalization by Current Weight Norm}
\label{sec:normbycurrentweightnorm}

\newcommand{\half}{\nicefrac{1}{2}}

Now suppose in \cref{eq:update_eqn_grad_clip}, we normalizd by the current weight norm instead, i.e.,
\[\nu \gets \|w^l_{t-1}\|_F.\]

For simpicity, assume the weight decay $\lambda$ is zero; our conclusion will turn out to hold even when this is not the case.

Let's consider what happens to the hidden weights.
Consider $W^l$ for $l \in [2, L]$. The Frobenius norm of its initialization $W^l_0 \in \R^{n \times n}$ (a random Gaussian matrix with $W_{\alpha\beta} \sim \Gaus(0, 1/n)$) scales like $\Theta(\sqrt{n})$ (in fact, asymptotically $\sqrt{n} + O(1)$).
But, in a stable parametrization (\cref{defn:stability}), the Frobenius norm of $\delta W^l_t = W^l_{t} - W^l_{t-1}$, a nonlinear outer product with $O(1/n)$ entries, scales like $O(1)$.
This means that
\[\frac{\|\delta W^l_t\|_F}{\|W^l_0\|_F} = \frac{\|\delta w^l_t\|_F}{\|w^l_0\|_F} = O(1/\sqrt n).\]
Therefore, 
\[\frac{\|W^l_0 + \delta W^l_1 + \cdots + \delta W^l_t\|_F}{\|W^l_0\|_F} = \frac{\|W^l_t\|_F}{\|W^l_0\|_F} = \frac{\|w^l_t\|_F}{\|w^l_0\|_F} \to 1\]
for any $t$ as $n \to \infty$.%
\footnote{The fact that $\|w_t\|_F$ is essentially $\|w_0\|_F$ really implies that this is \emph{not the right quantity} to normalize or clip the update by.
For example, other norms (like spectral norm) does not have this property and could be better suited (if we ignore the computational issues for the moment).
As another example, if Frobenius norm is desired, we should perhaps want to normalize by $\|\Delta w_t\|_F$ instead of $\|w_t\|_F$.}
So $\nu = \|w^l_{t-1}\|_F = \Theta(\|w^l_{0}\|_F) = \Theta(n^{1-b_l})$ (in fact, $\nu / n^{-b_l} \to 1$).

In contrast, for the input layer weight $W = W^1$, we have $\Delta W_t = O(W_0)$ (entrywise and in Frobenius norm) in a stable parametrization.
If the parametrization is furthermore faithful (\cref{defn:faithful}), then for $W = W^{L+1}$, this is true as well (c.f.\ \cref{rem:minormismatch}).
So $\|w^1_t\|_F / \|w^1_0\|_F$ and $\|w^{L+1}_t\|_F / \|w^{L+1}_0\|_F$ are both $\Theta(1)$ (but $\not\to 1$ generally) and $\nu = \Theta(n^{\half-b_l})$ in both cases.

Therefore, if we subtract $b_l$ from $e_l$ in \cref{eqn:gradclipscale}, then all discussion above applies.
For example, the parametrizations for current weight norm can be obtained by adding the $b_l$ row to the $c_l$ row.
\begin{defn}\label{defn:NTPgradclip2}
  
  The NTP with update normalization or clipping with current weight norm is the same as \cref{defn:NTPgradclip}.
\end{defn}
\begin{defn}\label{defn:muPgradclip2}
  The $\mu$P with update normalization or clipping with current weight norm is
  \begin{center}
    \begin{tabular}{cccc}
      \toprule
      $l$ & $[2,L]$ & $1$  & $L+1$\tabularnewline
      \midrule
      $a_l$ & $0$ & $0$  & $1$\tabularnewline
      $b_l$ & $\nicefrac{1}{2}$  & $0$ & $0$\tabularnewline
      $c_l$ & $\nicefrac{1}{2}$ & $-\nicefrac{1}{2}$ & $-\nicefrac{1}{2}$ \tabularnewline
      $d_l$ & $1$ & $1$  & $1$\tabularnewline
      \bottomrule
    \end{tabular}
  \end{center}
  where the clipping thresholds $\theta^l$ scale as $n^{\half}$.
\end{defn}

The limit equations for $\mu$P in this case is just \cref{eqn:muP_in_update_gclip,eqn:muP_op_update_gclip,eqn:muP_out_update_gclip} but with the tilde interpreted as dividing by $\sqrt{\braket{w^1_{t}}{w^1_{t}}}$, $1$, and $\sqrt{\braket{w^{L+1}_{t}}{w^{L+1}_{t}}}$ respectively.
Similar modifications apply to the NT limits.

Likewise, the classification of parametrizations hold if we replace each $c_l$ with $c_l + e_l- b_l$.

We have assumed that the weight decay $\lambda$ is 0 at the beginning.
In the general case, define $\delta W_t = W_t - (1 - \lambda) W_{t-1}$ instead, but the same calculations will yield the same conclusions.

\subsection{Second Moment Factoring ala Adafactor}

The key observation here is that, even though the factored update function $Q$ is not entrywise anymore, the update itself is still a nonlinear outer product of $d\hh_{\le t}$ and $d\xx_{\le t}$ (with some scalars variables inserted in an appropriate \nexort{} program), %
All of our theory in fact holds as-is for this more general kind of update function, since they all factor through the \refOuterNonlin{} instruction of \nexort{}.

\subsection{Future Optimizers}

As technology advances, we may get better optimizers of very different forms than we discussed here. 
But we can quite reasonably expect them to continue to efficiently utilize GPUs, i.e., perform large matrix-multiplies.
As such, these optimizers can always be analyzed in a Tensor Program.
So the lesson is: as long as one understands Tensor Programs, one can always derive the correct way of scaling an optimizer (with width).

\section{Proof Sketch}\label{sec:proofsketch}

\nexort{} programs equipped with their Master Theorem (\cref{thm:MasterTheorem}) provide the main tool to all of our rigorous results.
For example, to prove the neural tangent or $\mu$ limit equations, one can:
express the optimization dynamics using a \nexort{} program, mechanically compute the kets according to \cref{defn:ket}, and apply \cref{thm:MasterTheorem} to compute the limit (see full proofs in \cref{sec:muLimitProof,sec:NTKProof}).
What remains is to prove \cref{thm:MasterTheorem} using a strategy which we now outline. 

In a program, all vectors can be collected into an $n \times M$ matrix $V$ where $n$ is the dimension of each vector, and $M$ is the total number of vectors.
The Master Theorem can be interpreted as saying that each row of $V$ (i.e., the slice for each $\alpha \in [n]$) is roughly an iid sample from some distribution $\mathcal D$ on $\R^M$ (the distribution of some multivariate ket computed from \cref{defn:ket}).
Specifically, \cref{thm:MasterTheorem} and all previous versions of the Master Theorem formalize this by saying: this matrix $V$ of vectors looks similar to a matrix $V'$ of iid samples from $\mathcal D$, as measured by applying arbitrary pseudo-Lipschitz ``test function $\psi $'' to both sides and taking averages (the scalars in \cref{thm:MasterTheorem} are exactly of this form by \refAvg{}).

\paragraph{Core Insight}
Our core insight here is that $V$ is in fact similar to $V'$ in a stronger sense without needing to refer to any test function $\psi $: There is a ``small'' matrix $\delta V$ of the same shape as $V$ such that $V - \delta V$ is distributed exactly the same as $V'$.
In general, if this happens, then we say $V$ is \emph{distributionally equivalent, or dequivalent, to} $V'$ (\cref{def:distequiv1}).
The definition of ``small'' roughly means that $\delta V$ is $\tilde O(1)$ in $L^2$ norm, so that dequivalence is morally just ``close in Wasserstein distance.''
This ``Wasserstein'' notion is a stronger sense of closeness of $V$ to $V'$ than the ``test function'' notion described in the previous paragraph because the test function $\psi$ is (by assumption) smooth enough that $\delta V$ contributes a vanishing amount to the average (\cref{prop:equiv_pLip_tensor}).

To prove this core insight, there are two parts.

\paragraph{Part 1:}
We show that, in any \netsort{} program (i.e., a program with no scalar variables and no \textsc{Tensor} operation), $V$ is dequivalent to $V'$.
This can be done by re-analyzing the proof of the \netsort{} Master Theorem in \citep{yang3} in a fairly straightforward way.

\paragraph{Part 2:}
For any \nexort{} program $\pi$ (the subject of our work here), we construct a \emph{parallel \netsort{} program} (\cref{defn:integratedprogram}) and show, by induction, that the vectors of the two programs are dequivalent (i.e., distributed exactly the same after subtracting ``small'' vectors).
This parallel program essentially replaces 1) all scalar variables in the original program by their deterministic limits, as computed in \cref{defn:ket}(\texttt{Avg}), and 2) all \refOuterNonlin{} operations by \texttt{Nonlin} operations, as computed in \cref{defn:ket}(\texttt{OuterNonlin}), so that the corresponding vectors of these programs share their kets.
Then, by Part 1 above, we will have proven $V$ dequivalent to $V'$ for the original \nexort{} program $\pi$.

In this induction, we need to prove and use a lemma that, in the simplest case as an illustration, says the following:
For any pseudo-Lipschitz function $\psi : \R^k \to \R$ and random vector $x \in \R^n$ with iid standard Gaussian entries, the following two tensors $T$ and $T'$ are equivalent:
1) the tensor $T$ with entries $T_{\beta_1 \ldots \beta_k} = \psi (x_{\beta_1},\ldots,x_{\beta_k})$, and
2) the tensor $T'$ with entries $T'_{\beta_1 \ldots \beta_k} = \psi (x^1_{\beta_1},\ldots,x^k_{\beta_k})$
where $x^1,\ldots,x^k$ are iid copies of $x$.
The proof of this lemma interestingly requires Baranyai's theorem (\cref{thm:bara}), a classical theorem from the theory of combinatorial design about perfect matching in complete hypergraphs.

\chapter{Proofs of Infinite-Width Limits}

Here we prove the classificatioon of abcd-parametrizations as well as the NT and $\mu$-limit equations, assuming the \nexort{} Master Theorem (\cref{thm:MasterTheorem}).

\section{Proof of Classification of abcd-Parametrizations}

Here we seek to prove the main claims of \cref{sec:classifyabcd}: \cref{thm:abcdclassification}, \ref{thm:nontrivial}, and \ref{thm:stablefaithful}.

\paragraph*{Overview}
The proof is a straightforward modification of the proof of \cite[H.13]{yang4}.
The main subtle point is how $r=0$ implies \cref{enum:r=0} of \cref{thm:abcdclassification} (\cref{item:r=0} in the proof outline below).
The reasoning for this in \cite{yang4} is very specific to SGD and the activation functions tanh and (smoothed) relu.
Here, we adopt a different logic, given in \cref{sec:r=0} and \cref{thm:r=0impliesFL}, based on the ``asymptotic relu-ness'' of the nonlinearity.

On the other hand, the implication of \cref{enum:r>0} in \cref{thm:abcdclassification} by $r>0$ is similar to \cite{yang4}, adapted naturally to general $Q^l_t$ functions, so we omit the full proof here.
However, we provide the derivation of the NT limit in \cref{sec:neuraltangent}, which gives an instructive example of the proof for general operator regime parametrizations.

\paragraph*{Proof Outline}
In the sequel, we provide full details of \cref{item:trainTP,{item:r=0}} of the outline below.
All other steps in this outline are straightforward adaptations of \cite{yang4} or easy in a self-contained way.

\begin{enumerate}
  \item The characterization of stability at initialization (\cref{lemma:stabilityconditionsinit}) is already proven in \cite[Thm H.19]{yang4}. So assume stability at initialization from here on.
  \item Some simple calculations then verifies the characterization of faithfulness at initialization (\cref{lemma:faithfulconditionsinit}). So assume faithfulness at initialization from here on.%
  \footnote{Note this effectively fixes $d_l$ given $a_l, b_l$, so that the abcd-parametrization now has the same degrees of freedom as an abc-parametrization. This is a sanity check for why we can adapt most of the arguments from \cite{yang4}.}
  \item If $b_{L+1} > c_{L+1}$, then it's easy to see that we lose faithfulness after 1 step of update (because the input to $Q^l_t$ is $\omega(1)$).
  \item Otherwise $b_{L+1} \le c_{L+1}$. \label{b_{L+1}<=c_{L+1}}
  \begin{enumerate}
    \item First, assume that $r_l \ge 0$ for all $l \in [L+1]$ and $a_{L+1} + b_{L+1} + r \ge 1$. (i.e., we assume \cref{eqn:stable_faithful})
    \begin{enumerate}
      \item Then we can build a \nexort{} program computing the network evolution (which we do in \cref{sec:Program-Setup,sec:ProgramConstruction}). The Master Theorem yields its infinite-width limit, described in \cref{sec:infwidthlimit}, which will be obviously stable and faithful (\cref{lemma:sufficiency4stabfaith}).%
      \label{item:trainTP}
      \item If $r > 0$, then, by adapting the corresponding reasoning in \cite{yang4}, it's easy to derive all properties in \cref{thm:abcdclassification}(2), as well as the validity of \cref{thm:nontrivial} (assuming $r>0$).
      We derive the NT limit in \cref{sec:NTKProof} as an example.
      \item \label{item:r=0} If $r=0$, then the reasoning in \cite{yang4} cannot be straightforwardly adapted. In \cref{sec:r=0}, using a different method based on ``asymptotic relu-ness'' of the nonlinearity $\phi$ (\cref{assm:relulike_phi_Q_preserves_sign}), we prove all properties in \cref{thm:abcdclassification}(1,3,4), the validity of \cref{thm:nontrivial} (assuming $r=0$), and the fact that we are not in operator regime.
    \end{enumerate}
    \item Suppose $r_l <0$ for some $l \in [L]$, and $l^*$ is the smallest such $l$. Then the infinite-width limit derived above up to time 1, layer $l^*$ shows that $\Delta \xx^l_1 = \omega(1)$ for some choice of $\xxi$ and training routine, so we lose stability.
    \item Otherwise, if $a_{L+1} + b_{L+1} + r < 1$ or $r_{L+1} < 0$, then the infinite-width limit derived above up to time 1 shows that $\Delta \ff_1 = \omega(1)$, so we lose stability as well.
    \item Combining all caseworks, we 1) derive the nontriviality condition \cref{thm:nontrivial}; 2) see \cref{eqn:stable_faithful} is necessary for faithfulness and stability as well, proving \cref{thm:stablefaithful}; and 3) prove \cref{thm:abcdclassification}.
  \end{enumerate}
\end{enumerate}

\subsection{Program Setup}

\label{sec:Program-Setup}

In \cref{sec:Program-Setup,sec:ProgramConstruction,sec:infwidthlimit}, we implement \cref{item:trainTP}: we construct the Tensor Program that encodes the training of an $L$-hidden layer MLP under an abcd-parametrization satisfying \cref{eq:actlogitinit,eqn:faithful_init,{eqn:stable_faithful}} and take its infinite-width limit.
For the most part, they are straightforward adaptations of \cite[Sec.\ H.3-H.5]{yang4}; nevertheless, we show full details to demonstrate the usage of the new \nexort{} language and its Master Theorem (\cref{thm:MasterTheorem}).

In this section, we first describe the initial matrices, vectors, and scalars of the program, along with necessary notations.

For ease of presentation, we assume the input dimension $d=1$.
The general $d$ case is a trivial adaptation.

\subparagraph*{Initial Matrices, Vectors, Scalars}

\begin{enumerate}
\item Initial matrices: $W_{0}^{2},\ldots,W_{0}^{L}$, sampled like $(W_{0}^{l})_{\alpha\beta}\sim\Gaus(0,1/n)$.
\item Initial vectors: input layer matrix $W_{0}^{1}\in\R^{n\times1}$ and \emph{normalized }output layer matrix $\widehat{W}_{0}^{L+1}\defeq W_{0}^{L+1}n^{a_{L+1}+b_{L+1}}\in\R^{n \times 1}$, sampled like $(W_{0}^{1})_{\alpha},(\widehat{W}_{0}^{L+1})_{\alpha}\sim\Gaus(0,1)$.
\item Initial scalars: We define the following scalars (where we explain the intuition in parenthesis). The reader can skip this part on a first read but come back when referred to.
\begin{enumerate}
\item ($n$ times the scale of coordinates of $\Delta W_{t}^{l}$) For $l\ge2$, define 
\[
\theta_{W^{l}}\defeq n^{-r_l} = n^{1-c_l-a_l}
\]
\item (scale of coordinates of $\Delta W_{t}^{1}$ and $\Delta h_{t}^{1}$) Define
\[
\theta_{1}=\theta_{W^{1}}\defeq n^{-r_1} = n^{-c_1-a_1}
\]
\item (scale of coordinates of $\Delta W_{t}^{L+1}$)
\[
\theta_{L+1}=\theta_{W^{L+1}}\defeq n^{-a_{L+1}-c_{L+1}}
\]
\item (scale of $\Delta h_{t}^{l}$ and $\Delta x_{t}^{l}$) For $l\in[L]$, define
\begin{align*}
\theta_{h^{l}}=\theta_{x^{l}}=\theta_{l}&\defeq\max_{m\le l}\theta_{W^{m}}=\max(\theta_{W^{l}},\theta_{l-1})
= n^{-r_{\le l}}\numberthis\label{eq:thetal}
\end{align*}
Note that $\theta_{L}=n^{-r}$ with $r$ defined in \cref{def:r}.
\item (scale of $W_{t}^{L+1}$)
\begin{equation}
  \theta_{f}\defeq n^{-(a_{L+1}+b_{L+1})}  \label{eqn:thetaf}
\end{equation}

\item (convenience scalars)
\begin{align*}
\theta_{x^{l-1}/h^{l}} & =\theta_{x^{l-1}}/\theta_{h^{l}} = n^{r_{\le l} - r_{\le l-1} }\\
\theta_{W^{l}/h^{l}} & =\theta_{W^{l}}/\theta_{h^{l}} = n^{r_{\le l} -r_l}\\
\theta_{W^{l}x^{l-1}/h^{l}} & =\theta_{W^{l}}\theta_{x^{l-1}}/\theta_{h^{l}} = n^{r_{\le l} -r_l - r_{\le l-1}}\\
\theta_{L+1/f} & =\theta_{L+1}/\theta_{f} = n^{b_{L+1} - c_{L+1}}\\
\theta_{L+1}' & =n\theta_{L+1}=n^{1-a_{L+1}-c_{L+1}}\\
\theta_{Lf}' & =n\theta_{L}\theta_{f}=n^{1-(r+a_{L+1}+b_{L+1})}
\end{align*}
\item Depending on the the value of $a_{L+1}+b_{L+1}$, we will also construct the values of $f$ at initialization as initial scalars. See \cref{subsec:First-Forward-Pass} for an explanation.
\end{enumerate}
\end{enumerate}

\cref{eqn:stable_faithful} implies all of these $\theta$s either converge to 0 or stay constant at 1. This means that, assuming appropriate regularity conditions on the nonlinearities and rank stability, we can apply the Master Theorem (if $\theta$ blows up to $\infty$ then we can't do that).

\subparagraph*{Notations}
\newcommand{\fOuterNonlin}{\hyperref[eqn:outernonlin_order2]{\ensuremath{\texttt{Nonlin}^2}}}
\newcommand{\fNonlin}{\hyperref[eqn:outernonlin_order1]{\ensuremath{\texttt{Nonlin}^1}}}

We use $:=$ to more clearly denote assignment happening in the program, as opposed to mathematical equality. To clearly demonstrate the application of \refOuterNonlin{}, we will rewrite expressions in the form
\begin{equation*}
  y = \fOuterNonlin{}(\xx; \zz ; \cc)
\end{equation*}
for vector $y \in \R^n$, multi-vectors $\xx \in \R^{n \times k}, \zz \in \R^{n \times j}$, and multi-scalar $\cc \in \R^{l}$
if there exists a function $\psi: \R^{k + j +l} \to \R$ such that
\begin{equation}
  y_{\alpha} = \frac{1}{n}\sum_{\beta=1}^{n}\psi(\xx_{\alpha};\zz_{\beta};\cc).
  \label{eqn:outernonlin_order2}
\end{equation}
This is the order-2 form of \cref{eqn:outernonlin}.
We also write 
\begin{equation*}
  y = \fNonlin{}(\xx; \cc)
\end{equation*}
if there exists a function $\psi: \R^{k+l} \to \R$ such that 
\begin{equation}
  y_{\alpha} = \psi(\xx_{\alpha};\cc).
  \label{eqn:outernonlin_order1}
\end{equation}
This is the order-1 form of \cref{eqn:outernonlin}.
All usages of \refOuterNonlin{} in the program below will be through \fNonlin{} (majority of cases) and \fOuterNonlin{} (only when weight updates are involved).
This program will not use \refOuterNonlin{} order higher than 2.

\subparagraph*{Preview of Names for Vectors}

In the program, for each $\zz\in\{\xx^{l},\hh^{l}\}_{l}$, we will construct vectors $\del \zz_{t}$ to mathematically represent $\theta_{z}^{-1}(\zz_{t}-\zz_{t-1})$ (intuition: change in $z$ scaled to have $\Theta(1)$ coordinates). Similarly, for $w\in\{W^{L+1},W^{1}\}$, we will construct $\del w_{t}$ to mathematically represent $\theta_{w}^{-1}(w_{t}-w_{t-1})$ (intuition: change in $w$ scaled to have $\Theta(1)$ coordinates). Then, mathematically, $\zz_{t}=\zz_{t-1}+\theta_{z}\del \zz_{t},w_{t}=w_{t-1}+\theta_{w}\del w_{t}$.

We will also construct $d\zz$ to mathematically represent $\theta_{f}^{-1}\nabla_{\zz}\ff$ (intuition: gradient $\nabla_{\zz}\ff$ scaled to have $\Theta(1)$ coordinates). For weight changes, we have the following identity
\begin{align*}
  W_{t}^{l}-W_{t-1}^{l}
  &=-\eta \theta_{W^l} \frac 1 n Q^l_t(n^{d_l-a_l} \cchi_{t-1} \xx_{t-1}^{l-1\trsp})\\
  &=-\eta \theta_{W^l} \frac 1 n Q^l_t(\cchi_{t-1} \xx_{t-1}^{l-1\trsp})
  \numberthis \label{eq:delWL+1}
  \end{align*}
for $l=L+1$,
\begin{align*}
W_{t}^{l}-W_{t-1}^{l}
&=-\eta \theta_{W^l} \frac 1 n Q^l_t(n^{d_l-a_l}\theta_{f}d\hh_{t-1}^{l}\Diag(\cchi_{t-1})\xx_{t-1}^{l-1\trsp})\\
&=-\eta \theta_{W^l} \frac 1 n Q^l_t(d\hh_{t-1}^{l}\Diag(\cchi_{t-1})\xx_{t-1}^{l-1\trsp})
\numberthis \label{eq:delWl}
\end{align*}
for $l \in [2, L]$, and
for $l=1$,
\begin{align*}
W_{t}^{l}-W_{t-1}^{l}
&=-\eta \theta_{W^l} Q^l_t(n^{d_l-a_l} \theta_f d\hh_{t-1}^{l}\Diag(\cchi_{t-1})\xxi_{t-1}^{\trsp})\\
&=-\eta \theta_{W^l} Q^l_t(d\hh_{t-1}^{l}\Diag(\cchi_{t-1})\xxi_{t-1}^{\trsp}).
\numberthis\label{eq:delW1}
\end{align*}

Here, the 2nd equality of each block is due to our assumption of \cref{eq:actlogitinit,eqn:faithful_init,{eqn:stable_faithful}} (see discussion at the start of this section), which means $n^{d_l-a_l} \theta_f = 1$ for all $l \le L$ above and $n^{d_{L+1} - a_{L+1}} = 1$.

\subsection{Program Construction}
\label{sec:ProgramConstruction}

\newcommand\isp{\color{red}{\boldsymbol{\mathcal{X}}}}
\newcommand{\Ee}{\mathcal{E}}

Here we construct the \nexort{} Program encoding the training of an MLP. We separately describe the first forward and backward passes followed by the later forward and backward passes.

\subsubsection{First Forward Pass}

\label{subsec:First-Forward-Pass}

We compute $\hh^1_0 := W^1_0 \xxi$ via \fNonlin{} and then construct the following multi-vectors via \fNonlin{} and \refMatMul{} respectively:
\begin{align}
\xx_{0}^{l}:=\phi(\hh_{0}^{l})\in\R^{n \times \NN},\quad \hh_{0}^{l+1}:=W_{0}^{l+1}\xx_{0}^{l}\in\R^{n \times \NN},\quad\text{for \ensuremath{l=1,\ldots,L-1},}
\end{align}

\subparagraph*{Function Output}

The first output is $\ff_{0}=W_{0}^{L+1\trsp} \xx_{0}^{L} \in \R^{\NN}$, but we will define $\ff_{0}$ in the program slightly differently.

\paragraph{Case when \texorpdfstring{$a_{L+1}+b_{L+1}>1/2$}{a\_\{L+1\}+b\_\{L+1\}>1/2}}

Then $\ff_0\asto0$. In the program, we will construct $\ff_0$ as an \emph{initial (multi-)scalar} mathematically defined by $W_{0}^{L+1\trsp}\xx_{0}^{L}$.%
\footnote{We cannot define it using \refAvg{} + \refOuterNonlin{} because, intuitively, the mechanism of this convergence is through CLT, not Law of Large Numbers.}

\paragraph{Case when \texorpdfstring{$a_{L+1}+b_{L+1}=1/2$}{a\_\{L+1\}+b\_\{L+1\}=1/2}}

\newcommand{\SSigma}{\boldsymbol{\Sigma}}

If $a_{L+1}+b_{L+1}=1/2$, then $\ff_{0}$ converges to a nontrival
Gaussian via CLT \citep{yang}, so we will condition on $\ff_{0}$:
\begin{equation}
  \textit{Given values $\cc\in\R^{\NN}$, we will condition on the event $\Ee$ that $\ff_{0}=\frac{1}{\sqrt{n}}\widehat{W}_{0}^{L+1\trsp}\xx_{0}^{L}$
  equals $\cc$.}
  \label{eqn:conditionf0}
\end{equation}
The distribution of $\widehat{W}_{0}^{L+1}$ conditioned
on $\Ee$ is given by 
\[
\widehat{W}_{0}^{L+1}\disteq_{\Ee}\sqrt{n}\xx^{+\trsp}\cc+\Pi\widetilde{W}_{0}^{L+1}
\]
where $\xx$ is shorthand for $\xx_{0}^{L}$, $\widetilde{W}_{0}^{L+1}$
is an iid copy of $\widehat{W}_{0}^{L+1}$, and $\Pi$ is the orthogonal
projection into the orthogonal complement of the column space of $\xx$
(and $\bullet^{+\trsp}$ denotes the pseudo-inverse transpose as usual).

By standard formulas for pseudo-inverse and orthogonal projection,
we can write 
\[
\xx^{+\trsp}=\frac{1}{n}\xx\SSigma^{+},\quad\Pi=I-\frac{1}{n}\xx\SSigma^{+}\xx^{\trsp},\quad\text{where}\ \ \SSigma\defeq\xx^{\trsp}\xx/n \in \R^{\NN\times \NN}.
\]

If we further define $\ggamma\defeq\xx^{\trsp}\widetilde{W}_{0}^{L+1}/n$,
then 
\[
\Pi\widetilde{W}_{0}^{L+1}=\widetilde{W}_{0}^{L+1}-\xx\SSigma^{+}\ggamma\quad\text{and}\quad\sqrt{n}\xx^{+\trsp}\cc=\frac{1}{\sqrt{n}}\xx\SSigma^{+}\cc.
\]

By the Master Theorem, $\ggamma\asto0$ because $\widetilde{W}_{0}^{L+1}$
is independent from $\xx$, and $\SSigma\asto\mathring{\SSigma}$
for some PSD matrix $\mathring{\SSigma}$. At this point in the program,
all (multi-)scalars we used (like $\cc$) are constant with $n$ and
can be absorbed into nonlinearities. By the rank stability property
of any program without scalars \citep{yang3}, the rank of $\SSigma$
is fixed for large enough $n$, almost surely, so $\SSigma^{+}\asto\mathring{\SSigma}^{+}$
by the continuity of pseudo-inverse on fixed-rank matrices.
Therefore, we have
\[\frac{\cc}{\sqrt{n}},\ggamma\asto0 \implies \SSigma^{+}\frac{\cc}{\sqrt{n}},\SSigma^{+}\ggamma\asto0.\]

Thus, the mathematical conditioning done in \cref{eqn:conditionf0} is achieved programmatically as follows:
\begin{enumerate}
  \item We introduce $\widetilde{W}_{0}^{L+1}$ as an initial vector and $\cc$ (the value of $\ff_0$) as initial scalars
  \item We introduce $\SSigma$ as a multi-scalar via $\SSigma := \xx^{\trsp}\xx/n$ (\fNonlin{} followed by \refAvg{})
  \item We introduce $\ggamma$ as a multi-scalar via $\ggamma := \xx^{\trsp}\widetilde{W}_{0}^{L+1}/n$ (\fNonlin{} followed by \refAvg{})
  \item We replace $\widehat{W}_{0}^{L+1}$ (an initial vector)
  in the program with (the non-initial vector) 
  \[
  \widehat{W}_{\Ee}^{L+1}:=\xx \SSigma^{+}\frac{\cc}{\sqrt{n}}+\widetilde{W}_{0}^{L+1}-\xx \SSigma^{+}\ggamma
  \]
  constructed using \fNonlin{}$(\xx, \widetilde{W}_{0}^{L+1}; \SSigma, \ggamma, \cc, \theta_f=\frac{1}{\sqrt{n}})$.%
  \footnote{recall $\theta_f$ from \cref{eqn:thetaf}.}
\end{enumerate}
Since $\SSigma^{+}\frac{\cc}{\sqrt{n}},\SSigma^{+}\ggamma\asto0$ as above,
we have $\ket{\widehat{W}_{\Ee}^{L+1}}=\ket{\widetilde{W}_{0}^{L+1}}$.
Intuitively, this means that, even after conditioning on $\ff_{0}=\cc$,
the conditional distribution of $\widetilde{W}_{0}^{L+1}$ is practically
the same as the original distribution. We can then proceed exactly
as in the case when $a_{L+1}+b_{L+1}>1/2$, with $\widehat{W}_{\Ee}^{L+1}$
taking the role of $\widehat{W}_{0}^{L+1}$. The program then encodes
the evolution of $\ff$ \emph{conditioned on} $\ff_{0}=\cc$.\footnote{Formally, we can also have $\cc$ as initial scalars, but since they
are fixed with $n$, they can be absorbed into the Nonlin that defines
$\widehat{W}_{\Ee}^{L+1}$.} 

\begin{assm}\label{assm:noNNGP}
For the above reason, we will assume $a_{L+1}+b_{L+1}>1/2$, and remark whenever the case $a_{L+1}+b_{L+1}=1/2$ involves subtleties.
\end{assm}

\subsubsection{First Backward Pass}

Next, we write the backward pass
\begin{align*}
d\xx_{0}^{L} & :=\widehat{W}_{0}^{L+1}\otimes \boldsymbol 1_\NN & \text{\fNonlin{}}\\
d\hh_{0}^{l} & :=d\xx_{0}^{l}\odot\phi'(\hh_{0}^{l}) & \text{\fNonlin{}}\\
d\xx_{0}^{l-1} & :=W_{0}^{l\trsp}d\hh_{0}^{l} & \text{\refMatMul{}}
\end{align*}
where $\boldsymbol 1_\NN$ is the $\NN$-dimensional vector of all 1s, recall, $d\zz$ mathematically equals $\theta_{f}^{-1}\nabla_{\zz}f$.

The error signal at the output is expressed using \fNonlin{} followed by \refAvg{} as in \cref{lemma:transform_scalars}.%
\footnote{Here, $\mathring{\ff}_{0}=0$ if $a_{L+1}+b_{L+1}>1/2$; otherwise,
$\ff_{0}=\mathring{\ff}_{0}$ is the $\cc$ we conditioned on in \cref{eqn:conditionf0}.}
\[
\cchi_{0}:=\eps_{0}(\ff_{0}).
\]

Mirroring \cref{eq:delWL+1}, we also define 
\begin{align*}
  \del W_{1}^{L+1} &:=-\eta Q_{0}^{L+1}(\xx_{0}^{L}\cchi_{0})& \text{\fNonlin{}}  
\end{align*}
to represent the (normalized) change in $W^{L+1}$ due to the first
gradient step. 

\subsubsection{\texorpdfstring{$t$}{t'}th Forward Pass, \texorpdfstring{$t\ge1$}{t
>= 1}}

\subparagraph*{Overview}

We iteratively define $\del\zz_{t}$ to mathematically represent $\theta_{z}^{-1}(\zz_{t}-\zz_{t-1})$,
for $z\in\{x^{l},h^{l}\}_{l}$. Then we eventually set 
\begin{align*}
\zz_{t} & :=\zz_{0}+\theta_{z}\del\zz_{1}+\cdots+\theta_{z}\del\zz_{t}\\
  &= \fNonlin{}(\zz_0, \del \zz_1,\ldots, \del \zz_t; \theta_z)
\end{align*}
Likewise, we will define $\del W_{t}^{L+1}$ so that $W_{t}^{L+1}=\theta_{f}\widehat{W}_{0}^{L+1}+\theta_{L+1}(\del W_{1}^{L+1}+\cdots+\del W_{t}^{L+1})$.
In the program, we will not directly use $W_{t}^{L+1}$ but instead
use its normalized version
\begin{align}
\widehat{W}_{t}^{L+1}
  &:=\widehat{W}_{0}^{L+1}+\theta_{L+1/f}(\del W_{1}^{L+1}+\cdots+\del W_{t}^{L+1})\label{eq:hatWL+1}\\
  &=\fNonlin{}(\widehat{W}_{0}^{L+1}, \del W_{1}^{L+1},\ldots,\del W_{t}^{L+1}; \theta_{L+1/f})
\end{align}
where $\theta_{L+1/f}=\theta_{L+1}/\theta_{f}$. Mathematically, $\widehat{W}_{t}^{L+1}=\theta_{f}^{-1}W_{t}^{L+1}$.

\subparagraph*{The Construction of (Pre)Activations}

We start with $h=h^{1}$: By \cref{eq:delW1}, we have 
\[
\del\hh_{t}:=-\eta Q_{t}^{1}(d\hh_{t-1}\Diag(\cchi_{t-1})\xxi^{\trsp})\xxi=\fNonlin(d\hh_{t-1};\xxi,\eta,\cchi_{t-1}).
\]
For higher layers, if for brevity we write $h=h^{l}$, $x=x^{l-1}$, and $W=W^{l}$, then
$h=Wx$. By \cref{eq:delWl}, we have, mathematically, 
\begin{align*}
\theta_{h}\del\hh_{t} & =\theta_{x}W_{t-1}\del\xx_{t}+(W_{t}-W_{t-1})\xx_{t}\\
 & =\theta_{x}\left(W_{0}\del\xx_{t}+\sum_{s=1}^{t-1}(W_{s}-W_{s-1})\del\xx_{t}\right)+(W_{t}-W_{t-1})\xx_{t}\\
 & =\theta_{x}\left(W_{0}\del\xx_{t}-\eta\theta_{W}\sum_{s=1}^{t-1}Q_{s-1}^{l}(d\hh_{s-1}\Diag(\cchi_{s-1})\xx_{s-1}^{\trsp})\frac{\delta\xx_{t}}{n}\right)\\
 &\quad\quad-\eta\theta_{W}Q_{t-1}^{l}(d\hh_{t-1}\Diag(\cchi_{t-1})\xx_{t-1}^{\trsp})\frac{\xx_{t}}{n}.
\end{align*}
Recall $\theta_{x/h}=\theta_{h}^{-1}\theta_{x},\theta_{W/h}=\theta_{h}^{-1}\theta_{W},\theta_{Wx/h}=\theta_{h}^{-1}\theta_{W}\theta_{x}$.
We construct 
\begin{align*}
\del\hh_{t} & :=\theta_{x/h}W_{0}\del\xx_{t}-\eta\theta_{Wx/h}\sum_{s=1}^{t-1}Q_{s-1}^{l}(d\hh_{s-1}\Diag(\cchi_{s-1})\xx_{s-1}^{\trsp})\frac{\delta\xx_{t}}{n}\\
&\quad\quad -\eta\theta_{W/h}Q_{t-1}^{l}(d\hh_{t-1}\Diag(\cchi_{t-1})\xx_{t-1}^{\trsp})\frac{\xx_{t}}{n}\\
 & =\fOuterNonlin{}(\refMatMul(W_{0},\del\xx_{t}),d\hh_{0},\ldots,d\hh_{t-1};\\
 &\phantom{=\fOuterNonlin{}(}\xx_{0},\ldots,\xx_{t-1},\delta\xx_{t};\\
 &\phantom{=\fOuterNonlin{}(}\eta,\theta_{x/h},\theta_{Wx/h},\theta_{W/h},\cchi_{0},\ldots,\cchi_{t-1})
\end{align*}
If $x=x^{l}$, $h=h^{l}$, then $x=\phi(h)$, and (using $\theta_{x}=\theta_{h}$
(\cref{eq:thetal})), 
\begin{align}
\del\xx_{t} & :=\theta_{h}^{-1}(\phi(\hh_{t-1}+\theta_{h}\del\hh_{t})-\phi(\hh_{t-1}))\nonumber \\
 & = \fNonlin(\hh_{t-1},\del\hh_{t};\theta_{h})\label{eq:delx}
\end{align}
where the function in \fNonlin{} is precisely the difference
quotient for the function $\phi$.\footnote{The pseudo-Lipschitzness of $\phi'$ assumed in \cref{assm:relulike_phi_Q_preserves_sign}
implies that the nonlinearity (the difference quotient function) represented by \fNonlin{} here is pseudo-Lipschitz, so that we can ultimately
apply our Master Theorem.}

\subparagraph*{The Function Outputs}

We do not construct $\ff_{t}$ directly, but rather through scalars
$\del\ff_{t}=\ff_{t}-\ff_{t-1}$, so that
\begin{align*}
  \ff_{t}&:=\ff_{0}+\del\ff_{1}+\cdots+\del\ff_{t}.& \text{\fNonlin{}}  
\end{align*}

Mathematically, $\del\ff_{t}=\theta_{L+1}\del W_{t}^{L+1\trsp}\xx_{t}^{L}+W_{t-1}^{L+1\trsp}\theta_{L}\del\xx_{t}^{L}$,
but we shall write it slightly differently in the program:
\begin{align*}
\del\ff_{t}&:=\theta_{L+1}'\frac{\del W_{t}^{L+1\trsp}\xx_{t}^{L}}{n}+\theta_{Lf}'\frac{\widehat{W}_{t-1}^{L+1\trsp}\del\xx_{t}^{L}}{n}\\
  &= \refAvg(\fNonlin{}(\del W_{t}^{L+1}, \xx_{t}^{L}, \widehat{W}_{t-1}^{L+1}, \del\xx_{t}^{L}; \theta_{L+1}', \theta_{Lf}'))
\end{align*}
where $\theta_{L+1}'=n\theta_{L+1},\theta_{Lf}'=n\theta_{L}\theta_{f}$
and $\widehat{W}_{t-1}^{L+1}$ is constructed in \cref{eq:hatWL+1}.

\subsubsection{\texorpdfstring{$t$}{t'}th Backward Pass, \texorpdfstring{$t\ge1$}{t
>= 1}}

In the last layer, we construct 
\begin{align*}
  d\xx_{t}^{L}&:=\widehat{W}_{t}^{L+1}\otimes \boldsymbol 1_N.  &\text{\fNonlin{}}
\end{align*}
(i.e., outer product between the vector $\widehat{W}_{t}^{L+1}$ and the vector $\boldsymbol 1_N$).

For each $l=L,\ldots,1$ for $dh^{l}$ and $l=L,\ldots,2$ for $dx^{l-1}$,
we also calculate 
\begin{align*}
d\hh_{t}^{l} & :=d\xx_{t}^{l}\odot\phi'(\hh_{t}^{l}) = \fNonlin{}(d\xx_t^l, \hh^l_t)\\
d\xx_{t}^{l-1} & :=W_{0}^{l\trsp}d\hh_{t}^{l}-\eta\theta_{W^{l}}\sum_{s=0}^{t-1}Q_{s}^{l}(\xx_{s}^{l-1}\Diag(\cchi_{s})d\hh_{s}^{l\trsp})d\hh_{t}^{l}/n\\
 & =\fOuterNonlin(\refMatMul(W_{0}^{l\trsp},d\hh_{t}^{l}),\xx_{0}^{l-1},\ldots,\xx_{t-1}^{l-1};\\
 &\phantom{=\fOuterNonlin(}d\hh_{0}^{l},\ldots,d\hh_{t-1}^{l},d\hh_{t}^{l};\\
 &\phantom{=\fOuterNonlin(}\eta, \theta_{W^{l}},\cchi_{0},\ldots,\cchi_{t-1})
\end{align*}
Using \cref{lemma:transform_scalars}, 
we define the error signal
\[
\cchi_{t}:=\eps_{t}(\ff_{t}) = \refAvg(\fNonlin{}(;\ff_t)).
\]
Finally, we compute the (normalized) change in $W^{L+1}$ after this update as in \cref{eq:delWL+1}.
\begin{align*}
  \del W_{t+1}^{L+1}&:=-\eta Q_{t}^{L+1}(\xx_{t}^{L}\cchi_{t}) = \fNonlin{}(\xx^L_t; \cchi_t, \eta)
\end{align*}

\subsection{The Infinite-Width Limit}

\label{sec:infwidthlimit}

In this section, we describe the kets (\cref{defn:ket})
corresponding to the vectors of the program constructed above. According
to the Master Theorem, each such vector $z$ will have roughly iid
coordinates distributed like $\ket z$ in the large $n$ limit.

Let $\mathring{\theta}_{\bullet}$ denote the limit of any $\theta_{\bullet}$
in \cref{sec:Program-Setup}. If pseudostability holds, then $\mathring{\theta}_{\bullet}$
is either 0 or 1, as one can easily verify. We can construct the kets
for each vector in the program, as follows. 
\begin{enumerate}
\item For the first forward and backward passes, we have, 
\begin{align*}
\ket{\hh_{0}^{1}} & =\ket{W_{0}^{1}}\xxi, & \ket{\xx_{0}^{l}} & =\phi(\ket{\hh_{0}^{l}}), & \ket{\hh_{0}^{l+1}} & =\ket{W_{0}^{l+1}\xx_{0}^{l}},\\
\ket{d\xx_{0}^{L}} & =\ket{\widehat{W}_{0}^{L+1}}\otimes\boldsymbol{1}_{\NN}, & \ket{d\hh_{0}^{l}} & =\ket{d\xx_{0}^{l}}\odot\phi'(\ket{\hh_{0}^{l}}), & \ket{d\xx_{0}^{l-1}} & =\ket{W_{0}^{l\trsp}d\hh_{0}^{l}}
\end{align*}
\item For $z\in\{x^{l},h^{l}\}_{l}$, we have 
\begin{equation}
\ket{\zz_{t}}=\ket{\zz_{0}}+\mathring{\theta}_{z}\ket{\del\zz_{1}}+\cdots+\mathring{\theta}_{z}\ket{\del\zz_{t}}\label{eq:Zzt}
\end{equation}
where $\mathring \theta_{x^l} = \mathring \theta_{h^l} = \ind(r_{\le l} = 0)$.
\item For $l\in[L],x=x^{l},h=h^{l}$, we have $\ket {\del\xx_{t}}=\Psi(\ket{\hh_{t-1}},\ket{\del\hh_{t}};\mathring{\theta}_{h})$
where $\Psi$ is the nonlinearity represented by $\fNonlin{}$ in \cref{eq:delx}. If $\mathring{\theta}_{h}=0$
(e.g. if $r_{\le l} >0$), then 
\begin{equation}
\ket{\del\xx_{t}}=\phi'(\ket{\hh_{t-1}})\odot\ket{\del\hh_{t}}.\label{eq:ZdelxWhenr>0}
\end{equation}
Otherwise, $\mathring{\theta}_{h}=1$ (e.g. if $r_{\le l} = 0$), and 
\begin{equation}
\ket{\del\xx_{t}}=\phi(\ket{\hh_{t}})-\phi(\ket{\hh_{t-1}}).\label{eq:ZdelxWhenr=00003D00003D0}
\end{equation}
\item For $h=h^{1}$, we have 
\begin{equation}
\ket{\del\hh_{t}}=-\eta \overline{\ket{d\hh_{t-1}}_{\mathring{\cchi}_{t-1}}\xxi^{\trsp}}  \xxi.
\label{eqn:h1}
\end{equation}
where the bar notation abbreviates $Q^1_{t-1}$.
\item For $l\ge2,h=h^{l},x=x^{l-1},W=W^{l}$, we have 
\begin{align}
\ket{\del\hh_{t}} & =\mathring{\theta}_{x/h}\ket{W_{0}\del \xx_{t}}-\eta\mathring{\theta}_{Wx/h}\sum_{s=0}^{t-1}\Qketdbra{d\hh_{s}}{\mathring{\cchi}_{s}}{\xx_{s}}{\delta\xx_{t}}\ra\nonumber \\
 & \phantomeq\quad-\eta\mathring{\theta}_{W/h}\Qketdbra{d\hh_{t-1}}{\mathring{\cchi}_{t-1}}{\xx_{t-1}}{\xx_{t}}\ra. \numberthis\label{eq:Zdelh}
\end{align}
Here, as is everywhere else, the bar notation abbreviates $Q^l_t$ where $l,t$ are the same as the super/subscripts in $\ket{d\hh^l_t}$ or $\bra{d\hh^l_t}$ contained inside.
Note at least one of $\mathring{\theta}_{x/h}$ and $\mathring{\theta}_{W/h}$
equals 1; the exact formulas are
\begin{align*}
  \mathring{\theta}_{x/h}&=\ind(r_l \ge r_{\le l-1})\\
  \mathring{\theta}_{W/h} &= \ind(r_l \le r_{\le l-1})\\
  \mathring\theta_{Wx/h} &= \ind(r_l=0 \And r_{\le l-1} = 0).
\end{align*}
As usual, we can decompose  $\ket{W_{0}\del\xx_{t}}$ by \cref{defn:ket}. 
\begin{align*}
\ket{W_{0}\del\xx_{t}} & =\hatket{W_{0}\del\xx_{t}}+\dotket{W_{0}\del\xx_{t}}\\
 & =\hatket{W_{0}\del\xx_{t}}+\ket{d\hh_{< t}}\braket{\nabla_{d\hh_{< t}}}{\delta\xx_{t}}.
\end{align*}
where $d\hh_{< t}$ is the multi-vector $(d\hh_0, \ldots, d\hh_{t-1})$.
Here we simplified $\dotket{W_{0}\del\xx_{t}}$ because $\ket{\delta \xx_t}$ only depends on $d\hh_{< t}$ among previous vectors.
\item For last layer weight 
\begin{equation}
\ket{\del W_{t}^{L+1}}=-\eta \overline{\ket{\xx_{t-1}^{L}}\mathring{\cchi}_{t-1}}\label{eq:ZdelWlast}
\end{equation}
(where the bar notation abbreviates $Q_{t-1}^{L+1}$) and 
\begin{equation}
\ket{\widehat{W}_{t}^{L+1}}=\ket{\widehat{W}_{0}^{L+1}}+\mathring{\theta}_{L+1/f}(\ket{\del W_{1}^{L+1}}+\cdots+\ket{\del W_{t}^{L+1}})\label{eq:ZhatW}
\end{equation}
where $\mathring \theta_{L+1/f} = \ind(b_{L+1} = c_{L+1})$.%
\footnote{This equation here assumes $b_{L+1} \le c_{L+1}$ as in \cref{b_{L+1}<=c_{L+1}}.}
\item The output deltas have limits 
\begin{equation}
\del\mathring{\ff}_{t}=\mathring{\theta}_{L+1}'\braket{\del W_{t}^{L+1}}{\xx_{t}^{L}}+\mathring{\theta}_{Lf}'\braket{\widehat{W}_{t-1}^{L+1}}{\del\xx_{t}^{L}}\label{eq:delflimit}
\end{equation}
where $\mathring{\theta}_{L+1}' = \ind(a_{L+1}+c_{L+1}=1)$ and $\mathring \theta_{Lf}' = \ind(r+a_{L+1}+b_{L+1}=1)$,
and 
\[
\mathring{\ff}_{t}=\mathring \ff_0 + \del\mathring{\ff}_{1}+\cdots+\del\mathring{\ff}_{t},
\]
where $\mathring \ff_0 = 0$ if $a_{L+1} + b_{L+1} > 1/2$ (\cref{assm:noNNGP}); otherwise ($a_{L+1} + b_{L+1} = 1/2$), $\mathring \ff_0$ equals the value (from the initial NNGP) we conditioned on (as specified in the discussion above \cref{assm:noNNGP}).
\item For gradients: 
\begin{align*}
\ket{d\xx_{t}^{L}} & =\ket{\widehat{W}_{t}^{L+1}}\otimes\boldsymbol{1}_{\NN}\\
\ket{d\hh_{t}^{l}} & =\ket{d\xx_{t}^{l}}\odot\phi'(\ket{\hh_{t}^{l}})\\
\ket{d\xx_{t}^{l-1}} & =\ket{W_{0}^{l\trsp}d\hh_{t}^{l}}-\eta\mathring{\theta}_{W^{l}}\sum_{s=0}^{t-1}\Qketdbra{\xx_{s}^{l-1}}{\mathring{\cchi}_{s}}{d\hh_{s}^{l}}{d\hh_{t}^{l}}\ra\numberthis \label{eqn:dx}
\end{align*}
where $\mathring \theta_{W^l} = \ind(r_l = 0)$.
\item Error signal
\[
\mathring{\cchi}_{t}=\eps_{t}(\mathring{\ff}_{t}).
\]
\end{enumerate}

From this description of the infinite-width limit, it's clear that the resulting dynamics is both faithful and stable.
\begin{lemma}\label{lemma:sufficiency4stabfaith}
  \cref{eq:actlogitinit,eqn:faithful_init,{eqn:stable_faithful}} imply stability and faithfulness.
\end{lemma}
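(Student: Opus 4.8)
The plan is to read the conclusion directly off the explicit description of the infinite-width limit given in \cref{sec:infwidthlimit}, via the \nexort{} Master Theorem. First I would check that, under the hypotheses \cref{eq:actlogitinit,eqn:faithful_init,eqn:stable_faithful}, the program of \cref{sec:ProgramConstruction} is legitimate and \cref{thm:MasterTheorem} applies: the bookkeeping scalars of \cref{sec:Program-Setup} each converge to $0$ or to $1$ (a routine computation from \cref{def:r}, already noted there, which is exactly where \cref{eqn:stable_faithful} is used), every nonlinearity fed into \fNonlin{}/\fOuterNonlin{}---the difference quotients of $\phi$, the applications of $Q^l_t$ to products of coordinates, and the linear combinations weighted by the bounded $\theta$'s---is pseudo-Lipschitz by \cref{assm:MLPsmooth}, and by \cref{assm:noNNGP} we are in the case $a_{L+1}+b_{L+1}>1/2$ so the conditioning construction and its rank-stability subtlety do not arise. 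Hence \cref{thm:MasterTheorem} gives $\tfrac1n\|z\|^2\asto\braket z z$ for every vector $z$ and $c\asto\mathring c$ for every scalar, which reduces every entry-size assertion in \cref{defn:stability,defn:faithful} to finiteness (for the $O(1)$ claims) and positivity (for the $\Theta(1)$ claims) of limiting inner products of kets.

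Stability and faithfulness \emph{at initialization} are then immediate: they are \cref{lemma:stabilityconditionsinit} applied with hypothesis \cref{eq:actlogitinit} and \cref{lemma:faithfulconditionsinit} applied with hypothesis \cref{eqn:faithful_init}. For stability \emph{during training}, the key step is an induction on $(t,l)$ showing that every ket appearing in \cref{sec:infwidthlimit}---the $\ket{\del\hh^l_s}$, $\ket{\del\xx^l_s}$, $\ket{d\hh^l_s}$, $\ket{d\xx^l_s}$, $\ket{\del W^{L+1}_s}$, $\ket{\widehat W^{L+1}_s}$, and the limit scalars $\mathring\cchi_s$, $\del\mathring\ff_s$---is genuinely square-integrable, i.e.\ has finite second moment. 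This propagates through the defining equations \cref{eq:Zzt,eq:ZdelxWhenr>0,eqn:h1,eq:Zdelh,eq:ZdelWlast,eq:ZhatW,eqn:dx,eq:delflimit} by the standard polynomial-moment argument for pseudo-Lipschitz functions, exactly as in the earlier Tensor Programs papers. Granting this, $\Delta\xx^l_t$ has ket $\mathring\theta_{x^l}\big(\ket{\del\xx_1}+\cdots+\ket{\del\xx_t}\big)$ with $\mathring\theta_{x^l}\in\{0,1\}$, so $\tfrac1n\|\Delta\xx^l_t\|^2$ converges to a finite constant and $\Delta\xx^l_t=O(1)$; the same holds for $\Delta\hh^l_t$, and $\Delta\ff_t=\del\ff_1+\cdots+\del\ff_t\asto\del\mathring\ff_1+\cdots+\del\mathring\ff_t$ is finite, giving $\Delta\ff_t=O(1)$. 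This is \cref{defn:stability}(2).

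For faithfulness at an arbitrary time $t$, I would use that the $\theta$-bookkeeping of \cref{sec:Program-Setup} was arranged so that $n^{d_l-a_l}\theta_f=1$ for $l\le L$ and $n^{d_{L+1}-a_{L+1}}=1$---identities that hold precisely because of \cref{eq:actlogitinit,eqn:faithful_init}---so that the argument of $Q^l_t$ in the update \cref{eq:delWl} (and in the analogous input- and output-layer updates) carries no power of $n$ in front: it is exactly $d\hh^l_{t-1}\Diag(\cchi_{t-1})\xx_{t-1}^{l-1\trsp}$, resp.\ $\xx^L_{t-1}\cchi_{t-1}$. Its entrywise second moment then converges, by \cref{thm:MasterTheorem}, to a limit built from $\braket{d\hh^l_{t-1}}{d\hh^l_{t-1}}$ and $\braket{\xx^{l-1}_{t-1}}{\xx^{l-1}_{t-1}}$, which is finite by the previous paragraph and strictly positive for a suitable training routine (the forward kets $\ket{\xx^{l-1}}$ are nondegenerate since $\phi$ is not identically zero, and $\ket{d\hh^l}$ is nondegenerate since $\phi'\ge 0$ does not vanish on the support of the Gaussian $\ket{\hh^l}$). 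Hence the input to $Q^l_t$ is $\Theta(1)$ for every $l$ and $t$, which is \cref{defn:faithful}, completing \cref{lemma:sufficiency4stabfaith}.

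The only genuinely non-mechanical piece here is the square-integrability induction of the second paragraph: one must verify that the iterated applications of $Q^l_t$---which enter through order-$2$ \refOuterNonlin{} instructions, i.e.\ as nonlinear outer products contracted against $\del\xx_t$ in \cref{eq:Zdelh,eqn:dx}---do not blow up second moments. This is where pseudo-Lipschitzness of $Q^l_t$, $\phi$, and $\phi'$ is essential, and it is the step that does not reduce to pure bookkeeping; everything else is the direct ket-translation of the finite-$n$ tensor operations afforded by the Master Theorem.
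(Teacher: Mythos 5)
Your proposal is correct and follows essentially the same route as the paper, which constructs the training program (\cref{sec:Program-Setup,sec:ProgramConstruction}), verifies via \cref{eqn:stable_faithful} that all the bookkeeping scalars $\theta_\bullet$ converge to $0$ or $1$, and then reads stability and faithfulness directly off the ket description of \cref{sec:infwidthlimit} via the Master Theorem; the paper leaves this reading-off implicit ("it's clear from this description"), and your write-up supplies exactly the missing details — the square-integrability induction, the $O(1)$ bounds on $\Delta\hh^l_t,\Delta\xx^l_t,\Delta\ff_t$, and the $\Theta(1)$ bound on the $Q^l_t$ inputs coming from the identities $n^{d_l-a_l}\theta_f=1$ and $n^{d_{L+1}-a_{L+1}}=1$.
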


\subsection{\texorpdfstring{$r=0$}{r=0} Implies Feature Learning}
\label{sec:r=0}

In this section, we implement \cref{item:r=0} of the proof outline.

\begin{thm}\label{thm:r=0impliesFL}
  Adopt \cref{assm:relulike_phi_Q_preserves_sign}.
  Consider a parametrization satisfying \cref{eq:actlogitinit,{eqn:faithful_init},{eqn:stable_faithful},{eqn:nontrivial}}.%
  \footnote{For reader's convenience, these equations are resp.\ the proposed conditions for stability at init, faithful at init, stable and faithful throughout training, and nontriviality.}

  If $r=0$, then the following are true of this parametrization:
  \begin{enumerate}
    \item not in the operator regime
    \item feature learning
    \item feature learning in the $L$th layer
    \item feature kernels evolution
    \item feature kernel evolution in the $L$th layer
    \item prefeature learning
    \item prefeature learning in the $L$th layer
    \item prefeature kernels evolution
    \item prefeature kernel evolution in the $L$th layer
    \item if there is feature learning \emph{or} feature kernel evolution \emph{or} prefeature learning \emph{or} prefeature kernel evolution in layer $l$, then there is feature learning \emph{and} feature kernel evolution \emph{and} prefeature learning \emph{and} prefeature kernel evolution in layers $l,\ldots,L$.
    \end{enumerate}
\end{thm}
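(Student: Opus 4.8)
My plan is to reduce the whole statement to one sharp claim about the first gradient step and then propagate it upward through the layers. Since $r=0$ and $r_l\ge 0$ for all $l$, there is a smallest index $l^\ast\in[L]$ with $r_{l^\ast}=0$; equivalently $r_{\le l}=0$ exactly for $l\ge l^\ast$, so in the infinite-width dynamics of \cref{sec:infwidthlimit} every layer below $l^\ast$ is frozen ($\mathring\theta_{h^l}=\mathring\theta_{x^l}=0$ for $l<l^\ast$) while $\mathring\theta_{h^{l^\ast}}=\mathring\theta_{x^{l^\ast}}=1$. I would first show (\textbf{Step 1}) that, for a suitable training routine, the $l^\ast$-th layer (pre)feature and (pre)feature kernel have already moved after one step; then (\textbf{Step 2}) propagate this to layers $l^\ast+1,\dots,L$ using that a \emph{fresh} Gaussian matrix applied to a moved vector produces a moved prefeature; items (2)--(9) (in the $L$-th layer, hence ``all'') and item (10) (the same statement with a general starting layer $l$) then follow. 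Finally (\textbf{Step 3}) I would deduce item (1) from the fact that feature learning breaks the ``function-space picture'', contradicting the existence of the universal operator $\calK$ demanded by the operator regime.

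\paragraph{Step 1: one-step kernel evolution in layer $l^\ast$ --- the crux.}
Fix an input $\xi^b\ne 0$ and run the training routine whose error signal at $t=0$ is $1$ on $\xi^b$ and $0$ elsewhere, with learning rate $\eta>0$ to be taken large (stability holds for each fixed $\eta$, so $\eta$ may be an arbitrarily large constant). Because all layers below $l^\ast$ are frozen, the limit equations \cref{eqn:h1}/\cref{eq:Zdelh} at $t=1$ collapse to an \emph{explicit} formula $\ket{\del\hh^{l^\ast}_1}=-\eta\,Y$, where $Y$ is an $\eta$-independent random $\R^\NN$-vector obtained by applying $Q^{l^\ast}_0$ to an outer product built from $\ket{d\hh^{l^\ast}_0}=\phi'(\ket{\hh^{l^\ast}_0})\odot\ket{d\xx^{l^\ast}_0}$, $\ket{\xx^{l^\ast-1}_0}$ and $\xi^b$ (for $l^\ast=1$ this is directly \cref{eqn:h1}; for $l^\ast>1$ the $W_0$-term and the history sum vanish since $\del\xx^{l^\ast-1}_s\equiv 0$). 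Let $Y_b$ be its $\xi^b$-entry. Using that $Q^{l^\ast}_0$ preserves sign (hence $Q^{l^\ast}_0(u)\ne 0$ whenever $u\ne 0$), that $\ket{d\xx^{l^\ast}_0}(\xi^b)$ is a nondegenerate \emph{symmetric} Gaussian independent of the $\ket{\hh^l_0}$'s, and that $\phi'(\ket{h^{l^\ast}_0(\xi^b)})>0$ with positive probability because $\{\phi'>0\}$ is open and nonempty --- here \cref{assm:relulike_phi_Q_preserves_sign}'s asymptotic relu-ness forces $\phi$ to be nonconstant and unbounded --- I get $\Pr(Y_b\ne 0)>0$ and, by the symmetry, $\Pr(Y_b<0)>0$. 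On the event $\{Y_b<0\}$ we have $\ket{h^{l^\ast}_1(\xi^b)}=\ket{h^{l^\ast}_0(\xi^b)}+\eta|Y_b|\to+\infty$ as $\eta\to\infty$, so $\phi(\ket{h^{l^\ast}_1(\xi^b)})^2\to\infty$ there (again by unboundedness and monotonicity of $\phi$), whereas the $\{Y_b\ge 0\}$ contribution stays bounded. Hence $\braket{\xx^{l^\ast}_1}{\xx^{l^\ast}_1}$, in its $(b,b)$ entry $\EV\phi(\ket{h^{l^\ast}_1(\xi^b)})^2$, diverges with $\eta$, so it differs from its initial value $\EV\phi(\ket{h^{l^\ast}_0(\xi^b)})^2$ once $\eta$ is a sufficiently large constant; along the way $\ket{\del\hh^{l^\ast}_1}\ne 0$ and $\ket{\del\xx^{l^\ast}_1}\ne 0$, giving feature learning and prefeature learning as well. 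I expect this step to be the main obstacle, because the danger is exact cancellation --- for instance when $Q^{l^\ast}_0$ is a bounded, non-homogeneous sign-like map, so that $\ket{\del\hh^{l^\ast}_1}$ cannot be amplified through $Q$ itself --- and it is precisely the combination ``$\eta$ is an explicit prefactor of the update'' $+$ ``$Q^{l^\ast}_0$ sign-preserving'' $+$ ``$\phi$ asymptotically a power of $\relu$'' that makes $\braket{\xx^{l^\ast}_1}{\xx^{l^\ast}_1}_{bb}$ a genuinely diverging, hence non-constant, quantity and rules cancellation out.

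\paragraph{Steps 2 and 3: propagation and the operator regime.}
For propagation, suppose a training routine has moved $\braket{\xx^{l}_t}{\xx^{l}_t}$ away from $\braket{\xx^{l}_0}{\xx^{l}_0}$ for some $l\ge l^\ast$. In \cref{eq:Zdelh} at layer $l+1$ one has $\mathring\theta_{x^l/h^{l+1}}=1$ (since $r_{l+1}\ge 0=r_{\le l}$), so $\ket{\hh^{l+1}_t}$ contains the hat-ket $\hatket{W^{l+1}_0\xx^l_t}$ of a \emph{fresh} Gaussian matrix, whose covariance is $\braket{\xx^l_t}{\xx^l_t}$; reusing the $\eta\to\infty$ amplification (now tuned so that this layer's contribution dominates), together with the $\relu^\delta$ asymptotics and sign-preservation to control the competing dot-ket and $Q^{l+1}_0$ terms exactly as in Step 1, gives that $\braket{\hh^{l+1}_t}{\hh^{l+1}_t}$ diverges in its diagonal, and applying the nondecreasing, positivity-preserving $\phi$ pushes this to $\xx^{l+1}$. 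Iterating up to $l=L$ yields all of items (2)--(9) and (10). For item (1): if the parametrization were in the operator regime, $\mathring\ff_{t+1}-\mathring\ff_t$ would be a fixed function $\calK$ (resp.\ $\calK_t$) of $\mathring\ff_0,\dots,\mathring\ff_t$, valid for \emph{every} training routine; but, exploiting the freedom in the error-signal schedule, one can exhibit two training routines that agree on $\mathring\ff_0,\dots,\mathring\ff_t$ yet reach distinct internal states $\xx^L_t$ (possible exactly because of the feature learning just established --- e.g.\ match $\del\mathring\ff_1$ while perturbing along different input directions, then freeze $\mathring\ff$ until step $t$), and therefore distinct $\mathring\ff_{t+1}$ via \cref{eq:delflimit}, contradicting the universality of $\calK$; this is the mechanism of \cite[Remark 3.11]{yang4}. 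The memoryless-nonstationary and memoryful cases are handled verbatim, reading the bars as the appropriate $Q^l_t$ and carrying the extra history arguments, since only $Q^{l^\ast}_0$'s sign-preservation and $\phi$'s asymptotic relu-ness were used.
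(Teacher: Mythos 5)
Your Steps 1 and 2 follow essentially the same route as the paper's proof: drive $\eta\to\infty$, use the asymptotic $\relu^\delta$ behavior of $\phi$ (\cref{assm:relulike_phi_Q_preserves_sign}(3)) together with the sign-preservation of $Q^l_0$ and the positivity of $\phi,\phi'$ to rule out cancellation, conclude that $\braket{\hh^l_1}{\hh^l_1}$ and $\braket{\xx^l_1}{\xx^l_1}$ diverge in $\eta$, and propagate layer by layer (the paper formalizes the bookkeeping with the rescaled kets $\ket{\tilde h^l_1},\ket{\tilde x^l_1}$ and exponents $e_l=s_\sigma$, and with \cref{lem:preserve_positivity,lem:Qupdate_sign} playing the role of your non-degeneracy claims). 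Your probabilistic symmetry argument for $\Pr(Y_b<0)>0$ is a workable substitute at the base layer for the paper's choice $\mathring\chi_0=-1$, though for the upward induction you will need the deterministic sign control ($\ket{\tilde x^l_1}\ge 0$, $\braket{\tilde h^l_1}{h^l_0}\ge 0$) that the paper's lemmas supply.

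Step 3 is where you genuinely depart from the paper, and where the gap is. The paper rules out the operator regime with a \emph{one-step, single-routine} argument: it computes that $\mathring f_1$ scales like $\eta^{e_L\delta+1}$ (last-layer update maximal) or $\eta^{e_L\delta}$ (last-layer initialization maximal), whereas \cref{eqn:kernelupdate} forces $\mathring\ff_{t+1}-\mathring\ff_t$ to be exactly linear in $\eta$; in the boundary case $e_L\delta=1$ it checks $\lim_{\eta\to\infty}\eta^{-1}\mathring f_1\ne\partial_\eta\mathring f_1|_{\eta=0}$. This output-scaling computation is a short continuation of your Steps 1--2. Your two-routine construction (the mechanism of \cite[Remark 3.11]{yang4}) instead requires you to (a) produce two routines whose function values \emph{and} error signals agree through step $t$ while their internal features differ, and (b) prove that the differing features actually yield different $\mathring\ff_{t+1}$ via \cref{eq:delflimit} --- a second non-degeneracy argument of the same difficulty as your Step 1, which your parenthetical ``match $\del\mathring\ff_1$ while perturbing along different input directions, then freeze'' does not supply. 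As written, item (1) is therefore not established; I would replace Step 3 with the $\eta$-nonlinearity argument, which you get almost for free from the machinery you have already set up.
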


\begin{proof}
  WLOG, assume $\NN=1$ where the sole input is nonzero; our construction will work obviously for general $\NN$ by masking the error signal $\eps_t$.
  Correspondingly, we use notation $h^l$ instead of $\hh^l$, etc.

  \paragraph{Outline}
  We will show that as learning rate $\eta \to \infty$, $\braket{h^l_1}{h^l_1}, \braket{x^l_1}{x^l_1} \to \infty$, so that $\braket{h^l_1}{h^l_1}\ne \braket {h^l_0}{h^l_0}$ and $\braket{x^l_1}{x^l_1} \ne \braket {x^l_0}{x^l_0}$ for sufficiently large $\eta$.
  This would imply (pre)feature kernel evolution and (pre)feature learning of the parametrization.
  In addition, we will show that $\mathring f_1$ (output of function after 1 step of update) asympotically grows like $\eta^s$ for some $s>0$.
  If $s\ne 1$, then the dynamics cannot satisfy the the operator equation \cref{eqn:kernelupdate}, which is linear in $\eta$.
  If $s=1$, then we can calculate that $\lim_{\eta \to \infty} \eta^{-1} \mathring f_1 \ne \partial_\eta \mathring f_1 |_{\eta=0}$, so that the update is not perfectly linear in $\eta$, violating \cref{eqn:kernelupdate}, so we are not in the operator regime.
  The other claims will also naturally follow over the course of the proof.

  The general reasoning is that, when $\eta$ is large, by the asymptotic homogeneity of $\phi$ (\cref{assm:relulike_phi_Q_preserves_sign}(3)), $\ket{ h^l_1}$ should roughly be of order $\eta^{1+\delta+\cdots+\delta^j}$ where $j$ is the number of $\ell \le l$ with $r_l = 0$, and the order of $\ket{x^l_1}$ is the $\delta$th power of that.
  This would show $\braket{x^l_1}{x^l_1} \to \infty$.
  Some further calculation shows that $\mathring f_1$ is of the same order as $\ket{x^l_1}$ or $\eta$ more than that.
  This would violate the operator equation \cref{eqn:kernelupdate} either because $\mathring f_1$ does not scale linearly in $\eta$ or because it's not perfectly linear in $\eta$.

  This reasoning should work for ``generic'' activations.
  However, for concreteness, we impose the conditions of \cref{assm:relulike_phi_Q_preserves_sign} to be able to easily prove that certain pathological behavior cannot happen where certain inner products vanish, such as $\braket{x^l_0}{x^l_1}$, allowing this reasoning to become rigorous.

  \paragraph{Proof of theorem}
  
  Let $s_{k}=1+\delta+\cdots+\delta^{k-1}$
  with convention $s_{0}=0$. Let $e_{l}=s_{\sigma},$ where
  $\sigma=\sum_{\ell=1}^{l}\ind(r_{\ell}=0)$ is the partial count of
  how many $r_{l}$s are zero up to layer $l$.
  Because $r=\min_l r_l=0$, $e_l>0$ for some $l$ and is nondecreasing.
  
  Define $\ket{\tilde{h}_{1}^{l}}\defeq\lim_{\eta\to\infty}\ket{x_{1}^{l}}/\eta^{e_{l}}$
  and $\ket{\tilde{x}_{1}^{l}}\defeq\lim_{\eta\to\infty}\ket{x_{1}^{l}}/\eta^{e_{l}\delta}$.
  Design $\eps_{0}$ so that $\mathring{\chi}_{0}=-1$. Then we can
  see that 
  
  \[
  \forall l\in[0,L-1],\ \ket{\tilde{h}_{1}^{l+1}}=\begin{cases}
  \Qketdbra{dh_{0}^{l}}{}{x_{0}^{l}}{\tilde{x}_{1}^{l}}\ra & \text{if \ensuremath{r_{l+1}=0}}\\
  \ket{W_{0}^{l}\tilde{x}_{1}^{l}} & \text{otherwise}
  \end{cases}
  \]
  
  \[
  \forall l\in[1,L],\ \ket{\tilde{x}_{1}^{l}}=\begin{cases}
  \relu^{\delta}(\ket{\tilde{h}_{1}^{l}}) & \text{if \ensuremath{r_{l}=0}}\\
  \phi(\ket{\tilde{h}_{1}^{l}}) & \text{else if \ensuremath{r_{\ell}>0} for all \ensuremath{\ell\le l}}\\
  \relu^{\delta}(\ket{\tilde{h}_{1}^{l}}) & \text{otherwise}
  \end{cases}
  \]

  \paragraph{(Pre)Feature kernel evolution}
  
  Using this formula and the lemmas above, we can perform an easy induction
  to see that 1) $\ket{\tilde{x}_{1}^{l}}\ge0$ and is greater than
  0 with nonzero probability, and 2) $\braket{\tilde{h}_{1}^{l}}{h_{0}^{l}}\ge0$
  for all $l$.
  
  This already implies that $\ket{x_{1}^{L}}$ scales like $\eta^{e_{L}\delta}$
  and $\ket{h_{1}^{L}}$ scales like $\eta^{e_{L}}$ in $\eta$, so
  the (pre)feature kernel evolves.
  
  \paragraph{Output scaling; violation of the operator equation}
  By \cref{eqn:nontrivial}, either 
  $a_{L+1}+c_{L+1}=1$ (last update is maximal) or $a_{L+1}+b_{L+1}+r=1$ (last layer initialization is maximal).

  If the last layer update is maximal, then its contribution
  to the output has the property
  \[
  \lim_{\eta\to\infty}\eta^{-(e_{L}\delta+1)}\braket{\widehat{W}_{1}^{L+1}-\widehat{W}_{0}^{L+1}}{x_{1}^{L}}=\braket{x_{0}^{L}}{\tilde{x}_{1}^{L}}>0
  \]
  by \cref{lem:preserve_positivity} (and the fact that $\relu^{\delta}$
  and $\phi$ both preserve positivity). This means that the contribution
  scales like $\eta^{e_{L}\delta+1}$ in $\eta$.
  
  If the last layer initialization is maximal, then
  its contribution to the output has the property
  \[
  \lim_{\eta\to\infty}\eta^{-e_{L}\delta}\braket{\widehat{W}_{0}^{L+1}}{x_{1}^{L}}=\braket{\widehat{W}_{0}^{L+1}}{\tilde{x}_{1}^{L}}>0
  \]
  by Stein's lemma and \cref{lem:preserve_positivity}. This means the
  contribution scales like $\eta^{e_{L}\delta}$ in $\eta$.%
  \footnote{An important role of \cref{assm:relulike_phi_Q_preserves_sign} is to prevent pathological cases where the above expectations vanish.}
  
  Thus, the output of the function scales either like $\eta^{e_{L}\delta+1}$
  or $\eta^{e_{L}\delta}$.
  This means that, if the last layer update is maximal or $\delta$ is not among the discrete set of values where $e_{L}\delta=1$, then $\mathring f_1$ does not scale linearly in $\eta$.

  If the last layer update is not maximal and $\delta$ takes one of such values (such as $\delta=1$ when $r_l = 0$ for exactly one $l$), then one can calculate
  \begin{equation}
    \braket{\widehat{W}_{0}^{L+1}}{\tilde{x}_{1}^{L}} \ne
    \partial_\eta \braket{\widehat{W}_{0}^{L+1}}{{x}_{1}^{L}}|_{\eta=0}.
  \end{equation}
  which means that $\mathring f_1$ is not perfectly linear in $\eta$.
  
  In either case, \cref{eqn:kernelupdate} cannot be satisfied.

\end{proof}

The above proof (especially the section on ``Output scaling'') in fact also yields the following.
\begin{thm}
  If \cref{eqn:nontrivial} is not satisfied in \cref{thm:r=0impliesFL} (but $r$ is still 0), then the parametrization is trivial.
\end{thm}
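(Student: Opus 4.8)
The plan is to extract the statement directly from the infinite-width limit already built in \cref{sec:infwidthlimit}. The key observation is that the Tensor Program construction there, and the resulting formulas for the limiting kets and for the scalar increments $\del\mathring{\ff}_t$, used only the conditions \cref{eq:actlogitinit}, \cref{eqn:faithful_init}, and \cref{eqn:stable_faithful} (stability at init, faithfulness at init, stability-and-faithfulness throughout training), and never the nontriviality condition \cref{eqn:nontrivial}. So that entire limiting description remains valid verbatim under the hypotheses of the present statement, where \cref{eqn:nontrivial} is \emph{negated} but \cref{eq:actlogitinit}, \cref{eqn:faithful_init}, \cref{eqn:stable_faithful} and $r=0$ still hold; the only thing to check is what the negation does to the output.

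First I would pin down the arithmetic. From \cref{eqn:stable_faithful} we have $r_{L+1}\ge 0$, i.e.\ $a_{L+1}+c_{L+1}\ge 1$ (recall \cref{def:r}), and also $a_{L+1}+b_{L+1}+r\ge 1$. Negating \cref{eqn:nontrivial} means $a_{L+1}+c_{L+1}\ne 1$ and $a_{L+1}+b_{L+1}+r\ne 1$, so both inequalities are in fact strict: $a_{L+1}+c_{L+1}>1$ and $a_{L+1}+b_{L+1}+r>1$. Consequently the two indicator coefficients governing the limiting output increment in \cref{eq:delflimit}, namely $\mathring{\theta}_{L+1}'=\ind(a_{L+1}+c_{L+1}=1)$ and $\mathring{\theta}_{Lf}'=\ind(r+a_{L+1}+b_{L+1}=1)$, both vanish.

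Next I would conclude. Since \cref{eq:delflimit} reads $\del\mathring{\ff}_t=\mathring{\theta}_{L+1}'\braket{\del W_t^{L+1}}{\xx_t^L}+\mathring{\theta}_{Lf}'\braket{\widehat{W}_{t-1}^{L+1}}{\del\xx_t^L}$ and both coefficients are $0$, we get $\del\mathring{\ff}_t=0$ for every $t\ge 1$ and every training routine — irrespective of the internal dynamics, which do evolve here because $r=0$ forces the (pre)features to move (exactly as in the proof of \cref{thm:r=0impliesFL}, with $\eps_0$ still free to make $\mathring{\cchi}_0=-1$). Hence $\mathring{\ff}_t=\mathring{\ff}_0$ for all $t$. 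Each $\del\ff_t$ is a scalar of the constructed program, so by the Master Theorem (\cref{thm:MasterTheorem}) $\del\ff_t\asto\del\mathring{\ff}_t=0$, and therefore $\ff_t-\ff_0=\sum_{s=1}^{t}\del\ff_s\asto 0$ for all $t\ge 1$ and all training routines. This is precisely the definition of a trivial parametrization. (The case $a_{L+1}+b_{L+1}=1/2$, where one conditions on the initial NNGP value of $\ff_0$, is handled by the same formula for $\del\mathring{\ff}_t$ and needs no extra argument.)

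I do not expect a genuine obstacle: the only step carrying weight is the bookkeeping just described — verifying that \cref{sec:infwidthlimit}'s construction goes through without \cref{eqn:nontrivial}, and that \cref{eq:delflimit} is the \emph{complete} expression for $\del\mathring{\ff}_t$ (no additional contributions were suppressed), so that annihilating the two coefficients genuinely freezes the function. That is a careful re-reading rather than new mathematics, which is exactly why the result is stated as a corollary of the proof of \cref{thm:r=0impliesFL} and, together with that theorem (and the $r>0$ analysis adapted from \cite{yang4}), completes the proof of \cref{thm:nontrivial}.
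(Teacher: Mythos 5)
Your proposal is correct and matches the paper's intended argument: the paper derives this as an immediate byproduct of the limit construction in \cref{sec:infwidthlimit} and the ``Output scaling'' analysis, observing that when \cref{eqn:nontrivial} fails both indicators $\mathring{\theta}_{L+1}'$ and $\mathring{\theta}_{Lf}'$ in \cref{eq:delflimit} are zero, so $\del\mathring{\ff}_t=0$ and the function is frozen. (One small simplification you could note: under $r=0$ and the negation of \cref{eqn:nontrivial}, one has $a_{L+1}+b_{L+1}>1$, so the NNGP-conditioning case $a_{L+1}+b_{L+1}=1/2$ cannot arise.)
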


\subsubsection{Helper Lemmas}

The significance of the positivity and sign preservation properties in \cref{assm:relulike_phi_Q_preserves_sign} is that they allow us to apply the following lemmas in our reasoning.

\begin{lemma}
  \label{lem:preserve_positivity}If $F,G:\R\to\R$ preserve positivity
  and are nonnegative,%
  \footnote{
    Obviously $F,G$ need to be measurable for the expectation to make sense but let's not get too pedantic;
    rather all functions in this paper are measurable by default.}
  then 
  \[
  \EV F(X)G(Y)>0
  \]
  for any nonzero random variables $X,Y$ with $\EV XY\ge0$.
  \end{lemma}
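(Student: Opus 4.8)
The plan is to convert the strict inequality into a statement about positivity of a probability. Since $F$ and $G$ are nonnegative, the product $F(X)G(Y)$ is a nonnegative random variable, so $\EV F(X)G(Y)\ge 0$ with no hypothesis at all, and equality holds precisely when $F(X)G(Y)=0$ almost surely, i.e.\ precisely when $\Pr\!\big(F(X)>0\ \text{and}\ G(Y)>0\big)=0$. So everything reduces to showing this event has positive probability.

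Next I would strip $F$ and $G$ down to their behavior on the positive half-line. Because $F$ preserves positivity, $\{X>0\}\subseteq\{F(X)>0\}$, and similarly $\{Y>0\}\subseteq\{G(Y)>0\}$. Hence it suffices to prove $\Pr(X>0,\,Y>0)>0$.

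The heart of the argument is to deduce $\Pr(X>0,Y>0)>0$ from $\EV XY\ge 0$ together with the nondegeneracy of $X,Y$. Arguing by contradiction, suppose $\{X>0,Y>0\}$ is a null event. Then on $\{X>0\}\cup\{Y>0\}$ one has $XY\le 0$ almost surely, while $XY$ can be strictly positive only on $\{X<0,Y<0\}$; together with $\EV XY\ge 0$ this localizes all the positive mass of $XY$ to the third quadrant and all the negative mass to the ``cross'' region $(\{X>0,Y\le0\})\cup(\{X\le0,Y>0\})$. One then rules this configuration out using the distributional structure available in every invocation of the lemma: the variable $X$ is always a centered nondegenerate Gaussian (a weight or a pre-activation at initialization), hence symmetric and atomless, with $\Pr(X>0)=\Pr(X<0)=\tfrac12$; conditioning on $Y$ and using $Y\not\equiv 0$ then forces $\EV XY<0$, a contradiction. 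In the cleanest case, where $(X,Y)$ is jointly Gaussian and centered, this step is immediate, since $\Pr(X>0,Y>0)=\tfrac14+\tfrac{1}{2\pi}\arcsin(\rho)\ge\tfrac14>0$ once $\rho=\EV XY/\sqrt{\EV X^2\,\EV Y^2}\ge 0$.

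I expect this last step to be the main obstacle. The hypothesis ``$X,Y$ nonzero'' has to be read together with the nondegeneracy (symmetry, atomlessness) of the Gaussian origin of $X$ in the applications; and making the quadrant-localization argument fully rigorous — observing in particular that $\{X=0\}$ is null since $X$ is atomless — is where the real work lies. The first two reduction steps are routine.
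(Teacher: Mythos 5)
Your two reduction steps coincide with the paper's: its entire proof is the one-sentence assertion that ``$\EV XY\ge0$ implies there is nonzero probability that both $X$ and $Y$, and thus $F(X)$ and $G(Y)$, are positive,'' after which nonnegativity of $F,G$ finishes. So you have isolated the crux correctly, and your suspicion about it is justified: the step does \emph{not} follow from the stated hypotheses. With $X=Y=-1$ deterministically and $F=G=\relu$ one has $\EV XY=1\ge0$ and $X,Y$ nonzero, yet $\EV F(X)G(Y)=0$. The paper's ``obviously'' is only vindicated by extra structure present at the lemma's unique call site (\cref{thm:r=0impliesFL}), not by the lemma's hypotheses.

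Where your patch falls short: centered nondegenerate Gaussianity of $X$ alone does not rescue the quadrant condition. Take $X\sim\Gaus(0,1)$ and $Y=X\ind(X<-1)+\epsilon\,\ind(-1<X<0)$ for small $\epsilon>0$. Then $\EV XY=\EV[X^{2}\ind(X<-1)]+\epsilon\,\EV[X\ind(-1<X<0)]>0$, $Y$ is nonzero and even positive with positive probability, yet $\{X>0\}\cap\{Y>0\}$ is null, so $\EV \relu(X)\relu(Y)=0$. Your claim that ``conditioning on $Y$ \ldots forces $\EV XY<0$'' fails exactly because, as you yourself noted, all the positive mass of $XY$ can sit in the third quadrant; symmetry of $X$ does not forbid this. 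The jointly Gaussian computation you give is correct but does not cover the application, where $Y$ is a nonlinear image of Gaussians. What actually carries the paper's invocation through is the additional information established by the induction in \cref{thm:r=0impliesFL} (nonnegativity and positive-probability positivity of the kets playing the roles of $F(X)$ and $G(Y)$); an honest proof must either move those conditions into the lemma's hypotheses or verify $\Pr(X>0,\,Y>0)>0$ directly at the call site.
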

  \begin{proof}
  Obviously, $\EV XY\ge0$ implies there is nonzero probability that
  both $X$ and $Y$, and thus $F(X)$ and $G(Y)$ are positive. Since
  $F$ and $G$ are nonnegative, the conclusion follows.
  \end{proof}
  \global\long\def\zbf{\mathbf{z}}%
  
  \begin{lemma}
  \label{lem:Qupdate_sign}Let $X,Y\in\R$ be nonnegative random variables.
  Suppose $Q$ preserves sign. Then the function $F:\R\to\R$ defined
  by
  \[
  F(\zbf)\defeq\EV Q(\zbf X)Y
  \]
  is identically 0 if $\EV XY=0$, but otherwise satisfies
  \[
  \sgn F(\zbf)=\sgn\zbf,\ \forall\zbf\in\R.
  \]
  \textup{Again,} here $\sgn$ takes value in $\{-1,0,1\}$.
  \end{lemma}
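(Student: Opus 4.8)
The plan is to dispatch the statement by a short case analysis, leaning on two elementary facts: because $X,Y\ge 0$, for any fixed $\zbf$ the product $\zbf X$ has sign $\sgn\zbf$ wherever $X>0$; and $Q$ preserving sign means $Q(0)=0$, $Q$ maps $[0,\infty)$ into $[0,\infty)$, and $Q$ maps $(-\infty,0]$ into $(-\infty,0]$. The whole argument is the sign-tracking analogue of the reasoning already used for \cref{lem:preserve_positivity}.

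First I would treat the degenerate case $\EV XY=0$. Since $XY\ge 0$ pointwise, this forces $XY=0$ almost surely. Then for every $\zbf$ the integrand $Q(\zbf X)Y$ vanishes almost surely: on $\{Y=0\}$ trivially, and on $\{Y>0\}$ we have $X=0$ a.s., so $Q(\zbf X)=Q(0)=0$. Integrating gives $F\equiv 0$, as claimed.

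Next I would assume $\EV XY>0$, which (again using $X,Y\ge0$) forces the event $A=\{X>0,\ Y>0\}$ to have positive probability. For $\zbf>0$: since $X\ge 0$ we get $\zbf X\ge 0$, hence $Q(\zbf X)\ge 0$, and with $Y\ge 0$ the integrand $Q(\zbf X)Y$ is $\ge 0$ everywhere; on $A$ we have $\zbf X>0$, so $Q(\zbf X)>0$ and $Y>0$, making the integrand strictly positive on a set of positive measure. Hence $F(\zbf)>0$. The case $\zbf<0$ is the mirror image, with all signs reversed, giving $F(\zbf)<0$; and for $\zbf=0$ we have $Q(0\cdot X)=Q(0)=0$ identically, so $F(0)=0$. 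Combining the three subcases gives $\sgn F(\zbf)=\sgn\zbf$ for all $\zbf$.

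There is essentially no obstacle here. The only steps needing a line of justification are the standard measure-theoretic equivalences for nonnegative random variables — that $\EV XY=0$ is equivalent to $XY=0$ a.s., and that $\EV XY>0$ implies $\Pr(X>0,\,Y>0)>0$ — after which everything is routine bookkeeping with $\sgn$ and the sign-preservation of $Q$.
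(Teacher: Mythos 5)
Your proof is correct and is essentially the same as the paper's: both handle the degenerate case via $XY=0$ a.s.\ for nonnegative variables, and the nondegenerate case by noting the integrand has sign $\sgn\zbf$ on the positive-probability event $\{X>0,Y>0\}$ and vanishes elsewhere. Your version merely spells out the sign bookkeeping in slightly more detail.
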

  In the language of kets, this implies: For nonnegative kets $\ket x,\ket y$
  and any ket $\ket z$,
  \[
  F(\ket z)\defeq Q(\ketdbra z{}x)\ket y
  \]
  is 0 almost surely if $\braket xy=0$ but otherwise satisfies 
  \[
  \sgn F(\ket z)=\sgn\ket z
  \]
  
  \begin{proof}
  If $\EV XY=0$, then $XY$ is almost surely 0 because $X$ and $Y$
  are nonnegative. Then $Q(\zbf X)Y$ is almost surely 0 because $Q$
  preserves sign. So its expectation is 0, hence $F$ is identically
  0.
  
  Otherwise, $\EV XY>0$ and there is nonzero probability that $X$
  and $Y$ are both positive. In this event, $Q(\zbf X)Y$ has the same
  sign as $\zbf$; otherwise $Q(\zbf X)Y=0$. Taking average, we see
  that $\sgn F(\zbf)=\sgn\zbf$ as desired.
  \end{proof}

\section{Proof of Maximal Update Limit}
\label{sec:muLimitProof}

In $\mu$P (\cref{defn:mup}), all of the $\mathring \theta$s in \cref{sec:infwidthlimit} will equal 1.
Some straightforward simplifications then lead to \cref{thm:muLimit_1LP} for the shallow case and \cref{thm:mulimit_MLP} for the deep case.
(We made some simplification in notation as well: $w_t^{L+1}$ in \cref{thm:mulimit_MLP} corresponds to $\widehat W_t^{L+1}$ in \cref{sec:infwidthlimit}  and $w_t^1$  in \cref{thm:mulimit_MLP} corresponds to $W_t^1$ in \cref{sec:infwidthlimit})

\section{Proof of Neural Tangent Limit}
\label{sec:NTKProof}

In this section, we prove \cref{{thm:memorylessNTK}} by specializing the limit formula in \cref{sec:infwidthlimit} above for general abcd-parametrizations.

In NTP (\cref{defn:NTP}), all $r_l$ equal $\nicefrac 1 2$ (\cref{eqn:concrete_r}).
In particular, from \cref{eq:Zzt}, we see that
\[\ket{\xx^l_t} = \ket{\xx^l_0},\quad \ket{\hh^l_t} = \ket{\hh^l_0}\]
for all $l$ and all $t$.
Likewise, from \cref{eq:ZhatW}, we see that
\begin{equation*}
  \ket{\widehat{W}_{t}^{L+1}}=\ket{\widehat{W}_{0}^{L+1}}
\end{equation*}
and from \cref{eqn:dx}, that
\begin{align*}
  \ket{d\xx_{t}^{L}} & =\ket{\widehat{W}_{t}^{L+1}}\otimes\boldsymbol{1}_{\NN}\\
  \ket{d\hh_{t}^{l}} & =\ket{d\xx_{t}^{l}}\odot\phi'(\ket{\hh_{t}^{l}})\\
  \ket{d\xx_{t}^{l-1}} & =\ket{W_{0}^{l\trsp}d\hh_{t}^{l}}
\end{align*}
so that, inductively, 
\[\ket{d\xx^l_t} = \ket{d\xx^l_0},\quad \ket{d\hh^l_t} = \ket{d\hh^l_0}\]
for all $l$ and all $t$.
These consequences justify the notational decision to drop the $t$ subscript in \cref{sec:neuraltangent}.
We now adopt this notation as well in the rest of this section.
However, note that $\del \mathring f_t$, $\del W_{t}^{L+1}$, $\del \hh^l_t$, and $\del \xx^l_t$ still vary with $t$.

In addition, plugging in the NTP values of abcd to \cref{eq:ZdelxWhenr>0,{eq:Zdelh},{eq:delflimit}} gives
\begin{align}
  \ket{\del\xx^l_{t}}&=\phi'(\ket{\hh^l})\odot\ket{\del\hh^l_{t}}.
  \label{eq:NTPZdelxWhenr>0}\\
  \ket{\del\hh^l_{t}} & =\ket{W_{0}\del \xx^{l-1}_{t}}
    -\eta \Qketdbra{d\hh^l}{\mathring{\cchi}_{t-1}}{\xx^{l-1}}{\xx^{l-1}}\ra \label{eq:NTPZdelh}\\
  \del\mathring{\ff}_{t}&=\braket{\del W_{t}^{L+1}}{\xx^{L}}+\braket{\widehat{W}^{L+1}}{\del\xx_{t}^{L}} \label{eq:NTPdelflimit}
\end{align}

To show \cref{thm:memorylessNTK}, we need to calculate $\del \mathring \ff_t$ and show it equals $-\eta \calK_Q(\mathring \cchi_{t-1})$.

There are two contributions to $\del \mathring \ff_t$ from \cref{eq:NTPdelflimit}.
We first calculate the former.
 By \cref{eq:ZdelWlast},
\begin{align*}
  \braket{\del W_{t}^{L+1}}{\xx_{t}^{L}}
    &= -\eta \overline{\mathring{\cchi}_{t-1}^\trsp \bra{\xx_{t-1}^{L}}}{\xx_{t}^{L}}\ra\\
    &= -\eta \overline{\mathring{\cchi}_{t-1}^\trsp \bra{\xx^{L}}}{\xx^{L}}\ra\numberthis \label{eqn:NTKoutputweight}
\end{align*}
(where the bar notation abbreviates $Q_{t-1}^{L+1}$)
which matches with the output weight contribution from \cref{eqn:KQ}.

To calculate the latter term from \cref{eq:NTPdelflimit}, we employ the following lemma.
\begin{lemma}\label{lemma:dh|delh}
  For any $l\in[2,L]$, 
  \[
  \braket{d\hh^{l}}{\del\hh_{t}^{l}}=\braket{d\hh^{L-1}}{\del\hh_{t}^{L-1}}-\eta\la{d\hh^{l}}\Qketdbra{d\hh^{l}}{\mathring{\cchi}_{t-1}}{\xx^{l-1}}{\xx^{l-1}}\ra.
  \]
  For $l=1$, we have
  \[
  \braket{d\hh^{1}}{\del\hh_{t}^{1}}=-\eta\la{d\hh^{1}}\overline{\ket{d\hh^{1}}_{\mathring{\cchi}_{t-1}}\xxi^{\trsp}}\xxi.
  \]
  \end{lemma}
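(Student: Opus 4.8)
The two identities follow by directly unfolding the NTP-specialized limit formulas from \cref{sec:infwidthlimit} and invoking that $\oplim{W^{l\trsp}}$ is the adjoint of $\oplim{W^l}$ (the proposition following \cref{defn:oplim}). For the base case $l=1$: \cref{eqn:h1} gives $\ket{\del\hh^1_t}=-\eta\,\overline{\ket{d\hh^1}_{\mathring{\cchi}_{t-1}}\xxi^\trsp}\,\xxi$ with the bar abbreviating $Q^1_{t-1}$, so pairing both sides with $\bra{d\hh^1}$ and recalling $\ket{\xx^0}\defeq\xxi$ yields $\braket{d\hh^1}{\del\hh^1_t}=-\eta\,\la{d\hh^1}\overline{\ket{d\hh^1}_{\mathring{\cchi}_{t-1}}\xxi^\trsp}\xxi$ at once.

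For $l\in[2,L]$, apply $\bra{d\hh^l}$ to the NTP recursion \cref{eq:NTPZdelh}, $\ket{\del\hh^l_t}=\ket{W_0^l\,\del\xx^{l-1}_t}-\eta\,\Qketdbra{d\hh^l}{\mathring{\cchi}_{t-1}}{\xx^{l-1}}{\xx^{l-1}}\ra$. The second summand contributes $-\eta\,\la{d\hh^l}\Qketdbra{d\hh^l}{\mathring{\cchi}_{t-1}}{\xx^{l-1}}{\xx^{l-1}}\ra$ verbatim. For the first summand, $\braket{d\hh^l}{W_0^l\del\xx^{l-1}_t}=\braket{W_0^{l\trsp}d\hh^l}{\del\xx^{l-1}_t}$ by the adjoint property; in NTP $r_l=\nicefrac12>0$, so the dot-ket correction in \cref{eqn:dx} has coefficient $\mathring{\theta}_{W^l}=\ind(r_l=0)=0$ and drops out, leaving $\ket{W_0^{l\trsp}d\hh^l}=\ket{d\xx^{l-1}}$ and hence the term $\braket{d\xx^{l-1}}{\del\xx^{l-1}_t}$. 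Finally substitute $\ket{\del\xx^{l-1}_t}=\phi'(\ket{\hh^{l-1}})\odot\ket{\del\hh^{l-1}_t}$ from \cref{eq:NTPZdelxWhenr>0} and absorb $\phi'(\ket{\hh^{l-1}})$ into $\ket{d\xx^{l-1}}$ using $\ket{d\hh^{l-1}}=\ket{d\xx^{l-1}}\odot\phi'(\ket{\hh^{l-1}})$; this collapses the first summand to $\braket{d\hh^{l-1}}{\del\hh^{l-1}_t}$ on its diagonal over the $\NN$ inputs — which is all that is needed, since $\calK_Q$ keeps only the $\Diag$ in \cref{eqn:KQ}. Combining the two summands gives the stated recursion reducing layer $l$ to layer $l-1$.

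The one real care-point is the step after the adjoint manipulation: confirming that no residual dot-ket survives, which is exactly the vanishing $\mathring{\theta}_{W^l}=0$ that is special to NTP, together with keeping the $\phi'$-factor and $\NN$-index bookkeeping straight when shuttling factors between the $d\hh$ and $d\xx$ kets. Once \cref{lemma:dh|delh} is established, the proof of \cref{thm:memorylessNTK} finishes by noting that $\braket{\widehat W^{L+1}}{\del\xx^L_t}$ equals the diagonal of $\braket{d\hh^L}{\del\hh^L_t}$ (using $\ket{d\xx^L}=\ket{\widehat W^{L+1}}\otimes\boldsymbol 1_\NN$ and \cref{eq:NTPZdelxWhenr>0}), unrolling the recursion from $l=L$ down to $l=1$, and adding the output-weight contribution \cref{eqn:NTKoutputweight} (the $l=L+1$ summand of \cref{eqn:KQ}); the total is exactly $-\eta\,\calK_Q(\mathring{\cchi}_{t-1})$, i.e.\ \cref{eqn:NTK_update_memoryless_stationary}.
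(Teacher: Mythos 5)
Your proof is correct and follows essentially the same route as the paper's: the $l=1$ case read off from \cref{eqn:h1}, and for $l\in[2,L]$ the pairing of $\bra{d\hh^l}$ with \cref{eq:NTPZdelh}, the adjoint identity $\braket{d\hh^l}{W_0^l\del\xx^{l-1}_t}=\braket{W_0^{l\trsp}d\hh^l}{\del\xx^{l-1}_t}$, the vanishing of the dot-ket correction (via $\mathring\theta_{W^l}=0$ in NTP, which the paper uses implicitly through $\ket{d\xx^{l-1}}=\ket{W_0^{l\trsp}d\hh^l}$), and the transfer of the $\phi'$ factor. Your remark that the $\phi'$-shuffle is only needed (and only exact) on the diagonal over the $\NN$ inputs is in fact slightly more careful than the paper's own manipulation, which writes the identity at the level of full $\NN\times\NN$ matrices.
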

  \begin{proof}
  For $l=1$, this follows trivially from \cref{eqn:h1}. For $l\in[2,L]$,
  by \cref{eq:NTPZdelh}, 
  \[
  \braket{d\hh^{l}}{\del\hh_{t}^{l}}=\braket{d\hh^{l}}{W_{0}^{l}\del x_{t}^{l-1}}-\eta\la{d\hh^{l}}\overline{\ketdbra{d\hh^{l}}{\mathring{\cchi}_{t-1}}{\xx^{l-1}}}{\xx^{l-1}}\ra.
  \]
  So it remains to show $\braket{d\hh^{l}}{W_{0}^{l}\del \xx_{t}^{l-1}}=\braket{d\hh^{l-1}}{\del\hh_{t}^{l-1}}$.
  But applying \cref{lemma:adjunction},
  \begin{align*}
  \braket{d\hh^{l}}{W_{0}^{l}\del \xx_{t}^{l-1}} & =\braket{W_{0}^{l\trsp}d\hh^{l}}{\del \xx_{t}^{l-1}}=\braket{d\xx^{l-1}}{\del\xx_{t}^{l-1}}\\
   & =\braket{d\xx^{l-1}}{\phi'(\hh^{l-1})\odot\del\hh_{t}^{l-1}}\\
   & =\braket{d\xx^{l-1}\odot\phi'(\hh^{l-1})}{\del\hh_{t}^{l-1}}\\
   & =\braket{d\hh^{l-1}}{\del\hh_{t}^{l-1}}
  \end{align*}
  as desired.
  \end{proof}
Then the latter term from \cref{eq:NTPdelflimit} is
\[
\braket{\widehat{W}^{L+1}}{\del\xx_{t}^{L}}=\braket{\widehat{W}^{L+1}}{\phi'(\hh^{L})\odot\del\hh_{t}^{L}}=\braket{\widehat{W}^{L+1}\odot\phi'(\hh^{L})}{\del\hh_{t}^{L}}=\Diag\braket{d\hh^{L}}{\del\hh_{t}^{L}}
\]
which, by \cref{eqn:NTKoutputweight} and a trivial induction with \cref{lemma:dh|delh}, gives
\[
\del\mathring{\ff}_{t}=-\eta \calK_{\QQ}(\mathring{\cchi}_{t-1})
= -\eta \calK_{\QQ}(\eps_{t-1}(\mathring{\ff}_{t-1}))
\]
as desired.

\chapter{Proof of Master Theorem}
\label{chap:masterproof}

To prove our main foundational theorem \cref{thm:MasterTheorem}, we had an editorial choice: we could bash our way to a proof by repeating (over and over again) \emph{analytical} arguments involving the likes of Holder and Cauchy-Schwarz, or we could encapsulate them into neat objects with neat properties that then make the underlying \emph{algebraic} structure more transparent.
Even though the writing is more arduous, we chose the latter way, because of this transparency and because it builds a more extensible foundation for future work.%
\footnote{\emph{extensible} in the sense that properties can be used as black-boxes for more advanced theorems. This is definitely a more \emph{algebraic} style, compared to analysts who tend to open black boxes and constantly tweak the insides.}
The drawback is perhaps that there can be many long sections building up the underlying structures before the payoff --- but that's the choice we made, and the reader is stuck with it, for better (hopefully) or worse.

\section{Basic Tools}

Here we just record some basic lemmas that we would use frequently.
The reader can skip this on first read and come back when necessary.

\begin{lemma}\label{lem:powerbound}
For an integer $m$, and complex numbers $a_i \in \mathbb{C}$, $i \in [k]$,
\[
\left| \sum_{i=1}^k a_i \right|^m
\le 
k^{m-1} \sum_{i=1}^k \left|a_i\right|^m
.
\]
\end{lemma}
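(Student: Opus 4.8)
\textbf{Proof plan for \cref{lem:powerbound}.}

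The statement is the classical ``power-mean'' inequality: $\left|\sum_{i=1}^k a_i\right|^m \le k^{m-1}\sum_{i=1}^k |a_i|^m$ for complex $a_i$ and positive integer $m$. The plan is to reduce immediately to the nonnegative-real case and then invoke convexity. First I would apply the triangle inequality $\left|\sum_{i=1}^k a_i\right| \le \sum_{i=1}^k |a_i|$, so that it suffices to prove $\left(\sum_{i=1}^k t_i\right)^m \le k^{m-1}\sum_{i=1}^k t_i^m$ where $t_i = |a_i| \ge 0$. (One should note the edge case $m=0$, where both sides equal $1$ when $k\ge1$, or handle only $m\ge1$ as the lemma is used; and $k=0$ is vacuous or trivial.)

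For the reduced inequality, the cleanest route is Jensen's inequality applied to the convex function $x \mapsto x^m$ on $[0,\infty)$ (convex since $m\ge1$): with uniform weights $1/k$ on the points $t_1,\dots,t_k$,
\[
\left(\frac{1}{k}\sum_{i=1}^k t_i\right)^m \le \frac{1}{k}\sum_{i=1}^k t_i^m,
\]
and multiplying through by $k^m$ gives exactly $\left(\sum_{i=1}^k t_i\right)^m \le k^{m-1}\sum_{i=1}^k t_i^m$. An equivalent elementary alternative, avoiding any appeal to Jensen, is the Cauchy--Schwarz / Hölder inequality $\sum_i t_i = \sum_i 1\cdot t_i \le k^{1-1/m}\left(\sum_i t_i^m\right)^{1/m}$, followed by raising both sides to the $m$th power; or, most elementarily, induction on $k$ using the two-variable case $(s+t)^m \le 2^{m-1}(s^m+t^m)$, which itself follows from convexity of $x\mapsto x^m$.

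I do not anticipate any genuine obstacle here — this is a standard inequality and any of the three routes above closes it in a few lines. The only thing to be slightly careful about is stating the convexity hypothesis correctly (it needs $m \ge 1$, which holds since $m$ is a positive integer in all uses) and confirming the complex-to-real reduction is exactly the triangle inequality with nothing lost in the constant. I would write it up using the Jensen route for brevity, remarking parenthetically that it is just convexity of $x\mapsto x^m$.
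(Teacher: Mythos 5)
Your proposal is correct. It does take a different route from the paper's: the paper first passes to $\bigl(\sum_i |a_i|\bigr)^m$ by the triangle inequality, expands this power via the multinomial theorem into a sum of $k^m$ monomials $|a_{i_1}|\cdots|a_{i_m}|$, bounds each monomial by $\frac{1}{m}\bigl(|a_{i_1}|^m+\cdots+|a_{i_m}|^m\bigr)$ using AM--GM, and counts that each $|a_j|^m$ then appears with total coefficient $k^{m-1}$. You instead apply Jensen's inequality to the convex function $x\mapsto x^m$ with uniform weights, which collapses the whole argument to one line. The two are equally valid; yours is shorter and cleaner, while the paper's is combinatorially explicit and avoids citing convexity (AM--GM on $m$ terms being the only input). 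Your attention to the edge cases ($m\ge 1$ for convexity, the complex-to-real reduction losing nothing in the constant) is appropriate and matches the implicit assumptions in the paper's usage.
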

\begin{proof}
Expand the power in the LHS using the multinomial theorem, apply AM-GM to each summand, and finally aggregate using triangle inequality.
\end{proof}

\begin{lemma}\label{lemma:momentBoundASConvergence}
Let $\{X_n\}_{n \ge 1}$ be a sequence of random variables with zero mean.
If for some $p \in \N$ and for all $n$, $\EV X_n^{2p} \le c n^{-1-\lambda}$, for some $\lambda > 0$, then $X_n \to 0$ almost surely.
\end{lemma}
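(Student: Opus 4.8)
\textbf{Proof plan for \cref{lemma:momentBoundASConvergence}.}

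This is the classical Borel--Cantelli argument applied to a moment bound. The plan is to control the probability that $|X_n|$ exceeds a fixed threshold $\epsilon>0$ via Markov's inequality on the $2p$th moment, show that these probabilities are summable, and then conclude almost sure convergence.

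First I would fix $\epsilon > 0$. By Markov's inequality applied to the nonnegative random variable $X_n^{2p}$,
\[
  \PP\bigl(|X_n| \ge \epsilon\bigr) = \PP\bigl(X_n^{2p} \ge \epsilon^{2p}\bigr) \le \frac{\EV X_n^{2p}}{\epsilon^{2p}} \le \frac{c\, n^{-1-\lambda}}{\epsilon^{2p}}.
\]
Summing over $n$, $\sum_{n\ge 1}\PP(|X_n|\ge\epsilon) \le c\,\epsilon^{-2p}\sum_{n\ge 1} n^{-1-\lambda} < \infty$ since $\lambda > 0$. By the (first) Borel--Cantelli lemma, the event $\{|X_n|\ge\epsilon\text{ infinitely often}\}$ has probability zero; equivalently, almost surely $|X_n| < \epsilon$ for all sufficiently large $n$.

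To upgrade this to $X_n \to 0$ almost surely, I would intersect over a countable sequence of thresholds: let $A_k$ be the almost sure event that $|X_n| < 1/k$ eventually, and set $A = \bigcap_{k\ge 1} A_k$, which still has probability $1$ as a countable intersection of probability-$1$ events. On $A$, for every $k$ there is $N$ with $|X_n| < 1/k$ for all $n \ge N$, which is precisely the statement that $X_n \to 0$. Hence $X_n \to 0$ almost surely. There is no real obstacle here; the only thing to be careful about is the countable-threshold intersection step, since Borel--Cantelli for a single $\epsilon$ does not by itself give convergence to $0$. (The zero-mean hypothesis is not actually needed for this argument, but it does no harm.)
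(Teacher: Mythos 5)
Your proof is correct and follows essentially the same route as the paper's: Markov's inequality on $X_n^{2p}$, summability of $n^{-1-\lambda}$, Borel--Cantelli, and then a countable sequence of thresholds to upgrade to almost sure convergence. Your parenthetical observation that the zero-mean hypothesis is unused is also accurate.
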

\begin{proof}
By Markov's inequality, for any $\epsilon > 0$,
\begin{align*}
    \Pr(|X_n| > \epsilon)
        &=
            \Pr(X_n^{2p} > \epsilon^{2p})
        \le
            \EV X_n^{2p}/\epsilon^{2p}
        \le c n^{-1-\lambda}/\epsilon^{2p}
        \\
    \sum_n \Pr(|X_n| > \epsilon)
        &\le
            \sum_n c n^{-1-\lambda}/\epsilon^{2p}
        <    
            \infty.
\end{align*}
By Borel-Cantelli Lemma, almost surely, $|X_n| \le \epsilon$ for all large $n$.
Then, if we pick a sequence $\{\epsilon_k > 0\}_k$ converging to 0, we have that, almost surely, for each $k$, $|X_n| \le \epsilon_k$ for large enough $n$ --- i.e. almost surely, $X_n \to 0$.
\end{proof}

\begin{lemma}
\label{lem:CLT_moments}Let $x\in\R^{n}$ be a random vector with
iid components such that $\EV x_{1}=0$ and $\EV|x_{1}|^{p}<\infty$
for all integer $p\ge1$. Then there exist constants $C_{p}$, independent
of $n$, for all $p\ge1$ such that $S\defeq\f 1{\sqrt{n}}\sum_{\alpha=1}^{n}x_{\alpha}$
satisfies 
\[
\EV|S|^{p}<C_{p},\quad\forall p\ge1.
\]
Furthermore, $C_{p}$ can be taken to be $P_{p}(\nu_{1},\nu_{2},\ldots,\nu_{p})$
where 
\end{lemma}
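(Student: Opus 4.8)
The statement to prove is a moment bound on the normalized sum $S = \frac{1}{\sqrt n}\sum_\alpha x_\alpha$ of iid mean-zero random variables: namely $\EV|S|^p < C_p$ for all $p \ge 1$, with $C_p$ expressible as a fixed polynomial $P_p$ in the moments $\nu_1, \dots, \nu_p$ of a single component.

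\textbf{Proof proposal.} The plan is to expand $\EV|S|^p$ (it suffices to treat even $p = 2m$; odd cases follow by Cauchy--Schwarz or Lyapunov from the next even one) and track the combinatorics of which index patterns survive after taking expectations. First I would write
\[
\EV S^{2m} = n^{-m} \sum_{\alpha_1, \dots, \alpha_{2m} = 1}^n \EV[x_{\alpha_1} \cdots x_{\alpha_{2m}}].
\]
Because the $x_\alpha$ are independent and $\EV x_\alpha = 0$, any monomial in which some index appears exactly once vanishes. So the surviving terms are indexed by partitions $\Pi$ of $\{1, \dots, 2m\}$ all of whose blocks have size $\ge 2$; for such a partition with $k$ blocks of sizes $b_1, \dots, b_k$ (each $b_j \ge 2$, $\sum b_j = 2m$), the number of index tuples realizing it is at most $n^k$, and the expectation factorizes as $\prod_j \EV x_1^{b_j}$, bounded in absolute value by $\prod_j \nu_{b_j}$ (interpreting $\nu_b = \EV|x_1|^b$). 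Since each block has size $\ge 2$, we have $k \le m$, so $n^{-m} \cdot n^k \le 1$. Summing over the finitely many such partitions (a number depending only on $m$, not on $n$) gives
\[
\EV S^{2m} \le \sum_{\Pi} \prod_{j=1}^{k(\Pi)} \nu_{b_j(\Pi)} =: P_{2m}(\nu_1, \dots, \nu_{2m}),
\]
a polynomial in the moments with nonnegative integer coefficients, independent of $n$. For odd $p = 2m+1$, apply $\EV|S|^{2m+1} \le (\EV S^{2m+2})^{(2m+1)/(2m+2)} \le 1 + \EV S^{2m+2}$ (or Cauchy--Schwarz against $\EV S^{2m}$ and $\EV S^{2m+2}$), which is again bounded by a polynomial in $\nu_1, \dots, \nu_{2m+2}$; one can then absorb this into the definition of $C_p$, or, if one insists on only $\nu_1, \dots, \nu_p$ appearing, handle the top index more carefully — but the cleanest route is to let $C_p = P_p$ be defined via the even bound on $\lceil p \rceil$ rounded up and note $\EV|S|^p \le C_{p'}$ for $p' \ge p$.

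\textbf{Main obstacle.} There is no serious analytic difficulty here; the only thing requiring care is the bookkeeping that guarantees $C_p$ is a genuine polynomial in the $\nu_i$ with coefficients and degree independent of $n$ — i.e., being precise that the number of partitions of $\{1,\dots,2m\}$ into blocks of size $\ge 2$ is finite and $n$-free, and that the count of index tuples realizing a $k$-block partition is $\le n^k$ exactly (not just $O(n^k)$). A secondary subtlety is matching the exact form $P_p(\nu_1, \dots, \nu_p)$ promised in the statement: the even-$p$ argument only ever invokes $\nu_b$ for $2 \le b \le 2m = p$, and $\nu_1 = 0$ anyway, so the dependence really is on $\nu_2, \dots, \nu_p$; the cleanest phrasing is to define $P_p$ as this partition sum and observe it trivially depends only on $\nu_1,\dots,\nu_p$ (vacuously on $\nu_1$). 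I would close by remarking that this is exactly the standard moment computation underlying the CLT / Marcinkiewicz--Zygmund inequality, reproved here only to pin down the $\nu$-dependence needed downstream.
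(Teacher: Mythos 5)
Your proposal is correct, and it supplies an actual argument where the paper itself gives none: the paper's ``proof'' of \cref{lem:CLT_moments} is just a citation to \cite[Prop H.2]{tp3b}. The combinatorial moment expansion you use --- expand $\EV S^{2m}$, kill every index pattern containing a singleton via independence and $\EV x_1 = 0$, observe that the surviving partitions have at most $m$ blocks so the $n^{k}$ count of realizing tuples is absorbed by the $n^{-m}$ prefactor, and sum over the $n$-free set of partitions --- is exactly the standard computation that the cited proposition encapsulates, so there is no substantive divergence in method.

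Two small points of bookkeeping, both of which you already flag. First, your odd-$p$ reduction via Lyapunov to the exponent $p+1$ produces a bound involving $\nu_{p+1}$, which does not literally match the advertised form $P_p(\nu_1,\ldots,\nu_p)$; if one insists on moments only up to order $p$, a clean fix is Rosenthal's (or Marcinkiewicz--Zygmund's) inequality, $\EV|S|^p \le C(p)\bigl[(\EV x_1^2)^{p/2} + n^{1-p/2}\,\EV|x_1|^p\bigr]$ for $p \ge 2$, followed by $n^{1-p/2}\le 1$ and $(\nu_2)^{p/2}\le 1+\nu_2^{\lceil p/2\rceil}$ --- though in every downstream use in this paper only even $p$ is invoked, so your rounding-up convention is harmless. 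Second, the statement defines $\nu_k$ as a bound on the \emph{signed} moment $\EV x_1^k$, whereas your partition bound needs $|\EV x_1^{b_j}|\le \nu_{b_j}$; reading $\nu_b$ as the absolute moment (as you do) is the intended interpretation and is what the cited reference uses. Neither point is a gap in the mathematics.
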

\begin{itemize}
\item $\nu_{k}$ is an upper bound on the signed $k$th moment $\EV x_{1}^{k}$
of $x_{1}$
\item $P_{p}$ is a polynomial that depends only on $p$.
\end{itemize}
\begin{proof}
See \cite[prop H.2]{tp3b}.
\end{proof}

\subsection{Review of Moore-Penrose Pseudo-Inverse}
\label{sec:pinv}

We recall Moore-Penrose pseudo-inverse and some properties of it.
\begin{defn}\label{defn:pseuodoinverse}
For $A \in \R^{n \times m}$, a pseudo-inverse of $A$ is defined as a matrix $A^+ \in \R^{m \times n}$ that satisfies all of the following criteria
\begin{align*}
  A A^+ A &= A,&
  A^+ A A^+ &= A^+,&
  (AA^+)^\trsp &= AA^+,&
  (A^+ A)^\trsp &= A^+ A
  .
\end{align*}
\end{defn}

The following facts are standard.
\begin{itemize}
    \item If $A$ has real entries, then so does $A^+$.
    \item The pseudo-inverse always exists and is unique.
    \item When $A$ is invertible, $A^+ = \inv A$.
    \item $(A^\trsp)^+ = (A^+)^\trsp$, which we denote as $A^{+\trsp}$.
    \item $A^+ = (A^\trsp A)^+ A^\trsp = A^\trsp (A A^\trsp)^+$.
    \item $AA^+$ is the orthogonal projector to the column space of $A$;
        $I - A^+ A$ is the orthogonal project to the null space of $A$.
    \item If $A$ has singular value decomposition $A = U\Lambda V$ where $U$ and $V$ are orthogonal and $\Lambda$ has the singular values on its diagonal, then $A^+ = V^\trsp \Lambda^+ U^\trsp$ where $\Lambda^+$ inverts all nonzero entries of $\Lambda$.
    \item For any collection of vectors $\{v_i\}_{i=1}^n$ in a Hilbert space, $w \mapsto \sum_{i,j=1}^n v_i (\Sigma^+)_{ij} \la v_j, w \ra $, where $\Sigma_{ij} = \la v_i, v_j \ra$, is the projection operator to the linear span of $\{v_i\}_{i=1}^n$.
\end{itemize}

\subsection{Baranyai's Theorem}

\begin{figure}[t]
    \centering
    \includegraphics[width=0.45\textwidth]{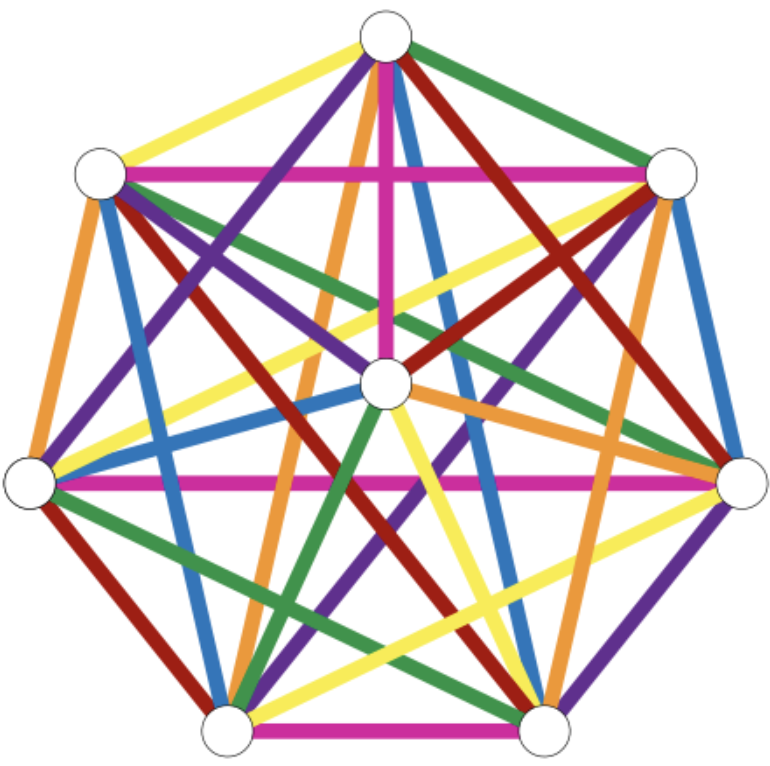}
    \caption{
    \textbf{A graphical illustration of Baranyai's Theorem for $n=8,r=2$}.
    Here $G^8_2$ is just the usual complete graph on 8 vertices.
    A \emph{perfect matching} here reduces to the usual meaning on graphs: a set of 4 edges that covers all 8 vertices.
    Every edge above is colored, and for each color, the edges with that color form a perfect matching.
    \emph{Image source:} \href{https://en.wikipedia.org/wiki/Baranyai\%27s_theorem\#/media/File:Complete-edge-coloring.svg}{\nolinkurl{wikipedia.org}}}
    \label{fig:baranya}
\end{figure}

\emph{The complete hypergraph $G^n_r$} is a hypergraph containing $n$ vertices in which every subset of $r$ vertices forms a hyperedge.
A \emph{perfect matching} of it is a set of $\nicefrac n r$ hyperedges that (thought of as subsets of vertices) partitions the vertices of $G^n_r$.
\begin{thm}[Baranyai's Theorem]\label{thm:bara}
  Suppose integer $r$ divides integer $n$.
  The collection of all $\binom n r$ hyperedges in $G^n_r$ can be partitioned into $\binom n r \frac r n$ perfect matchings. 
\end{thm}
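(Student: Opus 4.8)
The plan is to deduce Baranyai's theorem from the integral version of the max-flow/min-cut theorem (equivalently, from the Hoffman circulation theorem), following the classical argument. First I would reformulate the goal: we want to decompose the $\binom{n}{r}$ hyperedges of $G^n_r$ into $m \defeq \binom{n}{r}\tfrac{r}{n}$ classes (this is an integer since it equals $\binom{n-1}{r-1}$), each class being a partition of $[n]$ into $n/r$ blocks of size $r$. The key device is to prove a stronger, more flexible statement by induction on $n$: for \emph{any} $0 \le k \le n$, one can partition the $\binom{n}{k}$ subsets of size $k$ into $m$ classes $\mathcal A_1, \ldots, \mathcal A_m$ such that each $\mathcal A_i$ is ``as balanced as possible,'' meaning each point of $[n]$ lies in either $\lfloor k m / n \rfloor$ or $\lceil k m / n\rceil$ of the sets in $\mathcal A_i$, and $|\mathcal A_i|$ differs from $\binom{n}{k}/m$ by less than one. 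Applying this with $k = r$ (where $rm/n = \binom{n-1}{r-1}/(n/r)\cdot\ldots$; one checks $rm/n$ need not be an integer, so the right phrasing is in terms of $\lfloor\cdot\rfloor,\lceil\cdot\rceil$) recovers perfect matchings after the final induction step collapses the imbalance.

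The induction step is where the flow argument enters. Suppose the balanced partition exists for $n-1$; we lift it to $n$ by deciding, for each $k$-subset $S \subseteq [n]$, which class it goes into, subject to: (i) the restriction of the class structure to $[n-1]$ refines appropriately to the $(n-1)$-level partition, and (ii) the balance conditions on the new point $n$ and on the class sizes hold. This is encoded as a bipartite transportation problem: on one side the classes-of-$[n-1]$ together with occupancy counts, on the other side the two ``types'' of $k$-subsets of $[n]$ (those containing $n$, obtained by adding $n$ to a $(k-1)$-subset; and those not containing $n$). Set up a network with lower and upper capacities chosen so that an integral feasible flow is exactly a valid lift, and observe that the fractional ``split each subset evenly'' solution is feasible; then integrality of the polytope (total unimodularity of the incidence matrix of this transportation network) yields an integral solution. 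I would cite the max-flow integrality theorem rather than reprove it.

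The main obstacle I anticipate is \emph{bookkeeping the balance conditions precisely} so that the network's lower/upper bounds are simultaneously (a) satisfied by the obvious fractional point and (b) forcing, upon rounding to integers, both the per-point occupancy bounds and the per-class cardinality bounds at level $n$. Getting the floor/ceiling arithmetic consistent across the induction — in particular verifying that when $k=r$ and we reach $n=n$ the slack vanishes and every class becomes an exact perfect matching of size $n/r$ — requires care but no deep idea. An alternative, if one prefers to avoid spelling out the network, is to invoke the Hoffman circulation theorem directly on the obvious circulation; this is the route I would actually write up, as it compresses the feasibility check to a single inequality chain. Either way, the only genuinely nontrivial ingredient is the integrality of the relevant polytope, which we import as a black box.
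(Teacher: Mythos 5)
The paper does not prove Baranyai's theorem at all: it is imported as a classical black box (used only inside the proof of \cref{lem:decorrelate_avg}), so there is no in-paper argument to compare against. Your sketch is essentially Baranyai's original proof, and the strategy is sound: strengthen to a balanced-partition statement for an arbitrary number of classes, induct, and discharge the induction step through integrality of a transportation polytope, with feasibility witnessed by the uniform fractional point. Two points to tighten. First, the per-point occupancy target in class $\mathcal A_i$ is $k\lvert\mathcal A_i\rvert/n \approx k\binom{n}{k}/(nm)$, not $km/n$; for $k=r$ and $m=\binom{n}{r}\frac{r}{n}=\binom{n-1}{r-1}$ this evaluates to exactly $1$ and the class size to exactly $n/r$, which is precisely what forces each class to be a perfect matching with no residual slack, so the ``floor/ceiling arithmetic'' you worry about does collapse cleanly at the end. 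Second, the induction as you state it (on $n$, lifting from $[n-1]$ to $[n]$) must carry the statement for all $k$ simultaneously, since the $k$-subsets of $[n]$ containing the new point are in bijection with the $(k-1)$-subsets of $[n-1]$; an alternative with cleaner bookkeeping (van Lint--Wilson) fixes $n$ and the number of classes $M=\binom{n-1}{r-1}$ and inducts instead on the number $\ell$ of revealed ground-set elements, maintaining $M$ multiset partitions of $[\ell]$ into exactly $n/r$ (possibly empty) parts in which each $S\subseteq[\ell]$ occurs $\binom{n-\ell}{r-\lvert S\rvert}$ times in total; at $\ell=n$ these are exactly the desired perfect matchings. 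Either formalization closes, and the only nonelementary input is max-flow integrality, as you say.
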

See \cref{fig:baranya} for a graphical illustration.
The hyperedges of $G^n_r$ are just the $r$-element subsets of $[n]$.
A version of Baranyai's Theorem also holds for ordered $r$-element subsets, i.e., length-$r$ sequences of distinct elements of $[n]$.
\begin{thm}[Ordered Baranyai's Theorem]\label{thm:obara}
  Suppose integer $r$ divides integer $n$.
  The collection of all $n(n-1)\cdots (n-r+1)$ length-$r$ sequences of distinct elements of $[n]$ can be partitioned into $r(n-1)\cdots (n-r+1)$ perfect matchings.
\end{thm}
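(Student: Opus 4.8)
\textbf{Proof proposal for Ordered Baranyai's Theorem (\cref{thm:obara}).}

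The plan is to deduce the ordered version from the unordered Baranyai theorem (\cref{thm:bara}) by a simple orbit-counting argument. First I would apply \cref{thm:bara} to $G^n_r$: since $r \mid n$, the $\binom n r$ hyperedges split into $\binom n r \tfrac r n$ perfect matchings $M_1, \ldots, M_m$ where $m = \binom n r \tfrac r n$. Each $M_i$ is a partition of $[n]$ into $n/r$ blocks of size $r$. Now I would ``order'' each matching: to turn a partition-into-$r$-blocks into a collection of length-$r$ sequences, I need to assign, to each block $B$ in each $M_i$, one of the $r!$ linear orderings of $B$, doing so in a coordinated way across the whole theorem so that every length-$r$ sequence of distinct elements appears exactly once.

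The key step is the following observation. Fix one unordered perfect matching $M$ (a partition of $[n]$ into blocks of size $r$). Consider the symmetric group $S_r$ acting simultaneously on all blocks: given $\sigma \in S_r$ and a choice of initial labelling of each block by $[r]$, we get, for each block, the sequence obtained by listing its elements in the order $\sigma(1), \sigma(2), \ldots, \sigma(r)$ under that labelling. As $\sigma$ ranges over $S_r$, each block $B$ produces each of its $r!$ orderings exactly once, so the union over $\sigma \in S_r$ of these ``ordered copies of $M$'' is exactly the set of all ordered sequences whose underlying set is a block of $M$ — that is, $|S_r| \cdot (n/r) = r! \cdot n/r$ ordered sequences, each appearing once. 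Moreover, each such ordered copy $M^\sigma$ is itself a perfect matching in the ordered sense (its sequences are pairwise disjoint as sets and cover $[n]$). Ranging $M$ over the $m$ unordered matchings $M_1, \ldots, M_m$ and $\sigma$ over $S_r$ gives $m \cdot r!$ ordered perfect matchings, and since the $M_i$ partition all $r$-subsets, the resulting ordered matchings partition all length-$r$ sequences of distinct elements. The count is $m \cdot r! = \binom n r \tfrac r n \cdot r! = \tfrac{n!}{r!(n-r)!}\cdot \tfrac r n \cdot r! = r(n-1)(n-2)\cdots(n-r+1)$, matching the claimed number, and indeed $n(n-1)\cdots(n-r+1) = r \cdot [r(n-1)\cdots(n-r+1)] / r \cdot r$... here I would just double check: total ordered sequences $= n(n-1)\cdots(n-r+1)$, divided into matchings of size $n/r$ each gives $\tfrac{n(n-1)\cdots(n-r+1)}{n/r} = r(n-1)\cdots(n-r+1)$ matchings, consistent.

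I expect no serious obstacle here; the only point requiring a little care is verifying that the coordinated $S_r$-action genuinely produces each ordered sequence exactly once rather than with multiplicity — this follows because the underlying-set map sends an ordered matching $M^\sigma$ to $M$, so ordered matchings arising from distinct $M_i$ are disjoint, and within a fixed $M$ the $r!$ orderings of a given block are distinct for distinct $\sigma$. An alternative, if one wanted to avoid even this bookkeeping, would be to prove \cref{thm:obara} directly by the same inductive/flow argument used for \cref{thm:bara} (Baranyai's original proof via integral network flows adapts verbatim when ``subsets'' are replaced by ``sequences''), but the reduction above is shorter and uses \cref{thm:bara} as a black box.
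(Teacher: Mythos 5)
Your proposal is correct and is essentially the paper's intended argument: the paper's proof consists of the single remark that \cref{thm:obara} ``follows from \cref{thm:bara} and symmetrization,'' and your coordinated $S_r$-action on the blocks of each unordered matching is exactly that symmetrization, carried out in full with the counts verified. No gaps.
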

The proof follows from \cref{thm:bara} and symmetrization.

\section{Basic Objects and Operations}

In this section, we define the basic objects and operations on them recurring in our quest to understand Tensor Programs.
In the next, we describe their properties.

\subsection{Fixed-Dimensional Random Variables and Vectors}

\label{sec:fixed_dim_objects}

\paragraph{Notation}

We will predominantly consider sequences of random objects indexed
by an integer $n$ (see \cref{sec:seqspace} below). In this context,
to emphasize objects that do not vary with $n$, we will use uppercase
letters like $X,Y,\ldots$ for scalar random variables and their bolded
versions $\XX,\YY,\ldots$ for random vectors, matrices, or tensors.
The latter will have components denoted by superscripts, e.g., $\XX=(X^{1},\ldots,X^{k})$
if $\XX\in\R^{k}$.

\subsection{Space of Random Sequences}

\label{sec:seqspace}

Prior papers in the Tensor Programs series often talk about objects
like scalars, vectors, and matrices whose size varies with a global
notion of ``width'' denoted $n$. Formally, each such object is
a sequence (of scalars, vectors, or matrices) in $n$, but to be intuitive,
these works downplay this sequence aspect (for example, by suppressing
the dependence on $n$ notationally).

However, here we need to talk about more complex high order tensors
who can contain both dimensions that scale with $n$ and those that
do not. In addition, we will formulate a notion of ``vanishing''
tensors that is really an asymptotic property as $n\to\infty$ rather
than a nonasymptotic one; this notion turns out to interact cleanly
with typical operations like pseudo-Lipschitz nonlinearities that
is involved in Tensor Programs. As such, for these reasons, we will
be explicit that the objects we play with here are sequences in $n$:
\begin{defn}
For any integer $s\ge0$, denote by $\seqspace^{s}$ the set of all
random sequences $x$ of order-$s$ tensors with dimensions $\underbrace{n\times n\times\cdots\times n}_{s}$
(i.e., sequences $x$ with $x(n)\in\otimes^{s}\R^{n}$ for all $n=1,2,\ldots$).
We call the objects in $\seqspace^{0}$, $\seqspace^{1}$, $\seqspace^{2}$
respectively \emph{$n$-scalars}, \emph{$n$-vectors}, and \emph{$n$-matrices}.
For general $s$, we call the objects in $\seqspace^{s}$ \emph{$n$-tensors}.
Often, we will drop the prefix ``$n$-'' when the context is clear.
\end{defn}
So, for example, $\seqspace^{0}$ contains all infinite sequences
of scalar random variables, and $\seqspace^{1}$ contains all infinite
sequences of random vectors of linearly increasing size. 

\paragraph{Notation}

We will use Greek letters $\alpha,\beta,\ldots$ (with values in $[n]$)
to denote indices of an $n$-tensor. For higher order $n$-tensors
in $\seqspace^{s}$, we also use their bolded counterparts $\aalpha,\bbeta,\ldots$
(with values in $[n]^{s}$) to denote multi-indices, where (for example)
$\aalpha$ is understood to have components $\aalpha=(\alpha_{1},\ldots,\alpha_{s})$.
For example, if $x\in\seqspace^{s}$, then $x(n)$ has entries $\{x(n)_{\alpha_{1}\ldots\alpha_{s}}:\alpha_{1},\ldots,\alpha_{s}\in[n]\}=\{x(n)_{\aalpha}:\aalpha\in[n]^{s}\}$.
We can also mix single indices and multi-indices, e.g., $\{x(n)_{\alpha\bbeta}:\alpha\in[n],\bbeta\in[n]^{s-1}\}$.

As in prior works, even though we will work with sequences of tensors
$(x(n))_{n\ge1}$, we will suppress the dependence on $n$ notationally
and talk about $x$ as if it's a fixed tensor. So, for example, for
an $n$-matrix $x$, $x_{\alpha\beta}$ refers to the entry $x(n)_{\alpha\beta}$
where the $n$ is from context.

\subsection{Multi-Tensors}
\label{sec:multitensor}

In many results we shall discuss, we often talk about lists of $n$-tensors,
e.g., $x^{1},\ldots,x^{k}\in\seqspace^{s}$. Going forward, it will
be helpful to think of such lists as (a sequence of) a single tensor
of shape $n\times n\times\cdots\times n\times k$ (or other arrangements
of dimensions, as discussed below) for each $n$. We generalize this
further in the following definition.
\begin{defn}[Multi-Tensors]
\label{def:tensorseq}For any integers $s\ge0$ and $k\ge0$, let
$\seqspace^{s}\otimes\R^{k}$ denote the set of all random sequences (in
$n$) of tensors of shape $\underbrace{n\times\cdots\times n}_{s}\times k$.

In general, suppose $\calV=\R^{k_{1}\times\cdots\times k_{r}}$. Then
$\seqspace^{s}\otimes\calV$ denotes the same but for shape $\underbrace{n\times\cdots\times n}_{s}\times k_{1}\times\cdots\times k_{r}$.
We shall call any element of such a space a \emph{multi-tensor}. When
$s=0,1,2$, we also call such elements \emph{multi-scalar}, \emph{multi-vector},
and \emph{multi-matrix} respectively.
\end{defn}
Intuitively, $\seqspace^{s}$ can be read as $\otimes^{s}\R^{n}$,
so that $\seqspace^{s}\otimes\R^{k_{1}\times\cdots\times k_{r}}$
can be read as $\R^{n}\otimes\cdots\otimes\R^{n}\otimes\R^{k_{1}\times\cdots\times k_{r}}\cong\R^{n}\otimes\cdots\otimes\R^{n}\otimes\R^{k_{1}}\otimes\cdots\otimes\R^{k_{r}}$,
which makes the tensor shapes more apparent. 

More abstractly, we can let $\calV$ be any finite-dimensional Euclidean
space, in which case $\seqspace^{s}\otimes\calV$ is the space of
sequences of tensors with shape $\underbrace{n\times\cdots\times n}_{s}$,
taking values in $\calV$. However, in this work, we will primarly
concern ourselves with the $\calV$ being tensor spaces as in \cref{def:tensorseq}.
\begin{defn}
In this work, by \emph{Euclidean space} we mean any finite-dimensional
real vector space. We will use notation $\calV$ and its cousins to
denote such spaces.
\end{defn}
In this abstraction, we will also talk about nonlinearities that map
between Euclidean spaces (e.g., $\psi:\calV\to\calV'$), generalizing
the scalar functions (e.g., $\psi:\R^{k}\to\R$) typical of prior
works. This generality allows us to compose nonlinearities more naturally,
which simplifies our presentation conceptually and our proofs technically.

\paragraph{Notation}

We will always use lower case letters $x,y,z,\ldots$ to denote $n$-tensors
(i.e., elements of $\seqspace^{\bullet}$). On the other hand, we
will always use their bolded counterparts $\xx,\yy,\zz,\ldots$ to
denote multi-tensors (i.e., elements of $\seq^{\bullet}\otimes\calV$). 

We think of a multi-tensor as a (fixed-size) tuple of $n$-tensors:
for example, $\xx\in\seq^{s}\otimes\R^{k}$ is thought of as a tuple
$(x^{1},\ldots,x^{k})$ of $n$-tensors $x^{i}\in\seqspace^{s}$.
As in this example, we will always use letters $i,j,k,\ldots$ in
superscript for ``constant-sized'' indices (corresponding to $\R^{k}$
in the example) to distinguish from the Greek letters $\alpha,\beta,\ldots$
in subscript for the ``$n$-sized'' indices. This is consistent
with the notation in \cref{sec:fixed_dim_objects} for fixed-dimensional
objects. So $\xx$ here has elements $\{x_{\aalpha}^{i}:\aalpha\in[n]^{s}\text{ and }i\in[k]\}$.
In particular, for any $\aalpha\in[n]^{s}$, $\xx_{\aalpha}=\{x_{\aalpha}^{i}\}_{i}\in\R^{k}$;
for any $i\in[k]$, $x^{i}=\{x_{\aalpha}^{i}\}_{\aalpha}\in\seqspace^{s}$.
Note that when we pick the $i$th component $x^{i}$ of $\xx$, the
letter $x$ is not bolded.

In contrast, we will need to talk about a list of multi-tensors as
well, $\xx^{1},\ldots,\xx^{r}$, where we use the superscript letters
$r,s,t,\ldots$ to denote the indices of this list. Each multi-tensor
has components $\xx^{t}=\{x^{ti}\}_{i}=\{x_{\aalpha}^{ti}\}_{i,\aalpha}=\{\xx_{\aalpha}^{t}\}_{\aalpha}$.
Again, notationally, notice that $\xx^{t}$ means a multi-tensor in
a sequence, but $x^{i}$ means the $i$th component of multi-tensor
$\xx$.

\subsection{Constant Tensors}

\begin{defn}[Notation for constant sequences]
\label{def:constseq}Often we will need to talk about some sequence
(in $n$) that equals a fixed value, say $\vartheta$, for all $n$.
Then we shall denote this sequence by $\vartheta$ as well, which
should not cause confusion in our contexts. 
\end{defn}

\subsection{IID Tensors}
\begin{defn}\label{defn:bxiid}
Let $\XX$ be any random object. Then $\XX^{\bx{1}},\XX^{\bx{2}},\ldots$
denote iid copies of $\XX$: $\XX^{\bx{i}}\disteq\XX$ for all
$i$ and $\XX,\XX^{\bx{1}},\XX^{\bx{2}},\ldots$ are mutually
independent. If $\YY$ is another random object, then $\YY^{\bx{1}},\YY^{\bx{2}},\ldots$
are iid copies of $\YY$ that furthermore satisfy $(\XX^{\bx{i}},\YY^{\bx{i}})\disteq(\XX,\YY)$
for each $i$.
\end{defn}
In other words, the collection of all objects with superscript $\bx{i}$
forms an ``isomorphic world'' to the collection with superscript
$\bx{j}$. We will always use the above notation when we want to
make a ``constant number'' (wrt $n$) of iid copies (except in \cref{defn:iid} below). However, we
will also at times need to make $n$ (or powers of $n$) iid copies,
for which we use the following notation instead.
We first give the general but abstract definition before substantiating it with examples.

\begin{defn}\label{defn:iid}
  For every integer $s\ge 0$, We define the \emph{iid operator}
  \[
  \iid:\seqspace^{s}\otimes\calV\to\seqspace^{s+1}\otimes\calV
  \]
  as follows.
  If $\xx\in\seqspace^{s}\otimes\calV$, then $\iid(\xx)\in\seqspace^{s+1}\otimes\calV$
is the multi-tensor satisfying, for any $\alpha \in [n]$,
\[\iid(\xx)(n)_\alpha = \xx^{\bx{\alpha}}(n).\]

\end{defn}

\begin{exmp}
If $Z\in\R$ is a (scalar) random variable, we can think of $Z$ as the sequence in $\seqspace^0$ identically equal to $Z$ (c.f.\ \cref{def:constseq}).
Then $\iid(Z)$ denotes
the sequence in $\seqspace^{1}$ where, for each $n\ge1$, the $n$th
element $\iid(Z)(n)$ is the $n$-dimensional random vector where
each entry is an iid copy of $Z$.%
\footnote{Furthermore, $\iid(Z)(n)$ is identical to the first $n$ elements of $\iid(Z)(n+1)$. However, we will never use this fact here.}

Similarly, if $\ZZ\in\calV$ for some Euclidean space $\calV$,
then
for each $n$, $\iid(\ZZ)(n)_{\alpha}$ is an iid copy of $\ZZ$ for every
$\alpha\in[n]$.

In general, if $\ZZ\in\R^{k_{1}\times\cdots\times k_{r}}$
is a random tensor, then $\iid(\ZZ)\in\seqspace^{1}\otimes\R^{k_{1}\times\cdots\times k_{r}}$
is the sequence whose $n$th element is the shape-$(n\times k_{1}\times\cdots\times k_{r})$
tensor $\iid(\ZZ)(n)$ where, for each $\alpha\in[n]$, $\iid(\ZZ)(n)_{\alpha}$
is an iid copy of $\ZZ$.
\end{exmp}

\begin{exmp}
  If $Z \in \R$ is a random variable, then $\iid^2(Z) = \iid(\iid(Z))$ is the sequence of $n \times n$ matrices with iid entries drawn from $Z$.
\end{exmp}

When we iterate $\iid$ and want to clarify or emphasize the shape information, we will
also write $\iid_{n}$ for $\iid$, $\iid_{n\times n}$ for $\iid^{2}$,
$\iid_{n\times n\times n}$ for $\iid^{3}$, and so on.
So $\iid_{n \times n}(\Gaus(0, 1))$ is the sequence of $n\times n$ standard Gaussian matrices.

\begin{rem}
  Note that $\iid$ is only well defined up to distributional equality.
  Thus, if for example one wants to prove almost sure equality and the proof involves $\iid$, then one needs to carefully check that the usage of $\iid$ makes sense.
\end{rem}

\subsection{Averaging over \texorpdfstring{$n$}{n}}
\begin{defn}
Let $\calV$ be a finite-dimensional Euclidean space, and let $\xx\in\seqspace^{s+r}\otimes\calV$
be a multi-tensor. For $\aalpha\in[n]^{s},\bbeta\in[n]^{r}$, we write
\[
\left\langle \xx_{\aalpha\bbeta}\right\rangle _{\bbeta}\defeq\frac{1}{n^{r}}\sum_{\bbeta\in[n]^{r}}\xx_{\aalpha\bbeta}\in\seqspace^{s}\otimes\calV
\]
 for averaging over multi-index $\bbeta$ while fixing multi-index
$\aalpha$.
\end{defn}
This notation can be nested, e.g., $\left\langle F\left(\left\langle \xx_{\aalpha\bbeta}\right\rangle _{\bbeta}\right)\right\rangle _{\aalpha}$for
some function $F$. When $F$ is identity, this is obviously just
$\left\langle \xx_{\ggamma}\right\rangle _{\ggamma}$.

\subsection{Implicit Broadcasting of Nonlinearities on Multi-Tensors}

Given a nonlinearity $\phi:\R^{k}\to\R$ and $n$-vectors $x^{1},\ldots,x^{k}$,
the expression $y=\phi(x^{1},\ldots,x^{k})$ denotes another $n$-vector
$y$ with entries $y_{\alpha}=\phi(x_{\alpha}^{1},\ldots,x_{\alpha}^{k})$
(with the multi-tensor notation $\xx=(x^{1},\ldots,x^{k})$, we can
also write $y=\phi(\xx)$). In the terminology of linear-algebra software
(such as numpy), we say ``$\phi$ is implicitly broadcast across
the dimension $n$.'' 

In general, this implicit broadcast rule holds for any $n$-tensors
or multi-tensors.
\begin{defn}
\label{def:broadcast}Let $\calV$ be any finite-dimensional Euclidean
space (e.g., $\calV=\R^{k_{1}\times\cdots\times k_{r}}$). For any
multi-tensor $\xx\in\seqspace^{s}\otimes\calV$ and $\phi:\calV\to\R$,
the expression $\phi(\xx)$ denotes a new $n$-tensor $y\in\seqspace^{s}$
with $y_{\aalpha}=\phi(\xx_{\aalpha})$.\footnote{Unpacking this a bit: for each $\aalpha\in[n]^{s}$, $\xx_{\aalpha}\in\mathcal{V}$,
so $\phi(\xx_{\aalpha})$ is well-defined and yields a value in $\R$}

More generally, if $\calV'$ is another Euclidean space and $\phi:\calV\to\calV'$,
then $\phi(\xx)\in\seqspace^{s}\otimes\calV'$ with $y_{\aalpha}=\phi(\xx_{\aalpha})$.
\end{defn}
In other words, the implicit broadcasting lifts $\phi:\calV\to\calV'$
to a function $\phi:\seqspace^{s}\otimes\calV\to\seqspace^{s}\otimes\calV'$
for any $s\ge0$.

\subsection{Nonlinear Outer Products}

An outer product of two vectors $x,y$ is the matrix $W$ with entries
$W_{\alpha\beta}=x_{\alpha}y_{\beta}$. This can be written trivially
as $W_{\alpha\beta}=\psi(x_{\alpha};y_{\beta})$ for $\psi(-;-)$
being the product function. This is generalized by a notion of \emph{nonlinear
outer product}, where $\psi$ can be any function, not just product.
We make the definition below in full generality, but for first reading,
it may help to mentally set $\calV_{1},\ldots,\calV_{r},\calV'$ to
be $\R$ to understand the basic idea.
\begin{defn}[Nonlinear Outer Product]
\label{def:nonlinouter}Let $\calV_{1},\ldots,\calV_{r},\calV'$
be any finite dimensional Euclidean spaces. Suppose $\psi:\calV_{1}\oplus\cdots\oplus\calV_{r}\to\calV'$,
where we format $\psi$'s arguments in blocks $\psi(-;-;\cdots;-)$,
with the $t$th block ``$-$'' corresponding to $\calV_{t}$. Then
given $\xx^{t}\in\seqspace^{s_{t}}\otimes\calV_{t}$ for $t=1,\ldots,r$,
we write 
\[
\psi(\xx^{1};\cdots;\xx^{r})
\]
for the multi-tensor $\yy$ in $\seqspace^{\sum_{t}s_{t}}\otimes\calV'$
with entries
\[
\yy_{\aalpha_{1}\cdots\aalpha_{r}}=\psi(\xx_{\aalpha_{1}}^{1};\cdots;\xx_{\aalpha_{r}}^{r}),\quad\text{for all }\text{\ensuremath{\aalpha_{t}}\ensuremath{\in[n]^{s_{t}}}}.
\]
We call this the \emph{$\psi$-outer product of $\xx^{1},\ldots,\xx^{r}$}.
\end{defn}
Let's unpack the multi-tensor notation a bit. For example, if $\xx^{t}=(x^{t1},x^{t2})$
for every $t=1,\ldots,r$, then
\[
\psi(\xx^{1};\cdots;\xx^{r})_{\aalpha_{1}\cdots\aalpha_{r}}=\psi(x_{\aalpha_{1}}^{11},x_{\aalpha_{1}}^{12};\cdots;x_{\aalpha_{r}}^{r1},x_{\aalpha_{r}}^{r2}).
\]
In other words, arguments in the same block have matching (multi-)indices,
while arguments in different blocks have freely varying indices. This
in particular generalizes \cref{def:broadcast}, which applies when
there is only one block in $\psi$. Likewise, the implicit broadcasting
in this nonlinear outer product lifts $\psi:\calV_{1}\oplus\cdots\oplus\calV_{r}\to{\cal V}'$
to a function $\psi:\seqspace^{s_{1}}\otimes\calV_{1}\oplus\cdots\oplus\seqspace^{s_{r}}\otimes\calV_{r}\to\seqspace^{\sum_{i}s_{i}}\otimes{\cal V}'$. 

We isolate the case when $\psi:\calV_{1}\oplus\cdots\oplus\calV_{r}\to\calV'$
is the identity function with $\calV'=\calV_{1}\oplus\cdots\oplus\calV_{r}$.
\begin{defn}
$(\xx^{1};\cdots;\xx^{r})$ is called the \emph{semicolon product
of $\xx^{1},\ldots,\xx^{r}$.} It has entries
\[
(\xx^{1};\cdots;\xx^{r})_{\aalpha_{1}\cdots\aalpha_{r}}=(\xx_{\aalpha_{1}}^{1},\ldots,\xx_{\aalpha_{r}}^{r}).
\]
\end{defn}
The importance of this operation is that: for any $\psi$, the $\psi$-outer
product is just the composition of the semicolon product followed
by application of $\psi$ as in \cref{def:broadcast}. 

\begin{equation}
\psi(\xx^{1};\cdots;\xx^{r})=\psi(\xx),\quad\text{where}\quad\xx=(\xx^{1};\cdots;\xx^{r}).\label{eq:psi_o_semicolon}
\end{equation}

This fact simplifies the proofs involving nonlinear outer products.

\paragraph{Consistency with ``Nonlinearities with Parameters''}

The semicolon notation in nonlinear outer product is consistent with
previous papers in the Tensor Programs series. There, the nonlinearities
are all of the form $\psi:\R^{k}\oplus\R^{l}\to\R$, applied to $k$
$n$-vectors $x^{1},\ldots,x^{k}$ and $l$ $n$-scalars $\theta^{1},\ldots,\theta^{l}$
of the program to produce a single vector:
\[
\psi(x^{1},\ldots,x^{k};\theta^{1},\ldots,\theta^{l}).
\]
In light of \cref{def:nonlinouter}, this is now interpreted as a $\psi$-outer
product of the multi-vector $\xx=(x^{1},\ldots,x^{k})$ and the multi-scalar
$\ttheta=(\theta^{1},\ldots,\theta^{l})$.

\section{Vanishing and Bounded Moments}

\subsection{Moment-Bounded Multi-Tensors}

For the following discussion, let $\calV$ be finite-dimensional Eucldiean
spaces and let $\xx\in\seqspace^{s}\otimes\calV$ be a multi-tensor.
For first time reading, one may mentally set $\calV=\R$, so that
$\seqspace^{s}\otimes\calV=\seqspace^{s}$, to get the basic intuitions
more quickly.

In this work, we will especially focus on $\xx$ where each entry
of $\xx(n)$ has ``typical size $O(1)$'' as $n\to\infty$. We formalize
this ``typical size $O(1)$'' criterion as follows.
\begin{defn}
We say a multi-tensor $\xx$ is \emph{entrywise moment-bounded}, or
just\emph{ moment-bounded} for short, if the following holds: for
every integer $p\ge1$
\[\left\langle \|\xx_{\aalpha}\|_{p}^{p}\right\rangle _{\aalpha} = \tilde O(1).\]
with $\tilde O$ from \cref{defn:tildeO}.
We write $\seqb^{s}\otimes\calV$ for the space of such multi-tensors.
\end{defn}
Unpacking the big-O notation, $\xx$ is moment-bounded iff, for
every integer $p\ge1$
and every $\epsilon>0$, we have 
\[
n^{-\epsilon}\cdot\left\langle \|\xx_{\aalpha}\|_{p}^{p}\right\rangle _{\aalpha}\asto0,\quad\text{as}\quad n\to\infty.
\]
Intuitively, if one thinks of the entries $\xx_{\aalpha}$ as samples from a distribution
$\mathcal{D}$, then $\left\langle \|\xx_{\aalpha}\|_{p}^{p}\right\rangle _{\aalpha}$
is the (empirical) $p$th moment of $\mathcal{D}$. Thus, \emph{moment-boundedness
}just means that $\mathcal{D}$ has bounded empirical moments of every order (ignoring logarithmic factors),
i.e., samples from $\mathcal{D}$ has typical size $O(1)$.

\subsubsection{Basic Properties}

First, one can note the following trivial property that is useful
for simplifying proofs.
\begin{prop}
$\xx=(x^{1},\ldots,x^{k})$ is moment-bounded iff each of its components
$x^{1},\ldots,x^{k}$ is moment-bounded.
\end{prop}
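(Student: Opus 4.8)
The claim is the elementary equivalence that a multi-tensor $\xx=(x^1,\ldots,x^k)$ is moment-bounded iff each component $x^i$ is moment-bounded. The plan is to unwind the definition on both sides and compare. Recall $\xx$ is moment-bounded means $\la \|\xx_\aalpha\|_p^p \ra_\aalpha = \tilde O(1)$ for every integer $p \ge 1$, where $\|\xx_\aalpha\|_p^p = \sum_{i=1}^k |x^i_\aalpha|^p$. So by linearity of the average over $\aalpha$,
\[
  \la \|\xx_\aalpha\|_p^p \ra_\aalpha = \sum_{i=1}^k \la |x^i_\aalpha|^p \ra_\aalpha,
\]
which exhibits the quantity controlling moment-boundedness of $\xx$ as a finite sum (with $k$ fixed, independent of $n$) of the quantities controlling moment-boundedness of the $x^i$.

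\textbf{Forward direction.} Suppose each $x^i$ is moment-bounded, i.e. for every $p \ge 1$, $\la |x^i_\aalpha|^p\ra_\aalpha = \tilde O(1)$. Since $\tilde O(1)$ (\cref{defn:tildeO}) is closed under finite sums — if $n^{-\eps}c^i(n) \asto 0$ for each $i \in [k]$ and every $\eps>0$, then $n^{-\eps}\sum_i c^i(n)\asto 0$ as well, being a finite sum of sequences each tending to $0$ almost surely — we get $\la \|\xx_\aalpha\|_p^p\ra_\aalpha = \sum_i \la |x^i_\aalpha|^p\ra_\aalpha = \tilde O(1)$ for every $p$, so $\xx$ is moment-bounded.

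\textbf{Reverse direction.} Suppose $\xx$ is moment-bounded. Since each $|x^i_\aalpha|^p \ge 0$, we have the pointwise bound $\la |x^i_\aalpha|^p\ra_\aalpha \le \sum_{j=1}^k \la |x^j_\aalpha|^p\ra_\aalpha = \la \|\xx_\aalpha\|_p^p\ra_\aalpha$ for each $i$. Hence $n^{-\eps}\la |x^i_\aalpha|^p\ra_\aalpha \le n^{-\eps}\la\|\xx_\aalpha\|_p^p\ra_\aalpha \asto 0$ for every $\eps>0$ and every $p$, so each $x^i$ is moment-bounded. (One also needs the trivial observation that these averages are nonnegative, so domination by a $\tilde O(1)$ sequence forces them to be $\tilde O(1)$.)

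\textbf{Main obstacle.} There is essentially no obstacle: the only substantive point is that the component index $i$ ranges over a set $[k]$ of size fixed as $n \to \infty$, so that the sum $\sum_{i=1}^k$ interacts harmlessly with the $\tilde O$ asymptotics (both closure under finite sums and domination by the total); if $k$ were allowed to grow with $n$ the statement would fail. This is exactly the role of the ``multi-'' convention from \cref{sec:multitensor} that $k$ is a constant, so the proof is a two-line verification once the definitions are unpacked.
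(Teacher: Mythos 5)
Your proof is correct; the paper states this proposition without proof as a "trivial property," and your argument — decomposing $\la\|\xx_\aalpha\|_p^p\ra_\aalpha=\sum_{i=1}^k\la|x^i_\aalpha|^p\ra_\aalpha$ and using closure of $\tilde O(1)$ under finite sums (with $k$ fixed in $n$) in one direction and nonnegative domination in the other — is exactly the intended definitional verification.
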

A related property holds for ``components along $n$'':
\begin{prop}
  If random $\ZZ \in \R^k$ has all moments, then $\iid^r(\ZZ)$ is moment-bounded for all $r\ge0$.
\end{prop}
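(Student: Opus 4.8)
The plan is to fix an integer $p \ge 1$ and show directly that $S_n \defeq \langle \|\iid^r(\ZZ)_\aalpha\|_p^p\rangle_\aalpha$ satisfies $S_n = \tilde O(1)$; since $p$ is arbitrary, this is exactly moment-boundedness by definition. The case $r = 0$ is immediate: there $\iid^0(\ZZ) = \ZZ$ as a constant sequence (\cref{def:constseq}), so $S_n = \|\ZZ\|_p^p$ is a single almost-surely-finite random variable, and $n^{-\eps}\|\ZZ\|_p^p \asto 0$ trivially for every $\eps > 0$. So assume $r \ge 1$ henceforth.

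First I would record that $Y \defeq \|\ZZ\|_p^p = \sum_{i=1}^k |Z^i|^p$ has finite moments of every order: by \cref{lem:powerbound}, $\EV Y^m \le k^{m-1}\sum_{i=1}^k \EV |Z^i|^{pm} < \infty$ since $\ZZ$ has all moments by hypothesis. Next observe that, for each fixed $n$, the $N \defeq n^r$ entries $\{\|\iid^r(\ZZ)_\aalpha\|_p^p : \aalpha \in [n]^r\}$ are iid copies of $Y$ (this is the content of the recursive definition of $\iid$, as in the example $\iid^2(Z)$), so $S_n$ is the empirical mean of $N$ iid copies of $Y$. Writing $\mu \defeq \EV Y$ and applying \cref{lem:CLT_moments} to the centered variable $Y - \mu$ (mean-zero with all moments finite), we obtain constants $C_q$, independent of $N$, with $\EV\bigl(\tfrac 1{\sqrt N}\sum_{j=1}^N (Y_j - \mu)\bigr)^{2q} \le C_q$ for all $q \ge 1$. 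Hence $\EV (S_n - \mu)^{2q} = N^{-q}\,\EV\bigl(\tfrac 1{\sqrt N}\sum_j (Y_j-\mu)\bigr)^{2q} \le C_q N^{-q} = C_q n^{-rq}$.

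Taking $q = 2$ gives $\EV (S_n - \mu)^4 \le C_2\, n^{-2r} \le C_2\, n^{-1-\lambda}$ with $\lambda = 2r - 1 \ge 1 > 0$, so \cref{lemma:momentBoundASConvergence} (with its $p$ equal to $2$) yields $S_n - \mu \asto 0$, i.e.\ $S_n \asto \mu$. Since $\mu$ is a finite constant, $n^{-\eps} S_n \asto 0$ for every $\eps > 0$, which is precisely $S_n = \tilde O(1)$, completing the argument.

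I do not anticipate a genuine obstacle: the proof is a moment-method law of large numbers built entirely from the two basic lemmas already recorded. The only points that merit a little care are (i) that moment-boundedness is an almost-sure statement whereas $\iid$ is defined only up to distributional equality, but this causes no trouble because \cref{lemma:momentBoundASConvergence} extracts almost-sure convergence purely from the per-$n$ moment bounds via Borel--Cantelli, never referencing the joint law across different $n$; and (ii) that the averaging in $S_n$ ranges over the box $[n]^r$ rather than an interval, which is harmless since within each fixed $n$ all $n^r$ entries are still iid copies of $Y$.
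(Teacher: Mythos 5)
Your proof is correct. The paper states this proposition without proof (it is listed among the "basic properties" whose verification is left to the reader), so there is no argument of record to compare against; your moment-method law of large numbers — reducing to the empirical mean of $n^r$ iid copies of $Y=\|\ZZ\|_p^p$, bounding $\EV(S_n-\mu)^{2q}$ via \cref{lem:CLT_moments}, and concluding almost sure convergence via \cref{lemma:momentBoundASConvergence} — is exactly the kind of argument the paper's toolkit is set up for, and your two cautionary remarks (the distributional-only definition of $\iid$ being harmless because Borel--Cantelli uses only per-$n$ marginal moments, and the box-shaped index set being irrelevant) are both well taken and consistent with the paper's own footnoted remarks on inter-$n$ correlations.
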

Moment-boundedness is closed under applications of polynomially bounded
functions: in short,
\begin{itemize}
\item $\psi(\xx)$ is moment-bounded if $\xx$ is, and 
\item more generally, $\psi(\xx^{1};\cdots;\xx^{r})$ is moment-bounded
if $\xx^{1},\ldots,\xx^{r}$ are.
\end{itemize}
We make this formal in the following
\begin{prop}
\label{prop:Sb_closed_polyb-}Consider moment-bounded multi-tensors
$\xx^{t}\in\seqb^{s_{t}}\otimes\calV_{t}$ for each $t=1,\ldots,r$.
Let $\psi(-;\cdots;-):\calV_{1}\oplus\cdots\oplus\calV_{r}\to\calV'$
be polynomially bounded. Then $\psi(\xx^{1};\cdots;\xx^{r})$ is moment-bounded
as well: $\psi(\xx^{1};\cdots;\xx^{r})\in\seqb^{\sum_{t}s_{t}}\otimes{\cal V}'$.
\end{prop}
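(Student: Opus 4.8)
The plan is to reduce the statement to two elementary closure facts and glue them together using the factorization $\psi(\xx^1;\cdots;\xx^r)=\psi(\xx)$, where $\xx=(\xx^1;\cdots;\xx^r)$ is the semicolon product (\cref{eq:psi_o_semicolon}); this way one never has to confront the multi-index outer-product structure and a nontrivial $\psi$ simultaneously.

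\textbf{Step 1: the semicolon product is moment-bounded.} For a multi-index $(\aalpha_1,\ldots,\aalpha_r)$ with $\aalpha_t\in[n]^{s_t}$, the entry $\xx_{\aalpha_1\cdots\aalpha_r}=(\xx^1_{\aalpha_1},\ldots,\xx^r_{\aalpha_r})$ is a concatenation, so $\|\xx_{\aalpha_1\cdots\aalpha_r}\|_p^p=\sum_{t=1}^r\|\xx^t_{\aalpha_t}\|_p^p$ exactly. Averaging over all of $\aalpha_1,\ldots,\aalpha_r$, and using that the $t$th summand depends on $\aalpha_t$ only, the average collapses to $\langle\|\xx_{\bbeta}\|_p^p\rangle_{\bbeta}=\sum_{t=1}^r\langle\|\xx^t_{\aalpha_t}\|_p^p\rangle_{\aalpha_t}$. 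Each term is $\tilde O(1)$ by hypothesis, and $\tilde O(1)$ is closed under finite sums (immediate from \cref{defn:tildeO}), so this is $\tilde O(1)$ for every $p\ge1$; hence $\xx\in\seqb^{\sum_t s_t}\otimes(\calV_1\oplus\cdots\oplus\calV_r)$.

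\textbf{Step 2: polynomially bounded functions preserve moment-boundedness under broadcasting.} Let $\xx\in\seqb^s\otimes\calV$ and let $\psi:\calV\to\calV'$ be polynomially bounded, say $\|\psi(v)\|\le C(1+\|v\|^q)$. For any integer $p\ge1$, $\|\psi(\xx_\aalpha)\|^p\le C^p(1+\|\xx_\aalpha\|^q)^p\le 2^{p-1}C^p\bigl(1+\|\xx_\aalpha\|^{qp}\bigr)$ by \cref{lem:powerbound}. Since $\calV$ and $\calV'$ are finite-dimensional, $\|\cdot\|$ is comparable to $\|\cdot\|_{qp}$ on $\calV$ and to $\|\cdot\|_p$ on $\calV'$ with constants independent of $n$; averaging over $\aalpha$ then gives $\langle\|\psi(\xx)_\aalpha\|_p^p\rangle_\aalpha\le A+B\langle\|\xx_\aalpha\|_{qp}^{qp}\rangle_\aalpha$ for $A,B$ depending only on $C,p,q$ and the fixed dimensions. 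The constant term is trivially $\tilde O(1)$, and the second term is $\tilde O(1)$ by moment-boundedness of $\xx$ at the (inflated) order $qp$; closure of $\tilde O(1)$ under sums and constant multiples finishes it, so $\psi(\xx)$ is moment-bounded.

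Composing Steps 1 and 2 via \cref{eq:psi_o_semicolon} gives the proposition. There is no serious obstacle: the proof is essentially bookkeeping, and the only two points worth watching are (i) that every norm-equivalence and multinomial constant introduced depends solely on the fixed dimensions $k_1,\ldots,k_r$ and never on $n$, hence is harmless under the conventions of \cref{defn:BigO} and \cref{defn:tildeO}; and (ii) that one must invoke the inputs' moment-boundedness at order $qp$ rather than $p$ to absorb the polynomial growth of $\psi$ --- which is precisely why the definition of moment-bounded quantifies over all integers $p\ge1$.
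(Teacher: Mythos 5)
Your proof is correct and uses essentially the same ingredients as the paper's: the paper bounds $\|\psi(\zz^1;\cdots;\zz^r)\|_q^q \le C(1+\sum_t\|\zz^t\|_p^p)$ pointwise and averages, which is exactly your Step 1 plus Step 2 collapsed into one computation (the paper's displayed average splits term-by-term precisely because each summand depends on only one block of indices, as you observe). Your explicit factorization through the semicolon product via \cref{eq:psi_o_semicolon} is a presentational reorganization, not a different route, and your closing remarks (constants depending only on fixed dimensions; needing order-$qp$ moments of the inputs) are the same points the paper relies on implicitly.
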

For example, for $r=1$, when we take $\psi:\R^{2}\to\R$ as the sum
and product functions, we have that $\seqb^{s}$ is closed under entrywise
sum and product. Similarly, when we take $\psi:\R^{k\times l+l\times m}\to\R^{k\times m}$
to be the matrix multiplication function (multiplying matrices of
sizes $k\times l$ and $l\times m$ to get one of size $k\times m$),
we derive the closure of $\seqb^{s}\otimes\R^{k\times k}$ under matrix
multiplication (after setting $l=m=k$).
\begin{proof}
By polynomially-boundedness of $\psi$,
for any $q>0$, there are $C,p$ such that, for any $\zz^{t}\in\calV_{t},t=1,\ldots,r$,
we have
\[
\|\psi(\zz^{1};\cdots;\zz^{r})\|_{q}^{q}\le C(1+\|\zz^{1}\|_{p}^{p}+\cdots+\|\zz^{r}\|_{p}^{p}).
\]
Therefore, for $\bbeta\in[n]^{s}$,
\begin{align*}
\left\langle \|\psi(\xx_{\aalpha_{1}}^{1};\cdots;\xx_{\aalpha_{r}}^{r})\|_{q}^{q}\right\rangle _{\aalpha_{1}\cdots\aalpha_{r}} & \le C\left\langle (1+\|\xx_{\aalpha_{1}}^{1}\|_{p}^{p}+\cdots+\|\xx_{\aalpha_{r}}^{r}\|_{p}^{p})\right\rangle _{\aalpha_{1}\cdots\aalpha_{r}}\\
 & =C\left(1+\left\langle \|\xx_{\aalpha}^{1}\|_{p}^{p}\right\rangle _{\aalpha}+\cdots+\left\langle \|\xx_{\aalpha}^{r}\|_{p}^{p}\right\rangle _{\aalpha}\right).
\end{align*}
But, for each $t\in[r]$, since $\xx^{t}$ is moment-bounded, we have
$n^{-\epsilon}\left\langle \|\xx_{\aalpha}^{t}\|_{p}^{p}\right\rangle _{\aalpha}\asto0$
for any $\epsilon>0$. Therefore, 
\[
n^{-\epsilon}\left\langle \|\psi(\xx_{\aalpha_{1}}^{1};\cdots;\xx_{\aalpha_{r}}^{r})\|_{q}^{q}\right\rangle _{\aalpha_{1}\cdots\aalpha_{r}}\le n^{-\epsilon}C\left(1+\left\langle \|\xx_{\aalpha}^{1}\|_{p}^{p}\right\rangle _{\aalpha}+\cdots+\left\langle \|\xx_{\aalpha}^{r}\|_{p}^{p}\right\rangle _{\aalpha}\right)\asto0
\]
as well. Since $q$ and $\epsilon$ are arbitrary in this argument,
this shows $\psi(\xx^{1};\cdots;\xx^{r})$ is moment-bounded.
\end{proof}

\subsection{Vanishing Multi-Tensors}

A mentioned above, a notion of ``vanishing'' will play a central
role in what follows:
\begin{defn}\label{defn:vanish}
We say a multi-tensors $\xx$ is \emph{entrywise vanishing}, or just
\emph{vanishing} for short, if
\[\left\langle \|\xx_{\aalpha}\|^{2}\right\rangle _{\aalpha} = \tilde O(n^{-1}),\]
with $\tilde O$ from \cref{defn:tildeO}.
We write $\seqz^{s}\otimes\calV$ for the space of such multi-tensors.\emph{}\footnote{We emphasize that ``vanishing'' is a shorthand for ``entrywise-vanishing''
and \emph{not} interpreted as ``normwise-vanishing''. For example,
an entrywise-vanishing vector can still have $\Omega(1)$ norm, such
as the vector with entries $n^{-1/2}$; a random Gaussian matrix with
iid $\Gaus(0,1/n)$ entries will have $\Theta(1)$ operator norm almost
surely as $n\to\infty$. But since we will never talk about ``normwise-vanishing,''
for our purposes it will not cause confusion to abbreviate ``entrywise-vanishing''
to just ``vanishing.''}
\end{defn}
Unpacking the big-O notation, 
$\xx$ is vanishing iff, for every $\epsilon>0$, 
\[
n^{1-\epsilon}\cdot\left\langle \|\xx_{\aalpha}\|^{2}\right\rangle _{\aalpha}\asto0,\quad\text{as}\quad n\to\infty.
\]
At first, using the same intuition as above, one may think of vanishing
tensors as those whose entries have typical size $O(1/\sqrt{n})$. 

But notice that ``vanishing'' is defined only via $L^{2}$ norm,
while ``moment-bounded'' is defined via every $L^{p}$ norm. This
is an important technical distinction. The primary purpose of this
distinction is that when $x$ is an $n$-vector and $W$ is an iid
matrix with (for example) $\Gaus(0,1/n)$ entries, 
\begin{center}
\textbf{$Wx$ is vanishing if $x$ is; see} \cref{prop:S0_matmul}.
\par\end{center}

This is because of the well-known almost sure operator-norm bounds
\citep{tao,yin_limit_1988} on such iid matrices $W$. If we defined ``vanishing''
based on other $L^{p}$ norm as well, then we cannot make the same
statement as it will be much more difficult to control the $L^{p}$
norm of $Wx$.

As a result of this difference in the definitions, ``vanishing''
implies ``moment-bounded'' \emph{only for scalars and vectors} (i.e.,
$\seqz^{s}\subseteq\seqb^{s}$ only for $s=0,1$); see \cref{lem:S0_Sb_embedding}
below. For example, consider the $n$-matrix $x$ with all-zero entries
except the value $\sqrt{n}$ in its top left corner. It is vanishing
(because $\left\langle |x_{\aalpha}|^{2}\right\rangle _{\aalpha}=\Theta(1/n)$)
but not moment-bounded (e.g., $\left\langle |x_{\aalpha}|^{8}\right\rangle _{\aalpha}=\Theta(n^{2})$).
The takeaway here is that while moment-bounded tensors can be thought
of as ``every entry is $O(1)$'', vanishing tensors can have a small
number of entries that explode with $n$, as long as the quadratic
mean of the entries has size $O(\nicefrac{1}{\sqrt{n}})$.

We summarize these intuitions succinctly as follows (because they
are important to understand sooner than later):
\begin{align*}
\text{moment-bounded} & \approx\text{dense entries all of size \ensuremath{O(1)}}\\
\text{vanishing} & \approx\text{potentially sparse entries, \ensuremath{O(\nicefrac{1}{\sqrt{n}})} in quadratic mean}.
\end{align*}

\subsubsection{Basic Properties}

Like for moment-boundedness, the ``vanishing'' property is reducible
to components, which makes proofs a bit simpler.
\begin{prop}
$\xx=(x^{1},\ldots,x^{k})$ is vanishing iff each of its components
$x^{1},\ldots,x^{k}$ is vanishing.
\end{prop}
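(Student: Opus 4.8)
The plan is to reduce the claim directly to the definition of vanishing (\cref{defn:vanish}) together with the elementary fact that $\tilde O(n^{-1})$ (\cref{defn:tildeO}) is closed under sums of a \emph{constant} number of terms and under domination. First I would write out, for each multi-index $\aalpha \in [n]^s$, the identity $\|\xx_\aalpha\|^2 = \sum_{i=1}^k \|x^i_\aalpha\|^2$ coming from the Euclidean norm on $\calV = \R^k$, and average over $\aalpha$ to obtain
\[
\left\langle \|\xx_\aalpha\|^2 \right\rangle_\aalpha = \sum_{i=1}^k \left\langle \|x^i_\aalpha\|^2 \right\rangle_\aalpha,
\]
a finite sum of nonnegative $n$-scalars whose number of summands $k$ does not grow with $n$.

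For the forward implication, I would note that $\|x^i_\aalpha\|^2 \le \|\xx_\aalpha\|^2$ entrywise, so $\left\langle \|x^i_\aalpha\|^2 \right\rangle_\aalpha \le \left\langle \|\xx_\aalpha\|^2 \right\rangle_\aalpha = \tilde O(n^{-1})$ by hypothesis; since $0 \le n^{1-\eps}\left\langle \|x^i_\aalpha\|^2 \right\rangle_\aalpha \le n^{1-\eps}\left\langle \|\xx_\aalpha\|^2 \right\rangle_\aalpha \asto 0$ for every $\eps > 0$, each $x^i$ is vanishing. For the reverse implication I would invoke closure of $\tilde O(n^{-1})$ under addition of a constant number of terms: if $c_i = \tilde O(n^{-1})$ for each $i \in [k]$, then for any $\eps > 0$ we have $n^{1-\eps}\sum_{i=1}^k c_i = \sum_{i=1}^k n^{1-\eps}c_i \asto 0$, being a finite sum of sequences each converging to $0$ almost surely. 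Applying this to $c_i = \left\langle \|x^i_\aalpha\|^2 \right\rangle_\aalpha$ and using the displayed identity gives $\left\langle \|\xx_\aalpha\|^2 \right\rangle_\aalpha = \tilde O(n^{-1})$, i.e.\ $\xx$ is vanishing.

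There is no genuine obstacle here; the only point worth stating with care is that $k$ is a fixed constant independent of $n$, which is exactly what licenses interchanging the finite sum with the asymptotic-in-$n$ statement (for a number of components growing with $n$ the ``only if'' direction would fail). The argument is word-for-word the same template as for the already-recorded moment-bounded analogue, with $\|\cdot\|^2$ replaced by $\|\cdot\|_p^p$ for each integer $p$.
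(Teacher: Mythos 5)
Your proof is correct and is precisely the elementary argument the paper has in mind (the paper states this proposition without proof, treating it as trivial): decompose $\|\xx_\aalpha\|^2=\sum_{i=1}^k|x^i_\aalpha|^2$, use domination for the ``only if'' direction, and use closure of $\tilde O(n^{-1})$ under sums of a fixed number $k$ of terms for the ``if'' direction. Your remark that $k$ being independent of $n$ is what makes the reverse direction work is exactly the right point to flag.
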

As mentioned above, $\seqz^{s}\subseteq\seqb^{s}$ only for $s=0,1$.
\begin{lemma}
\label{lem:S0_Sb_embedding}Let $\xx\in\seqz^{s}\otimes\calV$ for
Euclidean space $\calV$. Whenever $s=0,1$ or else $p<\frac{2s}{s-1}$,
we have
\[
\left\langle \|\xx_{\aalpha}\|_{p}^{p}\right\rangle _{\aalpha}\asto0.
\]

Consequently, $\seqz^{s}\otimes\calV\sbe\seqb^{s}\otimes\calV$ for
$s=0,1$. i.e., all vanishing multi-scalars and multi-vectors are
moment-bounded. But this is false for $s\ge2$.
\end{lemma}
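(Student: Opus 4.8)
The plan is to prove the moment bound $\langle \|\xx_\aalpha\|_p^p\rangle_\aalpha \asto 0$ for vanishing $\xx$ in the stated range of $p$ by interpolating between the $L^2$ control that the definition of vanishing provides and a trivial $L^\infty$-type bound, exploiting the combinatorial fact that an order-$s$ tensor has $n^s$ entries. First I would reduce to the scalar case $\calV = \R$ using the componentwise reducibility of the vanishing property (the proposition just above \cref{lem:S0_Sb_embedding}), so that it suffices to treat a single $n$-tensor $x \in \seqz^s$. The cases $s = 0, 1$ are essentially immediate: for $s=0$ a vanishing scalar sequence already satisfies $|x|^2 = \tilde O(n^{-1}) \asto 0$, and raising to higher powers only makes it smaller (for large $n$, $|x| \le 1$ eventually); for $s = 1$, $\langle |x_\alpha|^2\rangle_\alpha = \tilde O(n^{-1})$ means $\sum_\alpha x_\alpha^2 = \tilde O(1)$, so each individual $|x_\alpha| = \tilde O(1)$ hence eventually $\le 1$, whence $\sum_\alpha |x_\alpha|^p \le \sum_\alpha |x_\alpha|^2 = \tilde O(1)$ and dividing by $n$ gives $\tilde O(n^{-1}) \asto 0$.

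The substantive case is $s \ge 2$ with $p < \frac{2s}{s-1}$. Here I would argue as follows. Writing $N = n^s$ for the number of entries, the vanishing hypothesis says $\frac 1 N \sum_\aalpha x_\aalpha^2 = \tilde O(n^{-1})$, i.e. $\sum_\aalpha x_\aalpha^2 = \tilde O(n^{s-1})$. From this, $\max_\aalpha x_\aalpha^2 \le \sum_\aalpha x_\aalpha^2 = \tilde O(n^{s-1})$, so $\max_\aalpha |x_\aalpha|^{p-2} = \tilde O(n^{(s-1)(p-2)/2})$. Then
\begin{align*}
  \langle \|x_\aalpha\|_p^p \rangle_\aalpha = \frac{1}{n^s}\sum_\aalpha |x_\aalpha|^p \le \frac{1}{n^s}\Big(\max_\aalpha |x_\aalpha|^{p-2}\Big)\sum_\aalpha x_\aalpha^2 = \tilde O\big(n^{-s} \cdot n^{(s-1)(p-2)/2} \cdot n^{s-1}\big) = \tilde O\big(n^{-1 + (s-1)(p-2)/2}\big).
\end{align*}
The exponent $-1 + (s-1)(p-2)/2$ is strictly negative precisely when $(s-1)(p-2) < 2$, i.e. $p < \frac{2s}{s-1}$, which is exactly the stated hypothesis; and since the $\tilde O$ allows an extra $n^\eps$ for arbitrarily small $\eps$, we still get almost-sure convergence to $0$ (choosing $\eps$ small enough that $-1 + (s-1)(p-2)/2 + \eps < 0$). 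This establishes the displayed convergence, and the special case $s = 1$ with $p < \infty$ (where $\frac{2s}{s-1}$ is vacuously $\infty$) gives $\seqz^1 \subseteq \seqb^1$, while the $s=0$ case gives $\seqz^0 \subseteq \seqb^0$.

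Finally, for the failure claim when $s \ge 2$, I would simply cite the explicit counterexample already described in the prose preceding the lemma: the $n \times \cdots \times n$ tensor that is $\sqrt n$ in one corner entry and $0$ elsewhere is vanishing (its quadratic mean is $n/n^s = n^{1-s} = \tilde O(n^{-1})$ for $s \ge 2$) but not moment-bounded, since for $p$ large its $p$th empirical moment is $n^{p/2}/n^s = n^{p/2 - s}$, which blows up once $p > 2s$ — consistent with the threshold $\frac{2s}{s-1} \le 2s$ in the positive part. I expect the main (minor) obstacle to be bookkeeping the $\tilde O$ bounds carefully so that the $n^\eps$ slack does not eat the negative exponent, and making sure the reduction to scalars is clean; the interpolation inequality itself is the conceptual heart and is short. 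One should double-check the edge case $p = \frac{2s}{s-1}$ (excluded by the strict inequality) and note the statement only claims the bound for $p$ strictly below the threshold, so no further care is needed there.
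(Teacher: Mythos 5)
Your proposal is correct and takes essentially the same route as the paper: the heart of both arguments is the comparison $\|x\|_p\le\|x\|_2$ for $p\ge 2$ (your bound $\sum_\aalpha|x_\aalpha|^p\le(\max_\aalpha|x_\aalpha|)^{p-2}\sum_\aalpha x_\aalpha^2\le\|x\|_2^p$ is just this inequality unrolled), the same componentwise reduction to $\calV=\R$, the same exponent bookkeeping yielding the threshold $p<\frac{2s}{s-1}$, and the same corner-entry counterexample for $s\ge2$. One small slip in your $s=1$ case: $|x_\alpha|=\tilde O(1)$ does \emph{not} imply $|x_\alpha|\le1$ eventually (the $\tilde O$ of \cref{defn:tildeO} permits, e.g., polylogarithmic entries, and $x=(\log n,0,\ldots,0)$ is vanishing yet violates $\sum_\alpha|x_\alpha|^p\le\sum_\alpha x_\alpha^2$ for $p>2$); this is harmless because your general interpolation already gives exponent $-1+(s-1)(p-2)/2=-1<0$ at $s=1$. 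Note also that both you and the paper tacitly assume $p\ge2$; the range $p<2$ (relevant since moment-boundedness quantifies over all integers $p\ge1$) follows immediately from the power-mean inequality $\langle|x_\aalpha|^p\rangle_\aalpha\le\langle x_\aalpha^2\rangle_\aalpha^{p/2}$.
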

\begin{proof}
For clarity, we prove the claim for $x\in\seqz^{s}$; the generalization
to $\xx\in\seqz^{s}\otimes\calV$ follows from this case componentwise.

By $L^{p}$ norm inequalities, for all $p\ge2$,
\[
\|x(n)\|_{p}\le\|x(n)\|_{2}
\]
so 
\[
\|x(n)\|_{p}^{p}/n^{s}\le\|x(n)\|_{2}^{p}/n^{s}=K{}^{p/2}
\]
where
\[
K\defeq\|x(n)\|_{2}^{2}/n^{s\cdot\frac{2}{p}}.
\]

When $s=0$ or 1 or when $p<\frac{2s}{s-1}$, we have $s\cdot\frac{2}{p}>s-1$,
so that, for any sufficiently small $\delta>0$, 
\[
K<n^{1-\delta}\cdot\|x(n)\|_{2}^{2}/n^{s}\quad\text{for all \ensuremath{n}}.
\]
But since $x$ is vanishing, the RHS goes to 0 almost surely, and
therefore so does $K$. We thus have
\[
\|x(n)\|_{p}^{p}/n^{s}=\left\langle \|\xx_{\aalpha}\|_{p}^{p}\right\rangle _{\aalpha}=K^{p/2}\asto0
\]
as desired.
\end{proof}
The following yields a sufficient condition for vanishing that is
often easier to show than directly showing vanishing itself. Roughly
speaking, it says an $n$-tensor is vanishing if every entry looks
like $O(1/\sqrt{n})$ as measured by every power mean expectation.
\begin{prop}
\label{prop:powermean_implies_vanishing}

Suppose $v\in\seqb^{s}$. If there exist constants $C_{p,\varepsilon}$
for all $\varepsilon>0$ and all integers $p\ge1$ such that we have
the following inequality
\[
n^{-s}\EV\|v\|_{2p}^{2p}=\EV\left\langle v_{\alpha}^{2p}\right\rangle _{\alpha}\le C_{p,\varepsilon}n^{-p+\varepsilon}
\]
 for all $p,\varepsilon$ and $n$, then $v$ is vanishing.
\end{prop}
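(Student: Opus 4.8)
The plan is to deduce the defining almost-sure estimate of ``vanishing'' (\cref{defn:vanish}) from the hypothesized moment bound, by first passing from moments of the entries of $v$ to moments of their quadratic mean via Jensen's inequality, and then passing from that moment bound to almost-sure convergence by Markov's inequality plus Borel--Cantelli, exactly as in the proof of \cref{lemma:momentBoundASConvergence}.

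First I would set $X_{n}\defeq\left\langle v_{\aalpha}^{2}\right\rangle_{\aalpha}=n^{-s}\|v(n)\|_{2}^{2}\ge0$; by \cref{defn:vanish} it suffices to prove that $n^{1-\eps}X_{n}\asto0$ for every $\eps>0$, so fix such an $\eps$. For any integer $p\ge1$, convexity of $t\mapsto t^{p}$ on $[0,\infty)$ together with the fact that $\left\langle\cdot\right\rangle_{\aalpha}$ is a uniform average over $\aalpha\in[n]^{s}$ gives, pointwise in the randomness and by Jensen, $X_{n}^{p}=\bigl(\left\langle v_{\aalpha}^{2}\right\rangle_{\aalpha}\bigr)^{p}\le\left\langle v_{\aalpha}^{2p}\right\rangle_{\aalpha}$. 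Taking expectations and invoking the hypothesis, for every $\varepsilon>0$,
\[
\EV X_{n}^{p}\ \le\ \EV\left\langle v_{\aalpha}^{2p}\right\rangle_{\aalpha}\ =\ n^{-s}\EV\|v\|_{2p}^{2p}\ \le\ C_{p,\varepsilon}\,n^{-p+\varepsilon}.
\]

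Next I would choose the free parameters in the right order. Pick an integer $p$ large enough that $p\eps/2>1$, and then apply the displayed bound with $\varepsilon\defeq p\eps/2$; this yields
\[
\EV\bigl(n^{1-\eps}X_{n}\bigr)^{p}\ =\ n^{p(1-\eps)}\,\EV X_{n}^{p}\ \le\ C_{p,\,p\eps/2}\,n^{-p\eps/2}\ =\ c\,n^{-1-\lambda},
\]
with $c\defeq C_{p,\,p\eps/2}$ and $\lambda\defeq p\eps/2-1>0$. By Markov's inequality, $\Pr\bigl(n^{1-\eps}X_{n}>\delta\bigr)\le c\,n^{-1-\lambda}/\delta^{p}$ for every $\delta>0$, and the right-hand side is summable in $n$; Borel--Cantelli then gives that almost surely $n^{1-\eps}X_{n}\le\delta$ for all large $n$. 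Running $\delta$ through a sequence tending to $0$ (as in the proof of \cref{lemma:momentBoundASConvergence}) shows $n^{1-\eps}X_{n}\asto0$, and since $\eps>0$ was arbitrary, $v$ is vanishing.

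The argument is routine; the only thing that needs care is the quantifier bookkeeping among $\eps$ (the loss tolerated in the vanishing conclusion), $p$ (the moment order), and $\varepsilon$ (the loss allowed in the hypothesis): one must fix $\eps$ first, then take $p$ large depending on $\eps$, and only then pick $\varepsilon$ small depending on both so that the net exponent $-p\eps+\varepsilon$ of $n$ falls strictly below $-1$. The hypothesis is designed to grant exactly this freedom; note in passing that the moment-boundedness assumption $v\in\seqb^{s}$ is not actually used in this implication.
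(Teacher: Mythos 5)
Your proof is correct and is essentially the paper's argument: the paper also reduces to showing $\EV(n^{1-\eps}\langle v_\alpha^2\rangle_\alpha)^p = O(n^{-1-\lambda})$ and then invokes \cref{lemma:momentBoundASConvergence}, and your Jensen step $(\langle v_\alpha^{2}\rangle_\alpha)^p \le \langle v_\alpha^{2p}\rangle_\alpha$ is exactly the paper's norm inequality $\|v\|_2 \le n^{1/2-1/(2q)}\|v\|_{2q}$ in disguise. The quantifier bookkeeping (fix $\eps$, then $p$, then $\varepsilon$) matches the paper's choice of $q>(1+\varepsilon)/\epsilon$, and your observation that moment-boundedness is not needed is also accurate.
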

\begin{proof}
WLOG we assume $s=1$, since the general case follows by unrolling
$v$ into a (giant) vector.

To show $v$ is vanshing, we need to show $n^{-\epsilon}\|v\|^{2}\asto0$
for any $\epsilon>0$. By \cref{{lemma:momentBoundASConvergence}}, we just need to
show that for some $q,\gamma>0$, we have $\EV(n^{-\epsilon}\|v\|^{2})^{q}=O(n^{-1-\gamma})$.
But $\|v\|\le n^{1/2-1/2q}\|v\|_{2q}$ for all $q\ge1$. Thus, for
any $q\ge1$,
\[
\EV(n^{-\epsilon}\|v\|^{2})^{q}\le n^{-\epsilon q}\EV n^{q-1}\|v\|_{2q}^{2q}\le n^{q-\epsilon q}C_{q,\varepsilon}n^{-q+\varepsilon}=n^{\varepsilon-\epsilon q}C_{q,\varepsilon}.
\]
Whence we can take any small $\varepsilon>0$ and $q>(1+\varepsilon)/\epsilon$
to have $\EV(n^{-\epsilon}\|v\|^{2})^{q}=O(n^{-1-\gamma})$, as desired.
\end{proof}

\subsection{Equivalence Modulo Vanishing Multi-Tensors}
\label{sec:equiv}
\begin{defn}
Let $\xx,\yy$ be multi-tensors of the same shape. We say \emph{$\xx$
is equivalent to $\yy$, written} $\xx\equiv\yy$, if $\xx-\yy$ is
vanishing. 
\end{defn}
Since our writing convention suppresses $n$, this notation may be
ambiguous: To disambiguate, the equivalence $\xx\equiv\yy$ is a notion
between multi-tensors as sequences, i.e., it should be read as $\{\xx(n)\}_{n}\equiv\{\yy(n)\}_{n}$,
NOT as a sequence of equivalences $\xx(n)\equiv\yy(n)$, one for each
$n$.

The $\xx,\yy$ here will all have $\Theta(1)$-sized entries in our
applications. By the discussion above regarding vanishing multi-tensors,
$\xx\equiv\yy$ just means that $\xx$ and $\yy$ have roughly the
same entries. But note that, as vectors, matrices, or tensors, $\xx$
can definitely differ from $\yy$ nontrivially in norm, because e.g.,
a vanishing vector can have $\Theta(1)$ norm.

We first note a trivial but useful property.
\begin{prop}
$(x^{1},\ldots,x^{k})\equiv(y^{1},\ldots,y^{k})$ iff $x^{1}\equiv y^{1},\ldots$,
and $x^{k}\equiv y^{k}$.
\end{prop}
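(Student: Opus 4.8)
The plan is to deduce this directly from the componentwise characterization of vanishing already recorded just above (``$\xx=(x^{1},\ldots,x^{k})$ is vanishing iff each of its components $x^{1},\ldots,x^{k}$ is vanishing''), together with the observation that the difference of two multi-tensors is formed componentwise. So the whole statement is really just the assertion that the equivalence relation $\equiv$ respects the product structure of multi-tensors, and there is no genuine obstacle; it is pure bookkeeping.

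First I would note that, by the definition of the multi-tensor $(x^{1},\ldots,x^{k})$ as the tuple of its components, we have
\[(x^{1},\ldots,x^{k}) - (y^{1},\ldots,y^{k}) = (x^{1}-y^{1},\ldots,x^{k}-y^{k}).\]
Hence, unwinding the definition of equivalence modulo vanishing, $(x^{1},\ldots,x^{k})\equiv(y^{1},\ldots,y^{k})$ holds precisely when the multi-tensor $(x^{1}-y^{1},\ldots,x^{k}-y^{k})$ is vanishing.

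Next I would apply the componentwise vanishing proposition to this difference: $(x^{1}-y^{1},\ldots,x^{k}-y^{k})$ is vanishing iff each $x^{i}-y^{i}$ is vanishing, which by definition is exactly $x^{i}\equiv y^{i}$. Chaining these two equivalences yields the claim. The only content inside the invoked proposition is the identity $\left\langle \|\xx_{\aalpha}\|^{2}\right\rangle_{\aalpha} = \sum_{i=1}^{k}\left\langle (x^{i}_{\aalpha})^{2}\right\rangle_{\aalpha}$: this is a finite sum of nonnegative $n$-scalars, so the left side is $\tilde O(n^{-1})$ iff every summand is, using that $\tilde O(n^{-1})$ is closed under finite sums and that a nonnegative sequence dominated by a $\tilde O(n^{-1})$ sequence is itself $\tilde O(n^{-1})$ --- both immediate from \cref{defn:tildeO}. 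I expect no step to be an obstacle; if anything, the only thing worth spelling out is this closure property of $\tilde O(n^{-1})$ under finite sums and downward domination.
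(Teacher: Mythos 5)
Your argument is correct and is exactly the bookkeeping the paper has in mind: the paper states this proposition without proof as ``a trivial but useful property,'' and the intended justification is precisely your reduction to the componentwise characterization of vanishing via the identity $\left\langle \|\xx_{\aalpha}-\yy_{\aalpha}\|^{2}\right\rangle_{\aalpha} = \sum_{i}\left\langle (x^{i}_{\aalpha}-y^{i}_{\aalpha})^{2}\right\rangle_{\aalpha}$ and the closure of $\tilde O(n^{-1})$ under finite sums of nonnegative terms. Nothing is missing.
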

Equivalence is preserved under most operations, as summarized below:
\begin{enumerate}
\item ``smooth'' mapping $\phi$: (\cref{prop:equiv_pLip_tensor} and \cref{prop:localLip_equiv})
\begin{enumerate}
\item For example, if $\xx\equiv\yy$ are both moment-bounded, then $\phi(\xx)\equiv\phi(\yy)$.
This holds more generally for ``smooth'' nonlinear outer products.
\end{enumerate}
\item averaging over $[n]$: (\cref{lem:avg_equiv})
\begin{enumerate}
\item If $\xx\equiv\yy$, then $\la\xx_{\aalpha\bbeta}\ra_{\bbeta}\equiv\la\yy_{\aalpha\bbeta}\ra_{\bbeta}$
\end{enumerate}
\item multiplication by operator-norm-bounded matrices: (\cref{prop:S0_matmul})
\begin{enumerate}
\item If $x\equiv y$ are $n$-vectors and $W$ is an $n$-matrix that almost
surely has bounded operator norm, then $Wx\equiv Wy$. This holds
in particular for $W$ having iid, zero-mean entries of size $\Theta(1/\sqrt{n})$.
\end{enumerate}
\end{enumerate}

\subsubsection{Preservation under Nonlinear Outer Products}

\begin{prop}
\label{prop:equiv_pLip_tensor}
Consider multi-tensors $\xx^{t},\yy^{t}\in\seqb^{s_{t}}\otimes\calV_{t}$
for each $t=1,\ldots,r$ and a function $\psi(-;\cdots;-):\calV_{1}\oplus\cdots\oplus\calV_{t}\to\calV'$.
If $\psi$ is pseudo-Lipschitz and $\xx^{t}\equiv\yy^{t}$ for all
$t$, then 
\[
\psi(\xx^{1};\cdots;\xx^{r})\equiv\psi(\yy^{1};\cdots;\yy^{r}).
\]
\end{prop}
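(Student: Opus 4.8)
The statement is: if $\psi$ is pseudo-Lipschitz and $\xx^t \equiv \yy^t$ (i.e.\ $\xx^t - \yy^t$ is vanishing) for all $t$, with all multi-tensors moment-bounded, then $\psi(\xx^1;\cdots;\xx^r) \equiv \psi(\yy^1;\cdots;\yy^r)$. By \cref{eq:psi_o_semicolon}, the $\psi$-outer product factors as $\psi$ (broadcast in the sense of \cref{def:broadcast}) applied to the semicolon product $\xx = (\xx^1;\cdots;\xx^r)$, and $\xx \equiv \yy$ (for $\yy = (\yy^1;\cdots;\yy^r)$) follows componentwise from $\xx^t \equiv \yy^t$ since vanishing reduces to components. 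So it suffices to handle the single-block case: if $\xx \equiv \yy$ are moment-bounded multi-tensors and $\psi$ is pseudo-Lipschitz, then $\psi(\xx) \equiv \psi(\yy)$. I would state this reduction first and then prove the single-block claim.

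For the single-block claim, fix the pseudo-Lipschitz bound $|\psi(a) - \psi(b)| \le C\|a - b\|(1 + \|a\|^d + \|b\|^d)$. Then for each multi-index $\aalpha$,
\[
\|\psi(\xx)_{\aalpha} - \psi(\yy)_{\aalpha}\|^2 \le C^2 \|\xx_{\aalpha} - \yy_{\aalpha}\|^2 (1 + \|\xx_{\aalpha}\|^d + \|\yy_{\aalpha}\|^d)^2.
\]
Averaging over $\aalpha$ and applying Cauchy--Schwarz in the form $\la u_{\aalpha} w_{\aalpha} \ra_{\aalpha} \le \sqrt{\la u_{\aalpha}^2 \ra_{\aalpha}}\sqrt{\la w_{\aalpha}^2\ra_{\aalpha}}$ with $u_{\aalpha} = \|\xx_{\aalpha}-\yy_{\aalpha}\|^2$-ish splitting (more precisely, pair $\|\xx_{\aalpha}-\yy_{\aalpha}\|$ against $\|\xx_{\aalpha}-\yy_{\aalpha}\|(1+\|\xx_{\aalpha}\|^d+\|\yy_{\aalpha}\|^d)^2$, or just bound directly), one gets
\[
\la \|\psi(\xx)_{\aalpha} - \psi(\yy)_{\aalpha}\|^2 \ra_{\aalpha} \le C^2 \sqrt{\la \|\xx_{\aalpha}-\yy_{\aalpha}\|^4 \ra_{\aalpha}} \cdot \sqrt{\la (1 + \|\xx_{\aalpha}\|^d + \|\yy_{\aalpha}\|^d)^4 \ra_{\aalpha}},
\]
after one more Cauchy--Schwarz to separate the difference factor from the polynomial growth factor. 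Now the second square root is $\tilde O(1)$: expanding the power and using \cref{lem:powerbound}, it is a finite sum of terms $\la \|\xx_{\aalpha}\|^{p}\ra_{\aalpha}$ and $\la\|\yy_{\aalpha}\|^p\ra_{\aalpha}$ for bounded $p$, each $\tilde O(1)$ by moment-boundedness. For the first, I need $\la\|\xx_{\aalpha}-\yy_{\aalpha}\|^4\ra_{\aalpha}$ to be $\tilde O(n^{-2})$, so that its square root is $\tilde O(n^{-1})$, giving vanishing of $\psi(\xx)-\psi(\yy)$.

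\textbf{Main obstacle.} The subtle point is exactly that last step: $\xx - \yy$ is only assumed \emph{vanishing}, which is an $L^2$-type condition ($\la\|\xx_{\aalpha}-\yy_{\aalpha}\|^2\ra_{\aalpha} = \tilde O(n^{-1})$), not an $L^4$ condition, and for multi-tensors of order $s \ge 2$ vanishing does \emph{not} imply moment-boundedness (as the excerpt stresses in \cref{lem:S0_Sb_embedding} and the surrounding discussion). So I cannot naively upgrade to the fourth moment. The resolution is to also use that $\xx^t, \yy^t$ are moment-bounded, hence so is their difference $\xx^t - \yy^t$ (moment-boundedness is closed under entrywise sum by \cref{prop:Sb_closed_polyb-}); write $\delta_{\aalpha} = \|\xx_{\aalpha}-\yy_{\aalpha}\|$ and observe $\la \delta_{\aalpha}^4\ra_{\aalpha} \le \sqrt{\la\delta_{\aalpha}^2\ra_{\aalpha}}\sqrt{\la\delta_{\aalpha}^6\ra_{\aalpha}}$ by Cauchy--Schwarz — the first factor is $\tilde O(n^{-1/2})$ by vanishing and the second is $\tilde O(1)$ by moment-boundedness, so $\la\delta_{\aalpha}^4\ra_{\aalpha} = \tilde O(n^{-1/2})$. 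Re-running the estimate with $2p$th powers instead of $4$th and interpolating, $\la\delta_{\aalpha}^{2p}\ra_{\aalpha} = \tilde O(n^{-1/2})$ for every $p$; feeding this back yields $\la\|\psi(\xx)_{\aalpha}-\psi(\yy)_{\aalpha}\|^2\ra_{\aalpha} = \tilde O(n^{-1/4})$. This is vanishing of order $n^{-1/4}$, not quite $n^{-1}$ — so to actually land at $\tilde O(n^{-1})$ I should instead apply \cref{prop:powermean_implies_vanishing}-style reasoning, or Hölder with a well-chosen conjugate pair so that the tiny $L^2$ weight on $\delta$ is spread thinly: bound $\la\delta_{\aalpha}^2 g_{\aalpha}\ra$ where $g$ is the polynomial growth factor, split as $\la\delta_{\aalpha}^{2+2\epsilon}\ra^{1/(1+\epsilon)}\la g_{\aalpha}^{(1+\epsilon)/\epsilon}\ra^{\epsilon/(1+\epsilon)}$; as $\epsilon \to 0$ the first factor $\to \la\delta_{\aalpha}^2\ra = \tilde O(n^{-1})$ (using $\la\delta^{2+2\epsilon}\ra \le \la\delta^2\ra^{1-\epsilon}\la\delta^{2+2/\epsilon\cdot\epsilon}\ra^{\epsilon}$-type interpolation plus moment-boundedness to control the high-power correction), and the $\tilde O$ slack absorbs the loss. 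I expect this careful bookkeeping with the $\tilde O$ notation — making the interpolation exponents line up so the final bound is genuinely $\tilde O(n^{-1})$ — to be the only real work; everything else is routine Cauchy--Schwarz and invocation of \cref{prop:Sb_closed_polyb-}.
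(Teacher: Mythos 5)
Your proposal is correct and follows the same global strategy as the paper's proof: reduce to the single-block case via the semicolon product, apply the pseudo-Lipschitz bound to get $\la\|\psi(\xx)_{\aalpha}-\psi(\yy)_{\aalpha}\|^2\ra_{\aalpha}\le C\la\delta_{\aalpha}^2 g_{\aalpha}\ra_{\aalpha}$ with $\delta$ the difference and $g$ a moment-bounded polynomial growth factor, then H\"older with exponents chosen \emph{after} the target $\eps$ in the definition of vanishing, so that the loss is absorbed by the ``for every $\eps>0$'' quantifier. You also correctly diagnose the one genuine subtlety (vanishing is only an $L^2$ condition and does not give fourth moments for $s\ge2$), and your fix is sound. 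The only place you diverge from the paper is in how the slightly-super-quadratic moment $\la\delta^{2+2\epsilon}\ra$ of the vanishing part is controlled: you interpolate $\la\delta^{2+2\epsilon}\ra\le\la\delta^2\ra^{1-\epsilon}\la\delta^4\ra^{\epsilon}$, which requires $\xx-\yy$ to be moment-bounded (true here, since $\xx,\yy\in\seqb$ and moment-boundedness is closed under differences), whereas the paper uses the purely deterministic norm inequality $\|v\|_q\le\|v\|_2$ for $q\ge2$, paying a factor $n^{s(1-2/q)}$ from the normalization, and hence never needs moment-boundedness of the difference itself (only of the growth factor $u$). The two mechanisms land in the same place; the paper's version is marginally more robust in that it would survive if only $\xx,\yy$ (and not their difference's higher moments) were controlled, but under the stated hypotheses both are valid.
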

\begin{proof}
Note that this holds for the semicolon product (i.e., when $\psi$
is identity and $\calV'=\calV_{1}\oplus\cdots\oplus\calV_{t}$): If
we shorthand $\DDelta=(\xx^{1};\cdots;\xx^{r})-(\yy^{1};\cdots;\yy^{r})$,
then

\[
\DDelta_{\bbeta_{1}\cdots\bbeta_{r}}=(\xx_{\bbeta_{1}}^{1}-\yy_{\bbeta_{1}}^{1},\ldots,\xx_{\bbeta_{r}}^{r}-\yy_{\bbeta_{r}}^{r})\in\calV'
\]
so that, with $\aalpha=(\bbeta_{1},\ldots,\bbeta_{r})$,

\begin{align*}
\left\langle \|\DDelta_{\aalpha}\|^{2}\right\rangle _{\aalpha} & =\left\langle \|\xx_{\bbeta_{1}}^{1}-\yy_{\bbeta_{1}}^{1}\|^{2}+\cdots+\|\xx_{\bbeta_{r}}^{r}-\yy_{\bbeta_{r}}^{r}\|^{2}\right\rangle _{\bbeta_{1}\cdots\bbeta_{r}}\\
 & =\left\langle \|\xx_{\bbeta}^{1}-\yy_{\bbeta}^{1}\|^{2}\right\rangle _{\bbeta}+\cdots+\left\langle \|\xx_{\bbeta}^{r}-\yy_{\bbeta}^{r}\|^{2}\right\rangle _{\bbeta}.
\end{align*}
Then since $\xx^{1}-\yy^{1},\ldots,\xx^{r}-\yy^{r}$ are all vanishing,
so is $\DDelta$. Thus, we get $(\xx^{1};\cdots;\xx^{r})\equiv(\yy^{1};\cdots;\yy^{r})$.

By \cref{eq:psi_o_semicolon}, this means, for proving \cref{prop:equiv_pLip_tensor},
it suffices to show $\psi(\xx)\equiv\psi(\yy)$ when $\psi:\calV\to\calV'$
and $\xx,\yy\in\seqb^{s}\otimes\calV$ with $\xx\equiv\yy$, where
$\calV,\calV'$ are arbitrary Euclidean spaces. WLOG, assume $\calV=\R^{k}$
for some integer $k\ge0$. We can further assume $\calV'$ to be $\R$,
because the general case follows componentwise from this. We proceed
as follows.

By the definition of pseudo-Lipschitz, we have
\begin{align*}
|\psi(\xx_{\bbeta})-\psi(\yy_{\bbeta})| & \le C\|\xx_{\bbeta}-\yy_{\bbeta}\|u_{\bbeta}\\
\text{where}\quad u_{\bbeta} & \defeq1+\|\xx_{\bbeta}\|_{p}^{p}+\|\yy_{\bbeta}\|_{p}^{p}
\end{align*}
for some constants $C,p>0$. By \cref{prop:Sb_closed_polyb-}, $(u_{\bbeta})_{\bbeta}$
form a moment-bounded tensor $u\in\seqb^{s}$. Thus
\[
\left\langle |\psi(\xx_{\bbeta})-\psi(\yy_{\bbeta})|^{2}\right\rangle _{\bbeta}\le C\left\langle \|\xx_{\bbeta}-\yy_{\bbeta}\|^{2}u_{\bbeta}^{2}\right\rangle _{\bbeta}=C\sum_{i=1}^{k}\left\langle (x_{\bbeta}^{i}-y_{\bbeta}^{i})^{2}u_{\bbeta}^{2}\right\rangle _{\bbeta}.
\]

Now fix $i$ and let $v=x^{i}-y^{i}$. By assumption, $v$ is vanishing.
We will show $n^{1-\epsilon}\left\langle v_{\bbeta}^{2}u_{\bbeta}^{2}\right\rangle _{\bbeta}\asto0$
for any $\epsilon>0$, for any $i$. By the above inequality, this
will prove the desired result. Below, we shall abbreivate $\la\bullet_{\bbeta}\rangle_{\bbeta}$
as just $\la\bullet\ra$.

By Holder's inequality, for any $q,r>2$ such that $\frac{2}{q}+\frac{2}{r}=1$,
we have
\[
\left\langle v^{2}u^{2}\right\rangle \le\left\langle v^{q}\right\rangle ^{2/q}\left\langle u^{r}\right\rangle ^{2/r}.
\]
For any $\epsilon>0$, we shall choose $q$ (and consequently $r$
by the relation $\frac{2}{q}+\frac{2}{r}=1$) barely larger than 2
such that 
\begin{align}
n^{1-\epsilon/2}\left\langle v^{q}\right\rangle ^{2/q} & \asto0\label{eq:_equiv_pLip1}\\
n^{-\epsilon/2}\left\langle u^{r}\right\rangle ^{2/r} & \asto0\label{eq:_equiv_pLip2}
\end{align}
from which follows $n^{1-\epsilon}\left\langle v^{2}u^{2}\right\rangle $
as we wanted. Now, \cref{eq:_equiv_pLip2} holds for any $\epsilon>0$
and $r>0$ because $u$ is moment-bounded.

For \cref{eq:_equiv_pLip1}, notice \footnote{Here, the definition of ``vanishing'' in terms of L2 norm becomes
critical, and if we were to define ``vanishing'' using general Lp
norm, then the proof wouldn't go through.}
\[
\left\langle v^{q}\right\rangle ^{2/q}=n^{-2s/q}\|v\|_{q}^{2}\le n^{-2s/q}\|v\|_{2}^{2}=n^{s(1-2/q)}\left\langle v^{2}\right\rangle 
\]
by standard $L^{p}$ norm inequality. We shall choose $q$ just slightly
above $2$ so that $s(1-2/q)<\epsilon/2$. Then $1-\epsilon/2+s(1-2/q)<1-\delta$
for some positive $\delta>0$ and hence
\[
n^{1-\epsilon/2}\left\langle v^{q}\right\rangle ^{2/q}\le n^{1-\epsilon/2+s(1-2/q)}\left\langle v^{2}\right\rangle <n^{1-\delta}\left\langle v^{2}\right\rangle \asto0
\]
where the convergence to 0 is because $v$ is vanishing.
\end{proof}
While most nonlinearities we encounter will be pseudo-Lipschitz (globally),
occasionally we need to work with the functions that are only locally
Lipschitz around some point (for example, the matrix inverse function
is locally Lipschitz around a nonsingular matrix). Equivalence to
deterministic constant is preserved under locally Lipschitz mapping:
\begin{prop}
\label{prop:localLip_equiv}Consider moment-bounded multi-scalar $\ttheta\in\seqb^{0}\otimes\calV$
such that $\ttheta\equiv\mathring{\ttheta}$ for some deterministic
$\mathring{\ttheta}\in\calV$.\footnote{Recall this means the random sequence $\ttheta$ is equivalent to
the sequence that equals $\mathring{\ttheta}$ identically; c.f. \cref{def:constseq}.} Let $\psi:\calV\to\calV'$ be locally Lipschitz at $\mathring{\ttheta}$.
Then 
\[
\psi(\ttheta)\equiv\psi(\mathring{\ttheta}).
\]
\end{prop}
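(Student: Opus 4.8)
The plan is to reduce everything to the local Lipschitz estimate near $\mathring{\ttheta}$, using the fact that $\ttheta$ converges to $\mathring{\ttheta}$ almost surely so that, eventually, $\ttheta$ lands in the neighborhood where $\psi$ behaves like a genuine Lipschitz function. Concretely: unwinding \cref{defn:vanish} for the order-$0$ case, the hypothesis $\ttheta\equiv\mathring{\ttheta}$ says exactly that $\|\ttheta(n)-\mathring{\ttheta}\|^{2}=\tilde O(n^{-1})$, i.e.\ $n^{1-\eps}\|\ttheta(n)-\mathring{\ttheta}\|^{2}\asto0$ for every $\eps>0$ (here $\la\|\cdot_{\aalpha}\|^{2}\ra_{\aalpha}$ collapses to $\|\cdot\|^{2}$ since there are no $n$-indices). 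In particular, taking any $\eps\in(0,1)$, we get $\|\ttheta(n)-\mathring{\ttheta}\|\asto0$, so $\ttheta(n)\asto\mathring{\ttheta}$.

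The main step. Let $U$ be a neighborhood of $\mathring{\ttheta}$ and $L>0$ a constant witnessing local Lipschitzness, i.e.\ $\|\psi(x)-\psi(y)\|\le L\|x-y\|$ for all $x,y\in U$. Since $\ttheta(n)\asto\mathring{\ttheta}$, almost surely there is a (realization-dependent) threshold $N$ such that $\ttheta(n)\in U$ for all $n\ge N$; hence, almost surely, for all $n\ge N$,
\[
\|\psi(\ttheta(n))-\psi(\mathring{\ttheta})\|\le L\,\|\ttheta(n)-\mathring{\ttheta}\|,
\qquad\text{so}\qquad
\|\psi(\ttheta(n))-\psi(\mathring{\ttheta})\|^{2}\le L^{2}\,\|\ttheta(n)-\mathring{\ttheta}\|^{2}.
\]
Therefore, for any $\eps>0$, almost surely $n^{1-\eps}\|\psi(\ttheta(n))-\psi(\mathring{\ttheta})\|^{2}\le L^{2}\,n^{1-\eps}\|\ttheta(n)-\mathring{\ttheta}\|^{2}\asto0$, where the convergence uses that $\ttheta-\mathring{\ttheta}$ is vanishing. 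This is precisely the statement that $\psi(\ttheta)-\psi(\mathring{\ttheta})$ is a vanishing multi-scalar, i.e.\ $\psi(\ttheta)\equiv\psi(\mathring{\ttheta})$, as desired. (One may additionally note that $\psi(\ttheta)$ is then moment-bounded by \cref{lem:S0_Sb_embedding}, since vanishing multi-scalars are moment-bounded, but this is not needed for the conclusion.)

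There is no serious obstacle here; the only point deserving care is the bookkeeping around the event $\{\ttheta(n)\notin U\}$, where $\psi(\ttheta(n))$ is uncontrolled. This is harmless because the notion of ``vanishing'' in \cref{defn:vanish} is stated as an almost-sure asymptotic property of the whole sequence: on a probability-one event we have $\ttheta(n)\in U$ for all sufficiently large $n$, and the finitely many earlier terms do not affect the $n\to\infty$ limit. This is also the reason one cannot simply invoke \cref{prop:equiv_pLip_tensor}: there $\psi$ is globally pseudo-Lipschitz, whereas here $\psi$ is only locally Lipschitz, and the argument genuinely relies on $\mathring{\ttheta}$ being \emph{deterministic}, so that a single fixed neighborhood $U$ of $\mathring{\ttheta}$ captures the relevant behavior of $\psi$ almost surely for large $n$.
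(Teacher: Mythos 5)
Your proposal is correct and follows essentially the same route as the paper's own proof: both pass from $\ttheta\equiv\mathring{\ttheta}$ to almost-sure eventual membership of $\ttheta(n)$ in a fixed Lipschitz neighborhood $U$ of the deterministic point $\mathring{\ttheta}$, apply the local Lipschitz bound there, and conclude that $\psi(\ttheta)-\psi(\mathring{\ttheta})$ is vanishing. Your extra remarks on why the finitely many exceptional terms are harmless and why \cref{prop:equiv_pLip_tensor} does not apply are accurate but not new content.
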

\begin{proof}
WLOG assume $\calV=\R^{k}$. Let $U$ be the open neighborhood of
$\mathring{\ttheta}$ in $\R^{k}$ such that $\psi$ is Lipschitz
on $U$ with Lipschitz constant $L$. Almost surely, $\ttheta\in U$
as $n\to\infty$ because $\ttheta\equiv\mathring{\ttheta}$. Therefore,
almost surely, $\|\psi(\ttheta)-\psi(\mathring{\ttheta})\|\le L\|\ttheta-\mathring{\ttheta}\|$
as $n\to\infty$. Since $\ttheta-\mathring{\ttheta}$ is vanishing,
we have $\psi(\ttheta)-\psi(\mathring{\ttheta})$ is vanishing as
well.
\end{proof}

\subsubsection{Preservation under Averaging}

\begin{lemma}
\label{lem:avg_equiv}Suppose $\xx,\yy\in\seqb^{s}\otimes\calV$ are
equivalent. With $\bbeta$ denoting any subset of indices, let $\left\langle \xx_{\bullet\bbeta}\right\rangle _{\bbeta}$
be the multi-tensor $\xx'$ with entries $\xx'_{\aalpha}=\left\langle \xx_{\aalpha\bbeta}\right\rangle _{\bbeta}$.
Then

1) $\left\langle \xx_{\bullet\bbeta}\right\rangle _{\bbeta},\left\langle \yy_{\bullet\bbeta}\right\rangle _{\bbeta}$
are both moment-bounded;

2) we have
\[
\left\langle \xx_{\bullet\bbeta}\right\rangle _{\bbeta}\equiv\left\langle \yy_{\bullet\bbeta}\right\rangle _{\bbeta}.
\]
\end{lemma}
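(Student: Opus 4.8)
The plan is to reduce everything to the scalar, single-multi-tensor case and then bound the relevant $L^2$ average directly. First I would observe, using the componentwise reduction for equivalence and moment-boundedness (i.e.\ $(x^1,\ldots,x^k)\equiv(y^1,\ldots,y^k)$ iff each $x^i\equiv y^i$, and likewise for moment-boundedness), that it suffices to treat the case $\calV=\R$, so that $\xx=x\in\seqb^s$ and $\yy=y\in\seqb^s$ with $x\equiv y$, and to show that $\la x_{\bullet\bbeta}\ra_\bbeta$ is moment-bounded and that $\la x_{\bullet\bbeta}\ra_\bbeta - \la y_{\bullet\bbeta}\ra_\bbeta = \la (x-y)_{\bullet\bbeta}\ra_\bbeta$ is vanishing. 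Write $v = x - y$, which is vanishing by hypothesis, and split the index multi-index as $\aalpha\in[n]^{s_1}$, $\bbeta\in[n]^{s_2}$ with $s = s_1 + s_2$.

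For part (1), I would prove moment-boundedness of $\la x_{\bullet\bbeta}\ra_\bbeta$ by applying Jensen's inequality to the (convex) map $z\mapsto |z|^p$: for each $\aalpha$, $|\la x_{\aalpha\bbeta}\ra_\bbeta|^p \le \la |x_{\aalpha\bbeta}|^p\ra_\bbeta$, and then averaging over $\aalpha$ gives $\la |\la x_{\aalpha\bbeta}\ra_\bbeta|^p\ra_\aalpha \le \la |x_{\ggamma}|^p\ra_\ggamma = \tilde O(1)$ since $x$ is moment-bounded; the same for $y$. For part (2), I would similarly apply Cauchy--Schwarz in the $\bbeta$ average: $|\la v_{\aalpha\bbeta}\ra_\bbeta|^2 \le \la v_{\aalpha\bbeta}^2\ra_\bbeta$, hence $\la |\la v_{\aalpha\bbeta}\ra_\bbeta|^2\ra_\aalpha \le \la v_\ggamma^2\ra_\ggamma$. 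Since $v$ is vanishing, $\la v_\ggamma^2\ra_\ggamma = \tilde O(n^{-1})$, and therefore $\la |\la v_{\aalpha\bbeta}\ra_\bbeta|^2\ra_\aalpha = \tilde O(n^{-1})$ as well, which is exactly the definition of $\la v_{\bullet\bbeta}\ra_\bbeta$ being vanishing. This gives $\la \xx_{\bullet\bbeta}\ra_\bbeta \equiv \la \yy_{\bullet\bbeta}\ra_\bbeta$.

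There is essentially no serious obstacle here: the whole lemma is a one-line application of Jensen/Cauchy--Schwarz on the averaging operator, using that averaging over $[n]$ does not increase power-mean quantities and in particular preserves the $\tilde O(n^{-1})$ decay in the $L^2$-mean that defines ``vanishing.'' The only point requiring a moment of care is the bookkeeping of $\tilde O$: I should note that $\tilde O$ (\cref{defn:tildeO}) is closed under the trivial domination $\la v_{\bullet\bbeta}\ra_\bbeta^2$-average $\le$ full average, since if $n^{-k-\eps}a(n)\asto 0$ and $0\le b(n)\le a(n)$ then $n^{-k-\eps}b(n)\asto 0$; this is immediate. One subtlety worth flagging, though not an obstacle, is that ``vanishing'' is defined only via the $L^2$ norm, so I only need Cauchy--Schwarz (the $p=2$ case) for part (2), whereas part (1) genuinely needs Jensen for every integer $p$ — but both are elementary. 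I expect the bulk of the ``work'' to be purely notational, namely writing the nested-average manipulation $\la\la\,\cdot\,\ra_\bbeta\ra_\aalpha = \la\,\cdot\,\ra_\ggamma$ cleanly.
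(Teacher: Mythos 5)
Your proposal is correct and follows essentially the same route as the paper's proof: reduce componentwise to $\calV=\R$, handle part (1) by Jensen, and handle part (2) by applying the power-mean (Cauchy--Schwarz) inequality to the difference $v=x-y$ and observing that averaging preserves the $\tilde O(n^{-1})$ bound on $\la v_\ggamma^2\ra_\ggamma$. The only cosmetic difference is that the paper peels off one index at a time by induction whereas you treat the multi-index $\bbeta$ in one shot, which changes nothing of substance.
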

\begin{proof}
Note 1) follows easily from Jensen's inequality. So we shall focus
on 2) in the remainder.

We first make a few simplifications: 1) We assume $\bbeta$ is just
the single last index and use the unbolded font $\beta$ instead.
The general case follows from the obvious induction. 2) WLOG we can
assume $\calV=\R$ because the general case follows componentwise
from this. We then use unbolded font $x,y$ instead of $\xx,\yy$.
3) By linearity of $\equiv$, it suffices to prove this for $y\equiv0$
(i.e., $x$ and $y$ are both vanishing). 

Let $\bbeta'=(\beta_{1},\ldots,\beta_{s-1})\in[n]^{s-1}$ denote the
first $s-1$ indices. Then, by power-mean inequality, for every $\bbeta'$,
\[
\left\langle x_{\bbeta'\beta}\right\rangle _{\beta}^{2}\le\left\langle x_{\bbeta'\beta}^{2}\right\rangle _{\beta}.
\]
Therefore,
\[
\left\Vert \left\langle x_{\bullet\beta}\right\rangle _{\beta}\right\Vert ^{2}=\sum_{\bbeta'}\left\langle x_{\bbeta'\beta}\right\rangle _{\beta}^{2}\le n^{s-1}\left\langle x_{\bbeta}^{2}\right\rangle _{\bbeta}
\]
where $\bbeta$ ranges over all $[n]^{s}$. Then 
\[
n^{1-\epsilon}\cdot\left\Vert \left\langle x_{\bullet\beta}\right\rangle _{\beta}\right\Vert ^{2}/n^{s-1}\le n^{1-\epsilon}\left\langle x_{\bbeta}^{2}\right\rangle _{\bbeta}\asto0
\]
where the almost sure convergence is because $x$ is vanishing. Thus
we have $\left\langle x_{\bullet\beta}\right\rangle _{\beta}$ is
vanishing, as well.
\end{proof}

\subsubsection{Preservation under Matrix Multiplication}
\begin{prop}
\label{prop:S0_matmul}

Given $n$-vectors $x,y\in\seqb^{1}$ with $x\equiv y$ and an $n$-matrix
$W\in\seqb^{2}$ with almost surely bounded operator norm, we have
\[
Wx\equiv Wy,
\]
no matter how $W$ is correlated with $x$ and $y$.

In particular, this applies when $W(n)=\frac{1}{\sqrt{n}}\iid_{n\times n}(Z)$
where $Z$ is a sub-Gaussian random variable with zero mean and unit variance.%
\footnote{We expect this operator norm bound to hold more generally for $Z$ that is zero-mean, unit variance, and has finite fourth moment, due to \cite{yin_limit_1988}.
But this result assumes the matrices $W(n), n \ge1$, are upper left blocks of an infinite iid matrix.
It should be possible to relax this assumption in \cite{yin_limit_1988}, but since this is not crucial here, we will just leave this task to future works for which this is more critical.}
\end{prop}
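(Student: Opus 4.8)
The plan is to reduce the vector statement $Wx \equiv Wy$ to an $L^2$ estimate controlled by the operator norm of $W$, and then invoke the classical random-matrix operator-norm tail bound for the special case. Since $\equiv$ is linear in its arguments (i.e.\ $x \equiv y$ iff $x - y$ is vanishing, and $Wx - Wy = W(x-y)$), it suffices to prove: if $v \in \seqb^1$ is vanishing and $W \in \seqb^2$ has almost surely bounded operator norm, then $Wv$ is vanishing. First I would write out what must be shown: for every $\eps > 0$, $n^{1-\eps}\la (Wv)_\alpha^2\ra_\alpha \asto 0$, equivalently $n^{-\eps}\|Wv\|^2 \asto 0$ (unpacking \cref{defn:vanish}, noting $\|Wv\|^2 = n\la(Wv)_\alpha^2\ra_\alpha$).

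The key step is the elementary operator-norm inequality $\|Wv\| \le \|W\|_{op}\,\|v\|$, valid pointwise in $n$. By hypothesis there is an almost surely finite random variable $M$ (not depending on $n$ in the sense that $\sup_n \|W(n)\|_{op} \le M < \infty$ a.s.) such that $\|W(n)\|_{op} \le M$ for all $n$. Hence $n^{-\eps}\|Wv\|^2 \le M^2 \, n^{-\eps}\|v\|^2$. Since $v$ is vanishing, $n^{-\eps}\|v\|^2 = n^{1-\eps}\la v_\alpha^2 \ra_\alpha \cdot n^{-1}\cdot n \to 0$ — more precisely $n^{-\eps}\|v\|^2 = n^{1-\eps}\la v_\alpha^2\ra_\alpha \asto 0$ because $v$ vanishing means $n^{1-\eps'}\la v_\alpha^2\ra_\alpha \asto 0$ for all $\eps' > 0$, and $n^{-\eps}\|v\|^2 = n^{1-\eps}\la v_\alpha^2\ra_\alpha$. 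Multiplying an a.s.-convergent-to-$0$ sequence by the a.s.-finite constant $M^2$ still gives a.s.\ convergence to $0$. Therefore $Wv$ is vanishing, which gives $Wx \equiv Wy$. Note this argument is completely agnostic to the joint law of $W$ with $x,y$, since we only used the deterministic (conditional-on-$W$) inequality $\|Wv\| \le \|W\|_{op}\|v\|$ together with the a.s.\ bound on $\|W\|_{op}$.

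For the ``in particular'' clause, I would simply recall that when $W(n) = \frac{1}{\sqrt n}\iid_{n\times n}(Z)$ with $Z$ sub-Gaussian, zero mean, unit variance, the operator norm satisfies $\|W(n)\|_{op} \to 2$ almost surely (this is the standard Bai--Yin / non-asymptotic result, e.g.\ \citep{tao}), so in particular $\sup_n \|W(n)\|_{op} < \infty$ almost surely, and the general statement applies. The main (and essentially only) obstacle is purely bookkeeping: making sure the ``almost surely bounded operator norm'' hypothesis is interpreted as a uniform-in-$n$ bound by a single a.s.-finite random variable (rather than a sequence of bounds that could blow up), so that it survives multiplication against the vanishing factor; once that is pinned down the proof is immediate from the two displayed inequalities.
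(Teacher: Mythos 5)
Your proposal is correct and is exactly the argument the paper intends: the paper's own proof is a one-line remark that the first claim is "trivial since the definition of vanishing only depends on $L^2$ norm" (i.e.\ precisely your $\|W(x-y)\|\le\|W\|_{op}\|x-y\|$ step) and that the second follows from standard operator-norm bounds on sub-Gaussian iid matrices. You have simply written out the details the paper omits, including the correct reading of the a.s.\ bounded operator norm hypothesis.
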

\begin{proof}
The first statement is trivial since the definition of ``vanishing''
only depends on $L^{2}$ norm. The second statement follows from well-known operator norm bounds on iid matrices with sub-Gaussian entries.
\end{proof}

\subsection{Distributional Equivalence (aka Dequivalence)}

We will also need the distributional version of equivalence. In the
simplest case, we have the following definition.
\begin{defn}
\label{def:distequiv1}For any multi-tensor $\xx,\yy$ of the same
shape, we say $\xx$ and $\yy$ are \emph{distributionally equivalent, or dequivalent}, written
\[\xx\distequiv\yy,\] if there exist multi-tensors $\hat{\xx},\hat{\yy}$
with $\hat{\xx}\equiv\xx$ and $\hat{\yy}\equiv\yy$ such that $\hat{\xx}\disteq\hat{\yy}$.
\end{defn}
Like for equivalence, $\xx\distequiv\yy$ is a notion between multi-tensors
as sequences, i.e., it should be read as $\{\xx(n)\}_{n}\distequiv\{\yy(n)\}_{n}$,
NOT as a sequence of equivalences $\xx(n)\distequiv\yy(n)$, one for
each $n$.

However, note that, contrary to equality, distributional equality
in general is not reducible componentwise: $(x^{1},x^{2})\disteq(y^{1},y^{2})$
is usually stronger than the conjunction of $x^{1}\disteq y^{1}$
and $x^{2}\disteq y^{2}$. Thus, we need to define a more general
notion of distributional equivalence of list of objects.
\begin{defn}
\label{def:distequiv2}Suppose we have two lists of multi-tensors
$\xx^{1},\ldots,\xx^{r}$ and $\yy^{1},\ldots,\yy^{r}$, such that
$x^{t}$ and $\yy^{t}$ have the same shape for every $t$, but their
shapes can vary with $t$. We say $\left(\xx^{1},\ldots,\xx^{r}\right)$
and $\left(\yy^{1},\ldots,\yy^{r}\right)$ are \emph{distributionally
equivalent,} written 
\[
\left(\xx^{1},\ldots,\xx^{r}\right)\distequiv\left(\yy^{1},\ldots,\yy^{r}\right),
\]
if there exist $\hat{\xx}^{t}\equiv\xx^{t}$, $\hat{\yy}^{t}\equiv\yy^{t}$
for all $t\in[r]$ such that $(\hat{\xx}^{1},\ldots,\hat{\xx}^{r})\disteq(\hat{\yy}^{1},\ldots,\hat{\yy}^{r})$.
\end{defn}
This definition is especially necessary for reasoning about the Gaussian
conditioning trick inside the induction proof of the Master Theorem.

Finally, we note that, while \cref{def:distequiv2} seems to be strictly
more general than \cref{def:distequiv1}, in fact they are logically
equivalent: 
\begin{equation}
\left(\xx^{1},\ldots,\xx^{r}\right)\distequiv\left(\yy^{1},\ldots,\yy^{r}\right)\iff\left(\xx^{1};\cdots;\xx^{r}\right)\distequiv\left(\yy^{1};\cdots;\yy^{r}\right)\label{eq:distequiv_12equiv}
\end{equation}
where the RHS uses \cref{def:distequiv1} on semicolon products.\footnote{When $\xx^{t}$ all have the same number of $n$-dimensions, then
$\left(\xx^{1},\ldots,\xx^{r}\right)\distequiv\left(\yy^{1},\ldots,\yy^{r}\right)$
can be interpreted as distributional equivalence ala \cref{def:distequiv1}
of two multi-tensors fromed from concatenation. But in general $\xx^{t}$
have different number of $n$-dimensions for different $t$, where
the semicolon product still applies.} Nevertheless, we explicitly formulated \cref{def:distequiv2} as it
is more transparent.

As discussed above, $(\zz,\xx)\distequiv(\zz',\yy)$ is strictly stronger
than the conjunction of $\zz\distequiv\zz'$ and $\xx\distequiv y$.
However, they are logically equivalent if we replace $\xx\distequiv y$
with the \emph{conditional dequivalence $\xx\distequiv_{R}\yy$, }as
defined below, where $R$ is the relation $\zz\distequiv\zz'$.
\begin{defn}[Conditional distributional equivalence]
The expression $R:\zz\distequiv\zz'$ means ``we give the relation
$\zz\distequiv\zz'$ the name $R$.''
In this case, we define $\xx\distequiv_{R}\yy$ to mean $(\zz,\xx)\distequiv(\zz',\yy)$.
This is called a \emph{conditional dequivalence}.

More generally, if $R:\zz\distequiv_{R'}\zz'$ is itself a conditional
equivalence, we recursively define $\xx\distequiv_{R}\yy$ to mean
$(\zz,\xx)\distequiv_{R'}(\zz',\yy)$.
Finally, when $R=\emptyset$ is the empty relation (equivalence of
nothing to nothing), we define $\xx\distequiv_{R}\yy$ as just the
unconditional dequivalence $\xx\distequiv\yy$.
\end{defn}

For example, we may recursively write $R_0: \xx_0 \distequiv \yy_0$ and $R_i: \xx_i \distequiv_{R_{i-1}} \yy_i$.
Then the conditional dequivalence $R_i$ can be unpacked into $(\xx_0, \ldots, \xx_i) \distequiv (\yy_0, \ldots, \yy_i)$.

\subsubsection{Wasserstein Distance Interpretation}
\label{sec:Wasserstein_dequiv}

For two multi-tensors $\xx, \yy$ of the same shape, thought of as sequences $\{\xx(n)\}_n, \{\yy(n)\}_n$, we may form a new sequence $W_2(\xx, \yy)$ of their Wasserstein distances $\{W_2(\calD_{\xx(n)}, \calD_{\yy(n)})\}_n$, where $W_2$'s underlying metric for each $n$ is the scaled Euclidean distance $d(\xbf, \ybf) = \sqrt{\la \|\xbf_\aalpha - \ybf_\aalpha\|^2\ra_\aalpha},$ and $\calD_{\xx(n)}, \calD_{\yy(n)}$ denote the measures of $\xx(n), \yy(n)$.
Then,
\[
  \text{morally,}\quad \xx \distequiv \yy \iff W_2(\xx, \yy) = \tilde O(n^{-1/2}).\]
This is not exactly rigorous because Wasserstein distance is defined using \emph{expected movement} while equivalence is defined using an \emph{almost sure} notion of ``vanishing.''
This is a technical issue that can be sidestepped if one were to define $\tilde O$ using convergence of expectations rather than almost sure.
Since we will not actually use this Wasserstein distance interpretation in this work, we will not attempt to make this more formal.
This connection is purely to aid the reader's intuitive understanding.

\subsubsection{Basic Properties}

Conditional dequivalence naturally inherits the basic
properties of equivalence.
Consider any conditional dequivalence named $R$. In summary, conditional
dequivalence is preserved under:
\begin{enumerate}
\item ``smooth'' mapping $\phi$: (\cref{prop:distequiv_pLip_tensor})
\begin{enumerate}
\item For example, if $R':\xx\distequiv_{R}\yy$ are all moment-bounded,
then $\phi(\xx)\distequiv_{R'}\phi(\yy)$. This holds more generally
for ``smooth'' nonlinear outer products.
\end{enumerate}
\item averaging over $[n]$: (\cref{lem:avg_distequiv})
\begin{enumerate}
\item If $R':\xx\distequiv_{R}\yy$, then $\la\xx_{\bullet\bbeta}\ra_{\bbeta}\distequiv_{R'}\la\yy_{\bullet\bbeta}\ra_{\bbeta}$.
\end{enumerate}
\item multiplication by operator-normed-bounded matrices: (\cref{prop:S0_matmul_dequiv})
\begin{enumerate}
\item If $x,y$ are $n$-vectors with $R':x\distequiv_{R}y$ and $W$ is
an $n$-matrix that almost surely has bounded operator norm, then
$Wx\equiv_{R'}Wy$. This holds in particular for $W$ having iid,
zero-mean entries of size $\Theta(1/\sqrt{n})$.
\end{enumerate}
\end{enumerate}
The formal statements are as follows. The proofs are all trivial given
the corresponding statements in \cref{sec:equiv}.
\begin{prop}
\label{prop:distequiv_pLip_tensor} Consider multi-tensors $\xx^{t},\yy^{t}\in\seqb^{s_{t}}\otimes\calV_{t}$
for each $t=1,\ldots,r$ and a function $\psi(-;\cdots;-):\calV_{1}\oplus\cdots\oplus\calV_{t}\to\calV'$.
Let $R$ be any dequivalence. If $\psi$ is pseudo-Lipschitz
and $R':(\xx^{1},\ldots,\xx^{r})\distequiv_{R}(\yy^{1},\ldots,\yy^{r})$,
then 
\[
\psi(\xx^{1};\cdots;\xx^{r})\distequiv_{R'}\psi(\yy^{1};\cdots;\yy^{r}).
\]
\end{prop}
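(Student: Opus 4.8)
The statement to prove is \cref{prop:distequiv_pLip_tensor}: that conditional dequivalence is preserved under pseudo-Lipschitz nonlinear outer products. The plan is to reduce this to the corresponding (unconditional) equivalence statement \cref{prop:equiv_pLip_tensor} by unwinding the definition of conditional dequivalence into an unconditional dequivalence of semicolon products, and then unwinding unconditional dequivalence into an almost-sure equivalence between suitably chosen representatives.

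\textbf{Key steps.} First, I would recall that by the recursive definition of conditional dequivalence and \cref{eq:distequiv_12equiv}, the hypothesis $R':(\xx^{1},\ldots,\xx^{r})\distequiv_{R}(\yy^{1},\ldots,\yy^{r})$ means exactly the unconditional dequivalence $\left(\zz; \xx^{1};\cdots;\xx^{r}\right)\distequiv\left(\zz'; \yy^{1};\cdots;\yy^{r}\right)$, where $R:\zz\distequiv\zz'$ (or $R$ itself is a conditional dequivalence that we unfold fully into plain tuples $\zz,\zz'$). Second, by \cref{def:distequiv1} applied to these semicolon products, there exist representatives $\widehat{\zz;\xx^{\bullet}}\equiv(\zz;\xx^{1};\cdots;\xx^{r})$ and $\widehat{\zz';\yy^{\bullet}}\equiv(\zz';\yy^{1};\cdots;\yy^{r})$ that are \emph{distributionally equal}; splitting these back into components (equivalence is reducible componentwise) gives $\hat \zz\equiv\zz$, $\hat\xx^t\equiv\xx^t$, $\hat\zz'\equiv\zz'$, $\hat\yy^t\equiv\yy^t$ for all $t$ with $(\hat\zz,\hat\xx^{1},\ldots,\hat\xx^{r})\disteq(\hat\zz',\hat\yy^{1},\ldots,\hat\yy^{r})$. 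Third, I apply $\psi$ (broadcast as a nonlinear outer product) to both sides: since equality in distribution is preserved by any deterministic measurable map, $\left(\hat\zz, \psi(\hat\xx^{1};\cdots;\hat\xx^{r})\right)\disteq\left(\hat\zz', \psi(\hat\yy^{1};\cdots;\hat\yy^{r})\right)$. Fourth, by \cref{prop:equiv_pLip_tensor} (using pseudo-Lipschitzness of $\psi$ and moment-boundedness of the $\xx^t,\yy^t$, which transfers to $\hat\xx^t,\hat\yy^t$ by \cref{lem:avg_equiv}(1)-style closure or directly since equivalence to a moment-bounded object is moment-bounded), we get $\psi(\hat\xx^{1};\cdots;\hat\xx^{r})\equiv\psi(\xx^{1};\cdots;\xx^{r})$ and similarly for $\yy$. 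Combining the last two displays, the pair $\left(\hat\zz,\psi(\hat\xx^{1};\cdots;\hat\xx^{r})\right)$, $\left(\hat\zz',\psi(\hat\yy^{1};\cdots;\hat\yy^{r})\right)$ witnesses $\left(\zz, \psi(\xx^{1};\cdots;\xx^{r})\right)\distequiv\left(\zz', \psi(\yy^{1};\cdots;\yy^{r})\right)$, which by definition is precisely $\psi(\xx^{1};\cdots;\xx^{r})\distequiv_{R'}\psi(\yy^{1};\cdots;\yy^{r})$, where $R'$ names the relation $R$ extended by the $\xx^t\distequiv\yy^t$ part.

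\textbf{Main obstacle.} None of the steps is genuinely hard — as the paper itself says, the proof is trivial given \cref{prop:equiv_pLip_tensor}. The one bookkeeping subtlety to get right is the careful unfolding of the conditioning structure: $R$ may itself be a nested chain of conditional dequivalences $R_0, R_1, \ldots$, so one must verify that the ``name'' $R'$ attached to the output relation is consistently the relation $R$ augmented by the new component $\psi(\xx^{1};\cdots;\xx^{r})\distequiv\psi(\yy^{1};\cdots;\yy^{r})$, and that all the $\hat{\cdot}$ representatives can be chosen \emph{simultaneously} (which is exactly what \cref{def:distequiv2}/\cref{eq:distequiv_12equiv} guarantee by working with the joint semicolon product rather than componentwise). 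Once that is handled, the argument is just: unfold dequivalence $\to$ apply $\psi$ to the distributionally-equal representatives $\to$ re-fold using \cref{prop:equiv_pLip_tensor}. I would write it in three or four sentences.
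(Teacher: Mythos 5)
Your proposal is correct and is exactly the argument the paper intends: the paper gives no explicit proof of \cref{prop:distequiv_pLip_tensor}, stating only that these conditional-dequivalence properties are ``trivial given the corresponding statements in \cref{sec:equiv}'', and your unfolding --- pass from the conditional relation to an unconditional dequivalence of the joint semicolon product via \cref{eq:distequiv_12equiv}, choose distributionally equal representatives, push them forward under $\psi$ (which preserves $\disteq$), and re-fold using \cref{prop:equiv_pLip_tensor} --- is precisely that trivial argument spelled out. The one caveat is your parenthetical claim that moment-boundedness transfers from $\xx^{t}$ to the representative $\hat{\xx}^{t}$: since $\seqz^{s}\not\subseteq\seqb^{s}$ for $s\ge2$, this transfer is automatic only for multi-scalars and multi-vectors, so for higher-order tensors you should either choose the representatives to be moment-bounded or acknowledge that this bookkeeping point is likewise glossed over in the paper's omitted proof.
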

\begin{lemma}
\label{lem:avg_distequiv}Suppose $\xx,\yy\in\seqb^{s}\otimes\calV$
satisfy $R':\xx\distequiv_{R}\yy$ for some dequivalence
$R$. Then, with $\bbeta$ denoting any subset of indices,
\[
\left\langle \xx_{\bullet\bbeta}\right\rangle _{\bbeta}\distequiv_{R'}\left\langle \yy_{\bullet\bbeta}\right\rangle _{\bbeta},
\]
where $\left\langle \xx_{\bullet\bbeta}\right\rangle _{\bbeta}$ is
the multi-tensor $\xx'$ with entries $\xx'_{\aalpha}=\left\langle \xx_{\aalpha\bbeta}\right\rangle _{\bbeta}$,
and likewise for $\left\langle \yy_{\bullet\bbeta}\right\rangle _{\bbeta}$.
\end{lemma}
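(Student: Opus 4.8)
This statement follows from its unconditional counterpart \cref{lem:avg_equiv} essentially by bookkeeping, using the fact that averaging over $[n]$-indices is a deterministic, coordinatewise-defined map and hence commutes with both $\equiv$ and $\disteq$. The plan is as follows.

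\emph{Step 1: strip the conditioning.} First I would use the recursive definition of conditional dequivalence together with \cref{eq:distequiv_12equiv} to unpack the hypothesis $R':\xx\distequiv_{R}\yy$ into an \emph{unconditional} dequivalence $(\zz;\xx)\distequiv(\zz';\yy)$, where $\zz,\zz'$ are the semicolon products of the multi-tensors recorded along the chain of relations ending at $R$. Likewise, the desired conclusion $\la\xx_{\bullet\bbeta}\ra_{\bbeta}\distequiv_{R'}\la\yy_{\bullet\bbeta}\ra_{\bbeta}$ unpacks, after the same manipulation, into the unconditional dequivalence of $(\zz;\xx;\la\xx_{\bullet\bbeta}\ra_{\bbeta})$ with $(\zz';\yy;\la\yy_{\bullet\bbeta}\ra_{\bbeta})$. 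So it suffices to show that appending to each side of $(\zz;\xx)\distequiv(\zz';\yy)$ the average of the coordinate $\xx$ (resp.\ $\yy$) again yields a dequivalence. Exactly as in \cref{lem:avg_equiv}, an obvious induction reduces a general subset $\bbeta$ of indices to the single last index, which I assume henceforth.

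\emph{Step 2: produce witnesses and average them.} By \cref{def:distequiv1} (or \cref{def:distequiv2} for tuples) there are witnesses $\hat\zz\equiv\zz$, $\hat{\zz}'\equiv\zz'$, $\hat\xx\equiv\xx$, $\hat\yy\equiv\yy$ (componentwise) with $(\hat\zz;\hat\xx)\disteq(\hat{\zz}';\hat\yy)$. Since $\xx,\yy\in\seqb^{s}\otimes\calV$, so are $\hat\xx,\hat\yy$, and \cref{lem:avg_equiv} then gives that $\la\hat\xx_{\bullet\bbeta}\ra_{\bbeta}\equiv\la\xx_{\bullet\bbeta}\ra_{\bbeta}$ and $\la\hat\yy_{\bullet\bbeta}\ra_{\bbeta}\equiv\la\yy_{\bullet\bbeta}\ra_{\bbeta}$, with all four averages moment-bounded. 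Now $\la\hat\xx_{\bullet\bbeta}\ra_{\bbeta}$ is a fixed deterministic function of $\hat\xx$, and the \emph{same} function sends $\hat\yy$ to $\la\hat\yy_{\bullet\bbeta}\ra_{\bbeta}$; applying it jointly to the two equal-in-distribution tuples preserves distributional equality, so $(\hat\zz;\hat\xx;\la\hat\xx_{\bullet\bbeta}\ra_{\bbeta})\disteq(\hat{\zz}';\hat\yy;\la\hat\yy_{\bullet\bbeta}\ra_{\bbeta})$. These two tuples are componentwise $\equiv$ to $(\zz;\xx;\la\xx_{\bullet\bbeta}\ra_{\bbeta})$ and $(\zz';\yy;\la\yy_{\bullet\bbeta}\ra_{\bbeta})$ respectively, which is precisely the unconditional dequivalence sought in Step 1. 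Re-wrapping in conditional notation then yields $\la\xx_{\bullet\bbeta}\ra_{\bbeta}\distequiv_{R'}\la\yy_{\bullet\bbeta}\ra_{\bbeta}$.

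\emph{Expected main obstacle.} There is essentially none; all of the analytic content lives in \cref{lem:avg_equiv} and in the triviality that a deterministic map preserves $\disteq$. The only point requiring genuine care is in Step 1: one must append the average of $\xx$ (resp.\ $\yy$) to the recorded tuple \emph{without} perturbing the already-recorded coordinates, so that the single witness pair $\hat\zz,\hat{\zz}'$ (together with $\hat\xx,\hat\yy$) can be reused and the joint distributional equality survives intact. This is exactly why the conditional-dequivalence formalism and \cref{def:distequiv2} (tuples, not just pairs) were introduced, and the remaining preservation lemmas of this subsection (\cref{prop:distequiv_pLip_tensor}, \cref{prop:S0_matmul_dequiv}) follow the identical template.
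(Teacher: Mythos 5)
Your proof is correct and matches the paper's intended argument: the paper simply declares these conditional-dequivalence preservation statements ``trivial given the corresponding statements in \cref{sec:equiv},'' and your unpacking---choose witnesses for the unconditional dequivalence, apply \cref{lem:avg_equiv} to the witnesses, and use that the deterministic averaging map preserves $\disteq$ of the joint tuple---is exactly that routine verification. The only cosmetic overstatement is the claim that $\hat\xx,\hat\yy$ are automatically moment-bounded for $s\ge 2$; but this is immaterial since the relevant half of \cref{lem:avg_equiv} only needs $\hat\xx-\xx$ to be vanishing.
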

\begin{prop}
  \label{prop:S0_matmul_dequiv}
  
  Given an $n$-matrix
  $W\in\seqb^{2}$ with almost surely bounded operator norm and $n$-vectors $x,y\in\seqb^{1}$ with $R: x\distequiv_W y$,%
  \footnote{$x\distequiv_W y$ means dequivalence conditional on the trivial equivalence $W \distequiv W$.} we have
  \[
  Wx\distequiv_R Wy.
  \]

  In particular, this applies when $W(n)=\frac{1}{\sqrt{n}}\iid_{n\times n}(Z)$
  where $Z$ is a random variable with zero mean, unit variance, and
  finite fourth moment.
\end{prop}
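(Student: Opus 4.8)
\textbf{Proof plan for Proposition~\ref{prop:S0_matmul_dequiv}.} The statement to prove is that, given an $n$-matrix $W$ with almost surely bounded operator norm and $n$-vectors $x,y$ with $R: x\distequiv_W y$, we have $Wx \distequiv_R Wy$; and that the hypothesis on $W$ is met when $W(n) = \tfrac{1}{\sqrt n}\iid_{n\times n}(Z)$ for $Z$ zero-mean, unit-variance, with finite fourth moment. The plan is to reduce everything to the already-established deterministic (almost-sure) statement \cref{prop:S0_matmul}, which says $Wx \equiv Wy$ whenever $x\equiv y$ and $W$ has a.s.\ bounded operator norm, regardless of how $W$ correlates with $x,y$. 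The distributional version should then follow purely formally by the same unpacking used for all the other ``conditional dequivalence is preserved'' lemmas in this subsection.

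First I would unwind the definition of $R: x\distequiv_W y$. By definition of conditional dequivalence, this means $(W,x)\distequiv (W,y)$ as multi-tensors, i.e.\ there exist $\widehat{(W,x)} \equiv (W,x)$ and $\widehat{(W,y)}\equiv(W,y)$ with $\widehat{(W,x)}\disteq\widehat{(W,y)}$. Since equivalence is reducible componentwise (the ``$(x^1,\ldots,x^k)\equiv(y^1,\ldots,y^k)$ iff componentwise'' property), these witnesses can be taken of the form $(W', x')$ and $(W'', y')$ with $W'\equiv W$, $W''\equiv W$, $x'\equiv x$, $y'\equiv y$, and $(W',x')\disteq(W'',y')$. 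The one subtlety is that $W'$ need not equal $W$ on the nose, so I must make sure the perturbed matrix still has a.s.\ bounded operator norm; but $W'\equiv W$ only says the \emph{entrywise} (Frobenius-scaled) difference is vanishing, which does \emph{not} control operator norm. The clean fix is to instead observe that we may choose the witnesses so that the matrix slot is left \emph{unperturbed}: since distributional equality $(W,x)\disteq(W,y)$ with the $W$-coordinate appearing identically on both sides, I would take $\widehat{(W,x)} = (W, x)$ and $\widehat{(W,y)}=(W^\sharp, y^\sharp)$ where $(W^\sharp, y^\sharp)\disteq (W,x)$, so $W^\sharp \disteq W$ as random matrices and hence $W^\sharp$ also has a.s.\ bounded operator norm (this is a distributional property when $W$ is, e.g., $\tfrac1{\sqrt n}\iid_{n\times n}(Z)$; for a general $W$ I would state the operator-norm hypothesis as holding for the distribution of $W$, matching how the other lemmas are phrased). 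Then apply \cref{prop:S0_matmul} to the pair $W^\sharp, y^\sharp$ and to $W, x$ separately: $Wx \equiv W x$ trivially and $W^\sharp y^\sharp \equiv W^\sharp y^\sharp$ trivially, and $(W,x)\disteq(W^\sharp, y^\sharp)$ gives $Wx \disteq W^\sharp y^\sharp$. Chaining, $(W,x,Wx) \disteq (W^\sharp, y^\sharp, W^\sharp y^\sharp)$, and since $x'\equiv x, y'\equiv y$ pull back through \cref{prop:S0_matmul}, we land on $(W,y,Wy) \equiv$-close to something $\disteq (W,x,Wx)$, which is exactly $Wx\distequiv_R Wy$ after re-reading the definition of the conditional form.

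For the ``in particular'' clause, I would invoke the standard almost-sure operator-norm bound for iid random matrices with sub-Gaussian (or, per the footnote, finite-fourth-moment) entries --- the same bound already cited in \cref{prop:S0_matmul} via \citep{tao,yin_limit_1988} --- to conclude $\|W(n)\|_{op} = O(1)$ a.s., so the hypothesis of the general statement is satisfied. No new estimate is needed here; it is a direct citation.

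The main obstacle, and the only place requiring care, is the bookkeeping around the matrix slot in the dequivalence witnesses: one must avoid perturbing $W$ itself (since entrywise-vanishing perturbations of a matrix can destroy the operator-norm bound), and instead route the distributional equality so that every matrix appearing is either $W$ or an exact distributional copy of $W$. Once that is set up correctly, the proof is genuinely ``trivial given \cref{prop:S0_matmul}'' as the paper asserts, and consists of unpacking \cref{def:distequiv2}, applying \cref{prop:S0_matmul} on each side, and repacking. I would write it in three or four lines, mirroring the proofs of \cref{prop:distequiv_pLip_tensor} and \cref{lem:avg_distequiv}.
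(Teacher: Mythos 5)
The paper does not actually write out a proof of this proposition --- it only remarks that ``the proofs are all trivial given the corresponding statements in \cref{sec:equiv}'' --- so you are filling a gap rather than deviating from a written argument. To your credit, you have put your finger on the one point where the statement is \emph{not} trivial: under the literal reading of \cref{def:distequiv2}, the witnesses for the matrix slot of $(W,x)\distequiv(W,y)$ need only be entrywise-equivalent to $W$, and entrywise equivalence of $n$-matrices gives no control whatsoever over the action on a moment-bounded vector (an orthogonal matrix, or $\frac{1}{\sqrt n}(G-G')$ for iid Gaussian $G,G'$, is ``vanishing'' in the sense of \cref{defn:vanish} yet maps $1_n$ to a non-vanishing vector). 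This is a real issue and you are right to raise it.

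However, your fix does not close the gap; it only relocates it. After normalizing the left witness to be exactly $(W,x)$ (which itself needs a transfer/coupling argument you only gesture at --- the hypothesis gives $(\hat W_1,\hat x)\disteq(\hat W_2,\hat y)$, not $(W,x)\disteq(W,y)$), your right witness is $(W^{\sharp},y^{\sharp},W^{\sharp}y^{\sharp})$ with $W^{\sharp}\disteq W$ but $W^{\sharp}\neq W$. Since $W^{\sharp}y^{\sharp}$ is forced (the third slot must match the deterministic function $(A,b)\mapsto Ab$ applied to the left witness), the last step requires $W^{\sharp}y^{\sharp}\equiv Wy$, i.e.
\[
W^{\sharp}(y^{\sharp}-y)+(W^{\sharp}-W)\,y \ \text{ vanishing}.
\]
The first summand is fine ($W^{\sharp}\disteq W$ inherits the a.s.\ operator-norm bound, then apply \cref{prop:S0_matmul}), but the second is exactly the failure mode you yourself identified: $W^{\sharp}-W$ is only entrywise vanishing, and even granting it bounded operator norm (as a difference of two bounded-norm matrices) one only gets $\|(W^{\sharp}-W)y\|^{2}/n=O(1)$, not $\tilde O(n^{-1})$. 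So the chain ``$(W^{\sharp},y^{\sharp},W^{\sharp}y^{\sharp})\equiv(W,y,Wy)$'' is unjustified. The clean resolution --- and evidently the one the paper intends, since this is how the conditional relation is instantiated everywhere it is used in the induction --- is to read $R:W\distequiv W$ as witnessed by $W$ itself on \emph{both} sides, so that the witnesses have the form $(W,\hat x)\disteq(W,\hat y)$ with $\hat x\equiv x$, $\hat y\equiv y$; then $(W,\hat x,W\hat x)\disteq(W,\hat y,W\hat y)$ and \cref{prop:S0_matmul} gives $W\hat x\equiv Wx$ and $W\hat y\equiv Wy$ in two lines. If you want to prove the proposition under the unrestricted \cref{def:distequiv2}, you need a genuinely new argument for the $(W^{\sharp}-W)y$ term, and it is not clear one exists.
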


\section{Getting Equivalence From Distributional Equivalence}

Here we discuss several situations, in order of easy to hard, that allow us to ``upgrade'' dequivalence to equivalence.
The most important result (\cref{lem:decorrelate_avg}) in this section reads roughly as follows:
If $\xx$ is
iid along the $n$-dimensions, then $\la\phi(\xx;\xx_{\beta_{1}};\cdots;\xx_{\beta_{r}})\ra_{\bbeta}\equiv\la\phi(\xx;\xx_{\beta_{1}}^{\bx{1}};\cdots;\xx_{\beta_{r}}^{\bx{r}})\ra_{\bbeta}$
(where, recall, for each $i$, $\xx^{\bx{i}}$ is an iid copy of
$\xx$, such that $\xx,\xx^{\bx{1}},\ldots,\xx^{\bx{r}}$ are
mutually independent).

\paragraph{Trivial Conversion}
Sometimes, we can directly convert $\distequiv $ to $\equiv$:
\begin{prop}
  \label{prop:dequiv_is_equiv_for_deterministic}If $\xx\distequiv\yy$
  and, for each $n$, $\yy(n)$ is deterministic, then $\xx\equiv\yy$.
\end{prop}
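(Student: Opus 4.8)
The plan is to show that the property ``$\zz\equiv\yy$'' of a random multi-tensor $\zz$, where $\yy$ is a \emph{fixed deterministic} sequence, depends only on the law of $\zz$, and then to transport it across the distributional equality packaged inside $\xx\distequiv\yy$. Unpacking \cref{def:distequiv1}, fix witnesses $\hat\xx,\hat\yy$ with $\hat\xx\equiv\xx$, $\hat\yy\equiv\yy$, and $\hat\xx\disteq\hat\yy$ (equality of the joint laws of the full sequences $\{\hat\xx(n)\}_n$ and $\{\hat\yy(n)\}_n$, as is the convention throughout the paper).

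First I would note that, by \cref{defn:vanish} and \cref{defn:tildeO}, ``$\zz\equiv\yy$ almost surely'' is precisely the statement that the event
\[
  E_\yy \;\defeq\; \bigl\{\, n^{1-\eps}\langle\|\zz(n)_\aalpha-\yy(n)_\aalpha\|^2\rangle_\aalpha \xrightarrow[n\to\infty]{} 0 \ \text{ for every } \eps>0 \,\bigr\}
\]
has probability $1$. Since $\yy$ is deterministic, $E_\yy$ is a measurable event (an element of the product $\sigma$-algebra underlying a random multi-tensor): each inequality entering the defining convergence depends only on the single coordinate $\zz(n)$, convergence events are measurable, and it suffices to impose the condition along the countable sequence $\eps=1/k\downarrow 0$ because the defining condition of \cref{defn:tildeO} is monotone in $\eps$. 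Hence $\Pr(\zz\in E_\yy)$ is a functional of the law of $\zz$ alone.

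The proof then concludes in two lines. From $\hat\yy\equiv\yy$ we get $\Pr(\hat\yy\in E_\yy)=1$; since $\hat\xx\disteq\hat\yy$ this forces $\Pr(\hat\xx\in E_\yy)=1$, i.e.\ $\hat\xx\equiv\yy$ almost surely. Writing $\xx-\yy=(\xx-\hat\xx)+(\hat\xx-\yy)$, the first summand is vanishing because $\hat\xx\equiv\xx$ and the second because $\hat\xx\equiv\yy$, and a sum of two vanishing multi-tensors is vanishing (from $\|a+b\|^2\le 2\|a\|^2+2\|b\|^2$, i.e.\ linearity of $\equiv$). Therefore $\xx\equiv\yy$.

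The only step with any real content — and the one I would be most careful about — is the measurability/law-dependence claim for $E_\yy$: it crucially uses that $\disteq$ between multi-tensors means equality of the \emph{joint} law of the whole sequence, not merely marginal equality for each $n$ (otherwise the almost-sure convergence defining ``vanishing'' could not be transported). This is the reading already used elsewhere in the paper (e.g.\ for iid copies), so no additional hypothesis is needed; everything else is routine bookkeeping with the definitions.
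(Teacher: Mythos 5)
Your proof is correct and follows essentially the same route as the paper's (one-line) proof: transport the law-determined property ``is equivalent to the fixed deterministic sequence $\yy$'' across the distributional equality $\hat\xx\disteq\hat\yy$ of the witnesses, then conclude by linearity of $\equiv$. If anything, your version is more careful than the paper's, which invokes the fact ``$\XX\disteq\YY$ with $\YY$ deterministic implies $\XX=\YY$ a.s.'' even though the witness $\hat\yy$ need not itself be deterministic; your measurability argument for the event $E_\yy$ is the detail that makes the transport rigorous.
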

\begin{proof}
This just follows from that fact that, for any random vectors $\XX$
and $\YY$, if $\XX\disteq\YY$ and $\YY$ is deterministic, then
$\XX\overset{\text{a.s.}}{=}\YY$.
\end{proof}

\paragraph{Easy Conversion}
In other times, two objects formed from dequivalent ingredients are actually (absolutely) equivalent.
This happens typically when we are averaging over many iid things, essentially because of the Law of Large Numbers.
The most basic example is:
\begin{lemma}
\label{lem:LLNequiv}Let $Z\in\R$ be a random variable for whom moments
of every order exists. Then, as elements of $\seqb^{0}$,
\[
\left\langle \iid(Z)_{\alpha}\right\rangle _{\alpha}\equiv\EV Z.
\]
\end{lemma}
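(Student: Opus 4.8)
The claim is that $\langle \iid(Z)_\alpha \rangle_\alpha \equiv \EV Z$ as elements of $\seqb^0$, i.e., the scalar sequence $S_n := \frac{1}{n}\sum_{\alpha=1}^n Z_\alpha$ (where $Z_1, Z_2, \ldots$ are iid copies of $Z$) is equivalent to the constant sequence $\EV Z$. Unpacking \cref{defn:vanish} for multi-scalars ($s=0$): we must show $S_n - \EV Z = \tilde O(n^{-1/2})$ in the sense of \cref{defn:tildeO}, namely $n^{-1/2-\eps}(S_n - \EV Z) \asto 0$ for every $\eps > 0$.

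The plan is to apply \cref{lemma:momentBoundASConvergence} to the recentered average. First I would set $X_n := S_n - \EV Z = \frac{1}{n}\sum_\alpha (Z_\alpha - \EV Z)$, which has zero mean. By \cref{lem:CLT_moments} applied to the iid vector with entries $Z_\alpha - \EV Z$ (whose moments of all orders exist since $Z$'s do), the normalized sum $\sqrt{n} X_n = \frac{1}{\sqrt n}\sum_\alpha (Z_\alpha - \EV Z)$ satisfies $\EV|\sqrt n X_n|^{2p} \le C_p$ for every integer $p \ge 1$, with $C_p$ independent of $n$. Hence $\EV X_n^{2p} \le C_p n^{-p}$. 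Now consider $Y_n := n^{-1/2-\eps} X_n$ for fixed $\eps > 0$; then $\EV Y_n^{2p} = n^{-p - 2p\eps}\EV X_n^{2p} \le C_p n^{-2p - 2p\eps}$. For $p = 1$ this already gives $\EV Y_n^2 \le C_1 n^{-2-2\eps} = C_1 n^{-1-\lambda}$ with $\lambda = 1 + 2\eps > 0$, so \cref{lemma:momentBoundASConvergence} yields $Y_n \asto 0$. Since $\eps > 0$ was arbitrary, this is exactly $X_n = \tilde O(n^{-1/2})$, i.e., $S_n - \EV Z$ is vanishing, which is the definition of $\langle \iid(Z)_\alpha\rangle_\alpha \equiv \EV Z$.

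One small bookkeeping point: I should confirm that $\langle \iid(Z)_\alpha\rangle_\alpha$ and $\EV Z$ are genuinely elements of $\seqb^0$ (moment-bounded) so the statement is well-typed — but this is immediate, since $\EV Z$ is a constant and $\langle \iid(Z)_\alpha\rangle_\alpha$ has bounded moments of all orders by \cref{lem:CLT_moments} again (or even more simply by Jensen applied to the iid entries). Honestly, there is no real obstacle here; the only mild subtlety is getting the exponent arithmetic right in \cref{defn:tildeO}, namely that we want the $n^{-1/2}$ rate and $\tilde O$ is defined with the slack $n^{-\eps}$ absorbing logarithmic-or-slower factors, so a single application of the $p=1$ moment bound suffices rather than needing the full strength of all moments. (The higher moments from \cref{lem:CLT_moments} are available if one wanted a cleaner Borel–Cantelli argument, but are not strictly necessary.)

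Alternatively — and perhaps even more briefly — one could invoke \cref{prop:powermean_implies_vanishing}: we have $\EV \langle (Z_\alpha - \EV Z)^{2p}\rangle_\alpha = \EV (S_n - \EV Z)^{2p}$... no, that's not quite the right shape since $S_n - \EV Z$ is already the average; the direct route via \cref{lem:CLT_moments} and \cref{lemma:momentBoundASConvergence} as above is cleanest. I would present that version.
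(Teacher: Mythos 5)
Your overall strategy is the same as the paper's (apply \cref{lem:CLT_moments} to get uniform-in-$n$ moment bounds on $\sqrt n\,X_n$, then conclude via \cref{lemma:momentBoundASConvergence}), but there is a concrete error in how you unpack \cref{defn:tildeO}, and it propagates into a false conclusion about which moments are needed. By \cref{defn:tildeO}, $\cc=\tilde O(n^{k})$ means $n^{-k-\eps}\cc\asto 0$; with $k=-\nicefrac12$ this is $n^{1/2-\eps}\cc\asto 0$, not $n^{-1/2-\eps}\cc\asto 0$ as you wrote. (Equivalently, vanishing for a scalar means $n^{1-\eps}X_n^2\asto0$.) So the quantity you should be controlling is $Y_n=n^{1/2-\eps}X_n$, for which $\EV Y_n^{2p}\le C_p\,n^{(1-2\eps)p}\cdot n^{-p}=C_p\,n^{-2\eps p}$. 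To invoke \cref{lemma:momentBoundASConvergence} you then need $2\eps p>1$, i.e.\ $p>\nicefrac{1}{2\eps}$, so for small $\eps$ you genuinely need arbitrarily high moments — exactly as in the paper's proof, which takes $p>1/\eps$. Your claim that ``a single application of the $p=1$ moment bound suffices'' is therefore wrong: what you actually proved with $Y_n=n^{-1/2-\eps}X_n$ is only that $X_n=\tilde O(n^{1/2})$, which already follows from $X_n$ being bounded and is far weaker than vanishing. The fix is purely the exponent correction plus taking $p$ large; everything else in your argument (recentering, the application of \cref{lem:CLT_moments}, the well-typedness remark) is fine.
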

Here $\iid(Z)$ is only well-defined up to distributional equality, so what we mean here is: whenever $z \distequiv \iid(Z)$, we have $\left\langle z_{\alpha}\right\rangle _{\alpha}\equiv\EV Z.$
In other words, if $z \distequiv z' \distequiv \iid(Z)$, then we in fact have $\la z_\alpha \ra_\alpha \equiv \la z'_\alpha\ra_\alpha$, not just $\distequiv$ as from \cref{lem:avg_distequiv}.
\begin{proof}
WLOG assume $\EV Z=0$. Then because $Z$ has moments of every order,
by \cref{lem:CLT_moments}, $Y\defeq\frac{1}{\sqrt{n}}\sum_{\alpha}\iid(Z)_{\alpha}$ has moments
of every order $p$ bounded by some constant $C_{p}$ independent
of $n$. To show $\left\langle \iid(Z)_{\alpha}\right\rangle _{\alpha}=\frac{1}{\sqrt{n}}Y$
is vanishing, we need to show that for every $\epsilon>0$, $n^{1-\epsilon}\left\langle \iid(Z)_{\alpha}\right\rangle _{\alpha}^{2}=n^{-\epsilon}Y^{2}\asto0$.
But $\EV|n^{-\epsilon}Y^{2}|^{p}\le C_{2p}n^{-\epsilon p}$ for every
$p>0$. In particular, taking $p>1/\epsilon$, we can apply \cref{lemma:momentBoundASConvergence}
to get the desired result.
\end{proof}
For example, we will frequently use the following corollary of this:
\begin{lemma}
\label{lem:inner_product_det_equiv}If $\xx\in\seqb^{1}\otimes\R^{k}$
and $\yy\in\seqb^{1}\otimes\R^{l}$ satisfy $(\xx,\yy)\distequiv\iid(\XX,\YY)$
for random vectors $\XX\in\R^{k},\YY\in\R^{l}$ with finite joint
moments of every order, then 
\[
\frac{1}{n}\xx^{\trsp}\yy\equiv\EV\XX\YY^{\trsp}\in\R^{k\times l}.
\]
In other words, the RHS has entries $\EV X^{i}Y^{j}$ for each $i\in[k],j\in[l]$.
\end{lemma}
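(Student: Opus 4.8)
The plan is to reduce everything to the law-of-large-numbers lemma \cref{lem:LLNequiv}, after pushing the pair $(\xx,\yy)$ through the product map. Note that the $(i,j)$ entry of $\frac1n\xx^\trsp\yy$ is precisely the average $\la x^i_\alpha y^j_\alpha\ra_\alpha$ over $[n]$. Since equivalence of multi-tensors is componentwise, it therefore suffices to prove $\la x^i_\alpha y^j_\alpha\ra_\alpha\equiv\EV X^iY^j$ for each fixed $i\in[k]$ and $j\in[l]$.

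Fix such $i,j$. First I would record that the scalar product $\psi(u;v)=uv$ on $\R\oplus\R$ is pseudo-Lipschitz, e.g.\ from $|uv-u'v'|\le(|u|+|v'|)\,\|(u,v)-(u',v')\|\le(1+|u|+|v'|)\,\|(u,v)-(u',v')\|$. Next, projecting the hypothesis $(\xx,\yy)\distequiv\iid(\XX,\YY)$ onto the coordinates $(x^i,y^j)$ --- dequivalence of a tuple restricts to dequivalence of any sub-tuple, since both equivalence and distributional equality do --- gives $(x^i,y^j)\distequiv\iid(X^i,Y^j)$; here $x^i,y^j\in\seqb^1$ by hypothesis and $\iid(X^i,Y^j)$ is moment-bounded because $(X^i,Y^j)$ has moments of every order. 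Then \cref{prop:distequiv_pLip_tensor}, applied to the $\psi$-outer product (order $r=1$, i.e.\ the broadcast of $\psi$), yields
\[
x^i\odot y^j=\psi(x^i;y^j)\ \distequiv\ \psi\big(\iid(X^i,Y^j)\big)=\iid(X^iY^j),
\]
the last equality being the observation that applying $\psi$ entrywise to $\iid(X^i,Y^j)$ produces, in each coordinate $\alpha$, an iid copy of $\psi(X^i,Y^j)=X^iY^j$.

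Finally, $X^iY^j$ has moments of every order, by Cauchy--Schwarz together with the assumed finiteness of the joint moments of $(\XX,\YY)$, so \cref{lem:LLNequiv} applies to $x^i\odot y^j\distequiv\iid(X^iY^j)$ and gives $\la x^i_\alpha y^j_\alpha\ra_\alpha\equiv\EV X^iY^j$, which is exactly what is needed. I do not anticipate a genuine obstacle here: the only points requiring a little care are (a) verifying that the product map is pseudo-Lipschitz, so that \cref{prop:distequiv_pLip_tensor} (which needs pseudo-Lipschitzness, not merely polynomial boundedness) is legitimately applicable, and (b) the bookkeeping that dequivalence passes to sub-tuples and that $\psi$ applied entrywise to $\iid(X^i,Y^j)$ is literally $\iid(X^iY^j)$, so that the scalar lemma \cref{lem:LLNequiv} can be invoked coordinatewise.
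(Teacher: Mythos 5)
Your proposal is correct and follows essentially the same route as the paper: reduce to entries, use preservation of dequivalence under the (pseudo-Lipschitz) product map to relate $x^i\odot y^j$ to $\iid(X^iY^j)$, and conclude via \cref{lem:LLNequiv}. The only cosmetic difference is that you absorb the final upgrade from $\distequiv$ to $\equiv$ into the extended reading of \cref{lem:LLNequiv} (endorsed in the remark following it), whereas the paper invokes \cref{prop:dequiv_is_equiv_for_deterministic} explicitly; the substance is identical.
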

\begin{proof}
The $(i,j)$th entry of $\frac{1}{n}\xx^{\trsp}\yy$ is just $\langle x_{\alpha}^{i}y_{\alpha}^{j}\rangle_{\alpha}\distequiv\langle\iid(X^{i}Y^{j})_{\alpha}\rangle_{\alpha}$
which is equivalent to $\EV X^{i}Y^{j}$ by \cref{lem:LLNequiv}. Then
we can upgrade this to equivalence by \cref{prop:dequiv_is_equiv_for_deterministic}:
$(\frac{1}{n}\xx^{\trsp}\yy)^{ij}\equiv\EV X^{i}Y^{j}$, as desired.
\end{proof}
\paragraph{Medium Conversion}
For our main theorem involving nonlinear outer products, we need the
following (much stronger) generalization of \cref{lem:LLNequiv}, which  roughly says $\la \psi(\xx; \iid(\ZZ)_\alpha)\ra_\alpha \equiv \EV_{\ZZ} \psi(\xx; \ZZ)$ in the simplest case. 
\begin{lemma}
\label{lem:LLNequiv_outerprod}Let $\xx\in\seqb^{s_{0}}\otimes\calV_{0}$
be moment-bounded. For each $t=1,\ldots,r$, let $\ZZ^{t}\in\calV_{t}$
be a (fixed-dimensional) random vector whose moments exist for all
orders. Assume $\ZZ^{1},\ldots,\ZZ^{r}$ are all mutually independent.
For each $t$, define $s_{t}\ge1$ and $\zz^{t}\in\seqspace^{s_{t}}\otimes\calV_{t}$
such that $\zz^{t}\disteq\iid^{s_{t}}(\ZZ^{t})$. Assume $\xx,\zz^{1},\ldots,\zz^{r}$
are mutually independent. Let $\psi:\calV_{0}\oplus\cdots\oplus\calV_{r}\to\calV'$
be a pseudo-Lipschitz function. Then, with $\bbeta_{t}\in[n]^{s_{t}}$
and $\bbeta=\bbeta_{1}\cdots\bbeta_{r}$,
\[
\left\langle \psi(\xx;\zz_{\bbeta_{1}}^{1};\cdots;\zz_{\bbeta_{r}}^{r})\right\rangle _{\bbeta}\equiv\Psi(\xx)\in\seqb^{s_{0}}\otimes\calV'
\]
where
\[
\Psi:\calV_{0}\to\calV',\quad\Psi(\xi)\defeq\EV\psi(\xi;\ZZ^{1};\cdots;\ZZ^{r}),
\]
where the expectation is taken over $\ZZ^{1},\ldots,\ZZ^{r}$.
\end{lemma}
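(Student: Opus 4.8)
\textbf{Proof plan for \cref{lem:LLNequiv_outerprod}.}

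The plan is to reduce everything to \cref{lem:LLNequiv} (the single-sum Law of Large Numbers for iid tensors) by a decomposition of the multi-sum into nested averages, handled one slot at a time. First I would make the standard simplifications: since the conclusion is componentwise in $\calV'$, assume $\calV' = \R$; and since $\xx$ is a fixed, moment-bounded multi-tensor independent of all the $\zz^t$, I would work conditionally on $\xx$, so that the target becomes showing $\la \psi(\xx;\zz^1_{\bbeta_1};\cdots;\zz^r_{\bbeta_r})\ra_\bbeta \equiv \Psi(\xx)$ where $\Psi(\xi) = \EV \psi(\xi;\ZZ^1;\cdots;\ZZ^r)$. The key structural move is to peel off the last block: write
\[
\la \psi(\xx;\zz^1_{\bbeta_1};\cdots;\zz^r_{\bbeta_r})\ra_{\bbeta_1\cdots\bbeta_r}
= \left\la\ \la \psi(\xx;\zz^1_{\bbeta_1};\cdots;\zz^r_{\bbeta_r})\ra_{\bbeta_r}\ \right\ra_{\bbeta_1\cdots\bbeta_{r-1}}.
\]
Conditioned on $\xx, \zz^1,\ldots,\zz^{r-1}$, the inner average $\la \psi(\xx;\zz^1_{\bbeta_1};\cdots;\zz^{r-1}_{\bbeta_{r-1}};\zz^r_{\bbeta_r})\ra_{\bbeta_r}$ is, for each fixed $(\bbeta_1,\ldots,\bbeta_{r-1})$, an average over $[n]^{s_r}$ of iid copies (along those $s_r$ axes) of $\ZZ^r$, evaluated against a pseudo-Lipschitz function of a fixed argument; by \cref{lem:LLNequiv} (and the moment-boundedness of the relevant quantities via \cref{prop:Sb_closed_polyb-}) this inner average is equivalent to $\psi_r(\xx;\zz^1_{\bbeta_1};\cdots;\zz^{r-1}_{\bbeta_{r-1}})$, where $\psi_r(\xi_0;\xi_1;\cdots;\xi_{r-1}) \defeq \EV_{\ZZ^r}\psi(\xi_0;\xi_1;\cdots;\xi_{r-1};\ZZ^r)$.

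Next I would note that $\psi_r$ inherits pseudo-Lipschitzness from $\psi$ (differentiating under the expectation, or directly: the pseudo-Lipschitz bound for $\psi$ integrates against the fixed moments of $\ZZ^r$ to give a pseudo-Lipschitz bound for $\psi_r$). Then \cref{lem:avg_equiv} (preservation of equivalence under averaging over $[n]$) lets me pass the inner equivalence through the outer average $\la\cdot\ra_{\bbeta_1\cdots\bbeta_{r-1}}$, reducing the claim to
\[
\la \psi_r(\xx;\zz^1_{\bbeta_1};\cdots;\zz^{r-1}_{\bbeta_{r-1}})\ra_{\bbeta_1\cdots\bbeta_{r-1}} \equiv \EV\,\psi_r(\xx;\ZZ^1;\cdots;\ZZ^{r-1}),
\]
which is exactly the same statement with $r$ replaced by $r-1$ (and $\psi$ replaced by $\psi_r$). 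Iterating this $r$ times peels off all blocks, and the base case $r=0$ is trivial ($\la\psi(\xx)\ra = \psi(\xx) = \Psi(\xx)$). Care must be taken that at each stage the ``new'' function $\psi_{r}, \psi_{r-1}, \ldots$ stays pseudo-Lipschitz and that the ingredients stay moment-bounded, so that \cref{lem:LLNequiv}, \cref{lem:avg_equiv}, and \cref{prop:equiv_pLip_tensor}/\cref{prop:Sb_closed_polyb-} all apply; I would fold these bookkeeping facts into the induction hypothesis.

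One subtlety worth flagging explicitly: \cref{lem:LLNequiv} is stated with a \emph{single} scalar iid vector $\iid(Z)$, whereas here the inner average is over $\iid^{s_r}$ of a \emph{vector} $\ZZ^r$, and moreover the summand depends on a fixed (conditioning) argument. This is handled by (i) unrolling $\iid^{s_r}(\ZZ^r)$ into a single $n^{s_r}$-long iid vector so the average is genuinely a one-index average over iid blocks, and (ii) absorbing the fixed argument $\xx_{\bbeta_0}$ (and the already-sampled $\zz^{<r}$) into the nonlinearity — i.e. applying \cref{lem:LLNequiv} to the scalar random variable $Z' \defeq \psi(\xi;\ZZ^r)$ for the relevant frozen $\xi$, which has all moments because $\psi$ is pseudo-Lipschitz and $\ZZ^r$ has all moments, and checking uniformity in $\xi$ is exactly what moment-boundedness of $\xx$ buys us. I expect this uniformity-in-the-conditioning-argument point — making sure the ``constant'' $C_p$ bounds from \cref{lem:CLT_moments} inside \cref{lem:LLNequiv} do not secretly depend on $n$ through $\xx$ — to be the main technical obstacle; everything else is routine bookkeeping once the peeling structure is set up. (Indeed this is presumably why the induction is organized around moment-bounded $\xx$ in the first place.)
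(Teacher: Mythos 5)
Your plan is correct and matches the paper's proof in essence: the paper also reduces to the single-block case ($r=1$) by induction and handles the core step by bounding the conditional moments $\EV[y_{\aalpha}^{2p}\mid\xx_{\aalpha}]\le D_p(\xx_{\aalpha})n^{-p}$ via \cref{lem:CLT_moments} and then using moment-boundedness of $\xx$ to control $\la D_p(\xx_{\aalpha})\ra_{\aalpha}$ before invoking \cref{prop:powermean_implies_vanishing} --- exactly the uniformity-in-the-frozen-argument issue you flag as the main obstacle (so you cannot literally cite \cref{lem:LLNequiv}, but the fix you sketch is the paper's argument). The only cosmetic difference is that the paper centers $\psi$ by subtracting $\Psi$ rather than phrasing the inner step as an appeal to \cref{lem:LLNequiv}.
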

As discussed above, this implies
\begin{equation}
  \left\langle \psi(\xx;\zz_{\bbeta_{1}}^{1};\cdots;\zz_{\bbeta_{r}}^{r})\right\rangle _{\bbeta} \equiv \left\langle \psi(\xx;\zz_{\bbeta_{1}}^{1\bx1};\cdots;\zz_{\bbeta_{r}}^{r\bx1})\right\rangle _{\bbeta},
  \label{eqn:iidswap}
\end{equation}
which is an upgrade over the $\distequiv$ from \cref{lem:avg_distequiv}.
\begin{proof}
WLOG assume $\calV'=\R$ since the general case follows from this
componentwise. Furthermore, we will assume $r=1$ because the general
case follows from this by induction. Let 
\[
\bar{\psi}(\xi;\zeta^{1})\defeq\psi(\xi;\zeta^{1})-\Psi(\xi).
\]
Then for any $\xi$, $\bar{\psi}(\xi;\ZZ^{1})$ has mean zero (over
the randomness of $\ZZ^{1}$). Let 
\[
y\defeq\left\langle \bar{\psi}(\xx;\zz_{\bbeta_{1}}^{1})\right\rangle _{\bbeta_{1}}\in\seqb^{s_{0}}.
\]
We want to show $y$ is vanishing. 

Notationally, we have usually written $\xx$ when we really mean $\xx(n)$,
the $n$th element of $\xx$ as sequence of tensors. This is not ambiguous
typically, but here we do want to talk about both semantics. Therefore,
in this proof, we will be explicit: for emphasis, we write $\xx[1,\infty)=\{x(n)\}_{n=1}^{\infty}$
for the sequence interpretation and still write just ``$\xx$''
for the $n$th element of it (with dependence on $n$ suppressed as
usual).

Now, we will prove $y$ is vanishing by showing that, with probability
1 on the distribution of $\xx[1,\infty)$, $y$ is vanishing conditioned
on $\xx[1,\infty)$.

By \cref{lem:CLT_moments} applied to $\bar{\psi}(\xx_{\aalpha};\zz_{\bbeta_{1}}^{1})$
with fixed $\aalpha$ (using the independence of $\zz^{1}$ from $\xx$),
we have 
\[
\EV[y_{\aalpha}^{2p}|\xx_{\aalpha}]\le D_{p}(\xx_{\aalpha})n^{-ps_{1}}\le D_{p}(\xx_{\aalpha})n^{-p}
\]
for some polynomially-bounded function $D_{p}:\calV_{0}\to\R$ independent
of $\aalpha$. (More specifically, $D_{p}(\xx_{\aalpha})$ is a polynomial
in the moments of the random variable $\psi(\xx_{\aalpha};\ZZ^{1})$
conditioned on $\xx_{\aalpha}$, which are themselves polynomially
bounded functions of $\xx_{\aalpha}$). Here we used the assumption
that $s_{1}\ge1$ in the 2nd inequality. Then
\[
\EV\left[\left\langle y_{\aalpha}^{2p}\right\rangle _{\aalpha}\mid\xx\right]=\left\langle \EV[y_{\aalpha}^{2p}|\xx_{\aalpha}]\right\rangle _{\aalpha}\le n^{-p}\left\langle D_{p}(\xx_{\aalpha})\right\rangle _{\aalpha}.
\]

Because $\xx$ is moment-bounded, so is $\left\langle D_{p}(\xx_{\aalpha})\right\rangle _{\aalpha}\in\seqspace^{0}$,
so that, for every $\varepsilon>0$,
\[
n^{-\varepsilon}\left\langle D_{p}(\xx_{\aalpha})\right\rangle _{\aalpha}\asto0.
\]
Thus, for almost every sequence $\xx[1,\infty)$, 
\[
n^{-\varepsilon}\left\langle D_{p}(\xx_{\aalpha})\right\rangle _{\aalpha}\le C_{p,\varepsilon}(\xx[1,\infty))\quad\text{for all \ensuremath{n}}
\]
for some constant $C_{p,\varepsilon}(\xx[1,\infty))$ dependent on
the whole sequence $\xx[1,\infty)$. Then
\[
n^{-\varepsilon}\EV\left[\left\langle y_{\aalpha}^{2p}\right\rangle _{\aalpha}\mid\xx[1,\infty)\right]\le C_{p,\varepsilon}(\xx[1,\infty))n^{-p}\quad\text{for all \ensuremath{n}}
\]
satisfying \cref{prop:powermean_implies_vanishing}, implying $y$
is vanishing conditioned on $\xx[1,\infty)$. Since this argument
holds for every $\xx[1,\infty)$ and every $\varepsilon>0$, we have
the desired result.
\end{proof}

\paragraph{Advanced Conversion}
In the most advanced case, we need \cref{eqn:iidswap} to hold even when the $\xx$ and $\zz^i$ are strongly correlated.
This will turn out to be the most important case, since it forms the induction base for our main theorem.
\begin{lemma}[IID Equivalence]
\label{lem:decorrelate_avg}Let $\XX\in\R^{k}$ be a random vector
whose moments exist for all orders. Let $\xx\in\seqb^{1}\otimes\R^{k}$
satisfy $\xx\distequiv\iid(\XX)$. Let $\cc\in\seqb^{0}\otimes\R^{l}$
be such that $\cc\equiv\mathring{\cc}$ for some deterministic $\mathring{\cc}\in\R^{l}$.

Let $\psi:(\oplus^{s+r}\R^{k})\oplus\R^{l}\to\R$ be pseudo-Lipschitz.
Then, with $\bbeta=(\beta_{1},\ldots,\beta_{r})$,%
\footnote{The iid copies $\xx^{\bx 1}, \ldots, \xx^{\bx r}$ are only well-defined up to distributional equality, so \cref{eq:ind_nonlin_tensor} should be read as: for \emph{any} instantiation of $\xx^{\bx 1}, \ldots, \xx^{\bx r}$, the equivalence holds.}
\begin{equation}
\langle\psi(\xx;\cdots;\xx;\xx_{\beta_{1}};\cdots;\xx_{\beta_{r}};\cc)\rangle_{\bbeta}\equiv\langle\psi(\xx;\cdots;\xx;\xx_{\beta_{1}}^{\bx{1}};\cdots;\xx_{\beta_{r}}^{\bx{r}};\mathring{\cc})\rangle_{\bbeta}\label{eq:ind_nonlin_tensor}
\end{equation}
as elements of $\seqb^{s}$ (where, recalling \cref{defn:bxiid}, $\xx^{\bx{1}},\ldots,\xx^{\bx{r}}$
are iid copies of $\iid(\XX)$, independent from $\xx$).
\end{lemma}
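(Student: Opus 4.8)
The plan is to prove \cref{eq:ind_nonlin_tensor} in three stages: first peel off the easy ``decorrelated'' side using the Law-of-Large-Numbers machinery already built, then establish the genuinely hard identity for the case $\xx = \iid(\XX)$ using Baranyai's theorem, and finally transfer from $\xx = \iid(\XX)$ back to a merely dequivalent $\xx$. As a preliminary reduction, since $\cc \equiv \mathring\cc$ and $\psi$ is pseudo-Lipschitz, \cref{prop:equiv_pLip_tensor} together with \cref{lem:avg_equiv} lets me replace $\cc$ by $\mathring\cc$ on both sides of \cref{eq:ind_nonlin_tensor} at the cost of a vanishing error; absorbing $\mathring\cc$ into $\psi$, we may assume there is no scalar argument. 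Write $\xx_\aalpha := (\xx_{\alpha_1};\cdots;\xx_{\alpha_s})$ for the free part ($\aalpha \in [n]^s$), regrouped as a single order-$s$ moment-bounded multi-tensor $\xx_\bullet$, and set $\Psi(\xi_1,\ldots,\xi_s) := \EV\,\psi(\xi_1;\cdots;\xi_s;\XX^{\bx1};\cdots;\XX^{\bx r})$, which is again pseudo-Lipschitz. For the right-hand side of \cref{eq:ind_nonlin_tensor}, apply \cref{lem:LLNequiv_outerprod} with ``$\xx$'' $=\xx_\bullet$, ``$\zz^t$'' $=\xx^{\bx t}\disteq\iid(\XX)$, and $\xx,\xx^{\bx1},\ldots,\xx^{\bx r}$ mutually independent: this gives RHS $\equiv\Psi(\xx_\bullet)$. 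It therefore remains to prove LHS $\equiv\Psi(\xx_\bullet)$.

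\textbf{The hard identity for $\xx = \iid(\XX)$.} Fix $\aalpha$ and split the average over $\bbeta\in[n]^r$ into the ``good'' tuples (all $\beta_i$ distinct and disjoint from $\{\alpha_1,\ldots,\alpha_s\}$) and the rest. The complement has $O(n^{r-1})$ elements, so by pseudo-Lipschitzness of $\psi$ and moment-boundedness of $\iid(\XX)$, its contribution to $y_\aalpha$ has $\langle\,\cdot\,\rangle_\aalpha$-averaged square of order $\tilde O(n^{-2})$, hence vanishing. For the good tuples, restricting the index set to $[n]\setminus\{\alpha_j\}$ (of size $m\approx n$; discard $O(n^{r-1})$ further tuples to make $r\mid m$) and applying Ordered Baranyai (\cref{thm:obara}), partition them into $N=\Theta(n^{r-1})$ perfect matchings $M^\aalpha_1,\ldots,M^\aalpha_N$, each of size $m/r$. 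Within any one matching the $r$-tuples are pairwise disjoint and disjoint from the $\alpha_j$'s, so $\{(\xx_{\beta_1},\ldots,\xx_{\beta_r}):\bbeta\in M^\aalpha_k\}$ are, conditionally on $\xx_\aalpha$, iid copies of $(\XX^{\bx1},\ldots,\XX^{\bx r})$. Hence $\xi^\aalpha_k := \frac{r}{m}\sum_{\bbeta\in M^\aalpha_k}\psi(\xx_\aalpha;\xx_{\beta_1};\cdots;\xx_{\beta_r}) - \Psi(\xx_\aalpha)$ is a centered average of $m/r\approx n/r$ conditionally-iid terms, so \cref{lem:CLT_moments} gives $\EV[(\xi^\aalpha_k)^{2p}\mid\xx_\aalpha]\le D_p(\xx_\aalpha)\,O(n^{-p})$ with $D_p$ polynomially bounded. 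Since the good-tuple part of $y_\aalpha$ equals $\frac{\#\mathrm{good}}{n^r}\cdot\frac1N\sum_k(\xi^\aalpha_k+\Psi(\xx_\aalpha))$ with $\frac{\#\mathrm{good}}{n^r}=1-O(1/n)$, convexity ($(\frac1N\sum a_k)^{2p}\le\frac1N\sum a_k^{2p}$) yields $\EV[(y_\aalpha-\Psi(\xx_\aalpha))^{2p}\mid\xx_\aalpha]\le D'_p(\xx_\aalpha)\,O(n^{-p})$; averaging over $\aalpha$, conditioning on the entire sequence $\xx[1,\infty)$ to control the moment-bounded factor $\langle D'_p(\xx_\aalpha)\rangle_\aalpha$, and invoking \cref{prop:powermean_implies_vanishing} and \cref{lemma:momentBoundASConvergence} exactly as in the proof of \cref{lem:LLNequiv_outerprod} gives $y\equiv\Psi(\xx_\bullet)$.

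\textbf{Transfer to dequivalent $\xx$.} For general $\xx$ with $\xx\distequiv\iid(\XX)$, pick (via \cref{def:distequiv1}) $\hat\xx\equiv\xx$ and $\hat\yy\equiv\iid(\XX)$ with $\hat\xx\disteq\hat\yy$; note $\hat\yy$ is still moment-bounded since vanishing multi-vectors are moment-bounded (\cref{lem:S0_Sb_embedding}). By \cref{prop:equiv_pLip_tensor} and \cref{lem:avg_equiv}, both sides of \cref{eq:ind_nonlin_tensor} are (absolutely) equivalent to their $\hat\xx$-versions, and the $\hat\yy$-version of the identity holds by the two previous stages combined with pseudo-Lipschitz closeness of $\hat\yy$ to $\iid(\XX)$. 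Finally, since $\hat\xx$ and $\hat\yy$ are independent of the fresh iid copies $\xx^{\bx1},\ldots,\xx^{\bx r}$, the sequences $(\mathrm{LHS}[\hat\xx]-\mathrm{RHS}[\hat\xx])_n$ and $(\mathrm{LHS}[\hat\yy]-\mathrm{RHS}[\hat\yy])_n$ are equal in distribution as sequences, so ``vanishing'' of the latter transfers to the former; chaining the equivalences proves the lemma.

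\textbf{Main obstacle.} The crux is the Baranyai step: arranging the partition so that the summands within each matching are \emph{jointly} conditionally iid given the free block, and then propagating the per-matching $L^{2p}$ bounds through the average over the (mutually correlated) matchings without degrading the $n^{-p}$ rate — convexity rescues the rate, but one must be careful about divisibility of $m$ by $r$, about the ``bad'' remainder, and about the final conditioning argument that upgrades the $L^{2p}$ estimate to the almost-sure ``vanishing'' conclusion. Everything else is bookkeeping with the equivalence/dequivalence lemmas of the preceding sections.
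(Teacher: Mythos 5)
Your proof is correct and follows essentially the same route as the paper's: reduce to $\xx=\iid(\XX)$, discard the $O(n^{r-1})$ tuples with repeated or colliding indices, partition the remaining tuples via Ordered Baranyai's theorem into perfect matchings of conditionally independent mean-zero summands, and aggregate the resulting $O(n^{-p})$ conditional $2p$-th moment bounds by the power-mean inequality. The only (immaterial) structural difference is that you center each summand by its conditional expectation $\Psi(\xx_{\aalpha})$ — effectively proving \cref{lem:outer_power_avg_equiv_nonlin} first and deducing the lemma via \cref{lem:LLNequiv_outerprod} — whereas the paper centers by the decorrelated copy $\psi(\xx_{\alpha};\xx^{\bx{1}}_{\beta_{1}};\cdots;\xx^{\bx{r}}_{\beta_{r}})$ and bounds the difference tensor $\Delta$ directly.
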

We in fact don't need to assume $\mathring{c}$ is deterministic,
but doing so simplifies the proof.

We remind the reader the notations in \cref{eq:ind_nonlin_tensor}.
To be very explicit, if $s=1$ and $T=\langle\psi(\xx;\xx_{\beta_{1}};\cdots;\xx_{\beta_{r}};\cc)\rangle_{\bbeta}\in\seqb^{1}$,
then each entry $T_{\alpha}$ is
\begin{equation}
T_{\alpha}=\frac{1}{n^{r}}\sum_{\beta_{1},\ldots,\beta_{r}}\psi(x_{\alpha}^{1},\ldots,x_{\alpha}^{k}\sep x_{\beta_{1}}^{1},\ldots,x_{\beta_{1}}^{k}\sep\cdots\sep x_{\beta_{r}}^{1},\ldots,x_{\beta_{r}}^{k}\sep c^{1},\ldots,c^{l}).\label{eq:expand_out_avg}
\end{equation}
Likewise for $\langle\psi(\xx;\xx_{\beta_{1}}^{\bx{1}};\cdots;\xx_{\beta_{r}}^{\bx{r}};\mathring{\cc})\rangle_{\bbeta}$.

Intuitively, when all the $\beta_{1},\ldots,\beta_{r}$ are distinct,
then $\xx_{\beta_{1}},\ldots,\xx_{\beta_{r}}$ are roughly independent
as well, but this is obviously not the case when $\beta_{1},\ldots,\beta_{r}$
are not distinct. However, among all possible values of the tuple
$(\beta_{1},\ldots,\beta_{r})$, the nondistinct ones constitute a
minority, vanishing with $n$. Therefore, we hope to say that they
contribute vanishingly to the sum \cref{eq:expand_out_avg} so as to
establish \cref{eq:ind_nonlin_tensor}.

By combining with \cref{lem:LLNequiv_outerprod}, we get
\begin{lemma}
\label{lem:outer_power_avg_equiv_nonlin}In the same scenario as \cref{lem:decorrelate_avg},
if $s=1$, then
\[
\langle\psi(\xx;\xx_{\beta_{1}};\cdots;\xx_{\beta_{r}};\cc)\rangle_{\bbeta}\equiv\Psi(\xx)
\]
where
\[
\Psi:\R^{k}\to\R,\quad\Psi(\xbf)\defeq\EV_{\bx{1},\ldots,\bx{r}}\psi(\xbf;\XX^{\bx{1}};\cdots;\XX^{\bx{r}};\mathring{\cc}),\forall\xbf\in\R^{k},
\]
(with the expectation taken over $\XX^{\bx{1}};\cdots;\XX^{\bx{r}}$).
\end{lemma}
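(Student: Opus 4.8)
The plan is to derive this as an immediate corollary of the two preceding lemmas, \cref{lem:decorrelate_avg} and \cref{lem:LLNequiv_outerprod}, chained together using transitivity of $\equiv$. First I would apply \cref{lem:decorrelate_avg} in the case $s=1$, which directly yields
\[
\langle\psi(\xx;\xx_{\beta_{1}};\cdots;\xx_{\beta_{r}};\cc)\rangle_{\bbeta}\equiv\langle\psi(\xx;\xx_{\beta_{1}}^{\bx{1}};\cdots;\xx_{\beta_{r}}^{\bx{r}};\mathring{\cc})\rangle_{\bbeta},
\]
where $\xx^{\bx1},\ldots,\xx^{\bx r}$ are iid copies of $\iid(\XX)$, mutually independent and independent from $\xx$. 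Observe that this step simultaneously discharges the dependence on the random scalar $\cc$: the deterministic limit $\mathring\cc$ now sits in the last argument slot.

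Second, I would recognize the right-hand side above as an instance of \cref{lem:LLNequiv_outerprod}. Absorb the constant $\mathring\cc$ into the nonlinearity by setting $\tilde\psi:(\oplus^{1+r}\R^k)\to\R$, $\tilde\psi(\xi;\zeta^1;\cdots;\zeta^r)\defeq\psi(\xi;\zeta^1;\cdots;\zeta^r;\mathring\cc)$, which remains pseudo-Lipschitz since restricting a pseudo-Lipschitz function to a fixed value in one argument block leaves it pseudo-Lipschitz in the others. Then apply \cref{lem:LLNequiv_outerprod} with $s_0=1$, $\calV_0=\R^k$, moment-bounded multi-tensor $\xx$ (which is moment-bounded by hypothesis), and $\ZZ^t\defeq\XX$, $\zz^t\defeq\xx^{\bx t}$, $s_t\defeq1$ for $t=1,\ldots,r$: the hypotheses $\zz^t\disteq\iid^{s_t}(\ZZ^t)$ and mutual independence of $\xx,\zz^1,\ldots,\zz^r$ hold by the construction of the $\xx^{\bx t}$ in \cref{lem:decorrelate_avg}. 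The conclusion is
\[
\langle\tilde\psi(\xx;\xx^{\bx1}_{\beta_1};\cdots;\xx^{\bx r}_{\beta_r})\rangle_{\bbeta}\equiv\tilde\Psi(\xx),\qquad \tilde\Psi(\xbf)=\EV\,\tilde\psi(\xbf;\XX^{\bx1};\cdots;\XX^{\bx r}),
\]
and unfolding $\tilde\psi$ shows $\tilde\Psi=\Psi$ as in the statement. Transitivity of $\equiv$ across the two displays then gives $\langle\psi(\xx;\xx_{\beta_{1}};\cdots;\xx_{\beta_{r}};\cc)\rangle_{\bbeta}\equiv\Psi(\xx)$.

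There is essentially no obstacle here, as the substantive work resides entirely in the two lemmas already established; the proof is bookkeeping. The only points deserving a moment of care are: (i) confirming that the iid copies produced on the right-hand side of \cref{lem:decorrelate_avg} are distributed \emph{exactly} as $\iid(\XX)$ rather than merely dequivalent to it, since \cref{lem:LLNequiv_outerprod} requires $\zz^t\disteq\iid^{s_t}(\ZZ^t)$ on the nose; and (ii) verifying that folding the deterministic $\mathring\cc$ into $\psi$ preserves pseudo-Lipschitzness. Both are immediate, so the combination goes through cleanly.
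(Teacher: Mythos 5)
Your proposal is correct and follows exactly the route the paper intends: the paper introduces this lemma with the words "By combining with \cref{lem:LLNequiv_outerprod}, we get\ldots", i.e.\ it is obtained precisely by chaining \cref{lem:decorrelate_avg} (to replace $\xx_{\beta_t}$ by independent copies $\xx^{\bx t}_{\beta_t}$ and $\cc$ by $\mathring\cc$) with \cref{lem:LLNequiv_outerprod} (to replace the average over the independent copies by the expectation $\Psi$). Your two points of care --- that the copies are exactly $\iid(\XX)$-distributed and that absorbing $\mathring\cc$ preserves pseudo-Lipschitzness --- are the right ones and both check out.
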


\subsection{Proof of \texorpdfstring{\cref{lem:decorrelate_avg}}{IID Equivalence}}

\subsubsection{Preliminary Reduction and Setup}

We assume in the proof that $s=1$, as the general case is a straightforward
modification.

WLOG, we can assume $\xx$ is an iid copy of $\iid(\XX)$ by \cref{prop:equiv_pLip_tensor}.
Applying \cref{prop:equiv_pLip_tensor} again to $\cc$, we see 
\[
\psi(\xx;\xx;\cdots;\xx;\cc)\equiv\psi(\xx;\xx;\cdots;\xx;\mathring{\cc}).
\]
So we can assume WLOG $\cc=\mathring{c}$ and furthermore just absorb
it into $\psi$ since $\mathring{\cc}$ is deterministic. It remains
to show
\[
\langle\psi(\xx;\xx_{\beta_{1}};\cdots;\xx_{\beta_{r}})\rangle_{\bbeta}\equiv\langle\psi(\xx;\xx_{\beta_{1}}^{\bx{1}};\cdots;\xx_{\beta_{r}}^{\bx{r}})\rangle_{\bbeta}.
\]

Let $\Delta\in\seqb^{r+1}$ be the tensor 
\[
\Delta_{\alpha\bbeta}=\Delta_{\alpha\beta_{1}\cdots\beta_{r}}=\psi(\xx_{\alpha};\xx_{\beta_{1}};\cdots;\xx_{\beta_{r}})-\psi(\xx_{\alpha};\xx_{\beta_{1}}^{\bx{1}};\cdots;\xx_{\beta_{r}}^{\bx{r}}).
\]
Note the following properties of $\Delta$: 
\begin{prop}
\label{prop:_Uproperty}1) If $\alpha$ does not appear in $\bbeta$
and all indices in $\bbeta$ are distinct, then $\EV\Delta_{\alpha\bbeta}=0$.
2) If $\bbeta'$ is another multi-index with the same property and
that furthermore does not intersect $\bbeta$, then $\Delta_{\alpha\bbeta}$
and $\Delta_{\alpha\bbeta'}$ are independent conditioned on $\xx_{\alpha}$.
\end{prop}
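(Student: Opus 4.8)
The plan is to derive both parts directly from the independence structure of iid arrays, with no analytic input. Recall that by the preliminary reduction we may take $\xx \disteq \iid(\XX)$, so that the rows $\xx_1, \xx_2, \ldots$ of $\xx$ are iid copies of $\XX$, and the copies $\xx^{\bx{1}}, \ldots, \xx^{\bx{r}}$ are independent copies of the whole array $\iid(\XX)$; hence the $r+1$ arrays $\xx, \xx^{\bx{1}}, \ldots, \xx^{\bx{r}}$ are mutually independent, each consisting of iid rows distributed as $\XX$. I would first record that $\Delta_{\alpha\bbeta}$ is a measurable function of the rows $\xx_\alpha, \xx_{\beta_1}, \ldots, \xx_{\beta_r}$ of $\xx$ and the rows $\xx_{\beta_1}^{\bx{1}}, \ldots, \xx_{\beta_r}^{\bx{r}}$ (namely row $\beta_i$ of the $i$th copy array), and that every $\psi$-value in sight is integrable since $\psi$ is pseudo-Lipschitz and $\XX$ has moments of all orders.

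For part 1, I would argue that under the hypotheses $\alpha \notin \bbeta$ and the $\beta_i$ pairwise distinct, the tuple $(\xx_\alpha, \xx_{\beta_1}, \ldots, \xx_{\beta_r})$ is a list of $r+1$ distinct rows of the iid array $\xx$, hence is distributed as $r+1$ iid copies of $\XX$. On the other side, $\xx_\alpha$ is a row of $\xx$ while each $\xx_{\beta_i}^{\bx{i}}$ is a row of the separate array $\xx^{\bx{i}}$; since $\xx, \xx^{\bx{1}}, \ldots, \xx^{\bx{r}}$ are mutually independent, the tuple $(\xx_\alpha, \xx_{\beta_1}^{\bx{1}}, \ldots, \xx_{\beta_r}^{\bx{r}})$ is likewise a list of $r+1$ iid copies of $\XX$ (and here no distinctness of $\bbeta$ is needed, since distinct copy-arrays are used). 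Thus the two argument tuples fed to $\psi$ inside $\Delta_{\alpha\bbeta}$ are equal in distribution, and taking expectations gives $\EV \Delta_{\alpha\bbeta} = 0$.

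For part 2, I would isolate the two driving collections $U := (\xx_{\beta_1}, \ldots, \xx_{\beta_r}, \xx_{\beta_1}^{\bx{1}}, \ldots, \xx_{\beta_r}^{\bx{r}})$ and $U' := (\xx_{\beta_1'}, \ldots, \xx_{\beta_r'}, \xx_{\beta_1'}^{\bx{1}}, \ldots, \xx_{\beta_r'}^{\bx{r}})$, so that $\Delta_{\alpha\bbeta}$ is a function of $(\xx_\alpha, U)$ and $\Delta_{\alpha\bbeta'}$ a function of $(\xx_\alpha, U')$. The crux is to check that the set of (array, row)-pairs generating $(\xx_\alpha, U)$ is disjoint from the set generating $U'$: among rows of $\xx$, the indices $\{\alpha\}\cup\bbeta$ are disjoint from $\bbeta'$ because $\alpha \notin \bbeta'$ and $\bbeta \cap \bbeta' = \emptyset$; and for each copy-index $i$, the rows $\xx_{\beta_i}^{\bx{i}}$ and $\xx_{\beta_i'}^{\bx{i}}$ of $\xx^{\bx{i}}$ are distinct precisely because $\beta_i \ne \beta_i'$, which again uses $\bbeta \cap \bbeta' = \emptyset$. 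Since all rows across all arrays are mutually independent, $U'$ is then independent of $(\xx_\alpha, U)$; in particular, conditioned on $\xx_\alpha$, $U$ and $U'$ are independent, whence $\Delta_{\alpha\bbeta}$ and $\Delta_{\alpha\bbeta'}$ are conditionally independent given $\xx_\alpha$.

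I do not expect any real obstacle: the entire content is this bookkeeping of which rows of which arrays occur. The one point genuinely worth care is that in part 2 it is the disjointness $\bbeta \cap \bbeta' = \emptyset$ — not merely the internal distinctness of each multi-index — that is needed, because it is what forces $\beta_i \ne \beta_i'$ and hence makes the rows $\xx_{\beta_i}^{\bx{i}}$ and $\xx_{\beta_i'}^{\bx{i}}$ of the common array $\xx^{\bx{i}}$ independent.
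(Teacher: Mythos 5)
Your proof is correct and is exactly the bookkeeping the paper leaves implicit: Proposition~\ref{prop:_Uproperty} is stated without proof, and the intended justification is precisely that $\Delta_{\alpha\bbeta}$ depends only on specified rows of the mutually independent iid arrays $\xx,\xx^{\bx 1},\ldots,\xx^{\bx r}$, so that distinctness of the indices gives equality in distribution of the two argument tuples (part 1) and disjointness of $\{\alpha\}\cup\bbeta$ from $\bbeta'$ together with $\beta_i\ne\beta_i'$ gives conditional independence given $\xx_\alpha$ (part 2). Your closing remark correctly identifies the one point worth care, namely that $\bbeta\cap\bbeta'=\emptyset$ (not mere internal distinctness) is what decouples the rows of the common copy-array $\xx^{\bx i}$.
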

Then our goal is to show $\left\langle \Delta_{\bullet\bbeta}\right\rangle _{\bbeta}\in\seqb^{1}$
is vanishing, i.e., for any $\epsilon>0$,
\[
n^{1-\epsilon}\left\langle \left\langle \Delta_{\alpha\bbeta}\right\rangle _{\bbeta}^{2}\right\rangle _{\alpha}\asto0.
\]

By \cref{prop:powermean_implies_vanishing}, we just need to show there
exist constants $C_{p}$ for all integers $p\ge1$ such that 

\begin{equation}
\EV\left\langle \left\langle \Delta_{\alpha\bbeta}\right\rangle _{\bbeta}^{2p}\right\rangle _{\alpha}\le C_{p}n^{-p}\label{eq:moment_goal}
\end{equation}
for all $n$ and $p$. 

Fix $n$ and $p$. Below, we shall bound each $\left\langle \Delta_{\alpha\bbeta}\right\rangle _{\bbeta}^{2p}$
individually. By symmetry in $\alpha$, we WLOG fix $\alpha=n$. Let
$n'$ be the largest multiple of $r$ smaller than $n$. 

\subsubsection{The Core Insight}

We succinctly overview the core insight here, with more explanation
below. There are 3 steps of reasoning:

1. For large $n$, $\sum_{\bbeta}\Delta_{\alpha\bbeta}$ is dominated
by the subset sum $\sum_{\bbeta\in\Gamma}\Delta_{\alpha\bbeta}$ where
$\Gamma\sbe[n']^{r}$ consists of all $\bbeta$ that do not contain
repeating indices (i.e., $\beta_{1},\ldots,\beta_{r}$ are all distinct).
(Note such $\bbeta$ does not contain $\alpha$ either since $\alpha=n>n'$
by assumption).

2. But by Ordered Baranyai's Theorem (\cref{thm:obara}), there is a partition
of $\Gamma$ into perfect matchings $\Gamma_{1},\Gamma_{2},\ldots$ i.e., such that each $\Gamma_{i}$
consists of multi-indices $\Gamma_{i}=\{\bbeta^{1},\ldots,\bbeta^{n'/r}\}$
and $\bbeta^{1},\ldots,\bbeta^{n'/r}$ partition $[n']$ (i.e.,
each $\beta\in[n']$ appear in exactly one of $\bbeta^{j}$). 
For example, if $n' = 4$ and $r = 2$, then $\Gamma = \{(1,2), (1,3), (1,4), (2,3), (2,4), (3,4)\} \cup \{\text{mirror image}\}$.
This is partitioned into the perfect matchings $\{(1,2), (3,4)\} \sqcup \{(1,3), (2,4)\} \sqcup \{(1,4), (2,3)\}$ along with their mirror images.
Back to the general case, we thus have
\begin{equation}
\sum_{\bbeta\in\Gamma}\Delta_{\alpha\bbeta}=\sum_{i}\sum_{\bbeta\in\Gamma_{i}}\Delta_{\alpha\bbeta}.\label{eq:applyBaranyai}
\end{equation}

3. But, conditioned on $\xx_{\alpha}$, each $\sum_{\bbeta\in\Gamma_{i}}\Delta_{\alpha\bbeta}$
is a sum of independent, mean-zero random variables by \cref{prop:_Uproperty}.
Therefore it has typical size $O(\sqrt{n'/r})$ which is just $O(\sqrt{n})$
because $r$ is constant. As a result, the whole sum \cref{eq:applyBaranyai}
is of order $n^{r-1/2}$ so that
\[
n^{-r}\sum_{\bbeta\in\Gamma}\Delta_{\alpha\bbeta}=O(n^{-1/2}).
\]
This will be formalized as moment bounds 
\[
\EV\left\langle \Delta_{\alpha\bbeta}\right\rangle _{\bbeta}^{2p}\le C_{p}n^{-p}
\]
from which \cref{eq:moment_goal} would follow.

Of the 3 steps above, only 1 and 3 need further explanation. We do
so below.

\subsubsection{Filling in the Details}

\paragraph{1. Sum Decomposition}

Let $R_{1}$ be the set of $\bbeta$ that contains at least one of
$n'+1,\ldots,n$. Let $R_{2}$ be the set of $\bbeta$ that contains
repeating indices. Then the complement of $\Gamma$ in $[n]^{r}$
is a subset of $R_{1}\cup R_{2}$.

Now note that $|R_{1}\cup R_{2}|=\Theta(n^{r-1})$ (where the hidden
constant can depend on $r$ since $r=\Theta(1)$). Furthermore, because
$\XX$ (the distribution of $\xx$ and $\yy$'s $\alpha$-slices)
has all moments and $\psi$ is pseudo-Lipschitz and thus polynomially
bounded, there exist constants $D_{q}$ for every $q\ge1$ such that
for all $\alpha,\bbeta$, we have $\EV|\Delta_{\alpha\bbeta}|^{q}\le D_{q}=O(1)$.
Therefore, we have

\[
\EV(\sum_{\bbeta\not\in\Gamma}\Delta_{\alpha\bbeta})^{2p}\le|R_{1}\cup R_{2}|^{2p-1}\EV\sum_{\bbeta\not\in\Gamma}\Delta_{\alpha\bbeta}^{2p}=O(n^{(r-1)(2p-1)+(r-1)})=O(n^{(r-1)2p}).
\]
Thus, after multiplying by $(n^{-r})^{2p}$, we get
\[
\EV\left\langle \Delta_{\alpha\bbeta}\ind_{\bbeta\not\in\Gamma}\right\rangle _{\bbeta}^{2p}=O(n^{-2p}).
\]

Consequently,
\begin{align*}
\EV\left\langle \Delta_{\alpha\bbeta}\right\rangle _{\bbeta}^{2p} & =\EV\left\langle \Delta_{\alpha\bbeta}\ind_{\bbeta\in\Gamma}+\Delta_{\alpha\bbeta}\ind_{\bbeta\not\in\Gamma}\right\rangle _{\bbeta}^{2p}\\
 & \le C\EV\left\langle \Delta_{\alpha\bbeta}\ind_{\bbeta\in\Gamma}\right\rangle _{\bbeta}^{2p}+C\EV\left\langle \Delta_{\alpha\bbeta}\ind_{\bbeta\not\in\Gamma}\right\rangle _{\bbeta}^{2p}\\
 & \le C\EV\left\langle \Delta_{\alpha\bbeta}\ind_{\bbeta\in\Gamma}\right\rangle _{\bbeta}^{2p}+O(n^{-2p})
\end{align*}
where $C=2^{2p-1}$ (by \cref{lem:powerbound}). Thus, to show \cref{eq:moment_goal},
it suffices to show 
\[
\EV\left\langle \Delta_{\alpha\bbeta}\ind_{\bbeta\in\Gamma}\right\rangle _{\bbeta}^{2p}\le C_{p}n^{-p}
\]
(for a different set of constants $C_{p}$).

\paragraph{3. Bounding Moments}

Let $\left\langle -\right\rangle _{\bbeta\in\Gamma}$ denote average
over $\bbeta\in\Gamma$. Recall that, by Ordered Baranyai's Theorem, we have
partitioned $\Gamma$ into $\Gamma_{1},\Gamma_{2},\ldots$ where each
$\Gamma_{i}$ consists of multi-indices $\Gamma_{i}=\{\bbeta^{1},\ldots,\bbeta^{n'/r}\}$
such that $\bbeta^{1},\ldots,\bbeta^{n'/r}$ partition $[n']$. Consequently,
for each $i$, $\{\Delta_{\alpha\bbeta}:\bbeta\in\Gamma_{i}\}$ is
mutually independent conditioned on $\xx_{\alpha}$. We first deduce

\[
\EV\left\langle \Delta_{\alpha\bbeta}\ind_{\bbeta\in\Gamma}\right\rangle _{\bbeta}^{2p}\le\EV\left\langle \Delta_{\alpha\bbeta}\right\rangle _{\bbeta\in\Gamma}^{2p}=\EV\left\langle \left\langle \Delta_{\alpha\bbeta}\right\rangle _{\bbeta\in\Gamma_{i}}\right\rangle _{i}^{2p}\le\left\langle \EV\left\langle \Delta_{\alpha\bbeta}\right\rangle _{\bbeta\in\Gamma_{i}}^{2p}\right\rangle _{i}
\]
where the equality follows from \cref{eq:applyBaranyai} and the last
inequality follows from power-mean inequality. Again, conditioned
on $\xx_{\alpha}$, each $\left\langle \Delta_{\alpha\bbeta}\right\rangle _{\bbeta\in\Gamma_{i}}$
is a mean of independent, mean-zero random variables by \cref{prop:_Uproperty}.
But by \cref{lem:CLT_moments}, there are real polynomially bounded
functions $F_{p}$ of $\xx_{\alpha}$ for each integer $p$ such that
$\EV\left[\left\langle \Delta_{\alpha\bbeta}\right\rangle _{\bbeta\in\Gamma_{i}}^{2p}\mid\xx_{\aalpha}\right]<F_{p}(\xx_{\alpha})(n'/r)^{-p}$
simultaneously for every $i$. \footnote{Specifically, if $H_{\alpha}$ is the random variable distributed
identically as $\Delta_{\alpha\bbeta},\bbeta\in\Gamma$ (which have
the same distribution for any $\bbeta\in\Gamma$), then by \cref{lem:CLT_moments},
$F_{p}$ is a polynomial in the moments of $H_{\alpha}$ conditioned
on $\xx_{\alpha},$and these moments are themself polynomially bounded
in $\xx_{\alpha}$ because $\psi$ is polynomially bounded.} Therefore, altogether,
\[
\EV\left\langle \Delta_{\alpha\bbeta}\ind_{\bbeta\in\Gamma}\right\rangle _{\bbeta}^{2p}\le C_{p}n^{-p}
\]
where $C_{p}=2^{p}r^{p}\EV F_{p}(\xx_{\alpha})=2^{p}r^{p}\EV F_{p}(\XX)$
(which is finite because $\XX$ has all moments, and where $2^{p}\ge(n/n')^{p}$
is there as a loose conversion factor from $n'$ to $n$). 

\section{Matrix Pseudo-Inverse}
\begin{lemma}
\label{lem:matinv}Suppose\footnote{i.e., $\Lambda(n)$ is a random $k\times k$ matrix for each $n$;
c.f. \cref{def:tensorseq}} $\Lambda\in\seqb^{0}\otimes\R^{k\times k}$ such that $\Lambda\equiv\mathring{\Lambda}$
for a deterministic $\mathring{\Lambda}\in\R^{k\times k}$.\footnote{Recall this means the random sequence $\Lambda$ is equivalent to
the sequence that equals $\mathring{\Lambda}$ identically; c.f. \cref{def:constseq}.} If $\mathring{\Lambda}$ is full rank, then 
\[
\Lambda^{+}\equiv\mathring{\Lambda}^{-1}
\]
 as well.
\end{lemma}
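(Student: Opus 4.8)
The statement to prove is \cref{lem:matinv}: if $\Lambda \in \seqb^0 \otimes \R^{k\times k}$ with $\Lambda \equiv \mathring\Lambda$ for a deterministic full-rank $\mathring\Lambda \in \R^{k\times k}$, then $\Lambda^+ \equiv \mathring\Lambda^{-1}$. The plan is to reduce this to the already-established preservation of equivalence under locally Lipschitz maps applied to deterministic-limit multi-scalars, namely \cref{prop:localLip_equiv}.

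\textbf{Step 1: Identify the right map.} The key observation is that the pseudo-inverse map $M \mapsto M^+$, restricted to a neighborhood of an invertible matrix $\mathring\Lambda$, coincides with the ordinary matrix inverse $M \mapsto M^{-1}$ (since $M$ near an invertible matrix is itself invertible, and $M^+ = M^{-1}$ for invertible $M$ by the standard facts recalled in \cref{sec:pinv}). The matrix inverse is a smooth (in particular locally Lipschitz) function on the open set $\mathrm{GL}_k(\R)$ of invertible $k\times k$ matrices, because by Cramer's rule its entries are rational functions of the entries of $M$ with nonvanishing denominator $\det M$ there. So we take $\psi: U \to \R^{k\times k}$ to be $\psi(M) = M^{-1}$ where $U$ is any bounded open neighborhood of $\mathring\Lambda$ whose closure lies in $\mathrm{GL}_k(\R)$; on $U$, $\psi$ is Lipschitz with some finite constant $L$ (depending only on $\mathring\Lambda$, not on $n$).

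\textbf{Step 2: Apply \cref{prop:localLip_equiv}.} We view $\Lambda$ as a moment-bounded multi-scalar in $\seqb^0 \otimes \calV$ with $\calV = \R^{k\times k}$, and $\mathring\Lambda$ as its deterministic limit. By hypothesis $\Lambda \equiv \mathring\Lambda$. Then \cref{prop:localLip_equiv} with $\psi$ as above yields $\psi(\Lambda) \equiv \psi(\mathring\Lambda)$, i.e. $\Lambda^{-1} \equiv \mathring\Lambda^{-1}$ as sequences (interpreting $\Lambda^{-1}$ as $\Lambda^+$ whenever $\Lambda$ happens to be singular for small $n$). Finally, since almost surely $\Lambda \in U \subseteq \mathrm{GL}_k(\R)$ for all large $n$ (again because $\Lambda \equiv \mathring\Lambda$ forces $\Lambda - \mathring\Lambda$ to be vanishing, hence $\to 0$ almost surely), we have $\Lambda^+ = \Lambda^{-1}$ almost surely for large $n$, so $\Lambda^+ \equiv \Lambda^{-1} \equiv \mathring\Lambda^{-1}$. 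This chains to the desired $\Lambda^+ \equiv \mathring\Lambda^{-1}$.

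\textbf{Expected obstacle.} There is essentially no deep obstacle; the only care needed is the bookkeeping that $\Lambda^+$ and $\Lambda^{-1}$ agree eventually (almost surely, as $n\to\infty$) so that the local-Lipschitz argument — which is genuinely about the inverse map on $\mathrm{GL}_k$ — transfers to a statement about the pseudo-inverse, which is the object actually used elsewhere (e.g.\ in \cref{eqn:conditionf0} and the surrounding Gaussian-conditioning discussion, where $\SSigma$ can be rank-deficient before taking limits, but here $\mathring\Lambda$ is assumed full rank so this degeneracy does not arise). One should also note explicitly that $\Lambda \in \seqb^0 \otimes \R^{k\times k}$ means $\Lambda$ is moment-bounded, which is a standing hypothesis of \cref{prop:localLip_equiv} and is satisfied by assumption, so the invocation is legitimate.
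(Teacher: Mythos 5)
Your proposal is correct and follows exactly the paper's argument: the paper's one-line proof is to apply \cref{prop:localLip_equiv} to the pseudo-inverse map, which is locally Lipschitz at any full-rank matrix. Your additional bookkeeping (that $\Lambda^{+}=\Lambda^{-1}$ eventually almost surely since $\Lambda$ enters a neighborhood of $\mathring{\Lambda}$ inside $\mathrm{GL}_k(\R)$) simply spells out why the pseudo-inverse is locally Lipschitz there.
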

\begin{proof}
Apply \cref{prop:localLip_equiv} to $\psi$ being the matrix pseudo-inverse
function, which is locally Lipschitz at any full rank matrix.
\end{proof}

Earlier we defined spaces like $\seqspace^{1}\otimes\R^{k}$ which
roughly contains sequences (in $n$) of $n\times k$ matrices. Likewise,
we can define $\R^{k}\otimes\seqspace^{1}$ which contain the transposes
of $\seqspace^{1}\otimes\R^{k}$, i.e., $k\times n$ matrices.
\begin{lemma}
\label{lem:_pinv_equiv}Suppose $\XX\in\R^{k}$ is a random vector
and $\xx\distequiv\iid(\XX)\in\seqb^{1}\otimes\R^{k}$.\footnote{i.e., $X$ is a sequence of random $\R^{n\times k}$ matrices, distributionally
equivalent to $\iid(\XX)$; c.f. \cref{def:tensorseq}} If $\XX$ has moments of all orders and its covariance $\Lambda\defeq\EV\XX\XX^{\trsp}\in\R^{k\times k}$
(which is deterministic, independent of $n$) is full rank, then as
elements of $\seqb^{1}\otimes\R^{k}$,\footnote{Recall, as discussed in \cref{def:tensorseq}, $\seqb^{1}\otimes\R^{k}$
means sequence in $n$ of moment-bounded $n\times k$ matrices.}
\begin{equation}
n\xx^{+\trsp}\equiv\xx\Lambda^{-1}\distequiv n\ \iid(\XX)^{+\trsp}\distequiv\iid(\XX)\Lambda^{-1}.\label{eq:pinv_equiv}
\end{equation}
Here $n\xx^{+\trsp}$ is the sequence $n\mapsto n\xx(n)^{+\trsp}$,
and likewise for $n\ \iid(\XX)^{+\trsp}$; $\xx\Lambda^{-1}$ at index
$n$ is the matrix multiplication between $\xx(n)$ (shape $n\times k$)
and $\Lambda^{-1}$ (shape $k\times k$).
\end{lemma}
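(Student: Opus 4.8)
The plan is to reduce everything to the standard pseudo-inverse identity $\xx^{+}=(\xx^{\trsp}\xx)^{+}\xx^{\trsp}$ recorded in \cref{sec:pinv}. Transposing it, using that $\xx^{\trsp}\xx$ is symmetric and that $(cA)^{+}=c^{-1}A^{+}$ for $c\neq0$, gives for every $n$, almost surely,
\[
n\xx^{+\trsp}=\xx\,(\xx^{\trsp}\xx/n)^{+}.
\]
So the statement follows once I show (i) $\xx^{\trsp}\xx/n$ is equivalent to the deterministic full-rank matrix $\Lambda$, (ii) hence its pseudo-inverse is equivalent to $\Lambda^{-1}$, and (iii) left-multiplying a moment-bounded multi-vector by a matrix equivalent to a constant preserves equivalence.

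First I would note that $\xx$ is moment-bounded: $\iid(\XX)$ is moment-bounded since $\XX$ has moments of all orders, and moment-boundedness is preserved under $\distequiv$ (equivalence adds only a vanishing, hence moment-bounded, term, and distributional equality of sequences transfers the defining $\tilde O(1)$ moment conditions). Next, $\xx\distequiv\iid(\XX)$ yields the paired dequivalence $(\xx,\xx)\distequiv\iid(\XX,\XX)$ directly from \cref{def:distequiv1,def:distequiv2}, so \cref{lem:inner_product_det_equiv} with $\yy=\xx$, $\YY=\XX$ gives $\xx^{\trsp}\xx/n\equiv\EV\XX\XX^{\trsp}=\Lambda$. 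Since $\xx^{\trsp}\xx/n$ is moment-bounded (entrywise product then average of moment-bounded tensors) and $\Lambda$ is full rank, \cref{lem:matinv} gives $(\xx^{\trsp}\xx/n)^{+}\equiv\Lambda^{-1}$. Finally, the bilinear map $\psi(v;D)=vD$ from $\R^{k}\oplus\R^{k\times k}$ to $\R^{k}$ is pseudo-Lipschitz (its differential is linear, hence polynomially bounded), so viewing $\xx\,(\xx^{\trsp}\xx/n)^{+}$ as the $\psi$-outer product of the multi-vector $\xx$ (one $n$-dimension) and the multi-scalar $(\xx^{\trsp}\xx/n)^{+}$ (no $n$-dimension), \cref{prop:equiv_pLip_tensor} applied to $\xx\equiv\xx$ and $(\xx^{\trsp}\xx/n)^{+}\equiv\Lambda^{-1}$ gives $\xx\,(\xx^{\trsp}\xx/n)^{+}\equiv\xx\Lambda^{-1}$. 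Combining with the pseudo-inverse identity proves $n\xx^{+\trsp}\equiv\xx\Lambda^{-1}$.

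The remaining links in \cref{eq:pinv_equiv} come essentially for free. Since $\iid(\XX)\distequiv\iid(\XX)$ trivially, the argument above applied verbatim with $\xx$ replaced by $\iid(\XX)$ gives $n\,\iid(\XX)^{+\trsp}\equiv\iid(\XX)\Lambda^{-1}$. And $\xx\Lambda^{-1}\distequiv\iid(\XX)\Lambda^{-1}$ follows from $\xx\distequiv\iid(\XX)$ by \cref{prop:distequiv_pLip_tensor}, since right-multiplication by the fixed matrix $\Lambda^{-1}$ is pseudo-Lipschitz. Chaining these using transitivity of $\equiv$ and the fact that an $\equiv$ on either side of a $\distequiv$ still yields a $\distequiv$ establishes the whole chain $n\xx^{+\trsp}\equiv\xx\Lambda^{-1}\distequiv n\,\iid(\XX)^{+\trsp}\distequiv\iid(\XX)\Lambda^{-1}$.

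I do not expect a real obstacle; the content is entirely in assembling already-established preservation properties in the right order. The one spot needing a little care is confirming that the vector-times-matrix multiplication genuinely fits the nonlinear-outer-product framework of \cref{prop:equiv_pLip_tensor,prop:distequiv_pLip_tensor} (bilinear maps between fixed-dimensional Euclidean spaces are pseudo-Lipschitz), and that the paired dequivalence $(\xx,\xx)\distequiv\iid(\XX,\XX)$ is legitimate, which it is because the diagonal coupling of $(\XX,\XX)$ is precisely what $\iid$ of the joint variable produces.
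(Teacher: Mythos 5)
Your proof is correct and follows essentially the same route as the paper's: the identity $n\xx^{+\trsp}=\xx(\xx^{\trsp}\xx/n)^{+}$, then $\xx^{\trsp}\xx/n\equiv\Lambda$ (you via \cref{lem:inner_product_det_equiv}, the paper via the equivalent combination of \cref{lem:avg_equiv} and \cref{lem:LLNequiv}), then \cref{lem:matinv}, then preservation of equivalence under the pseudo-Lipschitz matrix-multiplication map (\cref{prop:equiv_pLip_tensor}). The remaining links in the chain are handled the same way in both arguments.
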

Note that $n\xx^{+\trsp}$ is the correct scaling with $n$ so that
this matrix has $\Theta(1)$ sized entries (it is elucidating to consider
the example when $k=1$ and $\xx$ is just an $n$-vector).
\begin{proof}
Let $\tilde{\xx}$ be an instance of $\iid(\XX)$ such that $\xx\equiv\tilde{\xx}$.

By standard pseudo-inverse facts (\cref{sec:pinv}), $n\xx^{+}=(\frac{1}{n}\xx^{\trsp}\xx)^{+}\xx^{\trsp}.$
By \cref{lem:avg_equiv}, $\frac{1}{n}\xx^{\trsp}\xx\equiv\frac{1}{n}\tilde{\xx}^{\trsp}\tilde{\xx}$.
By \cref{lem:LLNequiv}, $\frac{1}{n}\tilde{\xx}^{\trsp}\tilde{\xx}\equiv\EV\XX\XX^{\trsp}=\Lambda$.
By \cref{lem:matinv} and the nonsingularity of $\Lambda$, we have
\begin{equation}
\left(\frac{1}{n}\xx^{\trsp}\xx\right)^{+}\equiv\Lambda^{-1}.\label{eq:XTX_equiv_Lambdainv}
\end{equation}
Therefore, since equivalence is preserved under matrix multiplication
(which is certainly pseudo-Lipschitz; see \cref{prop:equiv_pLip_tensor}),
we have
\[
n\xx^{+\trsp}=\xx(\frac{1}{n}\xx^{\trsp}\xx)^{+}\equiv\xx\Lambda^{-1}\distequiv\tilde{\xx}\Lambda^{-1}\disteq\iid(\XX)\Lambda^{-1}
\]
as desired.\footnote{Technically, we need to check that $\Lambda^{-1}$ and $\tilde{\xx}^{\trsp}$
are both moment-bounded for \cref{prop:equiv_pLip_tensor} to apply.
But this check passes trivially.}
\end{proof}
\begin{lemma}
\label{lem:pinv_equiv}In the setting of \cref{lem:_pinv_equiv}, suppose
there is additionally $\yy\in\seqb^{1}\otimes\R^{l}$ such that $(\xx,\yy)\distequiv\iid(\XX,\YY)$
(as sequences in $\seqb^{1}\otimes(\R^{k}\oplus\R^{l})$) for some
(fixed wrt $n$) random vector $\YY\in\R^{l}$. Then, as elements
of $\seqb^{0}\otimes\R^{k\times l}$, \footnote{Recall $\seqb^{0}\otimes\R^{k\times l}$ means sequence in $n$ of
moment-bounded $k\times l$ matrices; c.f. \cref{def:tensorseq}}
\begin{equation}
\xx^{+}\yy\equiv\Lambda^{-1}\Gamma\label{eq:XpinvY}
\end{equation}
where $\Gamma\in\R^{k\times l}$ is the deterministic matrix with
entries $\Gamma^{ij}\defeq\EV X^{i}Y^{j}$.\footnote{Note the RHS is just a deterministic $\R^{k\times l}$ matrix, representing
the constant sequence as in \cref{def:constseq}.}
\end{lemma}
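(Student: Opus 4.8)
The plan is to reduce $\xx^+\yy$ to a product of two fixed-size multi-scalars whose deterministic limits are already under control, and then invoke closure of equivalence under polynomially bounded (hence pseudo-Lipschitz) maps. Concretely, I would start from the standard pseudo-inverse identity recorded in \cref{sec:pinv}, $\xx^+ = (\xx^\trsp\xx)^+\xx^\trsp$, so that
\[
  \xx^+\yy = (\xx^\trsp\xx)^+\,\xx^\trsp\yy = \left(\tfrac1n\xx^\trsp\xx\right)^+\left(\tfrac1n\xx^\trsp\yy\right),
\]
where the second equality is just the rescaling $(\xx^\trsp\xx)^+ = \tfrac1n(\tfrac1n\xx^\trsp\xx)^+$ (using $(cA)^+ = c^{-1}A^+$ for the nonzero scalar $c=n$) together with $\xx^\trsp\yy = n\cdot\tfrac1n\xx^\trsp\yy$. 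This bookkeeping of the $n$-powers is the one spot that needs care: it is what makes $\tfrac1n\xx^\trsp\xx\in\seqb^0\otimes\R^{k\times k}$ and $\tfrac1n\xx^\trsp\yy\in\seqb^0\otimes\R^{k\times l}$ both have $\Theta(1)$-sized entries, so that the equivalence machinery applies.

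Next I would pin down the limits of the two factors. For $\tfrac1n\xx^\trsp\yy$, I apply \cref{lem:inner_product_det_equiv} to the hypothesis $(\xx,\yy)\distequiv\iid(\XX,\YY)$, which gives $\tfrac1n\xx^\trsp\yy\equiv\EV\XX\YY^\trsp=\Gamma$. For the pseudo-inverse factor, this is exactly the content of \cref{eq:XTX_equiv_Lambdainv} obtained inside the proof of \cref{lem:_pinv_equiv}: $\tfrac1n\xx^\trsp\xx\equiv\Lambda$ by \cref{lem:avg_equiv} and \cref{lem:LLNequiv}, and then $(\tfrac1n\xx^\trsp\xx)^+\equiv\Lambda^{-1}$ by \cref{lem:matinv}, using that $\Lambda$ is full rank. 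I can either re-derive this in two lines or simply cite that equation.

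Finally, matrix multiplication $\R^{k\times k}\oplus\R^{k\times l}\to\R^{k\times l}$ is polynomially bounded, hence pseudo-Lipschitz, so \cref{prop:equiv_pLip_tensor} (together with the trivial observation that $\Lambda^{-1}$, $\Gamma$, $\tfrac1n\xx^\trsp\xx$, and $\tfrac1n\xx^\trsp\yy$ are all moment-bounded) yields
\[
  \xx^+\yy = \left(\tfrac1n\xx^\trsp\xx\right)^+\left(\tfrac1n\xx^\trsp\yy\right)\equiv\Lambda^{-1}\Gamma,
\]
which is the claim. I do not anticipate a genuine obstacle: the lemma is essentially an assembly of the ingredients already used for \cref{lem:_pinv_equiv}, with one extra invocation of \cref{lem:inner_product_det_equiv} to handle the $\yy$ factor; the only things worth double-checking are the $n$-scaling of each factor and that \cref{prop:equiv_pLip_tensor} is being applied to moment-bounded inputs.
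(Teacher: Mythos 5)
Your proposal is correct and follows essentially the same route as the paper's proof: the identity $\xx^{+}\yy=(\tfrac1n\xx^{\trsp}\xx)^{+}(\tfrac1n\xx^{\trsp}\yy)$, the equivalence $(\tfrac1n\xx^{\trsp}\xx)^{+}\equiv\Lambda^{-1}$ from \cref{eq:XTX_equiv_Lambdainv}, the equivalence $\tfrac1n\xx^{\trsp}\yy\equiv\Gamma$, and closure of equivalence under matrix multiplication via \cref{prop:equiv_pLip_tensor}. The only cosmetic difference is that you cite \cref{lem:inner_product_det_equiv} for the $\yy$ factor whereas the paper passes to an instance of $\iid(\XX,\YY)$ and applies \cref{lem:LLNequiv} directly, which amounts to the same thing.
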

\begin{proof}
let $(\tilde{\xx},\tilde{\yy})$ be an instance of $\iid(\XX,\YY)$
such that $(\xx,\yy)\equiv(\tilde{\xx},\tilde{\yy})$. Then by standard pseudo-inverse facts (\cref{sec:pinv}),
\[
\xx^{+}\yy=(\frac{1}{n}\xx^{\trsp}\xx)^{-1}(\frac{1}{n}\xx^{\trsp}\yy)\equiv\Lambda^{-1}(\frac{1}{n}\tilde{\xx}^{\trsp}\tilde{\yy})\equiv\Lambda^{-1}\Gamma
\]
where the first equivalence follows from \cref{eq:XTX_equiv_Lambdainv}
and \cref{prop:equiv_pLip_tensor}, and the last equivalence follows
from \cref{lem:LLNequiv}.
\end{proof}

\section{Uniformized Tensor Programs}

Here, we ``uniformize'' different variations of Tensor Programs, in the sense that we squash all the different instructions into a single step.
This allows us to streamline the induction in our proofs.

\subsection{Uniformized \netsort{}}
\label{sec:unetsort}

\newcommand\na{L}%

Given initial matrices $A^{1},\ldots,A^{\na}\in\RR^{n\times n}$ and initial
vectors $g^{1},\ldots,g^{M_{0}}\in\RR^{n}$, consider the following
iteration for $i=M_{0}+1,\ldots,M$ that generates new vectors $g^{M_{0}+1},\ldots,g^{M}\in\RR^{n}$:
\begin{equation}
g_{\alpha}^{i}\gets\sum_{\beta=1}^{n}W_{\alpha\beta}^{i}x_{\beta}^{i},\quad\text{where }x_{\alpha}^{i}=\phi^{i}(g_{\alpha}^{1},\ldots,g_{\alpha}^{i-1}).\label{eq:tp_iteration}
\end{equation}
Here each $\phi^{i}$ is a chosen scalar function with $(i-1)$ arguments
and $W^{i}$ is an $n\times n$ matrix. Each matrix $W^{i}$ equals
to either some matrix $A^{j}$ of the program or its transpose $A^{j\trsp}$.
The matrices $W^{i}$ for different $i$ can possibly be the same.
Thus each program is entirely determined by the data $\{A^{j}\}_{j=1}^{\na}\cup\{g^{i}\}_{i=1}^{M_{0}}\cup\{\phi^{i}\}_{i=M_{0}+1}^{M}$
along with the correspondence between $W^{i}$ and $A^{j}$ or $A^{j\trsp}$.

This type of program is obviously a subset of \netsort{} (\cite{yang3}) but it's also easy to see that they have the same expressive power.

\subsection{Uniformized \nexort{}}

Now extend the previous formulation to include additionally initial
scalars $c^{1},\ldots,c^{M_0}\in\R$ and extend \cref{eq:tp_iteration}
to produce both a new vector $g^{i}$ and a new scalar $c^{i}$ at
each step $i=M_{0}+1,\ldots,M$:

\begin{align}
g_{\alpha}^{i} & \gets\sum_{\beta=1}^{n}W_{\alpha\beta}^{i}x_{\beta}^{i},\quad c^{i}\gets\frac{1}{n}\sum_{\beta=1}^{n}x_{\beta}^{i},\label{eq:tensor_tp_iteration}\\
\text{where }x_{\alpha}^{i} & =\frac{1}{n^{i}}\sum_{\bbeta}\phi^{i}(\gb_{\alpha};\gb_{\beta};\cdots;\gb_{\beta_{i}};\cc)\\
 & =\langle\phi^{i}(\gb_{\alpha};\gb_{\beta_{1}};\cdots;\gb_{\beta_{i}};\cc)\rangle_{\bbeta} \label{eqn:xi_unexort}
\end{align}
\\
where $\phi^{i}:\underbrace{\R^{i-1}\oplus\cdots\oplus\R^{i-1}}_{i+2}\to\R$,
each $\bbeta=(\beta_{1},\ldots,\beta_{i})$ ranges over all tuples
in $[n]^{i}$, and $\gb=(g^{1},\ldots,g^{i-1}),\cc=(\cc^{1},\ldots,\cc^{i-1})$
at iteration $i$.

Here we are tying together the tensor order of $\phi^{i}$ with the
iteration index $i$, to simplify the formulation. But note that $\phi^{i}$
can always ignore all but the first block of inputs, for example,
to replicate \cref{eq:tp_iteration}. By the same reasoning as \cref{sec:unetsort},
this formulation of \nexort{} is equivalent to \cref{defn:nexort}.

\subsection{Setup and Constructions}

Below, we typically consider the following setup for both uniformized \netsort{} and uniformized \nexort{}.
It is essentially the same as \cref{setup:nexort} except that the initial scalars need to converge at the rate of $\tilde O(n^{-1/2})$.
This is needed to prove the vectorwise convergence results below.
\begin{setup}\label{setup:uniformized_tp}
  Assume
  \begin{enumerate}
    \item Every entry of every $A^l$ is sampled iid from $\Gaus(0,1/n)$.
    \item Every entry of every initial vector $g^i$ is sampled iid from $\Gaus(0, 1)$.
    \item Each initial scalar $c^i$ is $\tilde O(n^{-1/2})$. %
    \item All nonlinearities $\phi^i$ are pseudo-Lipschitz.
  \end{enumerate}
\end{setup}

In addition,
since both uniformized \netsort{} and uniformized \nexort{} are subsets of the \nexort{} language, the construction of kets in \cref{defn:ket} make sense for both of them.
Finally, in the context of $n\to\infty$, we think of the program's objects as sequences in $n$, i.e., $A^{j}\in\seqspace^{2}$, $g^{i}\in\seqspace^{1}$, $c^i \in \seqspace^0$.

\section{\netsort{} Master Theorem, Vectorwise Convergence}
\begin{defn}
Consider a Tensor Program (of any variation). Let $x^{1},\ldots,x^{k}$ be the set of all vectors
in the program. Then we define $(\mathring{x}^{1},\ldots,\mathring{x}^{k})$
as $\iid(\ket{x^{1}},\ldots,\ket{x^{k}})$, i.e., each $\alpha$-slice
$(\mathring{x}_{\alpha}^{1},\ldots,\mathring{x}_{\alpha}^{k})$ is
an iid sample from $(\ket{x^{1}},\ldots,\ket{x^{k}})$.%
\footnote{Note that
$(\mathring{x}^{1},\ldots,\mathring{x}^{k})$ is only well defined
up to distributional equality, and thus it only makes sense to talk
about them up to distributional equality or coarser notions like (conditional)
dequivalence, but not up to equivalence.}
\end{defn}

Both $\mathring x$ and $\ket x$ are ``limits'' of a vector $x$ in some sense, but $\mathring x$ always has the same shape as $x$ (same for $\mathring \theta$ for scalar $\theta$) while $\ket x$ collapses the $n$-dimension.
In this and the next section, we aim to show that $x \distequiv \mathring x$ in different kinds of Tensor Programs.
This essentially means that $x$ and $\mathring x$, random vectors with larger and larger dimensions, become closer and closer in distribution as $n\to\infty$ (see \cref{sec:Wasserstein_dequiv}) ---
hence the \emph{vectorwise convergence} in the section title.

\begin{thm}
\label{thm:scalarless_vMT}
Consider a \netsort{} program in \cref{setup:uniformized_tp}.
Consider any collection of vectors
$y^{1},\ldots,y^{k}$ in the program. Then they are moment-bounded and
\[
(y^{1},\ldots y^{k})\distequiv(\mathring{y}^{1},\ldots\mathring{y}^{k}).
\]
\end{thm}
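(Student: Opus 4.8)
The plan is to prove \cref{thm:scalarless_vMT} by induction on the number $M$ of vectors in the uniformized \netsort{} program, using the uniformized formulation \cref{eq:tp_iteration}. The induction hypothesis at step $i$ is the conditional dequivalence $R_{i-1}: (g^1, \ldots, g^{i-1}) \distequiv (\mathring g^1, \ldots, \mathring g^{i-1})$ together with moment-boundedness of all vectors constructed so far. The base case is the initial vectors: by \cref{setup:uniformized_tp}, each $g^i$, $i \le M_0$, has iid $\Gaus(0,1)$ entries, which is exactly $\mathring g^i = \iid(\Gaus(0,1))$ up to distributional equality, and moment-boundedness is immediate since a standard Gaussian has all moments. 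For the inductive step, a new vector $g^i = W^i x^i$ is produced by \refMatMul{}, where $x^i = \phi^i(g^1, \ldots, g^{i-1})$ via the implicit-broadcast convention. First I would handle $x^i$: since $\phi^i$ is pseudo-Lipschitz (\cref{setup:uniformized_tp}) and $(g^1, \ldots, g^{i-1}) \distequiv (\mathring g^1, \ldots, \mathring g^{i-1})$ by hypothesis, \cref{prop:distequiv_pLip_tensor} gives $x^i \distequiv_{R_{i-1}} \mathring x^i$ where $\mathring x^i \defeq \phi^i(\mathring g^1, \ldots, \mathring g^{i-1})$, which is iid along $n$ and equals $\iid(\ket{x^i})$ in distribution since $\ket{x^i} = \phi^i(\ket{g^1}, \ldots, \ket{g^{i-1}})$ by the \texttt{OuterNonlin} rule of \cref{defn:ket} (here with $r = 0$, i.e.\ a plain \texttt{Nonlin}). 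Moment-boundedness of $x^i$ follows from \cref{prop:Sb_closed_polyb-}.

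\textbf{The matrix multiplication step.}
The substantive work is passing the dequivalence through $W^i$, i.e.\ showing $W^i x^i \distequiv_{R_{i-1}} \mathring{g}^i$ where $\mathring g^i = \iid(\ket{g^i})$ and $\ket{g^i} = \hatket{W^i x^i} + \dotket{W^i x^i}$ as in \cref{defn:ket}. This is precisely the content of the Gaussian conditioning trick, and this is where I expect to invoke a re-analysis of the \netsort{} Master Theorem proof of \cite{yang3}, as flagged in the proof sketch of \cref{sec:proofsketch} (``Part 1''). The idea: condition on all previously-revealed randomness; the conditional distribution of $W^i x^i$ decomposes into a ``hat'' part (a fresh Gaussian with covariance governed by $\braket{x^i}{x^i}$, matching $\hatket{W^i x^i}$) plus a ``dot'' correction that is a linear combination of previous vectors with coefficients that, by \cref{lem:_pinv_equiv}/\cref{lem:pinv_equiv} and \cref{lem:inner_product_det_equiv}, are equivalent to the deterministic coefficients defining $\dotket{W^i x^i}$ in \cref{eqn:dotket}. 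The errors introduced at each substitution are vanishing in the $\seqz$ sense, hence can be absorbed into the $\hat{\xx}, \hat{\yy}$ of \cref{def:distequiv2}; the fact that the initial scalars converge at rate $\tilde O(n^{-1/2})$ (built into \cref{setup:uniformized_tp}) is what makes these errors genuinely $\tilde O(n^{-1/2})$ rather than merely $o(1)$. Operator-norm boundedness of $W^i$ (\cref{prop:S0_matmul_dequiv}) is needed so that the vanishing errors stay vanishing after multiplication by $W^i$.

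\textbf{Main obstacle.}
The main obstacle is \textbf{the matrix-multiplication/Gaussian-conditioning step above}, specifically getting the \emph{dequivalence} (the strong ``Wasserstein-type'' conclusion of \cref{def:distequiv1}, with explicit $\tilde O(n^{-1/2})$-small correction vectors) rather than the weaker test-function convergence that \cite{yang3} states. This requires carefully tracking, through the induction, that every error term arising in the Gaussian conditioning argument — the deviation of the empirical Gram matrix $\frac1n (x^i)^\trsp x^i$ from its limit, the deviation of the projection coefficients from their deterministic values, the conditional-versus-unconditional discrepancy of the fresh Gaussian block — is vanishing (in $\seqz$), and that none of these blow up under the operations (\refMatMul{}, \refAvg{}, pseudo-Lipschitz \refOuterNonlin{}) that later steps apply. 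The tools for each individual bound are already in hand (\cref{prop:S0_matmul_dequiv,lem:avg_distequiv,prop:distequiv_pLip_tensor,lem:inner_product_det_equiv,lem:_pinv_equiv,lem:pinv_equiv}); the work is the bookkeeping to confirm the \netsort{} proof of \cite{yang3}, rerun with these quantitative inputs, yields the claimed dequivalence. Moment-boundedness of all vectors, needed throughout as a side condition, follows routinely from \cref{prop:Sb_closed_polyb-} and \cref{lem:CLT_moments} applied to the Gaussian hat-parts.
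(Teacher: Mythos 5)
Your proposal follows essentially the same route as the paper's proof: induction over the G-vars of the uniformized program, Gaussian conditioning to write $(g^1,\ldots,g^i,g)\disteq(g^1,\ldots,g^i,\omega+\sigma\Pi^\perp z)$, and then upgrading each piece ($\omega$, $\sigma$, $\Pi^\perp z$) to equivalence with its deterministic-coefficient limit via \cref{lem:inner_product_det_equiv}, \cref{lem:pinv_equiv}, and \cref{prop:localLip_equiv}. Two corrections to your bookkeeping, though. First, the paper explicitly does \emph{not} use operator-norm preservation (\cref{prop:S0_matmul} / \cref{prop:S0_matmul_dequiv}) in this proof, and remarks on this: the Gaussian conditioning identity is an \emph{exact} distributional equality, so no vanishing error vector is ever subsequently multiplied by $W^i$; that tool only becomes essential in the \nexort{} step (\cref{sec:nexort_vectorwise_convergence}). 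Invoking it here would not help anyway, since $W^i\mathring{x}^i$ is not $\mathring g^i$ — the conditioning is still what identifies the limit law. Second, your list of "tools already in hand" omits the one genuinely nontrivial external input: the rank stability property of \netsort{} programs from \cite{yang3}, which lets one assume WLOG that the limiting Gram matrices $\braket{\xx}{\xx}$ and $\braket{\yy}{\yy}$ are nonsingular. Without that, \cref{lem:_pinv_equiv}/\cref{lem:pinv_equiv} (which require a full-rank covariance) and the local Lipschitzness of $\sqrt{\cdot}$ at $\mathring\sigma^2>0$ do not apply, and the induction stalls.
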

In other words, $(y^{1},\ldots y^{k})$ is distributed like $n$ iid
copies of $(\ket{y^{1}},\ldots\ket{y^{k}})$, modulo vanishing vectors.
This is a stronger result than \cite{yang3}, at the cost of assuming faster convergence of initial scalars in \cref{setup:uniformized_tp}.

\begin{rem}
  Let us comment that, in this proof, we do not use the property that
  equivalence is preserved under iid matrix multiplication \cref{prop:S0_matmul} (this property is a crucial reason underlying the definition of ``vanishing''), because of \netsort{}'s rank stability property.
  This however will be crucial in \cref{sec:nexort_vectorwise_convergence} to prove the analogous result for \nexort{}.
\end{rem}

We recall some terminology from prior works.
\begin{defn}
  A vector is called a G-var if it is an initial vector or generated by \refMatmul{}.
\end{defn}

\subsection{Proof Setup}

\begin{table}
  \centering
\begin{tabular}{cc}
  \toprule 
  \cite{yang3} & Here\tabularnewline
  \midrule 
  $x^{i}$ & $g^{i}$\tabularnewline
  
  $y^{i}$ & $x^{i}$\tabularnewline
  
  $u^{j}$ & $h^{j}$\tabularnewline
  
  $v^{j}$ & $y^{j}$\tabularnewline
  
  $h$ & $x$\tabularnewline
  
  $g$ & $g$\tabularnewline
  
  $\Upsilon$ & $\xx^{\trsp}\xx/n$\tabularnewline
  
  $\mathring{\Upsilon}$ & $\braket{\xx}{\xx}$\tabularnewline
  
  $\Lambda$ & $\yy^{\trsp}\yy/n$\tabularnewline
  
  $\mathring{\Lambda}$ & $\braket{\yy}{\yy}$\tabularnewline
  
  $\Gamma$ & $\hh^{\trsp}\xx/n$\tabularnewline
  
  $\mathring{\Gamma}$ & $\braket{\hh}{\xx}$\tabularnewline
  
  $\gamma$ & $\xx^{\trsp}x/n$\tabularnewline
  
  $\mathring{\gamma}$ & $\braket{\xx}x$\tabularnewline
  
  $\delta$ & $\hh^{\trsp}x/n$\tabularnewline
  
  $\mathring{\delta}$ & $\braket{\hh}x$\tabularnewline
  \bottomrule 
\end{tabular}
\caption{Notation mapping between \cite{yang3} and here.
We improved the notation to be more intuitive, leveraging our new bra-ket notation.}
\label{tab:notationTP3vsHere}
\end{table}

We will nontrivially leverage parts of the Master Theorem proof in \cite{yang3}.
Our notation differs from there (hopefully improved for first time readers), but a mapping is provided in \cref{tab:notationTP3vsHere}.
By \cref{prop:distequiv_pLip_tensor}, it suffices to show this for all G-vars of the program, since other vectors are pseudo-Lipschitz images of G-vars. We furthermore WLOG assume the formulation in \cref{eq:tp_iteration}
and proceed to show this for vectors $g^{1},\ldots,g^{M}$ (all G-vars
in the program).

By the \netsort{} Master Theorem \cite{yang3}, we know that the matrices
$(\frac{1}{n}g^{i\trsp}g^{j})_{i,j=1}^{M}$ and $(\frac{1}{n}x^{i\trsp}x^{j})_{i,j=1}^{M}$
converge almost surely to deterministic matrices $\Omega\in\R^{M\times M}$
and $\Xi\in\R^{M\times M}$. By rank stability \citep[sec L.5]{yang3},
WLOG, we can assume 
\begin{assm}
\label{assm:full_rank} WLOG, we assume $\Omega$ and $\Xi$
are both full rank.
\end{assm}
Indeed, \cite[sec L.5]{yang3} says that $\Xi$ (resp.
$\Omega$) is singular iff there are linear dependencies (with constant
coefficients) between $x^{i}$s (resp. $g^{i}$s), which we can get
rid of by rewriting the program in the obvious way.%
\footnote{Rank stability is a highly nontrivial but technical result of \cite{yang3}. While seemingly small, it allows us to drastically simplify the proof because we do not have to think about ``corner cases'' where the rank of a matrix drops suddenly in the limit, which can lead to all kinds of pathological behaviors.
Readers interested in full rigor should consult \cite{yang3} for the proof of this result.}

We induct on $i$, starting from the base case $i=M_{0}$.

\paragraph{Base Case: $i=M_{0}.$ }

This holds by \cref{setup:uniformized_tp} that the initial vectors are sampled iid.

\paragraph{Inductive Step}

Here we assume the dequivalence
\begin{equation}
R:(g^{1},\ldots,g^{i})\distequiv(\mathring{g}^{1},\ldots,\mathring{g}^{i})\label{eq:scalarless_IH}
\end{equation}
as well as their moment-boundedness
and we shall show 
\[
g^{i+1}\distequiv_{R}\mathring{g}^{i+1},
\]
and their moment-boundedness,
where the conditional dequivalence can be unpacked into
\begin{equation}
(g^{1},\ldots,g^{i},g^{i+1})\distequiv(\mathring{g}^{1},\ldots,\mathring{g}^{i},\mathring{g}^{i+1}).\label{eq:scalarless_goal}
\end{equation}
For brevity, write $g\defeq g^{i+1},x\defeq x^{i+1},W\defeq W^{i+1}$
so that in this notation, by \cref{eq:tp_iteration}, $g=Wx$.

\newcommand\UU{\boldsymbol{U}}%

\newcommand\VV{\boldsymbol{V}}%

We apply Gaussian conditioning trick to obtain
\begin{align}
(g^{1},\ldots,g^{i},g) & \disteq(g^{1},\ldots,g^{i},\omega+\sigma\Pi^{\perp}z)\label{eq:gaussian_conditioning}
\end{align}
where $\omega\in\seqspace^{1},\Pi^{\perp}\in\seqspace^{2}$ and $\sigma\in\seqspace^{0}$
are functions of $g^{1},\ldots,g^{i}$, and $z\in\seqspace^{1},z\disteq\iid(\Gaus(0,1))$
is a fresh iid standard Gaussian vector, independent from everything
else.

\subsection{Proof Plan}

Below, we will give the exact formulas for $\omega,\Pi^{\perp},\sigma$
and we will show that
\begin{enumerate}
\item there is a pseudo-Lipschitz function $\psi:\R^{i}\to\R$ such that
$\omega\equiv\psi(g^{1},\ldots,g^{i})$,\footnote{with $\psi$ applied coordinatewise as usual} 
\item there is a deterministic $\mathring{\sigma}\in\R$ such that $\sigma\equiv\mathring{\sigma}$,
and 
\item $\Pi^{\perp}z\equiv z$. 
\end{enumerate}
These claims already imply that $g$ is moment-bounded.
Furthermore, applying \cref{eq:scalarless_IH} and \cref{prop:distequiv_pLip_tensor}
to \cref{eq:gaussian_conditioning}, the above claims imply
\[
(g^{1},\ldots,g^{i},g)\distequiv(\mathring{g}^{1},\ldots,\mathring{g}^{i},\psi(\mathring{g}^{1},\ldots,\mathring{g}^{i})+\mathring{\sigma}z)
\]
Using the exact formulas below, by the same calculations of \cite{yang3},
the RHS can be shown to be $\disteq(\mathring{g}^{1},\ldots,\mathring{g}^{i},\mathring{g}^{i+1})$,
yielding \cref{eq:scalarless_goal} as desired.\footnote{In particular, we do not change any of the formulas for the large
$n$ limit in our induction; we only ``upgrade the mode of convergence''
to (distributional) equivalence. So it is easy to verify this claim.}

\subsection{Exact Formulas}

\paragraph{Preliminary Definitions}

We define the following objects.
\begin{itemize}
\item Let $J$ (resp. $J'$) be the set of indices $j\le i$ such that $W^{j}=W^{i+1}$
(resp. $W^{j}=W^{i+1\trsp}$), so that $g^{j}=Wx^{j}$ (resp. $g^{j}=W^{\trsp}x^{j}$)
by construction. 
\item Let $\gb\in\seqspace^{1}\otimes\R^{J}$ (resp. $\xx\in\seqspace^{1}\otimes\R^{J}$)
be the matrix with column vectors $g^{j}$ (resp. $x^{j}$) for all
$j\in J$, so that $\gb=W\xx$. (Note $\gb$ is a distinct object
from and does not contain $g=g^{i+1}$; likewise for $\xx$ and $x=x^{i+1}$)
\item Likewise, let $\hh\in\seqspace^{1}\otimes\R^{J'}$ (resp. $\yy\in\seqspace^{1}\otimes\R^{J'}$)
be the matrix with column vectors $g^{j}$ (resp. $x^{j}$) for all
$j\in J'$, so that $\hh=W^{\trsp}\yy$.
\end{itemize}

\paragraph{Exact Formulas}

Then, by the same calculations as in \cite{yang3}, $\omega,\Pi^{\perp},\sigma$
have the following exact formulas: 
\begin{align}
\omega & =(\gb\xx^{+}+\yy^{+\trsp}\hh^{\trsp}-\yy^{+\trsp}\hh^{\trsp}\xx\xx^{+})x\label{eq:omega_exact_formula}
\end{align}
and 
\begin{align}
\sigma & =\sqrt{\frac{\|\Pi_{\xx}^{\perp}x\|^{2}}{n}}\label{eq:sigma_exact_formula}\\
\text{with }\Pi_{\xx}^{\perp} & \defeq I-\xx\xx^{+}\nonumber 
\end{align}
and
\begin{equation}
\Pi^{\perp}\defeq I-\yy\yy^{+}.\label{eq:Pi_perp_exact_formula}
\end{equation}

\paragraph{Further Constructions}

Now we further define $\mathring{\gb}$ to be the matrix with column
vectors $\mathring{g}^{j}$ for all $j\in J$; likewise we define
$\mathring{\xx},\mathring{\hh},\mathring{\yy}$. By induction hypothesis
(\cref{eq:scalarless_IH}), we have
\begin{equation}
(\gb,\xx,\hh,\yy)\distequiv_{R}(\mathring{\gb},\mathring{\xx},\mathring{\hh},\mathring{\yy})\label{eq:distequiv_gxhy}
\end{equation}
because of \cref{prop:distequiv_pLip_tensor} (as this relation is
a pseudo-Lipschitz image of \cref{eq:scalarless_IH}).

Now note
\[
\seqspace^{0}\otimes\R^{J\times J}\ni\frac{1}{n}\xx^{\trsp}\xx\equiv\braket{\xx}{\xx}\in\R^{J\times J}
\]
by \cref{eq:distequiv_gxhy} and \cref{lem:inner_product_det_equiv},
where the RHS is a deterministic object independent of $n$. Likewise,
\[
\seqspace^{0}\otimes\R^{J'\times J'}\ni\frac{1}{n}\yy^{\trsp}\yy\equiv\braket{\yy}{\yy}\in\R^{J'\times J'}.
\]
But both $\braket{\xx}{\xx}$ and $\braket{\yy}{\yy}$ are principal
submatrices of $\Xi$ by construction, so by Sylvester's Criterion
and \cref{assm:full_rank},
\begin{equation}
\braket{\xx}{\xx}\text{ and }\braket{\yy}{\yy}\text{ are nonsingular.}\label{eq:Lambda_nonsingular}
\end{equation}
 The significance of this is that now we can use \cref{lem:pinv_equiv}
on $\xx$ and $\yy$.

\subsection{Showing 3)}

Using \cref{eq:Pi_perp_exact_formula} and \cref{lem:pinv_equiv}, we
have

\[
\Pi^{\perp}z=z-\yy\yy^{+}z\equiv z-\yy\braket{\yy}{\yy}^{-1}\braket{\yy}z
\]
But $\ket z$ is independent from $\ket{\yy}$ (because $z$ is independent
from $\yy$ by construction) and is distributed as $\Gaus(0,1)$,
so the $\braket{\yy}z$ vanishes. In conclusion,
\[
\Pi^{\perp}z\equiv z
\]
as desired. This shows 3).

\subsection{Showing 2)}

Writing out the definition of $\sigma^{2}$ (\cref{eq:sigma_exact_formula}),
we see
\begin{align*}
\sigma^{2} & =\frac{\|\Pi_{\xx}^{\perp}x\|^{2}}{n}=\frac{1}{n}x^{\trsp}x-(\frac{1}{n}x^{\trsp}\xx)(\xx^{+}x)\\
 & \equiv\frac{1}{n}x^{\trsp}x-(\frac{1}{n}x^{\trsp}\xx)\braket{\xx}{\xx}^{-1}\braket{\xx}x\\
 & \equiv\braket xx-\braket x{\xx}\braket{\xx}{\xx}^{-1}\braket{\xx}x\defeq\mathring{\sigma}^{2}
\end{align*}
where the second line follows by \cref{lem:pinv_equiv}, and the third
line follows by \cref{eq:distequiv_gxhy} and \cref{lem:inner_product_det_equiv}.

Note that $\mathring{\sigma}^{2}$ is deterministic. By \cref{assm:full_rank},
$\mathring{\sigma}^{2}>0$ (because $\mathring{\sigma}$ is a $1\times1$
Schur complement of $\Xi$). Then the square root function is locally
Lipschitz around $\mathring{\sigma}^{2}$. So by \cref{prop:localLip_equiv},
we get
\[
\sigma\equiv\mathring{\sigma}.
\]

\subsection{Showing 1)}

Finally, we can apply the same strategy to show $\omega=Ex$ is equivalent
to a linear combination of $\mathring{\gb}$ and $\mathring{\yy}$
with deterministic coefficients, albeit with slightly more calculations.

By \cref{eq:omega_exact_formula}, we can decompose
\begin{align*}
\omega & =(\gb\xx^{+}+\yy^{+\trsp}\hh^{\trsp}-\yy^{+\trsp}\hh^{\trsp}\xx\xx^{+})x\\
 & =A+B-C
\end{align*}
where
\[
A\defeq\gb\xx^{+}x,\quad B\defeq\yy(\frac{1}{n}\yy^{\trsp}\yy)^{+}(\frac{1}{n}\hh^{\trsp}x),\quad C\defeq\yy(\frac{1}{n}\yy^{\trsp}\yy)^{+}(\frac{1}{n}\hh^{\trsp}\xx)(\xx^{+}x).
\]
Then the same deduction as before (involving \cref{lem:inner_product_det_equiv}
and \cref{lem:pinv_equiv}) shows
\[
A\equiv\gb\braket{\xx}{\xx}^{-1}\braket{\xx}x,\quad B\equiv\yy\braket{\yy}{\yy}^{-1}\braket{\hh}x,\quad C\equiv\yy\braket{\yy}{\yy}^{-1}\braket{\hh}{\xx}\braket{\xx}{\xx}^{-1}\braket{\xx}x
\]

Thus, $A,B,C$, and thus $\omega$ are clearly equivalent to linear
combinations of (the columns of) $\gb$ and $\yy$. Consequently,
$\omega$ is also a pseudo-Lipschitz image of $g^{1},\ldots,g^{i}$
because $\yy$ is. This proves 1).

\section{\nexort{} Master Theorem, Vectorwise Convergence}
\label{sec:nexort_vectorwise_convergence}

\begin{defn}[Integrated Program]\label{defn:integratedprogram}
Given an uniformized \nexort{} program $\pi$ (\cref{eq:tensor_tp_iteration}),
we construct a parallel \emph{uniformized \netsort{} program $\ul{\pi}$}, called the \emph{integrated program}, as follows.
$\ul\pi$'s vectors will be
denoted with an underline $\ul g^{1},\ldots,\ul g^{M}$
(which will turn out to be equivalent to their counterparts without underlines).
\begin{itemize}
\item The initial matrices of $\pi$ and $\ul{\pi}$ are identical and using the same symbols $A^1, \ldots, A^{\na}$.
\item The initial vectors $\ul g^{1},\ldots,\ul g^{M_{0}}$ are the same
as those of the original program $g^{1},\ldots,g^{M_{0}}$.
\item new vectors $\ul g^{i}$ for $i=M_{0}+1,\ldots,M$ are generated iteratively
as follows: For each such $i$, define $\ul{\phi}^{i}:\R^{i-1}\to\R$
by 
\[
\ul{\phi}^{i}(\xbf)\defeq\EV_{\bx1\ldots \bx i}\phi^{i}(\xbf;\ket{\ul{\gb}}^{\bx{1}};\ldots;\ket{\ul{\gb}}^{\bx{i}};\mathring \cc)
\]
 for any $\xbf\in\R^{i-1}$, where $\underline{\gb}$ denotes $(\underline{g}^{1},\ldots,\underline{g}^{i-1})$, $\cc$ denotes $(c^{1},\ldots,c^{i-1})$, and $\phi^i$ is from \cref{eqn:xi_unexort}.
Then $\ul g^{i}$ is generated as
\begin{align*}
\underline{g}_{\alpha}^{i} & \gets\sum_{\beta=1}^{n}W_{\alpha\beta}^{i}\underline{x}_{\beta}^{i},\quad\text{where}\quad\underline{x}_{\alpha}^{i}=\ul{\phi}^{i}\left(\underline{\gb}_{\alpha}\right).
\end{align*}
\end{itemize}

Note all scalars $c^{i}$ of $\pi$ are replaced
by their deterministic limits $\mathring{c}^{i}\in\R$ as constructed
in \cref{defn:ket} and absorbed into the nonlinearities $\ul \phi^i$.
\end{defn}

The name \emph{integrated program} refers to the intuition that we are replacing the averaging $\langle\phi^{i}(\gb_{\alpha};\gb_{\beta_{1}};\cdots;\gb_{\beta_{i}};\cc)\rangle_{\bbeta}$ with integration $\EV_{\bx1\ldots \bx i}\phi^{i}(\gb_{\alpha};\ket\gb^{\bx1};\cdots;\ket \gb^{\bx i};\mathring \cc)$.

\begin{lemma}
\label{lem:otimes_tp_comparison}
Consider a uniformized \nexort{} program
$\pi$ in \cref{setup:uniformized_tp} and its integrated \netsort{} program $\ul{\pi}$ as constructed
above. Then for $\underline{\gb}=(\underline{g}^{1},\ldots,\underline{g}^{M})$,
we have $\ket{\gb}\disteq\ket{\ul{\gb}}$ (as random vectors in $\R^{M}$)
and 
\[
\gb\equiv\underline{\gb}\distequiv\mathring{\underline{\gb}}\distequiv\mathring{\gb}.
\]
\end{lemma}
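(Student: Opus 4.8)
The statement has several pieces which I would prove in the following order. First, the distributional equivalences among $\mathring{\underline\gb}$, $\mathring\gb$, and $\underline\gb$ are the easy parts: $\ket{\gb}\disteq\ket{\ul\gb}$ will follow by induction on the iteration index $i$ once we know $\gb\equiv\underline\gb$, since the ket construction (\cref{defn:ket}) for the \refOuterNonlin{} step of $\pi$ produces exactly $f(\ket{\underline\gb})$ with $f(\ybf)=\EV\psi(\ybf;\ket{\underline\gb}^{\bx1};\cdots;\mathring\cc)$, which is precisely the nonlinearity $\ul\phi^i$ of the integrated program applied inside \refMatMul{}; so the kets of the two programs satisfy the same recursion and hence agree in distribution. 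Given that, $\mathring\gb\disteq\mathring{\underline\gb}$ is immediate (both are $\iid$ of the same random vector), and $\underline\gb\distequiv\mathring{\underline\gb}$ is exactly the content of \cref{thm:scalarless_vMT} applied to the \netsort{} program $\ul\pi$ (whose initial scalars have already been absorbed into the nonlinearities and whose remaining data satisfy \cref{setup:uniformized_tp}). So the real work is the \emph{absolute} equivalence $\gb\equiv\underline\gb$.

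I would prove $\gb\equiv\underline\gb$ (together with moment-boundedness of all the $g^i$) by induction on $i$. The base case $i\le M_0$ is trivial since the initial vectors of $\pi$ and $\ul\pi$ are literally the same. For the inductive step, assume $g^j\equiv\underline g^j$ and both moment-bounded for all $j\le i$; I must show $g^{i+1}\equiv\underline g^{i+1}$. Write $g=g^{i+1}=Wx$ where $x_\alpha=\langle\phi^i(\gb_\alpha;\gb_{\beta_1};\cdots;\gb_{\beta_i};\cc)\rangle_{\bbeta}$ and correspondingly $\underline g=\underline g^{i+1}=W\underline x$ with $\underline x_\alpha=\ul\phi^i(\underline\gb_\alpha)=\EV_{\bx1\cdots\bx i}\phi^i(\underline\gb_\alpha;\ket{\underline\gb}^{\bx1};\cdots;\mathring\cc)$. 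The plan is a two-hop comparison at the level of the pre-matmul vectors $x$ and $\underline x$: (a) first replace the $n$-fold averaging over $\bbeta$ in $x$ by expectation over iid copies, using the IID Equivalence lemma \cref{lem:decorrelate_avg} (in the form \cref{lem:outer_power_avg_equiv_nonlin}), which requires that $\gb\distequiv\iid(\ket\gb)$ — this is supplied by the induction hypothesis via \cref{thm:scalarless_vMT}/the dequivalence already established, and that $\cc\equiv\mathring\cc$, which holds because \cref{setup:uniformized_tp}(3) gives $c^i=\tilde O(n^{-1/2})$, i.e. the initial scalars are vanishing and \refAvg-generated scalars inherit this; and (b) then swap the argument $\gb_\alpha$ for $\underline\gb_\alpha$ inside the resulting deterministic-nonlinearity image using the induction hypothesis $\gb\equiv\underline\gb$ together with pseudo-Lipschitz preservation of equivalence (\cref{prop:equiv_pLip_tensor}, after checking $\ul\phi^i$ is pseudo-Lipschitz — it is, since $\phi^i$ is pseudo-Lipschitz and integrating out independent moment-bounded arguments preserves pseudo-Lipschitzness in the remaining ones). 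Concatenating (a) and (b) gives $x\equiv\underline x$, and then $Wx\equiv W\underline x$ by preservation of equivalence under iid matrix multiplication (\cref{prop:S0_matmul}), since $W$ is $\frac1{\sqrt n}\iid_{n\times n}(\Gaus(0,1))$ and hence has almost surely bounded operator norm. Moment-boundedness of $g^{i+1}$ follows from that of $x^{i+1}$ (via \cref{prop:Sb_closed_polyb-}) and the operator-norm bound, or directly from $\underline g^{i+1}$ being moment-bounded plus $g^{i+1}\equiv\underline g^{i+1}$ and \cref{lem:S0_Sb_embedding} for vectors.

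There is one subtlety to be careful about: these equivalences must be maintained \emph{jointly} with all previously constructed vectors, not just one at a time, because the \refMatMul{} step conditions on the Gaussian $W$ and on the whole history. So I would actually phrase the induction hypothesis as a conditional dequivalence $R:(g^1,\dots,g^i)\distequiv_{\cdots}(\underline g^1,\dots,\underline g^i)$ upgraded to equivalence, and run each of steps (a), (b), and the matmul step in their conditional-dequivalence/equivalence forms (\cref{prop:distequiv_pLip_tensor}, \cref{lem:avg_distequiv}, \cref{prop:S0_matmul_dequiv}), exactly mirroring the proof-plan structure of \cref{thm:scalarless_vMT}. The main obstacle — and the genuinely new ingredient compared to the \netsort{} case — is step (a), the passage from an $n$-fold sum over the multi-index $\bbeta$ to an expectation over independent copies when $\gb_{\beta_1},\dots,\gb_{\beta_i}$ are all strongly correlated with each other and with $\gb_\alpha$; this is precisely why \cref{lem:decorrelate_avg}'s proof needs Baranyai's theorem to control the contribution of the non-distinct index tuples, and invoking it correctly (checking its hypotheses in the conditional setting, and that pseudo-Lipschitzness of $\phi^i$ suffices) is the crux. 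Everything else is bookkeeping: tracking shapes, verifying pseudo-Lipschitzness of the integrated nonlinearities, and confirming $\cc\equiv\mathring\cc$.
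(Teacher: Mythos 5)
Your proposal is correct and follows essentially the same route as the paper: the same induction on $i$, the same reduction of everything to $\gb\equiv\underline{\gb}$, and the same two key ingredients (\cref{prop:equiv_pLip_tensor} to swap $\gb,\cc$ for $\underline{\gb},\mathring{\cc}$ and \cref{lem:outer_power_avg_equiv_nonlin} to convert the $\bbeta$-average into an expectation), with your steps (a) and (b) merely performed in the opposite order from the paper's (swap first, then integrate), which is immaterial. The one thing to let go of is the conditional-dequivalence scaffolding in your last paragraph: since the induction here is carried out at the level of \emph{absolute} equivalence, which is a joint, componentwise notion on tuples, and since \cref{prop:S0_matmul} holds no matter how $W$ is correlated with its argument, no Gaussian conditioning or conditional dequivalence is needed inside this induction --- avoiding exactly that is the point of the paper's remark following the proof.
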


In particular, this implies that for (non-uniformized) \nexort{} programs, $\xx \distequiv \mathring \xx$ for $\xx$ being the collection of all vectors, i.e., we have the vectorwise convergent version of the Master Theorem, a strengthening of the 2nd half of \cref{thm:MasterTheorem} (which assumes initial scalars are $\tilde O(n^{-1/2})$).

Before we prove this, let us remark that 
\cref{lem:otimes_tp_comparison} automatically tells us that identities of kets, such as \cite[Lemma L.3]{yang3}, that hold for \netsort{} programs automatically hold for \nexort{} programs:
\begin{lemma}[{{\cite[Lemma L.3]{yang3}}}]\label{lemma:adjunction}
  For any vectors $x, y$ and matrix $W$ in a \nexort{} program, the following identities hold.
  \begin{align*}
    \braket{x}{Wy} &= \braket{W^\trsp x}{y}\\
    \brakethat{x}{Wy} &= \dotbraket{W^\trsp x}{y}\\
  \end{align*}
\end{lemma}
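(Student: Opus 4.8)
The plan is to derive this as an immediate consequence of \cref{lem:otimes_tp_comparison} together with the fact that the analogous identities are already known for \netsort{} programs, namely \cite[Lemma L.3]{yang3}. First I would reduce to a uniformized \nexort{} program $\pi$ (as in \cref{eq:tensor_tp_iteration}), which is without loss of generality since any \nexort{} program can be put in this form without altering the kets of its vectors; in this form the vectors $Wy$ and $W^\trsp x$ of the statement are G-vars of $\pi$, while $\ket x$ and $\ket y$ are determined by the G-var kets through the rules of \cref{defn:ket}. Let $\ul\pi$ be the integrated \netsort{} program of \cref{defn:integratedprogram}, and write $\ul x,\ul y$ for the vectors of $\ul\pi$ paralleling $x,y$, so that the vectors of $\ul\pi$ paralleling $Wy$ and $W^\trsp x$ are $W\ul y$ and $W^\trsp\ul x$.

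The core step is to observe that the conclusion $\ket{\gb}\disteq\ket{\ul\gb}$ of \cref{lem:otimes_tp_comparison} (for the tuple $\gb$ of all G-vars) propagates, via the universal rules of \cref{defn:ket}, to a joint distributional identity of \emph{all} quantities entering the claimed equations: the tuple $(\ket x,\ket y,\hatket{Wy},\dotket{Wy},\hatket{W^\trsp x},\dotket{W^\trsp x})$ computed in $\pi$ has the same law as its $\ul\pi$-counterpart $(\ket{\ul x},\ket{\ul y},\hatket{W\ul y},\dotket{W\ul y},\hatket{W^\trsp\ul x},\dotket{W^\trsp\ul x})$. Indeed, the \texttt{OuterNonlin} rule for $\pi$ and the \texttt{Nonlin} rule for $\ul\pi$ express $\ket x,\ket y$ resp.\ $\ket{\ul x},\ket{\ul y}$ as the \emph{same} deterministic function of the G-var kets once those are matched --- the agreement of the nonlinearities $\phi^i$ with $\ul\phi^i$ under this matching is precisely what the proof of \cref{lem:otimes_tp_comparison} establishes --- while the \texttt{Hat} and \texttt{Dot} rules produce jointly Gaussian hat-kets with covariance $\ind(W=U)\braket{\cdot}{\cdot}$ and dot-kets given by \cref{eqn:dotket}, and these rules, together with the matrix/transpose correspondence, are literally identical in $\pi$ and $\ul\pi$. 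Granting this, I would apply \cite[Lemma L.3]{yang3} inside the \netsort{} program $\ul\pi$ to get $\braket{\ul x}{W\ul y}=\braket{W^\trsp\ul x}{\ul y}$ and $\brakethat{\ul x}{W\ul y}=\dotbraket{W^\trsp\ul x}{\ul y}$, and then transport these equalities back to $\pi$ along the distributional identity of the previous sentence, yielding $\braket{x}{Wy}=\braket{W^\trsp x}{y}$ and $\brakethat{x}{Wy}=\dotbraket{W^\trsp x}{y}$.

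I expect the only genuine obstacle to be making this ``core step'' airtight: one must invoke \cref{lem:otimes_tp_comparison} for the whole tuple of G-vars (so that the joint law, including cross-correlations and the $\ind(W=U)$ pattern, is matched, not merely the marginals), and must check that the passage from G-var kets to the associated hat-, dot-, and outer-nonlin kets is genuinely deterministic bookkeeping shared by the two programs. Everything else --- the reduction to uniformized form and the appeal to \cite[Lemma L.3]{yang3} --- is routine, and no new analytic estimate is needed beyond what \cref{lem:otimes_tp_comparison} already supplies.
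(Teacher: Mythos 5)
Your proposal is correct and takes essentially the same route as the paper: the paper treats this lemma as an automatic corollary of \cref{lem:otimes_tp_comparison}, observing that the kets of a \nexort{} program coincide in (joint) distribution with those of its integrated \netsort{} program, so any ket identity from \cite[Lemma L.3]{yang3} transfers verbatim. Your elaboration of why the transfer is legitimate --- matching the joint law of the G-var kets and noting that the \texttt{Hat}/\texttt{Dot} rules producing $\hatket{\cdot}$ and $\dotket{\cdot}$ are identical bookkeeping in both programs --- is exactly the content the paper leaves implicit.
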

\begin{proof}[Proof of \cref{lem:otimes_tp_comparison}]
The distributional equality $\ket{\gb}\disteq\ket{\ul{\gb}}$ follows
straightforwardly from the definition of these random vectors. Thus
we also have $\mathring{\gb}\disteq\mathring{\underline{\gb}}$ (since
they are just iid samples of $\ket{\gb}$ and $\ket{\ul{\gb}}$).
We also have from \cref{thm:scalarless_vMT} that $\underline{\gb}\distequiv\mathring{\underline{\gb}}$.
So it remains to prove that $\gb\equiv\underline{\gb}$.

We will induct on $i$ to show that $(g^{1},\ldots,g^{i})\equiv(\underline{g}^{1},\ldots,\underline{g}^{i})$.
Note that, by \cref{prop:equiv_pLip_tensor}, this implies $(x^{1},\ldots,x^{i+1})\equiv(\underline{x}^{1},\ldots,\underline{x}^{i+1})$,
and by \cref{lem:avg_equiv}, this also implies $(c^{1},\ldots,c^{i+1})\equiv(\mathring{c}^{1},\ldots,\mathring{c}^{i+1})$.

The base case of $i=M_{0}$ is trivial since both programs share their
initial vectors.

Assuming the induction hypothesis for $i$ (i.e., $(g^{1},\ldots,g^{i})\equiv(\underline{g}^{1},\ldots,\underline{g}^{i})$),
we shall prove it's also true for $i+1$: i.e., we need to show $g^{i+1}\equiv\ul g^{i+1}$
in addition.

For brevity, write $\phi=\phi^{i+1}$, $\ul{\phi}=\ul{\phi}^{i+1},W=W^{i+1}$,
$x=x^{i+1}$. Let $\gb$ denote $(g^{1},\ldots,g^{i})$; likewise
for $\underline{\gb}$. Let $\cc$ denote $(c^{1},\ldots,c^{i})$
and $\mathring{\cc}$ denote $(\mathring{c}^{1},\ldots,\mathring{c}^{i})$.
Then since $\gb\equiv\underline{\gb}$ and $\cc\equiv\mathring{\cc}$
by IH, we have by \cref{prop:equiv_pLip_tensor} that
\[
\phi(\gb;\ldots;\gb;\cc)\equiv\phi(\ul{\gb};\ldots;\ul{\gb};\mathring{\cc})\in\seqb^{i+2}.
\]
Finally, by \cref{lem:outer_power_avg_equiv_nonlin}, we get
\[
\langle\phi(\ul{\gb};\ul{\gb}_{\beta_{1}};\ldots;\ul{\gb}_{\beta_{i}};\mathring{\cc})\rangle_{\bbeta}\equiv\EV_{\bx{1}\ldots\bx{i}}\phi(\ul{\gb};\ket{\ul{\gb}}^{\bx{1}};\ldots;\ket{\ul{\gb}}^{\bx{i}};\mathring{\cc})=\ul{\phi}^{i+1}(\ul{\gb};\mathring{\cc})=\ul x.
\]
So, altogether,
\[
x=\langle\phi(\gb;\gb_{\beta_{1}};\ldots;\gb_{\beta_{i}};\cc)\rangle_{\bbeta}\equiv\ul x.
\]
Finally, by \cref{prop:S0_matmul}, we arrive at
\[
g^{i+1}=Wx\equiv W\ul x=\underline{g}^{i+1}
\]
as desired. 
\end{proof}

\begin{rem}
  In this proof, the preservation of equivalence under iid matrix multiplication \cref{prop:S0_matmul} (proved using off-the-shelf operator norm tail bounds) is crucial.
  It allowed us to circumvent a complex (and probably ill-fated) Gaussian conditioning argument inside a larger, more involved induction loop.
  This preservation property is a key factor behind the \cref{defn:vanish} of ``vanishing.''
  The slight downside to this is that initial scalars need to be $\tilde O(n^{-1/2})$ (to get the equivalence going in the first place), but that is satisfied most of the time in practice.
\end{rem}
\section{Proof of \nexort{} Master Theorem \texorpdfstring{\cref{thm:MasterTheorem}}{}}
  With \cref{lem:otimes_tp_comparison}, the only things left to prove are: 1) We can weaken the requirement that the initial scalars are $\tilde O(n^{-1/2})$ (\cref{defn:tildeO}) to just that they converge to 0 almost surely; in turn the conclusion needs to be weakened as well to almost sure convergence instead of the stronger $\tilde O(n^{-1/2})$ convergence.
  2) We can swap the Gaussian setup \cref{setup:nexort} with the non-Gaussian setup \cref{setup:nexort_nongaussian} and not only still obtain all of the above results but also show all scalars converge in $L^p$ as well, for all $p \in [1,\infty)$.

  \subsection{Relaxing Initial Scalar Requirement}

  This will follow from the ``uniformly locally Lipschitz'' property of \nexort{} programs:
  For any scalar $\theta$ in a program, we can treat $\theta$ as a random function $\theta_n(\cc)$ of initial scalars $\cc$ (with randomness coming from sampling of initial matrices and vectors) for any finite $n$.
  Then for any $\cc$, there is a neighborhood of $\cc$ and a constant $L>0$ such that $\theta_n$ is almost surely Lipschitz for all $n$ on that neighborhood with Lipschitz constant $L$.

  This property in particular implies that $\theta_n$ is equicontinuous over $n$ almost surely, and thus how fast or slow $\cc_n$ converges to $\mathring \cc$ does not affect the limit $\lim_{n \to \infty} \theta_n(\cc_n)$.

  This ``uniformly locally Lipschitz'' property can be shown by proving a sub-Weibull%
  \footnote{i.e., the probability of deviating $t$ away from the limit is bounded by a function of the form $\exp(-C (nt)^{\alpha})$ for some power $\alpha>0$ (usually $<1$).}
  tail bound on the convergence of any scalar in a program, and then performing a chaining argument \citep{talagrand_upper_2014}.
  The full details will be given in a future paper.%
  \footnote{Note that only the Dynamical Dichotomy Theorem (i.e., classification of abcd-parametrizations) needs to deal with initial scalars that converge slower than $\tilde O(n^{-1/2})$. In particular, the NT and $\mu$-limits only need the version of the Master Theorem with $\tilde O(n^{-1/2})$ initial scalars.}

\subsection{Proof of Non-Gaussian \nexort{} Master Theorem}

The non-Gaussian \netsort{} Master Theorem (\cite{tp3b}) states that, under \cref{setup:nexort_nongaussian}, the scalars of the program converge almost surely and in $L^p$ for every $p \in [1, \infty)$.
Our goal here is to adapt its proof to the more general \nexort{} case.

The overarching proof strategy of the non-Gaussian \netsort{} Master Theorem was to smoothly interpolate between the non-Gaussian program and the Gaussian program (c.f.\ \cite[Defn 5.1]{tp3b}) and to show that for large enough $n$, the scalars of the program do not change much along this interpolation.
In fact, the total change along this interpolation is $\tilde O(n^{-1/2})$ (see the last chain of equation/inequalities before \cite[Sec 5.1]{tp3b}).
So if we can adapt all of the reasoning to \nexort{}, then we can obtain the $\tilde O(n^{-1/2})$ convergence in \cref{thm:MasterTheorem} as well under \cref{setup:nexort_nongaussian}.

The key technical insight enabling all of this is a bound on the moments of mixed derivatives of a non-Gaussian program's scalars and vectors against the program's matrices.

To obtain the non-Gaussian \nexort{} Master Theorem, the main task is to generalize this key technical insight, with which the interpolation trick carries over to our case easily. This is done in \cref{lemma:apriori_moment_control,{lemma:dc_bound}} below, which are useful in their own right beyond this context.
The full proof of the non-Gaussian Master Theorem can be then be straightforwardly adapted from \cite{tp3b}.

\begin{setup}
  \label{setup:nongaussian_lax}
  Consider \cref{setup:nexort_nongaussian}, but allow the variances of matrix entries to differ from $n^{-1}$ but still bounded above by $\nu_2 n^{-1}$ for some $\nu_2 > 0$ common to all matrix entries.
\end{setup}

Below, we invoke the notion of \emph{oblivious constants} from \cite[Defn I.1]{tp3b}.
This is a technical notion needed to finish the proof of the the non-Gaussian Master Theorem, but the first time readers can ignore the comments on oblivious constants.
\newcommand{\PP}{\mathcal{P}}
\begin{lemma}[Expected Smoothness of Vectors]\label{lemma:apriori_moment_control}
  Consider a \nexort{} program under \cref{setup:nongaussian_lax}.
  Then, for any polynomially smooth $\psi: \R^{rM+M} \to \R$, any $p \ge 1$, and any multiset $\PP$ of the program's matrix entries $\{W_{\alpha\beta} \}_{\alpha,\beta, W \in \mathcal W}$,
  \begin{equation}
      \sup_{\aalpha \in [n]^r} \EV \left|\partial^\PP \psi(\gb_{\alpha_1}; \cdots; \gb_{\alpha_r}; \cc)\right|^p = O(1)
      \quad \text{as $n \to \infty$.}
      \label{eqn:apriori_moment_control}
  \end{equation}
  where constant in the big-O is $(p, \PP)$-oblivious wrt $\psi$ and the program.
\end{lemma}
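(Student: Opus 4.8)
The plan is to prove \cref{lemma:apriori_moment_control} by induction on the length of the uniformized \nexort{} program \cref{eq:tensor_tp_iteration}, adapting to our instruction set the analogous a priori moment bound proved for \netsort{} programs in \cite{tp3b}. Since in the uniformized form every step $i$ produces a G-var $g^i$ (via \refMatMul{}), an intermediate vector $x^i$ (via \refOuterNonlin{}), and a scalar $c^i$ (via \refAvg{}), the induction hypothesis at step $i$ will be the conclusion of the lemma stated for \emph{every} polynomially smooth test function of the slices of $g^1,\dots,g^i$ and $x^1,\dots,x^i$ together with the scalars $c^1,\dots,c^i$, for every arity, every $p\ge1$, and every multiset $\PP$ of matrix entries, with all implied constants $(p,\PP)$-oblivious in the sense of \cite[Defn I.1]{tp3b}. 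The base case $i=M_0$ is immediate: initial vectors are sampled iid and independently of every matrix, so $\partial^\PP$ of a function of their slices vanishes when $|\PP|\ge1$ and the $|\PP|=0$ case follows from polynomial boundedness of the test function together with the all-order moment bounds of the initial data; initial scalars have moments of all orders bounded in $n$ by \cref{setup:nexort_nongaussian}.

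The \refOuterNonlin{} and \refAvg{} parts of the inductive step are the easy ones. Writing $x^{i+1}_\gamma = \langle \phi^{i+1}(\gb_\gamma;\gb_{\beta_1};\dots;\gb_{\beta_{i+1}};\cc)\rangle_{\bbeta}$ and applying $\partial^\PP$, the multivariate chain rule expands $\partial^\PP$ of the integrand into a sum — of a number of terms depending only on $|\PP|$ and $i$ — of products of a partial derivative of $\phi^{i+1}$ (polynomially bounded in the $\gb$-slices, since $\phi^{i+1}$ is polynomially smooth) times factors $\partial^{\PP'}g^j_\bullet$ with the $\PP'$ partitioning $\PP$. Bounding the $p$-th moment of each such term by Hölder reduces to the induction hypothesis for $g^1,\dots,g^i$ (both the undifferentiated ``polynomial in slices'' factors and the differentiated ones), and pushing the $\bbeta$-average out by Jensen (convexity of $t\mapsto|t|^p$) leaves the bound unchanged; taking $\sup$ over $\gamma$ is harmless because the induction hypothesis already supremizes over slice indices. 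This gives the bound for the slices of $x^{i+1}$, and then, by one more application of Jensen to the $\langle\cdot\rangle$ of \refAvg{}, for the scalar $c^{i+1}$.

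The \refMatMul{} part is the crux. From $g^{i+1}_\gamma=\sum_\delta W^{i+1}_{\gamma\delta} x^{i+1}_\delta$, the Leibniz rule (each matrix entry being linear in itself, so its second self-derivative vanishes) splits $\partial^\PP g^{i+1}_\gamma$ into a \emph{localized} part — a sum over the at most $|\PP|$ indices $\delta$ at which a derivative can hit the explicit factor $W^{i+1}_{\gamma\delta}$, of terms $\partial^{\PP'}x^{i+1}_\delta$ with $|\PP'|<|\PP|$, each controlled by the previous paragraph — and a \emph{diffuse} part $T_1 := \sum_\delta W^{i+1}_{\gamma\delta}\,\partial^\PP x^{i+1}_\delta$. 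The heart of the argument is to show $\EV|T_1|^p = O(1)$ uniformly in $\gamma$: after passing to an even power $2q\ge p$ by Hölder, one expands $T_1^{2q}$ multinomially over the summation indices and, for each resulting monomial in the entries $W^{i+1}_{\gamma\delta}$, strips the matrix factors one at a time by a non-Gaussian Stein / integration-by-parts identity — legitimate because the entries are mean zero with variance $O(n^{-1})$ and all higher moments of size $O(n^{-k/2})$ by \cref{setup:nongaussian_lax}. Each strip replaces a factor $W^{i+1}_{\gamma\delta}$ by a variance ($O(n^{-1})$) times $\partial_{W^{i+1}_{\gamma\delta}}$ applied to the rest, or by a higher cumulant ($O(n^{-k/2})$) times higher $W^{i+1}_{\gamma\delta}$-derivatives; in all cases the derivative multiset only grows, so every expectation produced stays under the induction hypothesis — and the fact that $x^{i+1}$ is an \refOuterNonlin{} average rather than a plain order-$0$ nonlinearity is immaterial here, since the average merely introduces further $\bbeta$-averages of induction-hypothesis-controlled quantities. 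A bookkeeping count then shows the accumulated negative powers of $n$ exactly absorb the number of free summation indices, giving $O(1)$; this is precisely the \cite{tp3b} computation transcribed to our setting. Combining the localized and diffuse bounds extends the induction hypothesis to $g^{i+1}$, closing the induction.

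The main obstacle is this diffuse-part estimate. Beyond the combinatorial bookkeeping, one must check that (i) the non-Gaussian Stein-expansion remainders — which involve every higher moment of the matrix entries and arbitrarily high $W$-derivatives of the integrand — are genuinely negligible and do not aggregate into a divergent constant, and (ii) obliviousness is preserved: the chain-rule and multinomial expansions generate a number of terms growing with $p$, $|\PP|$, and the program length, and the obliviousness calculus of \cite[Defn I.1]{tp3b} must be used to organize them so that the final constant depends only on $p$ and $\PP$. Granting the \netsort{} version of \cite{tp3b}, the only genuinely new content is the handling of \refOuterNonlin{} (the $r$-fold $\bbeta$-averages) and \refAvg{} (the new scalars), both harmless because averaging can only improve $p$-th-moment bounds; with \cref{lemma:apriori_moment_control} and \cref{lemma:dc_bound} in hand, the interpolation between the non-Gaussian and Gaussian programs, and hence the rest of the non-Gaussian \nexort{} Master Theorem, then carries over from \cite{tp3b}.
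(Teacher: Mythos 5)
Your proposal is correct and takes essentially the same route as the paper: the paper's proof simply instructs the reader to rerun \cite[Sec I]{tp3b} with $\phi(x^1,\ldots,x^k)$ replaced by the outer-product nonlinearity $\phi(x^1;\cdots;x^r)$, noting that the argument factors through suprema over (multi-)indices --- which is exactly the induction you describe, with the \texttt{OuterNonlin}/\texttt{Avg} steps absorbed by Jensen and the \texttt{MatMul} step inherited verbatim from the \netsort{} case. Your write-up is just a fleshed-out version of that same adaptation.
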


This generalizes \cite[Lemma I.3]{tp3b} to \nexort{}.

\begin{proof}
  In the original proof in \cite[Sec I]{tp3b}, replace $\phi(x^{1},\ldots,x^{k})$
  with $\phi(x^{1};\cdots;x^{k})$ everywhere.
  As the original proof essentially factors through arguments about the ``supremum'' over all indices $\alpha \in [n]$, it can be adapted straightforwardly to the ``supremum'' over all multi-indices $\aalpha \in [n]^r$.
\end{proof}

\begin{lemma}[Expected Smoothness of Scalars]\label{lemma:dc_bound}
    Consider a program in \cref{setup:nongaussian_lax}.
    Then for any $p \ge 1$, any \emph{nonempty} multiset $\PP$ of the program's matrix entries $\{W_{\alpha\beta} \}_{\alpha,\beta, W \in \mathcal W}$,, and any scalar $c$ of the program,
    \begin{equation}
        \EV \left|\partial^\PP c \right|^p = O(n^{-p})
        \quad \text{as $n \to \infty$.}
    \end{equation}
    Furthermore, the constant in the big-O is $(p, \PP)$-oblivious wrt the program.
\end{lemma}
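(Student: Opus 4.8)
\textbf{Proof plan for \cref{lemma:dc_bound}.}
The plan is to reduce the statement about a scalar $c$ to the statement about vectors already handled by \cref{lemma:apriori_moment_control}, using the structure of how scalars are produced in a \nexort{} program. Recall that every scalar $c$ is, up to the trivial manipulations of \cref{lemma:transform_scalars}, of the form $c = \la x_\alpha\ra_\alpha = \frac{1}{n}\sum_{\alpha=1}^n x_\alpha$ for some vector $x$ in the program (the \refAvg{} instruction). Hence for any nonempty multiset $\PP$ of matrix entries,
\[
  \partial^\PP c = \frac{1}{n}\sum_{\alpha=1}^n \partial^\PP x_\alpha,
\]
and the entire task becomes bounding $\EV\bigl|\frac 1 n \sum_\alpha \partial^\PP x_\alpha\bigr|^p$. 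The first step is therefore to express $x$ itself via \refOuterNonlin{} (or \refMatMul{}) and observe that $x_\alpha = \la \psi(\gb_{\alpha};\gb_{\beta_1};\cdots;\gb_{\beta_r};\cc)\ra_{\bbeta}$, so that $\partial^\PP x_\alpha$ is an average over $\bbeta$ of $\partial^\PP \psi(\gb_{\alpha};\gb_{\beta_1};\cdots;\gb_{\beta_r};\cc)$; the $\partial^\PP$ of the \refMatMul{} case is handled similarly after noting $\partial_{W_{\alpha\beta}}(Wx)_\gamma = \ind(\gamma=\alpha)x_\beta + (W\partial_{W_{\alpha\beta}}x)_\gamma$ and iterating.

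The second step is the key gain over a naive bound: we must exploit that $\PP$ is \emph{nonempty}. When no derivative is taken, $\frac 1 n\sum_\alpha x_\alpha$ is $\Theta(1)$, not $O(n^{-1})$; the extra factor of $n^{-1}$ comes precisely from differentiating against at least one matrix entry. Concretely, if $W_{\alpha_0\beta_0}\in\PP$, then differentiating any vector of the program against $W_{\alpha_0\beta_0}$ produces a vector whose dependence is ``localized'' — e.g. by the above product rule, $\partial_{W_{\alpha_0\beta_0}}(Wx)$ has a term supported on coordinate $\alpha_0$ only — so that summing over the free index $\alpha$ in $c=\la x_\alpha\ra_\alpha$ picks up only $O(1)$ nonzero contributions from that localized piece, while the remaining (non-localized) piece factors through another $\refMatMul{}$ and gets its own $n^{-1/2}$-type smallness from the operator-norm bounds on the random matrices (as in \cref{prop:S0_matmul}). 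I would formalize this by induction on the number of instructions since the last appearance of a matrix in $\PP$, tracking at each stage a decomposition of $\partial^\PP(\text{vector})$ into a ``sparse'' part (few nonzero coordinates, each with $O(1)$ moments by \cref{lemma:apriori_moment_control}) plus a ``matrix-smoothed'' part, and using $\frac 1 n \|{\cdot}\|_1 \le \frac{1}{\sqrt n}\|{\cdot}\|_2$ together with operator-norm control to bound the averaged quantity. The moment bounds on the individual pieces are exactly what \cref{lemma:apriori_moment_control} supplies, and one combines them with Hölder / \cref{lem:powerbound} to get the $p$th-moment statement.

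The third step is to track the oblivious-constant bookkeeping: each application of \cref{lemma:apriori_moment_control} contributes a $(p',\PP')$-oblivious constant for appropriate $p',\PP'$, and the operator-norm tail bounds contribute constants depending only on $\nu_2$ and the moment parameters $\nu_k$ of \cref{setup:nongaussian_lax}; one checks that the finitely many combinations appearing in the induction aggregate to a single $(p,\PP)$-oblivious constant wrt the program, exactly as in the corresponding step of \cite{tp3b}. This is routine once the structural decomposition of the second step is in place.

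\textbf{Main obstacle.} The hard part will be the second step — making precise the ``localization'' that differentiating against a matrix entry induces, and propagating it through an arbitrary sequence of \refOuterNonlin{} and \refMatMul{} instructions (including transposes and repeated matrices), so that after the final \refAvg{} one genuinely recovers the $n^{-p}$ rate rather than merely $n^{-p/2}$ or $O(1)$. This is where the \nexort{}-specific feature (the $r\ge 1$ outer products, which sum over $r$ extra $n$-indices) interacts with the derivative structure in a way that has no exact counterpart in \cite{tp3b}; one must be careful that differentiating inside the $\la\cdot\ra_\bbeta$ average does not destroy the cancellation. I expect this to go through by the same kind of careful index accounting used to prove \cref{lem:decorrelate_avg} (indeed Baranyai-type combinatorics may resurface if the localized index coincides with one of the $\beta_i$), but it is the step that requires genuine care rather than transcription.
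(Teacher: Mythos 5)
Your first step --- writing $\partial^\PP c=\frac1n\sum_\alpha \partial^\PP x_\alpha$ for $c=\la x_\alpha\ra_\alpha$ and recognizing that the nonemptiness of $\PP$ must supply an extra factor of $n^{-1}$ beyond what \cref{lemma:apriori_moment_control} gives --- is the right starting point. But the mechanism you propose for closing the argument does not deliver the claimed rate, and you flag this step yourself as the unresolved obstacle. Concretely: if the ``localized'' contribution of $\partial_{W_{\alpha_0\beta_0}}$ is a vector $v$ supported on $O(1)$ coordinates with $O(1)$ entries, and this piece then passes through another \refMatMul{} before reaching the final \refAvg{}, your estimate $\frac1n\|Wv\|_1\le\frac{1}{\sqrt n}\|Wv\|_2\le\frac{1}{\sqrt n}\|W\|_{\mathrm{op}}\|v\|_2$ only yields $O(n^{-1/2})$, hence $O(n^{-p/2})$ for the $p$th moment --- short of the required $O(n^{-p})$. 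The missing $\sqrt n$ can only be recovered by observing that the relevant quantity is not $\|Wv\|$ but a contraction $\frac1n\la u, Wv\ra$ against a specific $O(1)$-entry vector $u$ (ultimately the gradient of $c$); operator-norm control, i.e.\ \cref{prop:S0_matmul}, is the wrong tool for that.

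The paper's proof sidesteps the forward induction entirely. Construct the backpropagation program $d^x\pi$ of \cref{defn:backpropagation} wrt the vector $x$ with $c=\la x_\alpha\ra_\alpha$; this is itself a \nexort{} program, and by construction $(d^x z)_\gamma = n\,\partial c/\partial z_\gamma$. The chain rule then gives the exact identity
$\partial_{W_{\alpha\beta}}c=\frac1n\sum_{z:=Wy}(d^x z)_\alpha\, y_\beta+\frac1n\sum_{z:=W^\trsp y}(d^x z)_\beta\, y_\alpha$,
in which every $d^x z$ and $y$ is a vector of the extended program with $O(1)$ entrywise moments. Applying $\partial^{\PP'}$ with $\PP'=\PP\setminus\{W_{\alpha\beta}\}$ to this identity leaves the single explicit $\frac1n$ intact and turns each summand into a mixed derivative of a product of two vector entries of the backpropagation program --- exactly what \cref{lemma:apriori_moment_control} (with $\psi$ the product function) bounds by $O(1)$ in every moment, uniformly over indices, with the oblivious-constant bookkeeping inherited directly. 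This yields $\EV|\partial^\PP c|^p=O(n^{-p})$ in one step, as in \cite[Lemma J.6]{tp3b}. In particular no Baranyai-type combinatorics is needed here: the derivative structure of \refOuterNonlin{} is already packaged into $d^x\pi$ by \cref{defn:backpropagation}, so the entire lemma reduces to the vector-level statement you already have.
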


This generalizes \cite[Lemma J.6]{tp3b} to \nexort{}.

\begin{proof}
  Construct the backpropagation program (\cref{defn:backpropagation}), which is another \nexort{} program.
  Then proceed as in the proof of \cite[Lemma J.6]{tp3b}.
\end{proof}

\chapter{Experiments}

\section{Numerical Verification}

We perform numerical experiments to validate our theory. It is intractable to compute the exact infinite-width limits for general $\QQ$, since the expectations required to evaluate the infinite-width dynamics in both limits do not admit an analytical solution (even for the Neural Tangent Limit). We thus employ Monte Carlo simulations to approximate these expectations.

We test \cref{{thm:NT_MLP_memoryful}} and \cref{thm:mulimit_MLP_general} by training ReLU MLPs ($L=4$ for NTP and $L=2$ for $\mu$P) on $\R^{10}$ Gaussian inputs and a unit output. We use the standard L2 loss function, regressing to random targets. We train networks with different widths using the Adam optimizer with a learning rate of $\eta = 0.2$ and $\epsilon = 10^{-4}$, using 100 training samples and conducting 10 trials for each width. To account for varying initial outputs and loss derivatives per weight initialization, we subtract the initialized network output from the output for each sample, ensuring that the output is zero at initialization for all inputs.

We approximate the infinite-width training dynamics by estimating the expectations in \cref{defn:TangentOperator} and \cref{thm:mulimit_MLP_general} using Monte Carlo simulations. As the initial loss derivatives are deterministic with zero outputs, the infinite-width dynamics can be estimated without actually constructing a network. To compare the finite and infinite width neural networks' evolution, we assess the output on random inputs at each iteration. Our results are summarized in \cref{fig:antk} and \cref{fig:mu}. As anticipated, the training dynamics converge to the infinite-width dynamics as the width increases.

\begin{figure*}[h]
\centering
  \begin{tabular}{ccc}
    \includegraphics[width=0.33\linewidth]{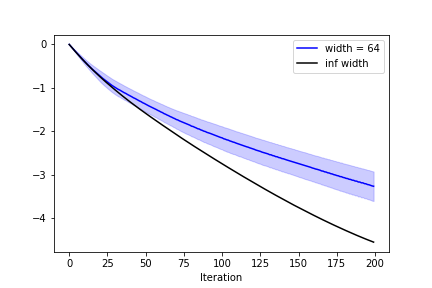} & 
    \includegraphics[width=0.33\linewidth]{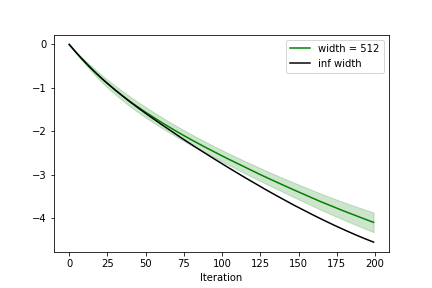} &  
    \includegraphics[width=0.33\linewidth]{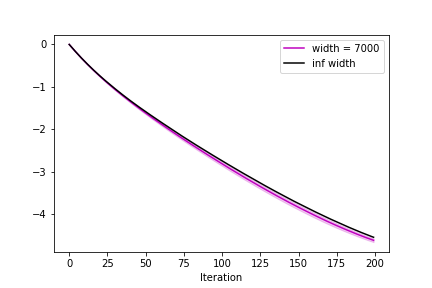} 
   \\
    \includegraphics[width=0.33\linewidth]{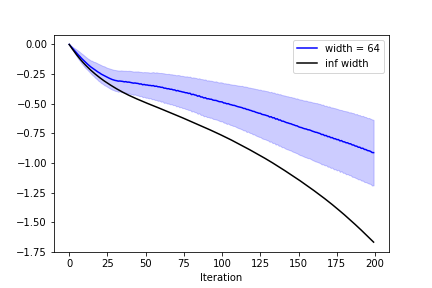} & 
    \includegraphics[width=0.33\linewidth]{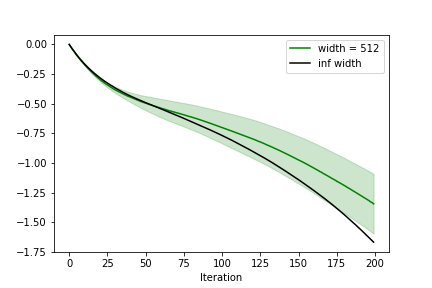} &  
    \includegraphics[width=0.33\linewidth]{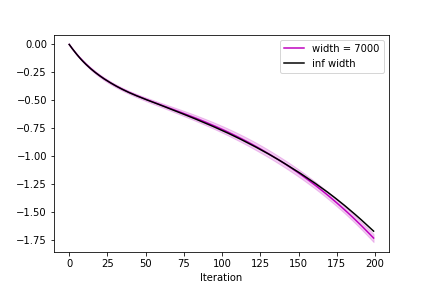} 
    \\
    \includegraphics[width=0.33\linewidth]{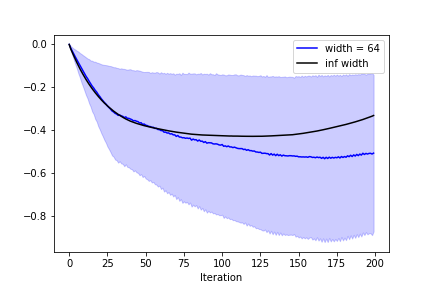} & 
    \includegraphics[width=0.33\linewidth]{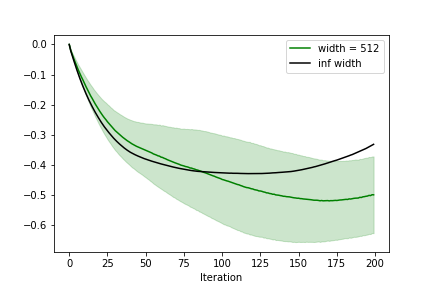} &  
    \includegraphics[width=0.33\linewidth]{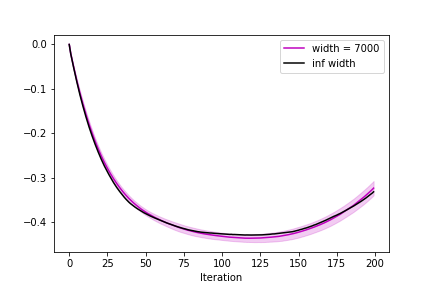} 
    \\
    \includegraphics[width=0.33\linewidth]{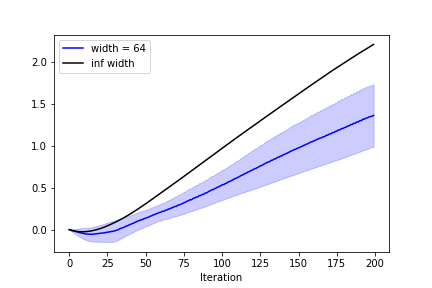} & 
    \includegraphics[width=0.33\linewidth]{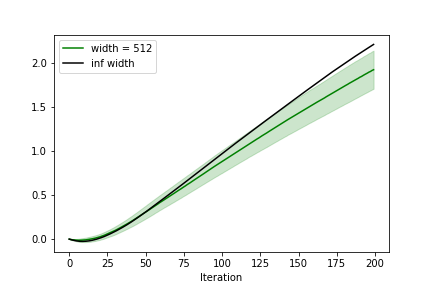} &  
    \includegraphics[width=0.33\linewidth]{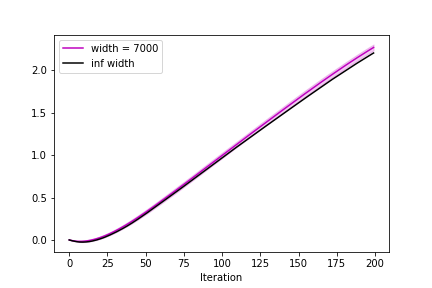} \\
    (a) & (b) & (c)\\
  \end{tabular}
  \caption{Adam training dynamics of finite and infinite-width networks in NTP. We train networks of widths 64 (a), 512 (b), 7000 (c), and track the outputs for 4 random inputs (one per row) at each iteration as the network trains. We compute the output distribution over 10 independent runs for each network, and compare with the infinite-width dynamics (black curve). As the width grows, the network function converges to that of the infinite-width dynamics captured in \cref{thm:NT_MLP_memoryful}.} 
  \label{fig:antk}
\end{figure*}

\begin{figure*}[h]
\centering
  \begin{tabular}{ccc}
    \includegraphics[width=0.33\linewidth]{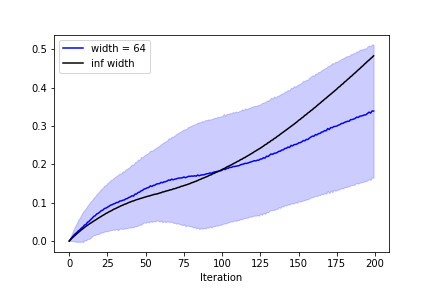} & 
    \includegraphics[width=0.33\linewidth]{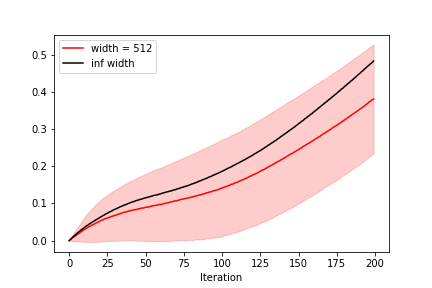} &  
    \includegraphics[width=0.33\linewidth]{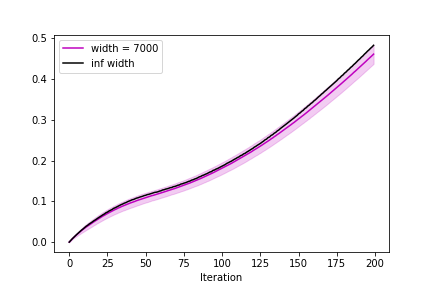} 
   \\
   \includegraphics[width=0.33\linewidth]{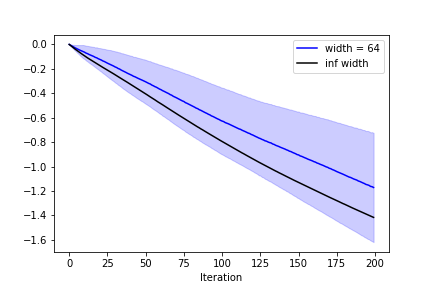} & 
    \includegraphics[width=0.33\linewidth]{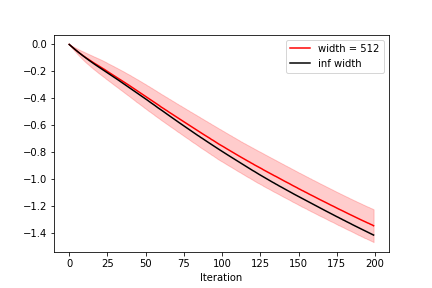} &  
    \includegraphics[width=0.33\linewidth]{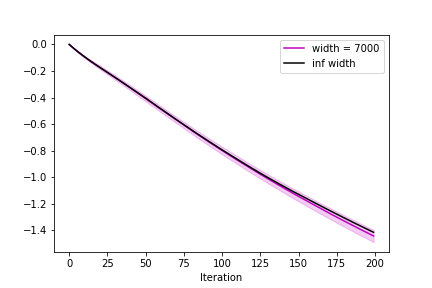} 
    \\
   \includegraphics[width=0.33\linewidth]{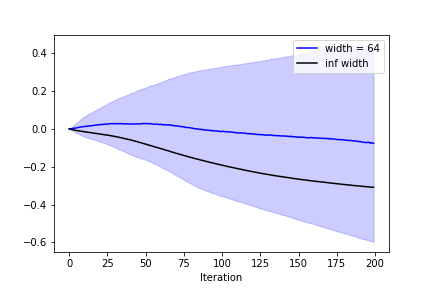} & 
    \includegraphics[width=0.33\linewidth]{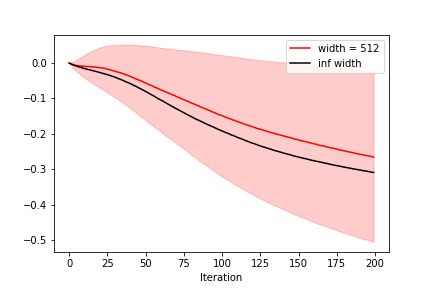} &  
    \includegraphics[width=0.33\linewidth]{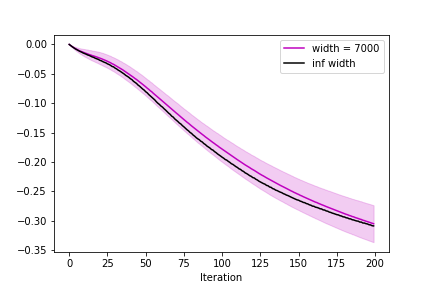} 
    \\
   \includegraphics[width=0.33\linewidth]{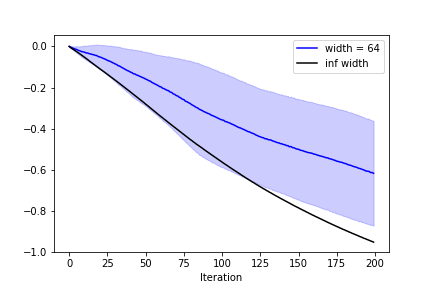} & 
    \includegraphics[width=0.33\linewidth]{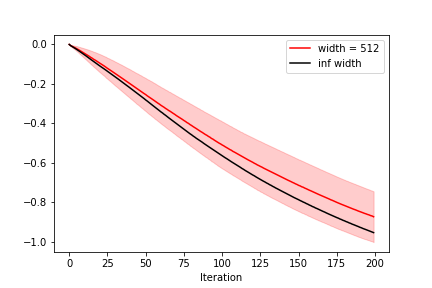} &  
    \includegraphics[width=0.33\linewidth]{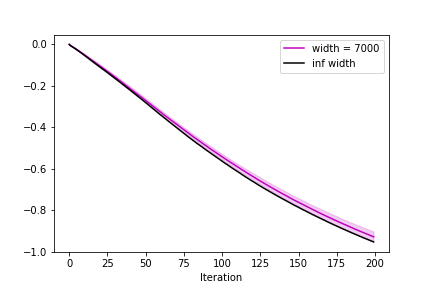} \\
    (a) & (b) & (c)\\
  \end{tabular}
  \caption{Adam training dynamics of finite and infinite-width networks in $\mu$P. We train networks of widths 64 (a), 512 (b), 7000 (c), and track the outputs for 4 random inputs (one per row) at each iteration as the network trains. We compute the output distribution over 10 independent runs for each network, and compare with the infinite-width dynamics (black curve). As the width grows, the network function converges to that of the infinite-width dynamics captured in \cref{thm:mulimit_MLP_general}.} 
  \label{fig:mu}
\end{figure*}
\clearpage
\section*{Acknowledgements}
In alphabetical order, we would like to thank Jeremy Bernstein, Nikhil Ghosh, Dror Ironi, Ariel Landau, Sadhika Malladi, Jamie Simon, and Josh Susskind for providing insightful comments and discussion.
\bibliography{iclr2023_conference}
\bibliographystyle{iclr2023_conference}

\newpage
\appendix

\newpage

\end{document}